\newcommand{\BREAK}{\STATE \textbf{break} }
\newcommand{\var}{\mathrm{Var}}
\newcommand{\Hard}{\psi}
\DeclareMathOperator*{\argmax}{arg\,max}
\newtheorem{lem}{Lemma}
\newtheorem{remark}{Remark}
\let\oldref\ref
\renewcommand{\ref}[1]{(\oldref{#1})}
\begin{document}
	%
	\title{Almost Optimal Variance-Constrained Best Arm Identification}
	%
	%
	%
	
	\author{Yunlong Hou, Vincent Y.\ F.\ Tan, \emph{Senior Member, IEEE}, and Zixin Zhong
	\thanks{This research was supported by a Singapore National Research Foundation (NRF) Fellowship (A-0005077-01-00), and by two Singapore Ministry of
Education (MOE) AcRF Tier 1 Grants (A-0009042-01-00 and A-8000189-01-00).}
	\thanks{Y.~Hou is with the  Department of Mathematics, National University of Singapore, Singapore (email: yhou@u.nus.edu).}%
	\thanks{V.~Y.~F.~Tan is with the Department of Mathematics and the Department of Electrical and Computer Engineering, National University of Singapore, Singapore (email: vtan@nus.edu.sg).} 
	\thanks{Z.~Zhong ({\em Corresponding author}) is with the Department of Computing Science at the University of Alberta, Canada (email: zixin.zhong@u.nus.edu).}
	}

%
%

\markboth{}%
{Shell \MakeLowercase{\textit{et al.}}: Bare Demo of IEEEtran.cls for IEEE Journals}
%



\maketitle

\begin{abstract}
	We design and analyze {\em Variance-Aware-Lower and Upper Confidence Bound} (VA-LUCB), a parameter-free algorithm, for identifying the best arm under the fixed-confidence setup and under a stringent constraint that the variance of the chosen arm is strictly smaller than a given threshold. An upper bound on VA-LUCB's sample complexity is  shown to be characterized by a fundamental variance-aware hardness quantity $H_{\mathrm{VA}}$. By proving an information-theoretic lower bound, we show that sample complexity of VA-LUCB is optimal up to  a  factor logarithmic in   $H_{\mathrm{VA}}$. Extensive experiments corroborate the dependence of the sample complexity on the various terms in  $H_{\mathrm{VA}}$. By comparing VA-LUCB's empirical performance to a close competitor  RiskAverse-UCB-BAI by David {\em et al.}~\cite{David2018},   our experiments suggest that VA-LUCB has the lowest sample complexity for this class of risk-constrained best arm identification problems, especially for the riskiest instances. 
\end{abstract}

\begin{IEEEkeywords}
	Stochastic Multi-Armed Bandits, Best Arm Identification, Risk-aware Bandits.
\end{IEEEkeywords}

\section{Introduction}
The stochastic multi-armed bandit (MAB) problem~\cite{lattimore2020bandit} is a classical framework for online decision-making problems with extensive applications, e.g.,  clinical trials and financial portfolio.
In a conventional stochastic MAB problem, given several arms with each of them associated with a fixed but unknown reward distribution, an agent selects an arm and observes a random reward returned from the corresponding distribution at each round. There are two complementary tasks in MAB problems. Firstly, the {\em regret minimization problem} aims to maximize the expected cumulative reward. The second task, the main focus of the present paper, is the {\em best arm identification} or BAI problem that aims to devise a strategy to identify the arm with the largest expected reward.

While the expected reward is a key indication of the quality of an arm, its {\em risk} should also be taken into consideration, e.g., in clinical trials where the effects of experimental drugs exhibit variability over different individuals and in financial portfolio where conservative investors seek a beneficial and also safe product. Instead of pursuing the highest payoff, one may wish to mitigate the underlying  risks of certain arms by balancing between the reward and the potential risk.
Various measures of risk~\cite{Cassel2018,lee2020learning, chang2021unifying} have been adopted, such as the variance, Value-at-Risk (VaR or $ \alpha $-quantile), and the  Conditional Value-at-Risk (CVaR). 
We adopt the  variance as the  risk measure, but our techniques are {\em also}   applicable to other risk measures if suitable concentration bounds are available. 
We design and analyze the VA-LUCB algorithm which is shown to be almost optimal in terms of the sample complexity, and we identify a key fundamental hardness quantity $H_{\mathrm{VA}}$. VA-LUCB also significantly outperforms a suitably modified  algorithm of \cite{David2018}.

\subsection{Literature Review}
There are  three main families of algorithms for the standard fixed-confidence BAI problem---confidence bound-based (CBB) algorithms \cite{Even2006action,Audibert2010,jamieson14lil,Kalyanakrishnan2012},  tracking-based (TB) algorithms  \cite{Kaufmann2016}, and Bayesian-style (BS) algorithms \cite{russo2016simple}. Jamieson and Nowak~\cite{Jamieson2014} provide a comprehensive survey for   CBB algorithms, which includes the Action Elimination algorithm \cite{Even2006action}, the Upper Confidence Bound (UCB) algorithm \cite{Audibert2010} and the LUCB algorithm \cite{Kalyanakrishnan2012}.  While LUCB \cite{Kalyanakrishnan2012} is originally designed for top-$k$ arm identification, Jamieson and Nowak~\cite{Jamieson2014} claimed that LUCB-based methods perform well both theoretically and empirically for BAI task (thus we build our algorithm upon LUCB). LUCB samples the arm with the largest sample mean $i_t$ and another arm with the largest upper confidence bound $j_t$ within the remaining arms. It terminates when the lower confidence bound of $i_t$ is greater than the upper confidence bound of $j_t$. The family of LIL techniques \cite{jamieson14lil,howard2021time} which provide uniform (in time) bounds on the deviation of an empirical statistic from the true quantity can boost the performance of these CBB methods.  TB algorithms such as Track and Stop \cite{Kaufmann2016} track the proportion of arm pulls and achieves asymptotic optimality. BS algorithms such as Top-Two Thompson sampling \cite{russo2016simple} are easy to implement, asymptotically optimal, and yield good theoretical and empirical results.

For the risk-aware BAI problem,
there is a large body of  literature that measures the quality of an arm by {\em general functions of its distribution instead of the expectation}. 
The mean-variance paradigm is studied by \cite{Sani2013}, \cite{Vakili2016} and \cite{Zhu2020} under the regret minimization framework.
Sani {\em et al.}~\cite{Sani2013} regarded the variance as the measure of risk and proposed the MV-LCB algorithm. 
The  regret analysis of MV-LCB \cite{Sani2013} was  improved by \cite{Vakili2016}.
Zhu and Tan~\cite{Zhu2020} and Chang {\em et al.}~\cite{chang2021risk} proposed Thompson sampling-based algorithms that are optimal under different regimes for  the mean-variance and CVaR criteria respectively.
The mean-variance paradigm was generalized by~\cite{Zimin2014} where the quality of an arm is measured by some  functions of the mean and the variance. 
Another class of risk measures that is widely studied consists of the VaR and CVaR.
Under the BAI framework, Prashanth {\em et al.}~\cite{LA2020} adapted the successive rejects algorithm of \cite{Audibert2010} for optimizing the CVaR.  
Kagrecha {\em et al.}~\cite{Kagrecha2019} utilized a linear combination of the reward and the CVaR as the measure of quality of  the arms and relaxed the prior knowledge of the reward distribution; this was generalized recently to general risk measures~\cite{kagrecha2022statistically}. 
David and Shimkin~\cite{David2016} aimed at finding the arm with the maximum $ \alpha $-quantile.
Under the regret minimization framework, Kagrecha {\em et al.}~\cite{Kagrecha2020} and Baudry {\em et al.}~\cite{baudry2020thompson} regarded the CVaR as a risk measure and proposed the RC-LCB algorithm and  Thompson sampling-based algorithms respectively.
Other risk measures have  also been considered. 
For example, the {\em Sharpe ratio}, together with the mean-variance, was adopted by \cite{EvenDar2006} to balance the tradeoff between return and risk. 
Maillard~\cite{Maillard2013} proposed RA-UCB which considers the measure of {\em entropic risk} with a parameter $ \lambda $. Cassel {\em et al.}~\cite{Cassel2018} presented a general and systematic approach to analyzing risk-aware MABs. They adopted the  {\em Empirical Distribution Performance Measure} and proposed the U-UCB algorithm to perform ``proxy regret minimization''.

Another approach casts the risk-aware MAB problem as a {\em constrained MAB problem}, i.e., the allowable risk that the agent can tolerate is formulated as a constraint in the online optimization problem. This is of practical interest in high-risk settings (such as clinical trials) in which the agent demands that the arm (treatment) to be eventually selected has a risk that is strictly below a permissible threshold.
David {\em et al.}~\cite{David2018} focused on identifying an arm with {\em almost} the largest mean among those {\em almost} satisfying an $ \alpha $-quantile constraint under the fixed confidence setting. The authors presented a UCB-based algorithm named RiskAverse-UCB-m-best.
Chang~\cite{Chang2020} considered an average cost constraint where each arm is associated with a cost variable that is independent of the reward   and analyzed the   probability of pulling optimal arms.
This approach is also related to safe bandits \cite{Wu2016,Amani2019}, where the arms are conservatively pulled to meet the safety constraint. However, safe bandits are often considered in a cumulative regret setting and the pulled arms should be safe with high probability (w.h.p.). A brief and current survey of taking risk into account in the study of multi-armed bandits is presented in~\cite{tan2022}. 

The variance-constrained BAI problem consists of two distinct tasks---we seek   {\em optimality} in the mean and {\em feasibility} in the variance. This is  different from the Pareto-front identification with bandit feedback problem~\cite{auer16,turgay18a,Zuluaga}, which seeks optimality in {\em both} objectives, i.e., it seeks a solution/arm that has high mean and low variance simultaneously. While the best feasible arm, if it exists, belongs to the Pareto-Front, we still need to identify the best feasible arm among all arms on the Pareto-Front. These two problems are relevant but are essentially different. The problem is also related to identifying the best arm among the feasible arms. In~\cite{samuels19a}, the arms follow multi-dimensional distributions and the feasible arms are defined to be arms whose mean vectors lie in a polyhedron. It only involves a single mean vector and its projection onto either a subspace (for the objective) and a polyhedron (for the feasibility constraint), while here we have to consider two different statistics---the mean and the variance.

There are works associated with the variance estimation~\cite{Lu,faella2020rapidly} in the BAI problem. However, the variance estimation is done to {\em improve} the algorithms for the {\em standard} BAI objective in both works. Our feasibility constraint in terms of the variance, in conjunction with the standard BAI objective, is a  novel problem setting.

\subsection{Contributions}
We consider the {\em variance-constrained} BAI problem under the fixed confidence setting, i.e., we wish to identify the arm which satisfies a certain variance constraint and has the largest expectation w.h.p. Different from \cite{David2018}, we aim to identify the best arm strictly satisfying the risk constraint {\em without any slack} or {\em suboptimality}. We discuss more differences of our setting and our algorithm vis-\`a-vis \cite{David2018} in Section~\ref{sec:comp}.

We design  VA-LUCB$ (\bar{\sigma}^2,\delta) $   and derive an upper bound on its time or sample complexity. VA-LUCB is an LUCB-based~\cite{Kalyanakrishnan2012} algorithm that is generally better than UCB-based algorithms for BAI problems \cite{Jamieson2014}. It particularizes to LUCB when the constraint is inactive. A hardness parameter $H_{\mathrm{VA}}$ is identified as a fundamental limit; $H_{\mathrm{VA}}$ also reduces to  $ H_1 $ \cite{Audibert2010} when the constraint is inactive. Furthermore, the framework and analysis of VA-LUCB can be extended to other risk measures as long as there are appropriate concentration bounds, e.g.,   Bhat and Prashanth~\cite{Bhat2019} or Chang and Tan~\cite{chang2021unifying} enables us to use CVaR or certain continuous functions as  risk measures within the generic VA-LUCB framework. Different from the work of \cite{David2018} which addresses a similar problem, our algorithm is {\em completely parameter free}, in the sense that Algorithm~\ref{VALUCB} can output the best feasible arm $i^\star$ without knowledge of any parameters that define the instance (e.g., the  suboptimality gaps).

To assess the optimality of VA-LUCB, we prove an accompanying  information-theoretic lower bound on the optimal expected sample complexity of {\em any}  variance-constrained BAI algorithm. We show that VA-LUCB's sample complexity is {\em optimal} up to a logarithmic factor in $H_{\mathrm{VA}}$. 

Lastly, we present extensive experiments in which we examine the effect of each term in $ H_\mathrm{VA} $. We compare VA-LUCB to a na\"ive  algorithm  based on  uniform sampling and a variant of the algorithm in David {\em et al.}~\cite{David2018}  which can only be applied if some unknown parameters  (such as the suboptimality gaps) are known  (see App.~\ref{sec:RiskAverse}). Our experiments suggest that VA-LUCB is the gold standard for this class of constrained BAI problems, reducing the sample complexity significantly, especially for the   riskiest instances. 

\section{Problem Setup}\label{sec:prob_setup}
Given a positive integer $ n $, let $ [n] =\{1,2,\ldots,n\} $. We assume that there are $ N $ arms and  arm $i \in [N]$ corresponds to a reward distribution $ \nu_i$. For each $ i$, the reward of arm $ i $ is denoted by $ X_i$ with $ X_i\sim \nu_i$, which is independent of $ X_j\sim\nu_j$ for all $ j\in[N]\backslash\{i\} $. The expectation and variance of $ X_i $ are denoted by $ \mu_i $ and $ \sigma_i^2 $ respectively. The permissible upper bound on the variance is denoted by $ \bar{\sigma}^2 > 0$. An  {\em instance} $ (\nu=(\nu_1,\ldots,\nu_N),\bar{\sigma}^2) $, consists of $ N $ reward distributions and the upper bound on the variance $ \bar{\sigma}^2 $. Given any instance $ (\nu,\bar{\sigma}^2) $, arm $ i $ is said to be {\em feasible} if  $ \sigma_i^2\leq \bar{\sigma}^2 $. We define $ \mathcal{F}:=\{i\in[N]:\sigma_i^2\leq \bar{\sigma}^2\}$ to be the {\em feasible set} which contains all the feasible arms. Let $ \bar{\mathcal{F}}^c:=[N]\backslash\mathcal{F} $ be the set of all the {\em infeasible} arms. We say an instance is {\em feasible} if $ \mathcal{F} $ is nonempty and we say it is {\em infeasible} otherwise. For a feasible instance, the feasibility flag $ \mathsf{f}=1 $ and the best feasible arm $ i^\star:=\mathop{\rm argmax}\{\mu_i:i\in\mathcal{F}\} $, where $\mathop{\rm argmax}$ returns the smallest index that achieves the maximum. For an infeasible instance, the feasibility flag $ \mathsf{f}$ is set to be $0 $.

An arm $ i $ is said to be {\em suboptimal} if $ \mu_i<\mu_{i^\star} $ and {\em risky} otherwise. We define the {\em suboptimal set} $\mathcal{S} :=\{i\in[N]:\mu_i<\mu_{i^\star} \} $ if $\mathcal{F}\ne \emptyset$ and $\mathcal{S}:=\emptyset$ if $\mathcal{F} = \emptyset$.
The {\em risky set} $ \mathcal{R}:=[N]\setminus\mathcal{S} $ consists of   arms whose expectations are not smaller than $\mu_{i^\star}$.
Define $ i^{\star\star}:=\mathop{\rm argmax}\{\mu_i:i\in\mathcal{S}\}$ to be the arm with greatest expectation among all the suboptimal arms if $ \mathcal{S}\neq\emptyset $. Denote the {\em mean gap} for arms $i\in\mathcal{S}$ as $\Delta_i=\mu_{i^\star}-\mu_i$ if $\mathcal{F}\ne\emptyset$.
Denote the mean gap for arm $i^\star$ as $\Delta_{i^{\star\star}}$ if $\mathcal{F}\ne\emptyset$ and $\mathcal{S}\ne \emptyset$ and $+\infty$ if $\mathcal{S}=\emptyset$.  
Let the {\em variance gaps} for all arms $i\in [N]$ be $\Delta_i^{\mathrm{v}} := |\sigma_i^2-\bar{\sigma}^2|$.
The {\em separator} between $ i^\star $ and the suboptimal arms is denoted by  $\bar{\mu} := (\mu_{i^\star} + \mu_{i^{\star\star}})/2$ if $\mathcal{F}\ne\emptyset$ and $\mathcal{S}\ne\emptyset$ and $\bar{\mu} :=-\infty$ otherwise. 
These sets and quantities are illustrated in Figure~\ref{fig:F}.
\begin{figure}[t]
	\centering
	\includegraphics[width=.45\textwidth]{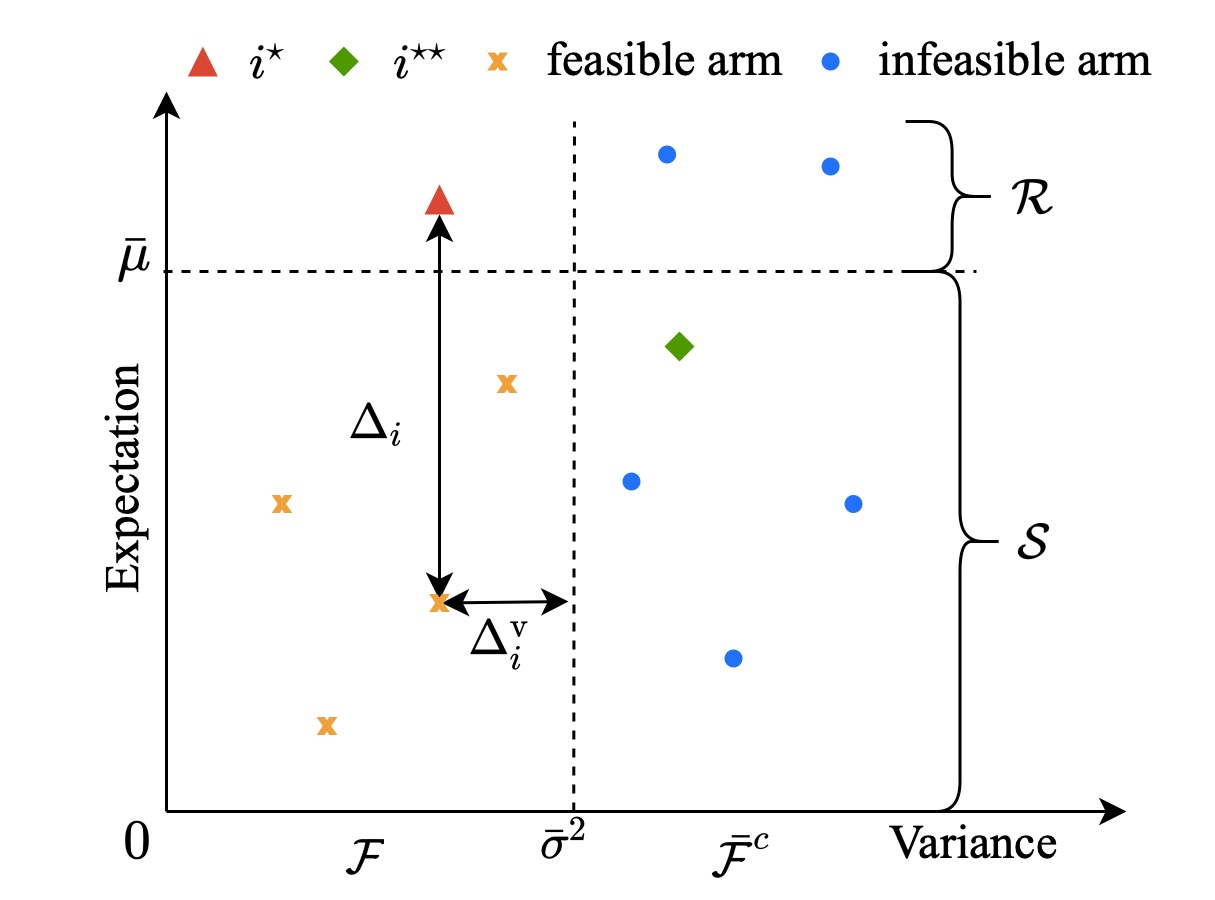}
	\caption{A diagram of the arms. Each dot represents the expectation and variance of an arm.} 
	\label{fig:F}
\end{figure}

At round $ r $, the agent pulls an arm $ i_r \in[N]$ based on the observation history $ ((i_1,X_{1,i_1}),\ldots,(i_{r-1},X_{r-1,i_{r-1}})) $. The agent then observes $ X_{r,i_r}\sim \nu_{i_r} $.  
The rewards sampled from the same arm at different rounds are i.i.d., i.e., $ \{X_{r,i}:r\in\mathbb{N} \}$ are i.i.d.\ samples drawn from $ \nu_i $.

We assume that, if it exists, the best feasible arm is {\em unique} and the variance of the best feasible arm is strictly smaller than  $ \bar{\sigma}^2 $, i.e., $ \sigma_{i^\star}^2<\bar{\sigma}^2 $. We discuss the case  $ \sigma_{i^\star}^2=\bar{\sigma}^2 $ in App.~\ref{sec:equality_case}.  For the sake of clarity, we consider bounded rewards, which are sub-Gaussian. 
Without loss of generality, the reward distributions are supported on $ [0,1] $. We describe extensions  to sub-Gaussian rewards in   App.~\ref{sec:subgaussian}.

Given an instance $ (\nu,\bar{\sigma}^2) $, we would like to design and analyze  an algorithm  that {\em succeeds} w.h.p., i.e., to identify   whether the instance is feasible, and if so, identify the best feasible arm $ i^\star $ in the fewest number of rounds. An algorithm $  \pi:=((\pi_r)_{r\in\mathbb{N}},(\Gamma_r^\pi)_{r\in\mathbb{N}},\phi^\pi) $ determines which arm to pull, when to stop, whether the instance is feasible, and which arm to recommend. More precisely, 
\begin{itemize}
	\item The \textit{sampling strategy} $ \pi_r:([N]\times[0,1])^{r-1} \rightarrow [N]$ decides which arm to sample at round $ r $ based on the observation history, i.e.
	\begin{equation*}
		\pi_r((i_1^\pi,X_{1,i_1^\pi}),\ldots,(i_{r-1}^\pi,X_{r-1,i_{r-1}^\pi}))= i_r^\pi.
	\end{equation*}
	Let $\mathcal{H}_r:=\sigma(i_1^\pi,X_{1,i_1^\pi},\ldots,i_{t}^\pi,X_{r,i_{r}^\pi}) $ be the history of arm pulls and rewards. Then  $\pi_r$ is $ \mathcal{H}_{r-1} $-measurable.
	\item The \textit{stopping rule} $\Gamma_r^\pi$ where $\Gamma_r^\pi$ is $\mathcal{H}_r$-measurable and $\{\text{stop},\text{continue}\}$-valued, decides whether to stop the algorithm at each round $r$.
	The stopping round is denoted by $ \tau^\pi $ if the algorithm stops.
	\item The \textit{recommendation rule} $ \phi^\pi:([N]\times[0,1])^{\tau^\pi} \rightarrow \{0,1\}\times ([N]\cup\{\emptyset\})$ finally gives an estimated flag $ \hat{\mathsf{f}}^\pi\in\{0,1\} $ and an arm $ i_\mathrm{out}^\pi \in [N]$ if $ \hat{\mathsf{f}}^\pi=1 $ based on the observation history (i.e. $ \phi^\pi $ is $ \mathcal{H}_{\tau^\pi} $-measurable):
	\begin{equation*}
		\phi^\pi	 (( i_1^\pi,X_{1,i_1^\pi}),\ldots,(i_{\tau^\pi}^\pi,X_{\tau^\pi,i_{\tau^\pi}^\pi} ))= (\hat{\mathsf{f}}^\pi, i_\mathrm{out}^\pi),
	\end{equation*}
\end{itemize}
The sample complexity of the algorithm $ \pi $ is denoted as $ \tau^\pi $. In the fixed confidence setting, we say that an algorithm $ \pi $ is $\delta$-PAC if the following two conditions hold
\begin{align}
		&\mathbb{P}_\nu[\hat{\mathsf{f}}^\pi=1,i_\mathrm{out}^\pi=i^\star\mid\mathsf{f}=1]\geq1-\delta\;\;\mbox{ and }
		\\
		&\mathbb{P}_\nu[\hat{\mathsf{f}}^\pi=0\mid\mathsf{f}=0]\geq1-\delta.
\end{align} 
The above conditions imply that $\pi$ succeeds with probability at least $1-\delta$. 
Our aim is to design and analyze a $ \delta $-PAC algorithm $ \pi $ that minimizes the sample complexity $ \tau^\pi  $ in expectation and w.h.p. 
We define the optimal expected sample complexity as 
\begin{equation*}
	\tau_\delta^\star= \tau_\delta^\star(\nu,\bar{\sigma}^2) :=\inf\big\{ \mathbb{E}[\tau^\pi]:\pi \mbox{ is } \delta\mbox{-PAC}\big\},
\end{equation*}
where the infimum is taken over all $\delta$-PAC algorithms $\pi$ (as defined above).
For simplicity, we omit the superscripts $ \pi $ in $ \tau^\pi $, $ \phi^\pi $ and $ \hat{\mathsf{f}}^\pi $ if there is no risk of confusion. 

\section{The VA-LUCB   Algorithm}
We present our algorithm which is named {\em Variance-Aware-Lower and Upper Confidence Bound} (or {\em VA-LUCB}) in Algorithm~\ref{VALUCB}. Given an instance $ (\nu,\bar{\sigma}^2) $, the agent pulls each arm according to the VA-LUCB policy to ascertain whether the instance is feasible and to determine which arm is the best feasible arm if the instance is ascertained to be feasible.
\begin{algorithm}
	\caption{Variance-Aware LUCB (VA-LUCB)}
	\begin{algorithmic}[1]
		\STATE \textbf{Input}: threshold $\bar{\sigma}^2 \!>\!0$ and confidence parameter $ \delta\! \in\! (0,1)$. 
		\STATE Sample each of the $ N $ arms twice and set $\bar{\mathcal{F}}_{N}=[N]$.
		\FOR{ time step $ t=N+1, N+2\ldots $}
		\STATE Compute the sample mean using \eqref{sm} and sample variance using \eqref{sv} for $i\in\bar{\mathcal{F}}_{t-1}$.
		\STATE Update the confidence bounds for the mean and variance by \eqref{cb} and \eqref{cbv} for $i\in\bar{\mathcal{F}}_{t-1}$.
		\STATE Update $ \mathcal{F}_t$ and $ \bar{\mathcal{F}}_t$ (see \eqref{f} and \eqref{ni}).
		\STATE Find $ 	i_t^\star:=\mathop{\rm argmax}\{\hat{\mu}_i(t):i\in\mathcal{F}_t\}$ if $ \mathcal{F}_t\neq\emptyset $.
		\STATE Update $ \mathcal{P}_t$ according to \eqref{p}.
		\IF{$\bar{\mathcal{F}}_t\cap\mathcal{P}_t=\emptyset$}
		\STATE \textbf{if} $ \mathcal{F}_t\neq\emptyset $ \textbf{then} Set $ i_\mathrm{out}=i_{t} $ using \eqref{it} and $ \hat{\mathsf{f}}=1 $. \textbf{else} Set $ \hat{\mathsf{f}}=0 $. \textbf{end if}
		\BREAK
		\ENDIF
		\IF{$ |\bar{\mathcal{F} }_t|=1$}
		\STATE Sample arm  $i_{t}$ using \eqref{it} (in one round).
		\ELSE
		\STATE Find $i_{t}$ and competitor arm $c_{t}$ according to \eqref{ct}.
		\STATE  \textbf{if} $ U_{c_{t}}^\mu(t) \geq L_{i_{t}}^\mu(t) $ \textbf{then} Sample arms $ i_{t} $ and $ c_{t} $ (in two rounds).
		\STATE \textbf{else} Sample arm $ i_{t} $ (in one round). \textbf{end if}
		\ENDIF 
		\ENDFOR
	\end{algorithmic}\label{VALUCB}
\end{algorithm}

Each {\em time step} (Lines $ 3 $ to $ 19 $) in our algorithm consists of one or two {\em rounds}, i.e., the agent may pull one or two arms at each time step. 
The algorithm warms up by pulling each of the arms twice (Line $ 2 $). 
At time step $ t $, we first update the sample means, the sample variances and the confidence bounds of the arms that require exploration (Lines $4$ and~$5$); these are the arms in the so-called possibly feasible set $\bar{\mathcal{F}}_{t-1}$, which will be defined formally in~\eqref{ni}. Let $\mathcal{J}_t$ denote the set of arms sampled at time step $t$.
Define $T_i(t):= \sum_{s=1}^{t-1}\mathbbm{1}\{i\in\mathcal{J}_s \}$ to be the number of times arm $ i $ is pulled before time step $t$.
For arm $ i $ that requires exploration, the sample mean and sample variance before time step $t$ 
are 
\begin{align}
	\hat{\mu}_i(t) &:=\frac{1}{T_i(t)}\sum_{s=1}^{t-1} X_{s,i}\mathbbm{1}\{i \in\mathcal{J}_s\},\quad\mbox{ and}\label{sm}\\
	\hat{\sigma}^2_i(t)& :=
	\frac{\sum_{s=1}^{t-1}\left(X_{s,i}-\hat{\mu}_i(t) \right)^2\mathbbm{1}\{i\in\mathcal{J}_s\}} {T_i(t)-1} .
	\label{sv}
\end{align} 
We define the {\em confidence radii} for the mean and variance as 
\begin{equation}\label{radius}
	\alpha(t,T)=\beta(t,T):=\sqrt{\frac{1}{2 T} \ln \left(\frac{2 N t^{4}}{\delta}\right)}. 
\end{equation}
We denote the {\em lower} and {\em upper confidence bounds} (LCB and UCB) for the empirical mean  of arm  $i$   as 
\begin{align}
		&L_i^\mu(t):= \hat{\mu}_i(t)-\alpha(t,T_i(t))\quad\mbox{and}\\
		&U_i^\mu(t):= \hat{\mu}_i(t)+\alpha(t,T_i(t))\label{cb}
\end{align} 
respectively, as well as the LCB and UCB for the  empirical  variance respectively as 
\begin{align}
		&L_i^{\mathrm{v}}(t) := \hat{\sigma}^2_i(t)-\beta(t,T_i(t))\quad\mbox{and}\\
		&U_i^{\mathrm{v}}(t) := \hat{\sigma}^2_i(t)+\beta(t,T_i(t)).\label{cbv}
\end{align}
\subsection{Partition of the Arms} \label{sec:partition}
Based on the empirical variances, at each time step $ t $, we partition the arms into three disjoint subsets based on the confidence bounds on the variance (Line $ 6 $ of Algorithm~\ref{VALUCB}). 
The first set is the {\em empirically feasible set} at time step $ t $,
\begin{equation}\label{f}
	\mathcal{F}_t:=\{i: U_i^{\mathrm{v}}(t)\leq \bar{\sigma}^2\}.
\end{equation}
The second set  is the {\em empirically almost feasible set}, 
\begin{equation}\label{af}
	\partial\mathcal{F}_t:=\{i:L_i^{\mathrm{v}}(t) \leq \bar{\sigma}^2 < U_i^{\mathrm{v}}(t) \}.
\end{equation}
We define the union of the above two sets as the {\em possibly feasible set}, 
\begin{equation}\label{ni}
	\bar{\mathcal{F}}_t:=\mathcal{F}_t\cup\partial\mathcal{F}_t.
\end{equation}
The {\em empirically infeasible set} at time step $ t $ is
\begin{equation}\label{if}
	\bar{\mathcal{F}}_t^c:=\{i:L_i^{\mathrm{v}}(t) > \bar{\sigma}^2\} .
\end{equation}
These sets are illustrated in Figure \ref{fig:Ft}.
\begin{figure}[t]
	\centering
	\includegraphics[width=.45\textwidth]{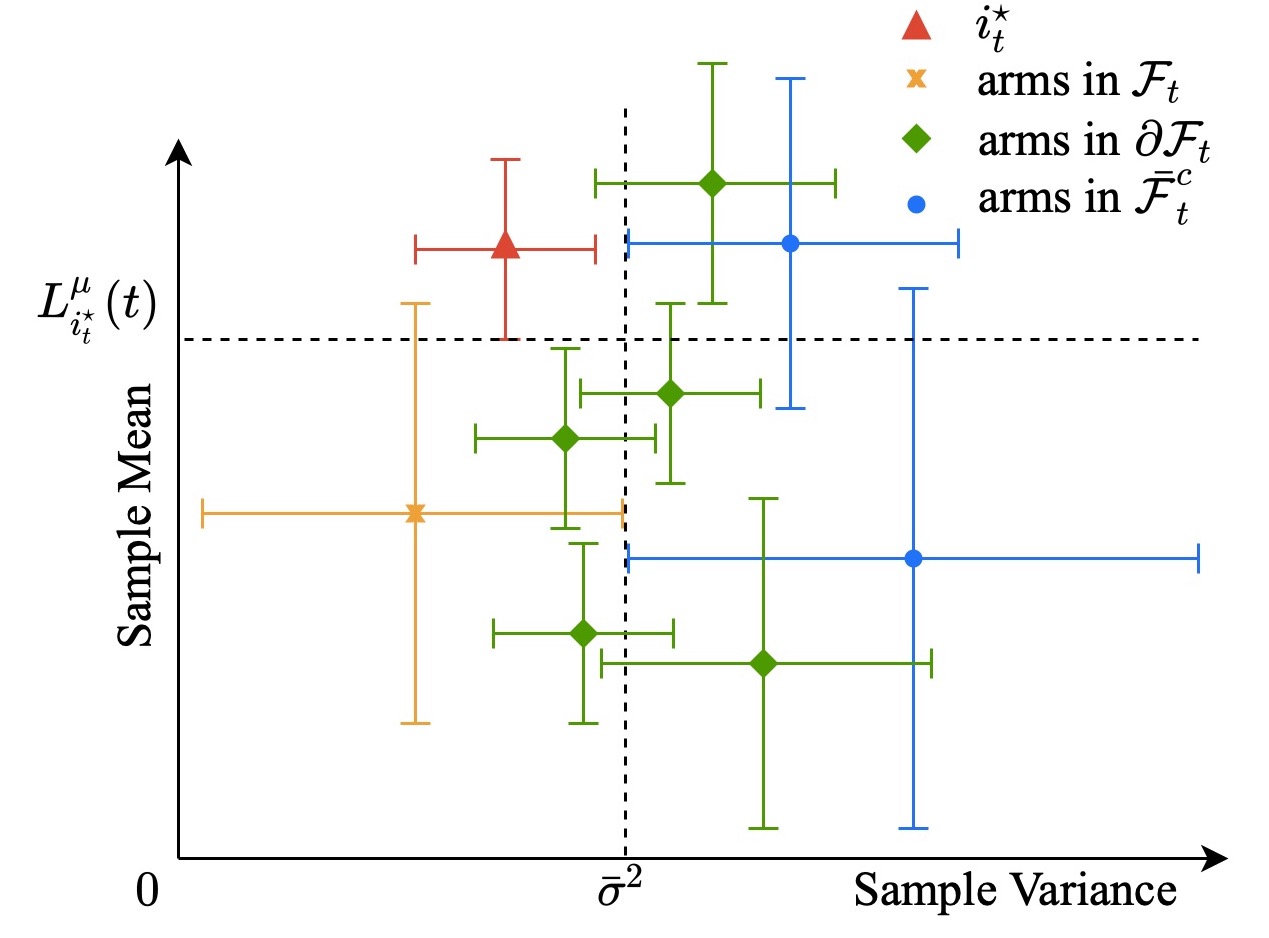}
	\caption{Illustration of the empirical sets. Each dot represents the estimated mean and variance of each arm at time step $t$. The horizontal (resp.\ vertical) component and the horizontal (resp.\ vertical)  crossbar indicates the sample variance  (resp.\ mean) and the confidence interval for the true variance (resp.\ mean). $ i_t $ is the green arm at the top and $ c_t = i_t^\star $. Arms whose  UCBs of the empirical means $U_i^\mu(t)$ are greater than $ L_{i_t^\star}^\mu(t) $ are in~$ \mathcal{P}_t $.}
	\label{fig:Ft}
\end{figure}
The arms that require exploration are the arms in the possibly feasible set $\bar{\mathcal{F}}_t$.
Our intuition is that w.h.p., the true variance of each arm is bounded by the corresponding LCB and UCB, i.e., $ \sigma_i^2\in[L_i^{\mathrm{v}}(t),U_i^{\mathrm{v}}(t)] $; this is stated precisely in Lemma~\ref{Ehappens}. 
If one arm lies in $ \mathcal{F}_t$, it is feasible ($ \sigma_i^2 \leq\bar{\sigma}^2$) w.h.p. Thus only its sample mean needs to be further examined. If one arm lies in $\partial\mathcal{F}_t$, its true feasibility remains unclear, which indicates that this arm needs to be pulled more. If one arm lies in $  \bar{\mathcal{F}}_t^c $, it is infeasible ($ \sigma_i^2 >\bar{\sigma}^2$) w.h.p. Hence, it will not be pulled in future. In summary, only the arms in the possibly feasible set $\bar{\mathcal{F}}_t$ need to be explored more. This justifies the update rules in Lines~$4$ and~$5$ of Algorithm~\ref{VALUCB}.

In terms of the sample mean, if $ \mathcal{F}_t\neq\emptyset $, there is an {\em empirically best feasible arm} at time step $t$ (Line $ 7 $)
\begin{equation}\label{itstar}
	i_t^\star:=\mathop{\rm argmax}\{\hat{\mu}_i(t):i\in\mathcal{F}_t\} .
\end{equation}
Define the {\em potential set} at time step $ t$ (Line $ 8 $) as:  
\begin{equation}\label{p}
	\mathcal{P}_t:=\left\{\begin{aligned}
		&\{i:L_{i_t^\star}^\mu(t)\leq U_i^\mu(t),i\neq i_t^\star\}  ,&\mathcal{F}_t\neq \emptyset\\
		&[N],&\mathcal{F}_t=\emptyset
	\end{aligned}\right. .
\end{equation} 
The potential set contains those arms which potentially have greater expectations than $ i_t^\star $, regardless of their feasibility. 

Considering both the sample variance and sample mean, arms in $ \bar{\mathcal{F}}_t\cap\mathcal{P}_t  $ are said to be  {\em competitor arms}, in the sense that they are possibly feasible and potentially have greater means than the empirically best feasible arm $ i_t^\star $. In conclusion, only the competitor arms in the set $ \bar{\mathcal{F}}_t\cap\mathcal{P}_t  $ need to be pulled more, which also motivates our stopping rule.

\subsection{Stopping Rule}
The intuition for the stopping rule in Lines $ 9$ to $12 $ of VA-LUCB is straightforward. 
If the given instance is infeasible, after pulling all arms sufficiently many times, we have $\mathcal{F}_t= \bar{\mathcal{F}}_t=\emptyset$ and $\mathcal{P}_t=[N] $ (i.e., all the arms are deemed to be   infeasible) w.h.p.  Thus we set the flag $ \hat{\mathsf{f} }=0$ for this instance. 
If the given instance is feasible, after sufficiently many arm pulls,
the arms in $\mathcal{R} \backslash\mathcal{F}$, the best feasible arm $i^\star$ and the arms in $\mathcal{S}$ will be ascertained to be infeasible, 
feasible, 
and  suboptimal respectively. 
At the stopping time step $ \tau $, $ \mathcal{F}_\tau\neq \emptyset$, and we   expect that $i_{\tau}=i_\tau^\star=i^\star $, where 
\begin{equation}\label{it}
	i_{t}:=\mathop{\rm argmax}\big\{\hat{\mu}_i(t):i\in \bar{\mathcal{F}}_t\big\}.
\end{equation}
We formalize this intuition in Lemma \ref{ioutisistar} in Section~\ref{proofsketch}.
When $\bar{\mathcal{F}}_t \cap\mathcal{P}_t    =\emptyset $, there are no competitor arms and we are confident in asserting that the instance is feasible, i.e., $\hat{\mathsf{f}}=1 $ and $ i_t $ is the best feasible arm.

\subsection{Sampling Strategy}
When the algorithm has not terminated, $ \mathcal{P}_t$ and $\bar{\mathcal{F}}_t  $ are not empty.  The intuition for the sampling strategy in Lines~$13$ to~$20 $ of VA-LUCB can be justified as follows.
If the given instance is feasible,
firstly, when $ i_{t} $ is a truly infeasible arm, its infeasibility needs to be ascertained, and secondly,  when $ i_{t}$ is a truly feasible arm, we need to check both of its feasibility and optimality. Thus, in either case, arm $i_t$ requires more pulls.
When $|\bar{\mathcal{F}}_t|>1  $, define the {\em best competitor arm} to $i_t$ as\footnote{Note the {\em competitor arms} are defined for $i_t^\star$ and the {\em best competitor arm} is defined for $i_t$. However, when the given instance is feasible and the arms in $ \bar{\mathcal{F}}^c\cap\mathcal{R} $ are identified as infeasible w.h.p., $ i_t $ will likely be $ i_t^\star $ and $ c_t $ will likely be an suboptimal arm in $\mathcal{S}$. Thus $c_t\in\mathcal{P}_t$ and is a competitor arm to $i_t^\star$ w.h.p.}
\begin{equation}\label{ct}
	c_{t}:=\mathop{\rm argmax}\{U_i^\mu(t):i\in \bar{\mathcal{F}}_t,i\neq i_{t}\}.
\end{equation} 
The fact that $c_t\in\mathcal{P}_t\cup\{i_t^\star\}$ can be justified by Lemma~\ref{ctinpt} in App.~\ref{upperbound}.
Thus, more pulls of $ c_{t} $ are   needed to ascertain which of $ i_{t} $ and $ c_{t} $ has a larger true mean.
If the given instance is infeasible, all the arms in $ \bar{\mathcal{F}}_t $, including $ i_t $ and $c_t $, need to be sampled more times to assert they are indeed infeasible.

We remark that in VA-LUCB, we are {\em interleaving} the verification of optimality (in the mean aspect) and feasibility (in the variance aspect). This is in stark contrast to a  na\"ive but suboptimal strategy in which one uses a two-phase strategy to first identify the feasible arms, then search among these arms for the one with the largest mean. 

\section{Bounds on the Time Complexity}
We state an  upper bound on the sample complexity of our VA-LUCB algorithm and a lower bound on the optimal expected sample complexity over all algorithms.
\subsection{Time Complexity of VA-LUCB}
Given an instance $ (\nu,\bar{\sigma}^2) $, define the {\em variance-aware hardness parameter}
\begin{align}
	H_{\mathrm{VA}} &:= \frac{1}{\min\{\frac{\Delta_{i^\star}}{2},\Delta_{i^\star}^{\mathrm{v}}\}^2 }
	+\sum_{i\in\mathcal{F}\cap\mathcal{S}}\frac{1}{(\frac{\Delta_i}{2})^2}\\*
	&\quad+\sum_{i\in\bar{\mathcal{F}}^c\cap\mathcal{R}}\frac{1}{(\Delta_{i}^{\mathrm{v}})^2 }
	\!+\!\sum_{i\in\bar{\mathcal{F}}^c\cap\mathcal{S}}\frac{1}{\max\{\frac{\Delta_i}{2},\Delta_i^{\mathrm{v}}\}^2}. \label{eqn:hardness}
\end{align}
Our main result is stated as follows.
\begin{restatable}[Upper bound]{thm}{thmupperbound}\label{thm:up_bd}
	Given an instance $ (\nu,\bar{\sigma}^2) $ and confidence parameter $ \delta $, 
	with probability at least $1-\delta$, 
	VA-LUCB succeeds and terminates in
	\begin{equation}
		O\left( H_{\mathrm{VA}}\ln\frac{H_{\mathrm{VA}}}{\delta}\right)\quad \mbox{time steps}. \label{eqn:time_steps}
	\end{equation} 
\end{restatable}
The implied constant in the O-notation can be taken to be no more than $304$. The mean gap $\Delta_i$ and variance gap $\Delta_i^{\mathrm{v}}$ of arm $ i $ are indicative of the hardness of ascertaining its optimality and feasibility respectively. It is easy to see that when the threshold $\bar{\sigma}^2\to \infty$ (in fact, $\bar{\sigma}^2\geq3/4$ suffices), $H_\mathrm{VA}$ reduces to the hardness parameter $H_1:=\sum_{i\neq i^\star}\Delta_i^{-2}$ in the conventional (unconstrained) BAI problem \cite{Audibert2010}.

The intuitions for the four terms  in $ H_\mathrm{VA} $ are as follows:
Firstly, to identify the best feasible arm $i^\star$, both of its feasibility and optimality need to be ascertained, which leads to the first term. 
Secondly, for the arms in $ \mathcal{F}\cap\mathcal{S} $, we can identify them once we have established that they are indeed suboptimal, explaining the dependence  on $\Delta_i^{-2},i \in \mathcal{F}\cap\mathcal{S}$.
Thirdly, since the arms in $ \bar{\mathcal{F}}^c\cap\mathcal{R} $ have larger means than the best feasible arm, the algorithm needs to sample them sufficiently many times to learn they are infeasible, which contributes to the third term in $ H_\mathrm{VA} $. 
Finally, when either the suboptimality or the infeasibility of the arms in $ \bar{\mathcal{F}}^c\cap\mathcal{S} $ is ascertained, we can eliminate them, which explains the last term in $H_\mathrm{VA} $.
The proof of Theorem~\ref{thm:up_bd} is presented in  App.~\ref{upperbound}. 

\begin{remark} \label{rmk:LIL}
    We highlight that Algorithm~\ref{VALUCB} constitutes a convenient framework to tackle any risk-aware BAI problem in the sense that it  is compatible with other concentration bounds.
     For example, one can define alternative confidence radii, different from those specified in~\eqref{radius}, based on the (non-asymptotic)
    Law of  the Iterated Logarithms (LIL)  \cite{jamieson14lil,howard2021time,simchowitz17the,tanczos2017a}. 
     We adopt a simple non-asymptotic LIL concentration bound from Jamieson {\em et al.}~\cite{jamieson14lil} to show that different confidence bounds utilized in  VA-LUCB (Algorithm~\ref{VALUCB}) can lead to slightly different upper bounds on the stopping time with high-probability.
    First, we replace the unbiased sample variance $\hat{\sigma}^2(t)$ by  a biased counterpart
    \begin{align}
        &\tilde{\sigma}^2_i(t) 
		:=
	    \frac{\sum_{s=1}^{t-1}\left(X_{s,i}-\hat{\mu}_i(t) \right)^2\mathbbm{1}\{i\in\mathcal{J}_s\}} {T_i(t)} \\
		&= \frac{1}{T_i(t)}\sum_{s=1}^{t-1} X_{s,i}^2\mathbbm{1}\{i\in\mathcal{J}_s\} -\bigg(\frac{1}{T_i(t)}\sum_{s=1}^{t-1} X_{s,i}\mathbbm{1}\{i\in\mathcal{J}_s\} \bigg)^2.
	\end{align}
    Next, we redefine the the confidence radii $\alpha(t,T)$ and $\beta(t,T)$ (originally defined in~\eqref{radius}) by
    \begin{align}
	     &\tilde{\alpha}(t) :=\left(1+\sqrt{\epsilon}\right)\sqrt{\frac{1+\epsilon}{2t}\ln\left(\frac{4N\ln\left((1+\epsilon)t\right)}{\delta}\right)},\quad\mbox{and}\\
	    &\tilde{\beta}(t):=3\tilde{\alpha}(t),\label{newbounds}
    \end{align}
    respectively, where   $\epsilon\in(0,1)$ is a fixed constant. 
    These choices of the confidence radii allow us to avoid using a union bound to bound the probability of the complement of the  ``good'' event $E$ in \eqref{E}. With the above modifications, we show in  App.~\ref{LIL} that VA-LUCB is $\delta$-PAC (for $\epsilon=0.9$ and $\delta<0.1$) and succeeds in 
    \begin{equation}
    O\left(H_{\mathrm{VA}}^{(1)}\ln\frac{N}{\delta}+H_{\mathrm{VA}}^{(3)}\right) \quad \mbox{time steps,}      \label{eqn:LIL_time_steps}
    \end{equation} 
    where
	\begin{align}
	    H_{\mathrm{VA}}^{(1)} &:=\frac{1}{\min\{\Delta_{i^\star},\frac{2}{3}\Delta_{i^\star}^\mathrm{v}\}^2}
	    +\sum_{i\in\mathcal{F}\cap\mathcal{S}}\frac{1}{\Delta_i^2}\\
	    &\;+\sum_{i\in\bar{\mathcal{F}}^c\cap\mathcal{R}}\frac{1}{(\frac{2}{3}\Delta_i^\mathrm{v})^2 }
	    +\sum_{i\in\bar{\mathcal{F}}^c\cap\mathcal{S}}\frac{1}{\max\{\Delta_i,\frac{2}{3}\Delta_i^{\mathrm{v}}\}^2}, \label{lilhardness1} \\
	    H_{\mathrm{VA}}^{(3)}&:=\Hard \Big(\frac{1}{\min\{\Delta_{i^\star},\frac{2}{3}\Delta_{i^\star}^\mathrm{v}\}^2} \Big)
	    +\sum_{i\in\mathcal{F}\cap\mathcal{S}}\Hard \Big(\frac{1}{\Delta_i^2}\Big)\\
	    &+\sum_{i\in\bar{\mathcal{F}}^c\cap\mathcal{R}}\Hard \Big(\frac{1}{(\frac{2}{3}\Delta_i^\mathrm{v})^2 }\Big)
	    +\sum_{i\in\bar{\mathcal{F}}^c\cap\mathcal{S}}\Hard\Big(\frac{1}{\max\{\Delta_i,\frac{2}{3}\Delta_i^{\mathrm{v}}\}^2}\Big)\label{lilhardness2}
	\end{align}
	and $\Hard: x\in\mathbb{R}_+ \mapsto x  \ln\ln_+(x)$ with $\ln\ln_+: x\in\mathbb{R}_+\mapsto\ln\ln(\max\{e,x\})$.
    Note that $H_{\mathrm{VA}}^{(1)}$ and $H_\mathrm{VA}$ are order-wise equal and $H_{\mathrm{VA}}^{(3)}$  is also of the same order as $H_\mathrm{VA}$ up to double logarithmic terms in the gaps $\{( \Delta_i,\Delta_i^\mathrm{v} )\}_{i\in [N]}$. 
\end{remark}

\begin{remark}
    While we only study  a generalization of the LUCB-based method \cite{Kalyanakrishnan2012} for this variance-constrained BAI problem, other confidence bound-based strategies, e.g., Successive Elimination \cite{Even2006action} and lil'UCB \cite{jamieson14lil}, also have the potential to be generalized to solve this problem. We provide some intuitions in the following.
    \begin{itemize}
        \item {\bf Successive Elimination}: Denote the set of active arms as $\mathcal{A}_t$ and initialize $\mathcal{A}_0=[N]$. At each time step $t$, the algorithm first pulls all active arms once and updates the sample means and sample variances. It then updates the confidence bounds $\alpha(t)$ and $\beta(t)$ for the means and variances (which are similar to the $\alpha_t$ in \cite[Alg.~3]{Even2006action}). Next, it  identifies the empirically best feasible arm $i_t^\star=\argmax\{\hat{\mu}_i(t):i\in\mathcal{A}_t,U_i^\mathrm{v}(t)\leq \bar{\sigma}^2\}$. Finally, it updates the active arm set to be $\mathcal{A}_{t+1}=\{i\in\mathcal{A}_t:\hat{\mu}_{i_t^\star}(t)-\hat{\mu}_i(t)<2\alpha(t),\hat{\sigma}_i^2(t)-\bar{\sigma}^2<\beta(t)\}$. The algorithm terminates when the active set is empty (which indicates  that the instance is infeasible) or the active set contains  only the empirically best feasible arm (which is then declared  to be the best feasible arm).
        \item {\bf lil'UCB}: The sampling strategy for this algorithm when there is a constraint on variance (or risk) of the arms is obvious. In particular,   the algorithm samples arm $i_t=\argmax\{U_i^\mu(t):L_i^\mathrm{v}(t)\leq \bar{\sigma}^2\}$ where $U_i^\mu(t)$ and  $L_i^\mathrm{v}(t)$ are constructed in view of the LIL. However, the stopping criterion is not straightforward, since the LIL-based stopping rule \cite{jamieson14lil} cannot be directly utilized. This is an interesting direction for future research.
    \end{itemize}
\end{remark}

\subsection{Lower Bound}
A natural question is whether the upper bound stated in Theorem \ref{thm:up_bd} (or the number of time steps of the LIL version of VA-LUCB in~\eqref{eqn:LIL_time_steps}) is tight and whether the quantity $H_{\mathrm{VA}}$ is {\em fundamental}. This is addressed in this section via an information-theoretic lower bound which indicates the expected sample complexity  of VA-LUCB is optimal up to $\ln  H_\mathrm{VA} $. 

Since the rewards  are bounded in $[0,1]$, the variance of each arm is at most ${1}/{4}$.
Therefore, when $\bar{\sigma}^2 \in[ 1/4,\infty)$, all arms are feasible and there exists a generic lower bound \cite{Garivier2016}.
When $\bar{\sigma}^2 \in (0,1/4)$, let 
\begin{align*}
	\bar{a} := \frac{1 + \sqrt{1 - \bar{\sigma}^2} }{2 }
	~\text{ and }~
	\underline{a} := \frac{1 - \sqrt{1 - \bar{\sigma}^2} }{2 } . 
\end{align*}
These quantities are the solutions to the quadratic equation $a(1-a)=\bar{\sigma}^2$.
\begin{restatable}[Lower bound]{thm}{thmLowBd}
	\label{thm:low_bd}
	Given any instance $ (\nu,\bar{\sigma}^2) $ with $\bar{\sigma}^2\in (0,1/4)$, define the constant $c(\nu,\bar{\sigma}^2) :=\min \big\{ \underline{a} (1/4 - \bar{\sigma}^2 ) ,\ \underline{a}/ 8 ,\  ( 1-\mu_{ i^{\star} })/  8 
	\big\}$,  
	\begin{align}
		&\tau_\delta^\star
		\ge  
		c(\nu,\bar{\sigma}^2)\, H_\mathrm{VA}\,
		\ln \bigg( \frac{1}{2.4\delta} \bigg)   \label{eqn:lb}
		.
	\end{align}   
\end{restatable}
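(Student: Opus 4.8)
The plan is to establish the lower bound via the standard change-of-measure argument for fixed-confidence identification problems, following the information-theoretic framework of Kaufmann, Cappé, and Garivier and the transportation-inequality approach of \citet{Garivier2016}. The key tool is the observation that for any $\delta$-PAC algorithm and any pair of instances $(\nu,\bar\sigma^2)$ and $(\nu',\bar\sigma^2)$ that induce \emph{different correct answers} (i.e., different feasibility flags or different best feasible arms), we have the inequality $\sum_{i\in[N]}\mathbb{E}_\nu[T_i(\tau)]\,\mathrm{KL}(\nu_i,\nu_i')\ge \mathrm{kl}(\delta,1-\delta)\ge\ln(1/(2.4\delta))$, where $\mathrm{kl}$ is the binary relative entropy. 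The strategy is therefore: for each arm $j$ whose contribution appears in $H_\mathrm{VA}$, construct a carefully chosen ``confusing'' alternative instance $\nu^{(j)}$ that differs from $\nu$ only in the distribution of arm $j$, such that the correct answer changes, while the KL-divergence $\mathrm{KL}(\nu_j,\nu_j^{(j)})$ is controlled by the square of the relevant gap. Summing the resulting $N$ lower bounds on individual $\mathbb{E}_\nu[T_j(\tau)]$ and using $\mathbb{E}_\nu[\tau]\ge\max_j \mathbb{E}_\nu[T_j(\tau)]\ge\frac{1}{N}\sum_j\mathbb{E}_\nu[T_j(\tau)]$ — or, better, using the structure of $H_\mathrm{VA}$ as a \emph{sum} so that we can add the per-arm bounds directly — yields the claimed bound $\tau_\delta^\star\ge c(\nu,\bar\sigma^2)H_\mathrm{VA}\ln(1/(2.4\delta))$.

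Concretely, I would handle the four groups of terms separately. (i) For the first term, involving $\min\{\Delta_{i^\star}/2,\Delta_{i^\star}^{\mathrm v}\}$, I would perturb arm $i^\star$ either in mean (to make it suboptimal relative to $i^{\star\star}$, a mean shift of order $\Delta_{i^\star}$) or in variance (to push $\sigma_{i^\star}^2$ above $\bar\sigma^2$, a variance shift of order $\Delta_{i^\star}^{\mathrm v}$), whichever is cheaper — this is exactly the $\min$. (ii) For $i\in\mathcal F\cap\mathcal S$, perturb the mean of arm $i$ upward past $\mu_{i^\star}$, changing the identity of the best feasible arm; the cost is $\sim(\Delta_i/2)^{-2}$. (iii) For $i\in\bar{\mathcal F}^c\cap\mathcal R$, these arms are infeasible but have large means, so I would perturb their variance \emph{downward} below $\bar\sigma^2$, making them feasible and (since they beat $i^\star$ in mean) the new best feasible arm; cost $\sim(\Delta_i^{\mathrm v})^{-2}$. (iv) For $i\in\bar{\mathcal F}^c\cap\mathcal S$, either make the arm feasible (variance perturbation of size $\Delta_i^{\mathrm v}$) or raise its mean past $\mu_{i^\star}$ (mean perturbation of size $\Delta_i/2$), whichever is cheaper, but since making it the new answer requires \emph{both} feasibility and optimality only one suffices if combined cleverly — actually the correct reading giving the $\max$ is that the adversary must change whichever takes more samples, so the algorithm is forced to distinguish along the harder direction; I would argue this carefully by noting that to be \emph{sure} arm $i$ is not the answer the algorithm must rule out both its feasibility and its optimality, hence sample it $\gtrsim\max\{(\Delta_i/2)^{-2},(\Delta_i^{\mathrm v})^{-2}\}$ times.

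The quantitative heart of each case is bounding the KL-divergence incurred by a variance (or mean) perturbation of a $[0,1]$-supported distribution by (a constant times) the squared gap. This is where the constant $c(\nu,\bar\sigma^2)=\min\{\underline a(1/4-\bar\sigma^2),\underline a/8,(1-\mu_{i^\star})/8\}$ enters: to perturb the variance I would move mass within $[0,1]$ (e.g. shifting a two-point or mixture representation), and the parameters $\bar a,\underline a$ solving $a(1-a)=\bar\sigma^2$ give the cheapest way to realize variance exactly $\bar\sigma^2$; the factor $1/4-\bar\sigma^2$ reflects how much ``room'' there is to decrease variance, $\underline a$ and $1-\mu_{i^\star}$ control the Lipschitz-type constant relating KL to the perturbation size near the relevant support points. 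I would invoke a lemma (either a reverse-Pinsker-type bound for distributions bounded away from the endpoints, or a direct computation with explicitly constructed alternatives) to get $\mathrm{KL}(\nu_i,\nu_i^{(i)})\le \mathrm{gap}^2/c(\nu,\bar\sigma^2)$.

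The main obstacle I anticipate is case (iv) and, more generally, getting the \emph{sum} structure to come out cleanly: the standard single-alternative argument lower-bounds one $\mathbb{E}_\nu[T_j(\tau)]$ at a time, and to recover $H_\mathrm{VA}$ as an additive sum (rather than losing a factor $N$) one must argue that a \emph{single} well-chosen alternative simultaneously forces large sampling of many arms, or else appeal to the fact that $\tau\ge\sum_j T_j(\tau)$ and that each per-arm transportation bound $\mathbb{E}_\nu[T_j(\tau)]\,\mathrm{KL}(\nu_j,\nu_j^{(j)})\ge\ln(1/(2.4\delta))$ can be summed since the alternatives differ from $\nu$ in a single coordinate each. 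I would use the latter: pick, for each $j$, the alternative $\nu^{(j)}$ described above; then $\mathbb{E}_\nu[\tau]=\sum_j\mathbb{E}_\nu[T_j(\tau)]\ge\sum_j \ln(1/(2.4\delta))/\mathrm{KL}(\nu_j,\nu_j^{(j)})\ge c(\nu,\bar\sigma^2)\ln(1/(2.4\delta))\sum_j(\text{relevant gap}_j)^{-2}=c(\nu,\bar\sigma^2)H_\mathrm{VA}\ln(1/(2.4\delta))$. Verifying that each $\nu^{(j)}$ is a legitimate $[0,1]$-distribution with the claimed correct-answer change, and that the KL bounds hold uniformly with the stated constant, is the bulk of the remaining work; the $\min/\max$ subtleties in the first and fourth terms require the most care.
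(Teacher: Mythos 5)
Your overall strategy is the same as the paper's: for each arm construct a single-coordinate confusing alternative, apply the transportation inequality of \citet{Kaufmann2016} to lower-bound each $\mathbb{E}_\nu[T_j(\tau)]$, and sum using $\tau\ge\sum_j T_j(\tau)$. However, your treatment of the fourth group $i\in\bar{\mathcal F}^c\cap\mathcal S$ contains a genuine error. The term in $H_{\mathrm{VA}}$ is $\max\{\Delta_i/2,\Delta_i^{\mathrm v}\}^{-2}=\min\{(\Delta_i/2)^{-2},(\Delta_i^{\mathrm v})^{-2}\}$, i.e.\ the \emph{smaller} of the two inverse squared gaps, whereas you argue that ``to be sure arm $i$ is not the answer the algorithm must rule out both its feasibility and its optimality, hence sample it $\gtrsim\max\{(\Delta_i/2)^{-2},(\Delta_i^{\mathrm v})^{-2}\}$ times.'' That claim is false: to exclude such an arm it suffices to certify \emph{either} infeasibility \emph{or} suboptimality, and indeed the matching upper bound (Theorem~\ref{thm:up_bd}) shows VA-LUCB spends only $\tilde O(\min\{(\Delta_i/2)^{-2},(\Delta_i^{\mathrm v})^{-2}\})$ pulls on these arms, so your stronger per-arm lower bound cannot hold. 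The correct change-of-measure logic runs in the opposite direction: to make arm $i$ the new answer the adversary must shift it in \emph{both} mean and variance, so the KL cost of the confusing alternative is at least of order $\max\{\Delta_i/2,\Delta_i^{\mathrm v}\}^2$, which is precisely why only the smaller inverse square survives in $H_{\mathrm{VA}}$. (In the paper this is handled implicitly: in its Case~(ii) the alternative pushes $\mu_i$ above $\mu_{i^\star}$, which for Bernoulli arms simultaneously fixes the variance, and the resulting $\Delta_i^{-2}$ bound is then weakened to $\tfrac14\max\{\Delta_i/2,\Delta_i^{\mathrm v}\}^{-2}$.)

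Two further points where your plan diverges from what actually makes the argument close. First, the paper works entirely with Bernoulli instances: the identity $\sigma_a^2-\sigma_b^2=(a-b)(1-a-b)$ converts every variance gap into a mean gap, and the reverse Pinsker inequality for Bernoulli KL (Lemma~\ref{thm:pinsker}) is what produces the explicit constant $c(\nu,\bar\sigma^2)=\min\{\underline a(1/4-\bar\sigma^2),\underline a/8,(1-\mu_{i^\star})/8\}$. Your proposal to realize variance perturbations of general $[0,1]$-supported distributions by ``shifting mass'' would require a separate KL-versus-gap lemma that you have not supplied, and it is not clear it would yield this constant (or any uniform constant, e.g.\ for distributions with atoms). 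Second, you do not address the case $\mathcal F=\emptyset$, where the correct answer is the feasibility flag rather than an arm and the alternatives must create a feasible arm; this is the paper's Case~(i) and is needed for the theorem as stated (note $\bar{\mathcal F}^c\cap\mathcal R=[N]$ there, so all $N$ terms of $H_{\mathrm{VA}}$ must still be recovered). Your handling of the first, second, and third groups, and your resolution of the summation issue via single-coordinate alternatives, are correct and match the paper.
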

The proof is in  App.~\ref{lowerbound}. Based on Theorems~\ref{thm:up_bd} and~\ref{thm:low_bd}, we have the following corollary whose proof is also provided in App.~\ref{lowerbound}.	This almost conclusive result says that we have  characterized $\tau_\delta^\star$ up to a  (small)  factor logarithmic in~$ H_{\mathrm{VA}}$. 

\begin{restatable}[Almost optimality of VA-LUCB]{cor}{tightness} \label{cor:almost_opt}
	Given any instance $ (\nu,\bar{\sigma}^2) $ and confidence parameter $\bar{\sigma}^2\in(0,1/4)$, the optimal expected sample complexity is 
	\begin{equation}\label{thm:tightness}
		\tau_\delta^\star=O\Big(H_\mathrm{VA}\ln\frac{H_\mathrm{VA}}{\delta}\Big)\bigcap \Omega\Big(H_\mathrm{VA}\ln\frac{1}{\delta}\Big).
	\end{equation}
	The bounds  can also be expressed as 
	\begin{equation}
	   \lim_{\delta\downarrow 0} \frac{\tau_\delta^\star}{\log \frac{1}{\delta}  \vphantom{\big)} }=\Theta( H_{\mathrm{VA }} ),
	\end{equation}
	and  VA-LUCB achieves the upper bounds.
\end{restatable}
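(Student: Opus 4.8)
The plan is to deduce \eqref{thm:tightness} by stitching Theorem~\ref{thm:up_bd} to Theorem~\ref{thm:low_bd}; the only genuine work is converting the high-probability guarantee of Theorem~\ref{thm:up_bd} into a bound on the \emph{expected} complexity $\tau_\delta^\star$, which is defined as an infimum of expectations over $\delta$-PAC algorithms. The lower-bound half is immediate from Theorem~\ref{thm:low_bd}: it gives $\tau_\delta^\star\ge c(\nu,\bar{\sigma}^2)\,H_\mathrm{VA}\,\ln\!\big(\tfrac{1}{2.4\delta}\big)$, and since $\bar{\sigma}^2\in(0,1/4)$ the constant $c(\nu,\bar{\sigma}^2)=\min\{\underline{a}(1/4-\bar{\sigma}^2),\ \underline{a}/8,\ (1-\mu_{i^\star})/8\}$ is strictly positive and does not depend on $\delta$, so $\tau_\delta^\star=\Omega(H_\mathrm{VA}\ln(1/\delta))$ with an instance-dependent implied constant (the claim being trivial when $\delta$ is near $1$).

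For the upper-bound half it suffices to exhibit one $\delta$-PAC algorithm whose \emph{expected} running time is $O(H_\mathrm{VA}\ln(H_\mathrm{VA}/\delta))$. I would use a geometric-restart wrapper around VA-LUCB: in epoch $k=1,2,\ldots$ run a fresh copy of VA-LUCB$(\nu,\bar{\sigma}^2,\delta_k)$ with $\delta_k:=\delta/2^{k+1}$ for at most $m_k:=2^k$ time steps, and halt as soon as some epoch terminates within its budget, returning that epoch's answer. Because a given epoch returns an incorrect answer only when its good event (Lemma~\ref{Ehappens}; failure probability $\le\delta_k$) fails, a union bound over epochs makes the wrapper $\delta$-PAC, and fresh samples make the epochs' good events independent. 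Writing $T^\star(\delta')$ for the $O(H_\mathrm{VA}\ln(H_\mathrm{VA}/\delta'))$ runtime bound of Theorem~\ref{thm:up_bd} and letting $k_0$ be the least $k$ with $m_k\ge T^\star(\delta_k)$, one has $m_{k_0}=O(H_\mathrm{VA}\ln(H_\mathrm{VA}/\delta))$ (since $m_k$ doubles while $T^\star(\delta_k)$ grows only linearly in $k$), $\sum_{k\le k_0}m_k\le 2m_{k_0}$, and, for every $k\ge k_0$, epoch $k$ overruns its budget only on its good-event complement, so $\mathbb{P}[\text{the wrapper reaches epoch }k]\le\prod_{j=k_0}^{k-1}\delta_j$ decays faster than geometrically. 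Hence $\mathbb{E}[\tau]\le\sum_{k\le k_0}m_k+\sum_{k>k_0}m_k\,\mathbb{P}[\text{reach epoch }k]=O(m_{k_0})=O(H_\mathrm{VA}\ln(H_\mathrm{VA}/\delta))$. Combined with the lower-bound half this gives \eqref{thm:tightness}; VA-LUCB attains the stated upper bound, in the unwrapped form for the high-probability statement of Theorem~\ref{thm:up_bd} and in the wrapped form for the statement in expectation.

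The step I expect to be the crux is exactly this high-probability-to-expectation passage. For bare VA-LUCB$(\nu,\bar{\sigma}^2,\delta)$ one cannot simply integrate the tail of $\tau$: on the $O(\delta)$-probability event that a variance confidence interval fails, $i^\star$ may be placed permanently in $\bar{\mathcal{F}}^c$, an error that is never corrected, so the only bound available from Lemma~\ref{Ehappens} is $\mathbb{P}[\tau>t]\le\delta$ for all $t>T^\star(\delta)$, which is not summable. The restart construction sidesteps this because each later epoch runs on an independent sample path at an exponentially smaller confidence level, so the failure contribution to $\mathbb{E}[\tau]$ is dominated by $m_{k_0}$; the remaining bookkeeping — pinning down $k_0$, the super-geometric decay of $\mathbb{P}[\text{reach epoch }k]$, and the $\delta$-PAC union bound — is routine. (If one instead wishes to analyze VA-LUCB directly, the alternative is to split $\mathbb{E}[\tau]=\sum_t\mathbb{P}[\tau>t]$ at $T^\star(\delta)$ and absorb the post-failure residual into an additive instance-dependent constant, but the restart argument keeps the constants transparent.)
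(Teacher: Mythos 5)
Your proposal is correct, but for the upper-bound half it takes a genuinely different route from the paper. The paper's proof is a two-line tail integration on the bare algorithm: invoking Lemma~\ref{stop} with $t^\star = 152\,H_\mathrm{VA}\ln\frac{H_\mathrm{VA}}{\delta}$, it bounds the expected number of time steps by $t^\star + \sum_{t>t^\star} \frac{2\delta}{t^2} \le t^\star + \frac{2\delta}{t^\star}$, doubles to convert time steps into rounds, and concludes $\mathbb{E}[\tau^{\mathrm{VA\text{-}LUCB}}] \le 304\, H_\mathrm{VA}\ln\frac{H_\mathrm{VA}}{\delta}+2$, so that VA-LUCB itself (unwrapped) is claimed to attain the expected bound with an explicit constant. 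You instead build a doubling-restart wrapper and bound its expectation by the deterministic cost up to epoch $k_0$ plus a super-geometrically decaying residual; the bookkeeping you sketch ($m_{k_0}=O(H_\mathrm{VA}\ln\frac{H_\mathrm{VA}}{\delta})$, $\prod_{j=k_0}^{k-1}\delta_j \le 4^{-(k-k_0)}$, union bound for $\delta$-PAC) all checks out. What your route buys is exactly the point you flag as the crux: Lemma~\ref{stop} is stated \emph{on the event} $E$, so the inclusion $\{\tau>t\}\subseteq\bigcup_i\mathcal{I}_i$ underlying the bound $\frac{2\delta}{t^2}$ is only available after intersecting with $E$, and the paper's tail integration silently treats it as unconditional, leaving the contribution $\mathbb{E}[\tau\,\mathbbm{1}_{E^c}]$ unaddressed; your restart construction sidesteps this entirely because later epochs run on fresh samples at exponentially smaller confidence levels. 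What it costs is twofold: you lose the explicit constant, and you establish the bound for a modified algorithm rather than for VA-LUCB itself --- which suffices for the displayed claim about $\tau_\delta^\star$ (an infimum over all $\delta$-PAC algorithms) but does not literally prove the corollary's final clause that VA-LUCB achieves the upper bound in expectation. If you wanted to recover the paper's stronger statement, you would need to supplement the direct tail integration with an argument controlling $\tau$ on $E^c$ (or note that $\mathbb{P}[\tau>t]\le \mathbb{P}[\bigcup_i \mathcal{I}_i] + \mathbb{P}[E(t)^c\ \text{restricted to}\ s\in\mathcal{T}]$ and show the latter is also summable in $t$), which is precisely the residual issue your parenthetical remark at the end gestures at.
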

Corollary~\ref{cor:almost_opt} says that $H_{\mathrm{VA}}$ is the fundamental limit for the problem of variance-constrained BAI. 

\subsection{Comparison to David {\em et al.}~\cite{David2018}} \label{sec:comp}
We adopt the variance as the risk measure and focus on the {\em (strict) best feasible} arm identification problem under the $\delta$-PAC framework, while David {\em et al.}~\cite{David2018} uses the $\alpha$-quantile as the risk metric and consider   {\em $\epsilon_\rho$-approximately feasible and $\epsilon_\mu$-approximately optimal} arms.
We consider a variant of their algorithm, named {\em RiskAverse-UCB-BAI} (See App.~\ref{sec:RiskAverse}) that is tailored to our variance-constrained problem in which the best feasible arm must be produced w.h.p. 

\begin{itemize}
    \item \textbf{Parameters}:
The most important  difference is that  VA-LUCB is {\em parameter free}. In contrast, RiskAverse-UCB-BAI heavily relies on knowledge of the hardness parameter $H$  (which appears in the confidence radii), and the accuracy parameters $\epsilon_\mu$ and $\epsilon_\mathrm{v}$ (of the mean and variance respectively), which determine when it terminates. To output the best feasible arm w.h.p., one needs to set the accuracy parameters to be some functions of the unknown mean gaps and variance gaps such that the only {\em $\epsilon_\mathrm{v}$-approximately feasible and $\epsilon_\mu$-approximately optimal} arm is exactly the (strict) best feasible arm. Thus,  if we want to output the best feasible arm, RiskAverse-UCB-BAI  is {\em not} parameter free.
\item \textbf{Upper Bounds}: 
The hardness parameters $\sum_{i\in[N]}C_i$ and $H$, defined in \eqref{equ:David_hardness1} and \eqref{equ:David_hardness2}  respectively, are used to characterize the upper bound (on the sample complexity of RiskAverse-UCB-BAI) in \cite[Theorem~3]{David2018} and   are  lower bounded by $H_\mathrm{VA}$  (see App.~\ref{sec:discussion_upper}). Intuitively, since $H$  is only a function of the accuracy parameters $(\epsilon_\mathrm{v}, \epsilon_\mu)$, but $H_\mathrm{VA}$ takes the means and variances of all arms into account,  the latter is smaller (hence better). We formalize this intuition in App.~\ref{sec:discussion_upper}. Even disregarding these constants, the additional $\ln$ term in $N$ and $\ln\ln$ term in $N/\delta$ in the upper bound of RiskAverse-UCB-BAI (see Eqn.~\eqref{equ:David_upperbd1}) indicates that its sample complexity  is strictly larger than that of VA-LUCB (see App.~\ref{sec:discussion_upper} for details).
\item \textbf{Lower Bounds}: By comparing   terms involving arm $i$ in both lower bounds, we deduce that our lower bound is strictly larger than that in   \cite[Theorem~2]{David2018}  for  most  ($\ge 99.9\%$ of)   $(\mu_{i^\star},\bar{\sigma}^2)$ pairs (see App.~\ref{sec:discussion_lower}). 
Corollary~\ref{cor:almost_opt} states that $H_\mathrm{VA}$ is fundamental in characterizing the   hardness of the instance. This also implies the lower bound of \cite{David2018} is, in general, not tight   in our variance-constrained BAI setting. 
Due to the choice of confidence radius in~\eqref{radius}, we also claim that VA-LUCB identifies  risky arms faster than RiskAverse-UCB-BAI (see App.~\ref{sec:discussion_riskyarms}). 
\end{itemize}

\section{Experiments}\label{experiments}
We design experiments to illustrate the empirical performance of VA-LUCB. We compare VA-LUCB to RiskAverse-UCB-BAI~\cite{David2018} and a na\"ive baseline algorithm  VA-Uniform (described in Section~\ref{sec:uniform}).
The code to reproduce all the figures is available at \href{https://github.com/Y-Hou/VA-BAI.git}{https://github.com/Y-Hou/VA-BAI.git}.

\subsection{Experimental Design}
By Theorem \ref{thm:up_bd}, the sample complexity of VA-LUCB is upper bounded by $ O\left(H_{\mathrm{VA}}\ln(H_\mathrm{VA}/\delta)\right) $ w.h.p. 
We design four sets of test cases 
 to empirically demonstrate  the impact of the mean gaps $\Delta_i$ and the variance gap $\Delta_i^\mathrm{v}$ in $H_\mathrm{VA}$ on the sample complexity, in particular the smaller one of $\Delta_{i^\star}/2,\Delta_{i^\star}^{\mathrm{v}}$ will dominate the best feasible arm term and the greater one of $\Delta_{i}/2,\Delta_{i}^{\mathrm{v}}$ will dominate the suboptimal and infeasible arm term. 
The parameters that are varied in each test case are described below.

\noindent
1.\ For the first term  $ \min\{\Delta_{i^\star}/2,\Delta_{i^\star}^{\mathrm{v}}\}^{-2}  $,\newline
(a). Under the condition that $ \Delta_{i^\star}/2\leq \Delta_{i^\star}^{\mathrm{v}} $, when $ \Delta_{i^\star} $ and $  \Delta_{i^{\star\star}}  $ increase with the rest of the arms kept the same, $ H_{\mathrm{VA}} $ and the sample complexity will decrease.\newline
(b). Under the condition that $ \Delta_{i^\star}/2\leq \Delta_{i^\star}^{\mathrm{v}} $, when $ \Delta_{i^\star}^{\mathrm{v}} $ increases, $ H_{\mathrm{VA}} $ and the sample complexity will be kept the same.\newline
(c). Under the condition that $ \Delta_{i^\star}/2\geq \Delta_{i^\star}^{\mathrm{v}} $, as $ \Delta_{i^\star}^{\mathrm{v}} $ increases, $ H_{\mathrm{VA}} $  and the sample complexity will decrease.\newline
(d). Under the condition that $ \Delta_{i^\star}/2\geq \Delta_{i^\star}^{\mathrm{v}} $, as $ \Delta_{i^\star} $ and $ \Delta_{i^{\star\star}}  $ increase, $ H_{\mathrm{VA}} $ and the sample complexity will decrease.\newline
2.\ For the second term $ \sum_{i\in\mathcal{F}\cap\mathcal{S}} 4  \Delta_i^{-2 }$, when $  \Delta_{i^\star} $ and $  \Delta_{i}$ for all  $i\in\mathcal{F}\cap\mathcal{S}   $ increase, $ H_{\mathrm{VA}} $ and the sample complexity will decrease.\newline
3.\ For the third term $ \sum_{i\in\bar{\mathcal{F}}^c\cap\mathcal{R}}( \Delta_{i}^{\mathrm{v}})^{-2}  $, when $  \Delta_{i}^{\mathrm{v}}$ for all  $i\in\bar{\mathcal{F}}^c\cap\mathcal{R}  $ increase, $ H_{\mathrm{VA}} $ and the sample complexity will decrease.\newline
4.\ For the fourth term $ \sum_{i\in\bar{\mathcal{F}}^c\cap\mathcal{S}} \max\{\Delta_i/2,\Delta_i^{\mathrm{v}}\}^{-2} $, the design is quite similar to Case 1, and thus the details are omitted here and presented in App.~\ref{design of fourth}.

The confidence parameter $\delta$ is set to be $0.05$. In each case, there are $ 11 $ instances with $N= 20 $ arms. The specific instances are described in detail in App.~\ref{specific para}. For each algorithm and instance, we run $20$ independent trials to estimate the average time complexities and their standard deviations.

Note that there are $4$ cases for the first term as we wish to elucidate that the smaller quantity between $ \Delta_{i^\star}/2$ and $\Delta_{i^\star}^{\mathrm{v}} $  dominates the sample complexity of $ i^\star $. The same experimental design applies to the study of the fourth term.
\begin{figure*}[htbp]
	\centering
	\subfigure[Case 1(a)]{
		\includegraphics[width=.1845\textwidth]{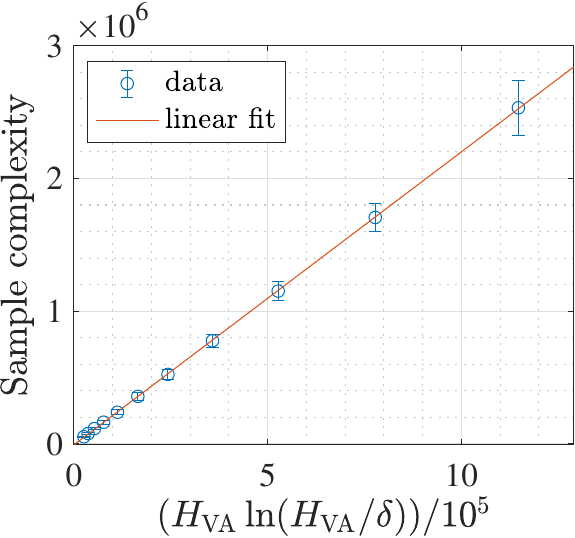}
	} \hspace{-.06in}
	\subfigure[Case 1(c)]{
		\includegraphics[width=.182\textwidth]{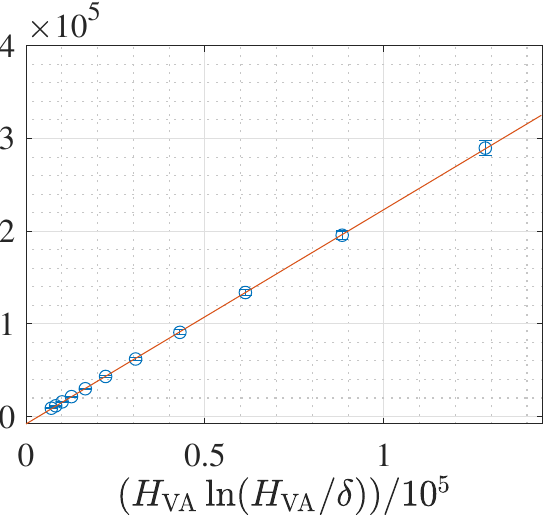}
	}  \hspace{-.06in}
	\subfigure[Case 1(d)]{
		\includegraphics[width=.183\textwidth]{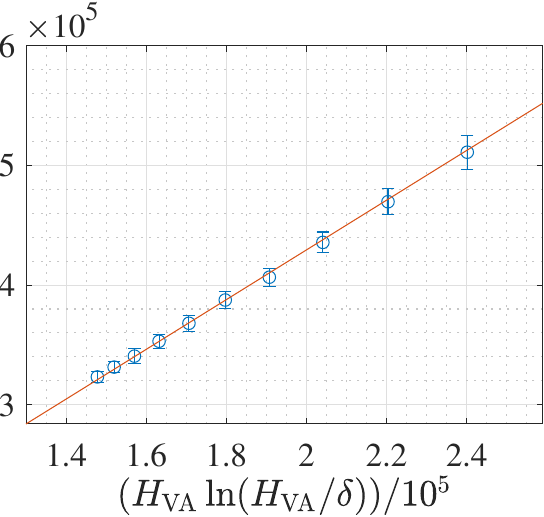}
	}  \hspace{-.06in}
	\subfigure[Case 2]{
		\includegraphics[width=.182\textwidth]{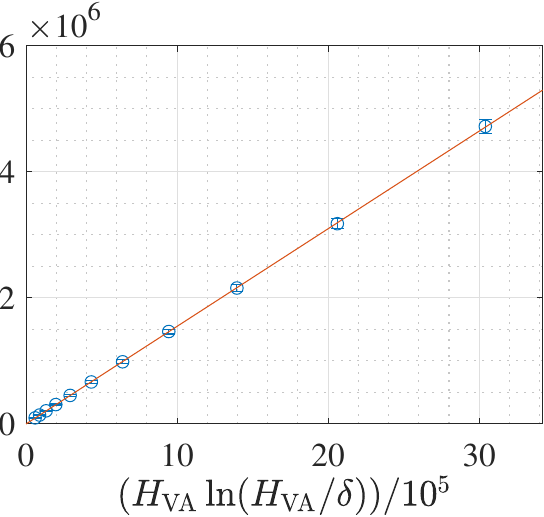}
	} \hspace{-.06in}
	\subfigure[Case 3]{
		\includegraphics[width=.182\textwidth]{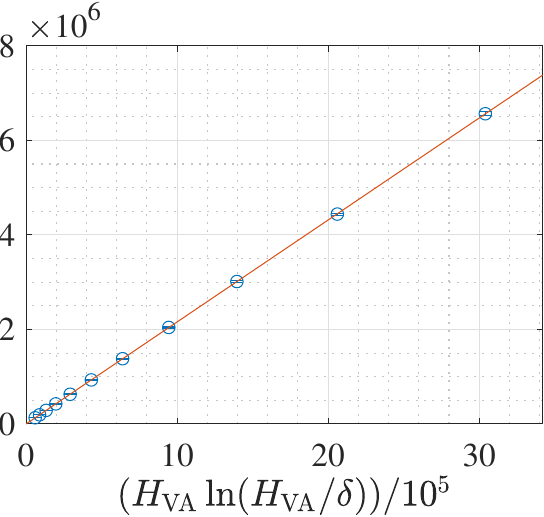}
	}
	\caption{The time complexities for various cases with respect to $ H_{\mathrm{VA}}\ln (H_{\mathrm{VA}}/\delta )$ with $\delta= 0.05$.}
	\label{example}
\end{figure*}

\subsection{Performance of VA-LUCB} \label{sec:perf_VA}
We plot the time complexities of Cases 1--3 with respect to     $H_{\mathrm{VA}}\ln(H_{\mathrm{VA}}/\delta) $ in Figure \ref{example};  the rest of the figures are relegated to App.~\ref{experiment figure}. 
Although we do not prove the sample complexity grows linearly with $ H_{\mathrm{VA}}\ln(H_{\mathrm{VA}}/\delta)  $, this phenomenon can indeed be observed in our experiments.
All the experimental results indicate the true sample complexity of VA-LUCB appears to be linear in  $ H_{\mathrm{VA}}\ln(H_{\mathrm{VA}}/\delta)  $ (showing the tightness of our analyses) and is also  bounded  by $ H_{\mathrm{VA}}\ln(H_{\mathrm{VA}}/\delta)  $  and $3\, H_{\mathrm{VA}}\ln(H_{\mathrm{VA}}/\delta)  $. The upper bound of $ 3\, H_{\mathrm{VA}}\ln(H_{\mathrm{VA}}/\delta) $  is usually sufficient for VA-LUCB to succeed. 

\begin{figure*}[t]
	\centering
	\begin{minipage}{0.45\linewidth}
		\centering
		\includegraphics[width=\linewidth]{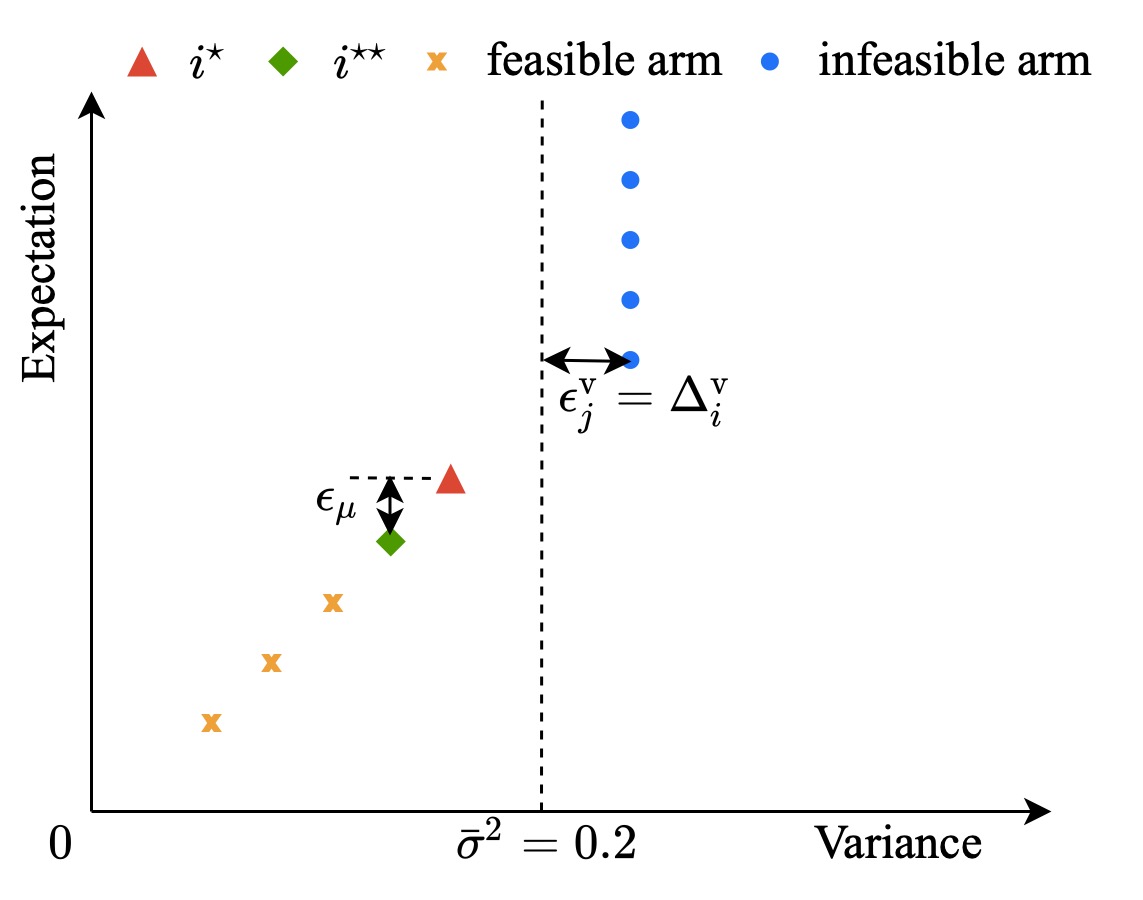}
		\caption{An illustration of the instances. }
		\label{Competition}
	\end{minipage}
	\quad
	\begin{minipage}{0.45\linewidth}
		\centering
		\begin{tabular}{|ccc|}
			\hline
			\multicolumn{3}{|c|}{$\bar{\sigma}^2=0.2,N=10$}                                                        \\ \hline
			\multicolumn{1}{|c|}{arm} & \multicolumn{1}{c|}{$\mu_i$} & $\sigma_i^2$ \\ \hline
			\multicolumn{1}{|c|}{1}   & \multicolumn{1}{c|}{$0.1$}   & $0.08$                     \\ \hline
			\multicolumn{1}{|c|}{2}   & \multicolumn{1}{c|}{$0.15$}  & $0.1$                      \\ \hline
			\multicolumn{1}{|c|}{3}   & \multicolumn{1}{c|}{$0.2$}   & $0.12$                     \\ \hline
			\multicolumn{1}{|c|}{4}   & \multicolumn{1}{c|}{$0.25$}  & $0.14$                     \\ \hline
			\multicolumn{1}{|c|}{5}   & \multicolumn{1}{c|}{$0.3$}   & $0.16$                     \\ \hline
			\multicolumn{1}{|c|}{6}   & \multicolumn{1}{c|}{$0.4$}   & $\epsilon_j^\mathrm{v}$                     \\ \hline
			\multicolumn{1}{|c|}{7}   & \multicolumn{1}{c|}{$0.45$}  & $\epsilon_j^\mathrm{v}$                     \\ \hline
			\multicolumn{1}{|c|}{8}   & \multicolumn{1}{c|}{$0.5$}   & $\epsilon_j^\mathrm{v}$                      \\ \hline
			\multicolumn{1}{|c|}{9}   & \multicolumn{1}{c|}{$0.55$}  & $\epsilon_j^\mathrm{v}$                      \\ \hline
			\multicolumn{1}{|c|}{10}  & \multicolumn{1}{c|}{$0.6$}   & $\epsilon_j^\mathrm{v}$                   \\ \hline
		\end{tabular}
		\captionof{table}{Parameter settings for instance $j\in[10]$. The variance gaps for the infeasible arms $\Delta_i^\mathrm{v}=\epsilon_j^\mathrm{v}=0.233-0.003\cdot j$ in instance $j\in[10]$.}
		\label{competition_para}
	\end{minipage}
\end{figure*}

\subsection{Comparison of VA-LUCB to  RiskAverse-UCB-BAI \cite{David2018} and VA-Uniform} \label{sec:uniform}
We  compare VA-LUCB to its closest competitor RiskAverse-UCB-BAI and VA-Uniform, which differs from VA-LUCB only in the sampling strategy.  VA-Uniform uniformly samples two out of $N=10$ arms  at each time step. For  comparison among the three algorithms, we  construct $10$ high-risk, high-reward instances with $N=10$ arms in each instance to demonstrate that VA-LUCB outperforms a variant of RiskAverse-UCB-m-best \cite{David2018} (named RiskAverse-UCB-BAI) and VA-Uniform in identifying the risky arms and the optimal feasible arm. We fix the feasible arms and the threshold $\bar{\sigma}^2$ and vary the variance gaps $\Delta_i^\mathrm{v}$ of the infeasible arms. The accuracy parameters $\epsilon_\mu=\Delta_{i^\star}$ and $\epsilon^{\mathrm{v}}_j : =\min_{i\in\mathcal{R}\setminus\{i^\star\}}\Delta_i^\mathrm{v}$ in instance $j$.\footnote{Our notation $\epsilon^{\mathrm{v}}_j$ corresponds to  $\epsilon_{\mathrm{v}} $ in \cite{David2018} under the variance-constrained setup (for the $j^{\text{th}}$ instance).}  An illustration of the parameter setting of the arms is in Figure \ref{Competition} and the specific parameters for the arms in instance $j\in[10]$ are presented in Table~\ref{competition_para}. Note that the larger the index $j$, the riskier the instance as the true variances of the infeasible arms is closer to $\bar{\sigma}^2$ but their means are higher than that of the optimal feasible arm. This instance is apt for modeling  real-world investment settings in which there may be several   high-reward but risky options such as mini-bonds  or cryptoassets, and several other low-reward but less risky options such as real estate (which  appreciates with  time with high probability). 

Ohe results are presented in Figure~\ref{Comparisonbar}. VA-LUCB outperforms RiskAverse-UCB-BAI and VA-Uniform in all instances. 
In the  riskiest instance considered (i.e., the one with the smallest~$\epsilon_j^{\mathrm{v}}$),  VA-LUCB requires   $\approx\!32\%$ fewer arm pulls compared to  RiskAverse-UCB-BAI. 
\begin{figure}[t]
	\centering
	\includegraphics[width=0.42\textwidth]{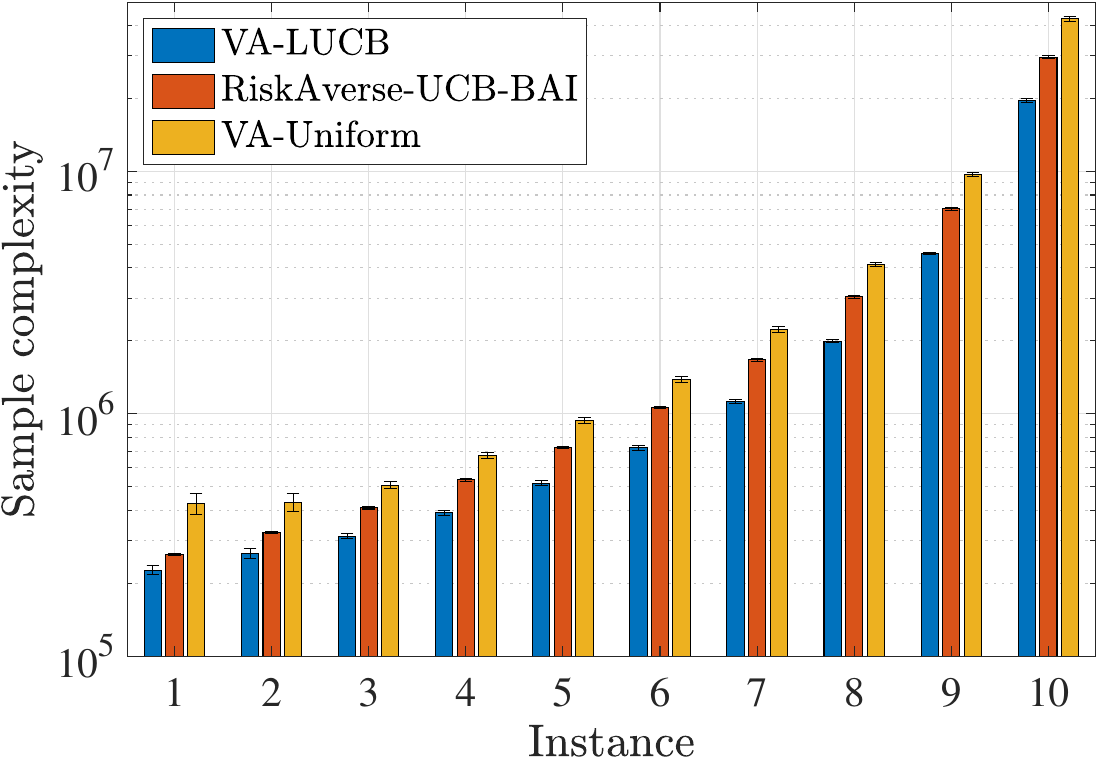}
	\caption{Comparison among the time complexities of VA-LUCB,  RiskAverse-UCB-BAI, and VA-Uniform  (error bars denote $1$ standard deviation across $20$ runs). As the index of the instance increases, the instance becomes riskier.}
	\label{Comparisonbar}
\end{figure}
\begin{figure}[t]
	\centering
	\includegraphics[width=.45\textwidth]{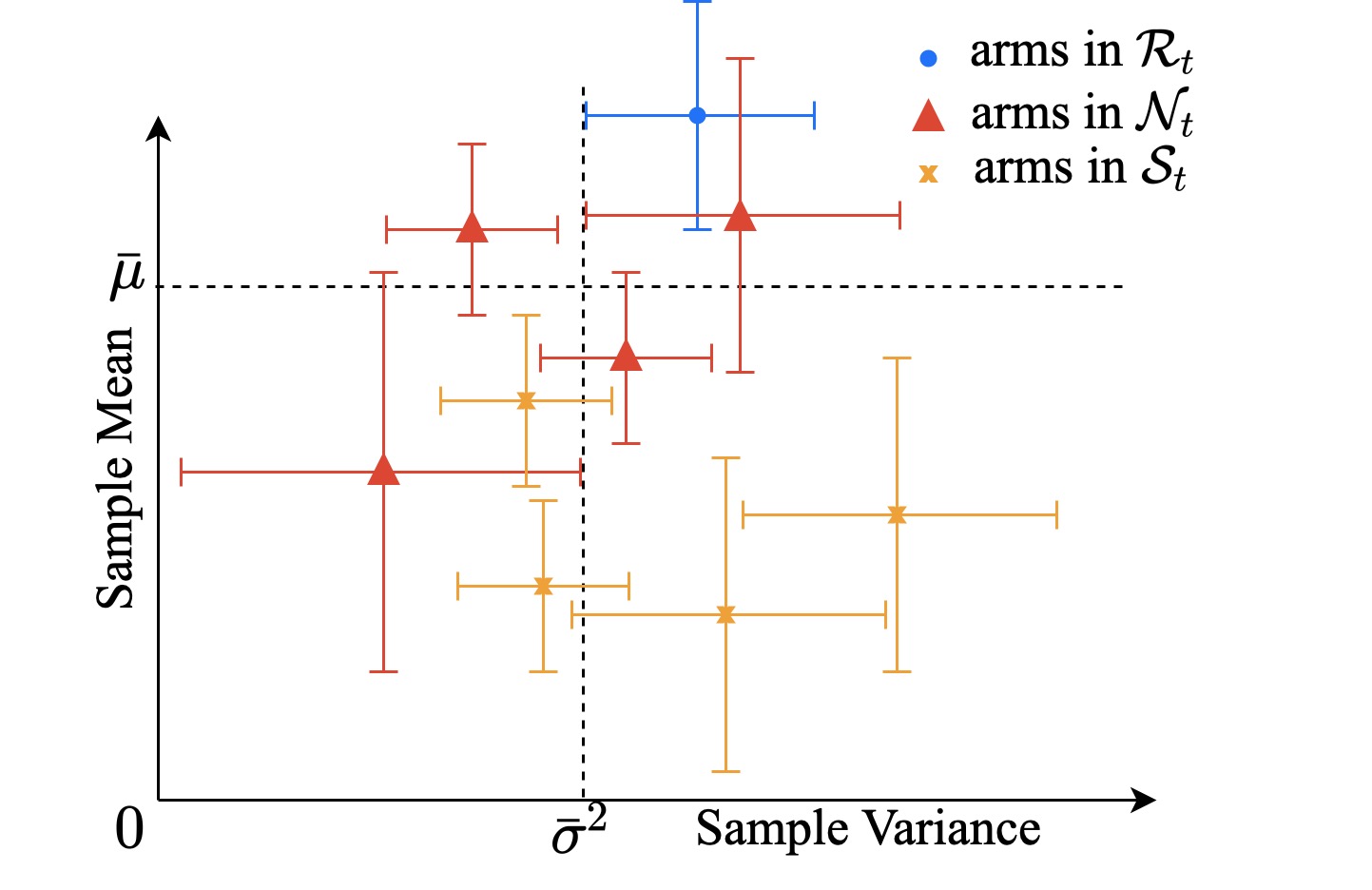}
	\caption{Illustration of the confidence intervals of the empirically suboptimal and the empirically risky sets}
	\label{fig:St}
\end{figure}

\section{Sketch of the Proof of Theorem \ref{thm:up_bd}}\label{proofsketch}
We extend the techniques used in the analysis of LUCB~\cite{Kalyanakrishnan2012}  to derive an upper bound on the sample complexity of VA-LUCB. To facilitate the analysis,   define the {\em empirically suboptimal set}, {\em empirically risky set} and the complement of their union respectively as
\begin{align}
	&\mathcal{S}_t :=\{i:U_i^\mu(t)< \bar{\mu} \},\quad 
	\mathcal{R}_t:=\{i:L_i^\mu(t)> \bar{\mu} \},\quad\mbox{and}
	\\
	&\mathcal{N}_t:=[N]\setminus(\mathcal{S}_t\cup\mathcal{R}_t)=\{i:L_i^\mu(t)\leq \bar{\mu} \leq U_i^\mu(t)\}.
\end{align}		
Note that $ \mathcal{S}_t $ and $ \mathcal{R}_t $ can be regarded as the empirical versions of  $ \mathcal{S} $ and $ \mathcal{R} $ respectively. Intuitively,   when $ t $ is  large enough,   $ \mathcal{S}_t=\mathcal{S}$ and $\mathcal{R}_t=\mathcal{R} $. We illustrate these sets in Figure~\ref{fig:St}. Define the events 
\begin{align}
	&E_i^\mu(t):=\{|\hat{\mu}_i(t)-\mu_i|\le\alpha(t,T_i(t))\}, \label{Ei}\\
	&E_i^{\mathrm{v}}(t):=\{ |\hat{\sigma}_i^2(t)-\sigma_i^2|\le\beta(t,T_i(t))\} ,\quad \mbox{and}
	\\
	&E_i(t):=E_i^\mu(t)\bigcap E_i^{\mathrm{v}}(t),\quad \forall\, i\in[N] . 
\end{align}
Finally, for $t\geq 2$, define
\begin{equation}\label{E}
	E(t):=\bigcap_{i\in[N]} E_i(t)  \quad \mbox{and}\quad  E:=\bigcap_{t\ge 2} E(t) .
\end{equation}
Conditioned on $ E $, we can show that  the empirical mean and   variance are accurate estimates of the true mean and variance respectively, in the sense that $ \mu_i\in[L_i^\mu(t),U_i^\mu(t)]$ and   $\ \sigma_i^2\in[L_i^{\mathrm{v}}(t),U_i^{\mathrm{v}}(t)]$ for all $i\in[N]$ and  $t\in\mathbb{N}$.
\begin{restatable}{lem}{Ehappens}\label{Ehappens}
	Define $E$ as in \eqref{E} with $\alpha(t,T)$ and $\beta(t,T)$ as in \eqref{radius}. Then $E$ occurs with probability at least $1- {\delta}/{2}$.
\end{restatable}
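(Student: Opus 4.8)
The plan is to prove that $E$ fails with probability at most $\delta/2$ by a union bound over all arms $i \in [N]$ and all time steps $t \geq 2$, controlling the tail of each individual concentration event $E_i^\mu(t)^c$ and $E_i^{\mathrm{v}}(t)^c$. First I would observe that, conditioned on the number of pulls $T_i(t) = T$, the sample mean $\hat\mu_i(t)$ is an average of $T$ i.i.d.\ samples bounded in $[0,1]$, so Hoeffding's inequality gives $\mathbb{P}[|\hat\mu_i(t) - \mu_i| > \alpha(t,T) \mid T_i(t) = T] \leq 2\exp(-2T\alpha(t,T)^2) = 2\exp(-\ln(2Nt^4/\delta)) = \delta/(Nt^4)$. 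The key subtlety here is that $T_i(t)$ is itself random and $\mathcal{H}_{t-1}$-measurable, so I would phrase the argument using a union bound over the possible values $T \in \{1,\dots,t\}$ of $T_i(t)$ (or more cleanly, invoke a standard "maximal" concentration inequality / martingale argument so that the bound $\alpha(t,T_i(t))$ with a random $T_i(t)$ still holds simultaneously), absorbing the extra factor of $t$ into the polynomial $t^4$ in the definition of the radii.

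Next I would handle the variance concentration event $E_i^{\mathrm{v}}(t)^c$. The sample variance $\hat\sigma_i^2(t)$ as defined in \eqref{sv} is a function of $T_i(t)$ i.i.d.\ samples in $[0,1]$; since each sample lies in a bounded interval, $\hat\sigma_i^2(t)$ (or a suitably scaled version) can be written so that changing one sample changes it by $O(1/T_i(t))$, so McDiarmid's bounded-differences inequality yields a sub-Gaussian tail of the same form $\mathbb{P}[|\hat\sigma_i^2(t) - \sigma_i^2| > \beta(t,T) \mid T_i(t) = T] \leq 2\exp(-2T\beta(t,T)^2) = \delta/(Nt^4)$, matching the choice $\beta(t,T) = \alpha(t,T)$ in \eqref{radius}. (One must check the bounded-differences constant is at most $1/\sqrt{T}$ up to the constant already built into $\alpha=\beta$; for rewards in $[0,1]$ this is a routine verification, and the paper's choice of $\beta$ has presumably been calibrated to exactly this.) Then $\mathbb{P}[E_i(t)^c] \leq \mathbb{P}[E_i^\mu(t)^c] + \mathbb{P}[E_i^{\mathrm{v}}(t)^c] \leq 2\delta/(Nt^4)$.

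Finally I would assemble the union bound:
\begin{equation*}
\mathbb{P}[E^c] \leq \sum_{t \geq 2} \sum_{i \in [N]} \mathbb{P}[E_i(t)^c] \leq \sum_{t \geq 2} N \cdot \frac{2\delta}{N t^4} = 2\delta \sum_{t \geq 2} \frac{1}{t^4}.
\end{equation*}
Since $\sum_{t \geq 2} t^{-4} = \pi^4/90 - 1 < 1/4$, we get $\mathbb{P}[E^c] \leq 2\delta \cdot \tfrac14 = \delta/2$, which is the claim. (If one prefers, summing from $t=1$ and using $\sum_{t\geq 1} t^{-4} = \pi^4/90 < 1.0824 < 5/4$ still gives a constant small enough after adjusting the polynomial exponent; the exponent $4$ in \eqref{radius} is chosen precisely so that this series converges to something below $1/4$.)

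The main obstacle is the first step: making rigorous the transition from "Hoeffding for a fixed number of samples" to "a bound that holds with the random, data-dependent count $T_i(t)$." The cleanest fix is to peel off the randomness of $T_i(t)$ via a union bound over its at most $t$ possible values, which costs a factor of $t$ — exactly why the log term carries $t^4$ rather than $t^2$ or $t^3$ (the remaining powers of $t$ provide the summable tail). I would make sure to state this peeling step explicitly rather than gloss over it, since it is the only non-routine point in the argument; everything else is a direct application of Hoeffding, McDiarmid, and the convergence of $\sum t^{-4}$.
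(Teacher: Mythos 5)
Your proposal is correct and follows essentially the same route as the paper: Hoeffding for the mean, McDiarmid for the sample variance, and a union bound over arms, time steps, and the possible values of the random pull count $T_i(t)$, with the exponent $4$ in \eqref{radius} making the resulting series small enough. One bookkeeping remark: after peeling over $T\in\{1,\dots,t-1\}$ the per-arm, per-time failure probability is of order $\delta/(Nt^3)$ rather than $\delta/(Nt^4)$ as in your final display, but since $\sum_{t\ge 2}t^{-3}=\zeta(3)-1<1/4$ the conclusion $\mathbb{P}[E^c]\le\delta/2$ still holds, exactly as in the paper's computation of the constants $C_\alpha=C_\beta=1/8$.
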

\begin{restatable}{lem}{goodtermination}\label{ioutisistar}
	Given an instance $ (\nu,\bar{\sigma}^2) $ with confidence parameter $ \delta $, on the event $ E(\tau) $,  and the termination of VA-LUCB, 
	\begin{itemize}
		\item if the instance is infeasible, $ \hat{\mathsf{f}}=\mathsf{f}=0 $.
		\item if the instance is feasible, $ i_\mathrm{out}=i_\tau=i_\tau^\star=i^\star,\hat{\mathsf{f}}=\mathsf{f}=1 $.
	\end{itemize}
\end{restatable}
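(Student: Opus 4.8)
The plan is to argue that, on the good event $E$, the empirical partitions of the arms are ``correct'' at every time step, and then trace through the stopping condition $\bar{\mathcal{F}}_t\cap\mathcal{P}_t=\emptyset$ to conclude that the recommendation matches the true answer. First I would record the consequences of conditioning on $E$ from Lemma~\ref{Ehappens}: for every $i$ and every $t$ we have $\mu_i\in[L_i^\mu(t),U_i^\mu(t)]$ and $\sigma_i^2\in[L_i^{\mathrm{v}}(t),U_i^{\mathrm{v}}(t)]$. From the variance containment I would derive the two key set inclusions that hold at all times: every truly feasible arm satisfies $\sigma_i^2\le\bar\sigma^2$, hence $L_i^{\mathrm{v}}(t)\le\bar\sigma^2$, so $i\notin\bar{\mathcal{F}}_t^c$, i.e.\ $\mathcal{F}\subseteq\bar{\mathcal{F}}_t$; conversely, if $i\in\mathcal{F}_t$ then $\sigma_i^2\le U_i^{\mathrm{v}}(t)\le\bar\sigma^2$, so $\mathcal{F}_t\subseteq\mathcal{F}$, and similarly $\bar{\mathcal{F}}_t^c\subseteq\bar{\mathcal{F}}^c$. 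Thus on $E$ the empirically infeasible arms really are infeasible, and all the truly feasible arms are always kept in the possibly-feasible set.

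Next I would handle the infeasible case. If $\mathsf{f}=0$ then $\mathcal{F}=\emptyset$, so by $\mathcal{F}_t\subseteq\mathcal{F}$ we have $\mathcal{F}_t=\emptyset$ at every $t$, whence by definition~\eqref{p} $\mathcal{P}_t=[N]$. Since the algorithm terminates only when $\bar{\mathcal{F}}_t\cap\mathcal{P}_t=\emptyset$, and $\mathcal{P}_t=[N]$ forces $\bar{\mathcal{F}}_\tau\cap\mathcal{P}_\tau=\bar{\mathcal{F}}_\tau$, termination requires $\bar{\mathcal{F}}_\tau=\emptyset$; but then $\mathcal{F}_\tau=\emptyset$ as well, so Line~$10$ sets $\hat{\mathsf{f}}=0=\mathsf{f}$, as claimed. (Here I would note the harmless point that the branch at Line~$10$ only needs $\mathcal{F}_\tau\ne\emptyset$ versus $=\emptyset$, and we are in the latter.)

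The feasible case is the heart of the argument. Suppose $\mathsf{f}=1$, so $i^\star$ exists with $\sigma_{i^\star}^2<\bar\sigma^2$ and is the unique maximizer of $\mu$ over $\mathcal{F}$. On $E$, $i^\star\in\mathcal{F}\subseteq\bar{\mathcal{F}}_\tau$, so $\bar{\mathcal{F}}_\tau\ne\emptyset$. I claim at termination $\mathcal{F}_\tau\ne\emptyset$: if $\mathcal{F}_\tau=\emptyset$ then $\mathcal{P}_\tau=[N]$, so $\bar{\mathcal{F}}_\tau\cap\mathcal{P}_\tau=\bar{\mathcal{F}}_\tau\ne\emptyset$, contradicting the stopping rule. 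Hence $i_\tau^\star$ is well-defined and Line~$10$ sets $\hat{\mathsf{f}}=1=\mathsf{f}$ and $i_{\mathrm{out}}=i_\tau$. It remains to identify $i_\tau=i_\tau^\star=i^\star$. For $i_\tau^\star=i^\star$: any $i\in\mathcal{F}_\tau$ is truly feasible (by $\mathcal{F}_\tau\subseteq\mathcal{F}$), and $i^\star\in\mathcal{F}_\tau$ — this last containment I would get from the stopping condition, since if $i^\star\in\partial\mathcal{F}_\tau$ then $i^\star\in\bar{\mathcal{F}}_\tau$, and one checks $i^\star\in\mathcal{P}_\tau$ (because $U_{i^\star}^\mu(\tau)\ge\mu_{i^\star}\ge L_{i_\tau^\star}^\mu(\tau)$, using $\mu_{i^\star}\ge\mu_{i_\tau^\star}$ as $i_\tau^\star\in\mathcal{F}$ and $i^\star$ is optimal in $\mathcal{F}$), giving a contradiction unless in fact $i^\star=i_\tau^\star$; so $i^\star\in\mathcal{F}_\tau$. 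Now among arms in $\mathcal{F}_\tau\subseteq\mathcal{F}$, the true means are all $\le\mu_{i^\star}$, and for $i\ne i^\star$ the stopping condition $i\notin\bar{\mathcal{F}}_\tau\cap\mathcal{P}_\tau$ together with $i\in\bar{\mathcal{F}}_\tau$ forces $i\notin\mathcal{P}_\tau$, i.e.\ $U_i^\mu(\tau)<L_{i_\tau^\star}^\mu(\tau)\le\hat\mu_{i_\tau^\star}(\tau)$, so $\hat\mu_i(\tau)<\hat\mu_{i_\tau^\star}(\tau)$; hence $i_\tau^\star$ is the unique empirical maximizer over $\mathcal{F}_\tau$ and $i_\tau^\star=i^\star$. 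Finally $i_\tau=i^\star$: $i_\tau=\mathop{\rm argmax}\{\hat\mu_i(\tau):i\in\bar{\mathcal{F}}_\tau\}$, and for any $i\in\bar{\mathcal{F}}_\tau$, $i\ne i^\star$, the stopping rule gives $i\notin\mathcal{P}_\tau$, i.e.\ $U_i^\mu(\tau)<L_{i^\star}^\mu(\tau)$, so $\hat\mu_i(\tau)<U_i^\mu(\tau)<L_{i^\star}^\mu(\tau)<\hat\mu_{i^\star}(\tau)$; thus $i^\star$ strictly beats every competitor and $i_\tau=i^\star$. Combining, $i_{\mathrm{out}}=i_\tau=i_\tau^\star=i^\star$.

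The main obstacle I anticipate is the bookkeeping around $i^\star$ possibly sitting in $\partial\mathcal{F}_\tau$ rather than $\mathcal{F}_\tau$ at the stopping time, and the closely related subtlety that $i_\tau^\star$ is the $\mathop{\rm argmax}$ over $\mathcal{F}_\tau$ while $\mathcal{P}_\tau$ is defined relative to $i_\tau^\star$, not $i^\star$; one must carefully use the stopping condition $\bar{\mathcal{F}}_\tau\cap\mathcal{P}_\tau=\emptyset$ to rule out these configurations rather than assuming them away. Everything else is a routine chase through the definitions~\eqref{f}--\eqref{it} once the inclusions $\mathcal{F}\subseteq\bar{\mathcal{F}}_t$, $\mathcal{F}_t\subseteq\mathcal{F}$, $\bar{\mathcal{F}}_t^c\subseteq\bar{\mathcal{F}}^c$ and the confidence containments from Lemma~\ref{Ehappens} are in hand.
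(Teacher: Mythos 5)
Your proposal is correct and follows essentially the same route as the paper's proof: condition on $E$ to obtain $\mathcal{F}_t\subseteq\mathcal{F}\subseteq\bar{\mathcal{F}}_t$ and $\bar{\mathcal{F}}_t^c\subseteq\bar{\mathcal{F}}^c$, then chase the stopping condition $\bar{\mathcal{F}}_\tau\cap\mathcal{P}_\tau=\emptyset$ through the definitions to pin down $\hat{\mathsf{f}}$ and $i_\mathrm{out}=i_\tau=i_\tau^\star=i^\star$. One sentence should be tightened --- ``hence $i_\tau^\star$ is the unique empirical maximizer over $\mathcal{F}_\tau$ and $i_\tau^\star=i^\star$'' is a non sequitur as written --- but the correct deduction is already contained in your preceding step: on $E$ one has $U_{i^\star}^\mu(\tau)\ge\mu_{i^\star}\ge\mu_{i_\tau^\star}\ge L_{i_\tau^\star}^\mu(\tau)$, which places $i^\star$ in $\bar{\mathcal{F}}_\tau\cap\mathcal{P}_\tau$ unless $i^\star=i_\tau^\star$, and the stopping rule forbids the former.
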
 
The proofs of the above lemmas are provided in  App.~\ref{upperbound}. Lemma \ref{ioutisistar} also justifies our stopping criterion.

What is left to do is to prove that VA-LUCB terminates at some finite time. We first state a useful  core lemma, which constitutes the main workhorse of the entire argument that VA-LUCB succeeds upon termination.
\begin{restatable}{lem}{usefullemma}\label{useful}
	On the event $ E(t) $, if VA-LUCB does not terminate, then at least one of the following statements holds:
	\begin{itemize}
		\item $ i_{t} \in  (\partial\mathcal{F}_t\backslash\mathcal{S}_t)\cup(\mathcal{F}_t\cap \mathcal{N}_t).$ 
		\item $ c_{t} \in  (\partial\mathcal{F}_t\backslash\mathcal{S}_t)\cup(\mathcal{F}_t\cap \mathcal{N}_t).$ 
	\end{itemize}
\end{restatable}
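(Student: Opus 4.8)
The plan is to argue by contradiction: suppose that VA-LUCB does not terminate at time step $t$, so $\bar{\mathcal{F}}_t\cap\mathcal{P}_t\neq\emptyset$ and $|\bar{\mathcal{F}}_t|\geq 1$, yet neither $i_t$ nor $c_t$ lies in $(\partial\mathcal{F}_t\backslash\mathcal{S}_t)\cup(\mathcal{F}_t\cap\mathcal{N}_t)$. I would first dispose of the degenerate case $\mathcal{F}_t=\emptyset$: then $\bar{\mathcal{F}}_t=\partial\mathcal{F}_t$ and $\mathcal{P}_t=[N]$, so non-termination forces $\partial\mathcal{F}_t\neq\emptyset$; since $i_t=\mathop{\rm argmax}\{\hat\mu_i(t):i\in\bar{\mathcal{F}}_t\}$, we have $i_t\in\partial\mathcal{F}_t$, and it then suffices to observe that $i_t\notin\mathcal{S}_t$ (otherwise, by $E(t)$, the true mean of $i_t$ would be below $\bar\mu$, but $i_t$ has the largest UCB among $\bar{\mathcal{F}}_t$... actually one must be slightly careful here — I'd instead use that $U^\mu_{i_t}(t)\geq\hat\mu_{i_t}(t)$ and examine whether $U^\mu_{i_t}(t)<\bar\mu$ is possible; if $i_t\in\mathcal{S}_t$ for all arms in $\partial\mathcal F_t$ then effectively $\bar{\mathcal F}_t\subseteq\mathcal S_t$ and I'd look at $c_t$ or re-derive a contradiction with non-termination). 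So the core work is the case $\mathcal{F}_t\neq\emptyset$.

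When $\mathcal{F}_t\neq\emptyset$, I would split on the location of $i_t$. By definition $i_t\in\bar{\mathcal{F}}_t=\mathcal{F}_t\cup\partial\mathcal{F}_t$. If $i_t\in\partial\mathcal{F}_t$, then since $i_t$ is assumed \emph{not} to be in $\partial\mathcal{F}_t\backslash\mathcal{S}_t$, we must have $i_t\in\mathcal{S}_t$, i.e.\ $U^\mu_{i_t}(t)<\bar\mu$. But $i_t$ maximizes $\hat\mu$ over $\bar{\mathcal{F}}_t\supseteq\mathcal{F}_t\ni i_t^\star$, and one checks via~\eqref{it} versus~\eqref{itstar} that $\hat\mu_{i_t}(t)\geq\hat\mu_{i_t^\star}(t)$, hence $U^\mu_{i_t^\star}(t)\geq\hat\mu_{i_t^\star}(t)$... the cleaner route is: on $E(t)$, $\mu_{i^\star_t}\le U^\mu_{i^\star_t}(t)$ is not directly what I want; rather I want to show $i_t^\star\notin\mathcal{S}_t$, which together with $i_t\in\mathcal S_t$ and $\hat\mu_{i_t}(t)\ge\hat\mu_{i^\star_t}(t)$ forces a contradiction on the confidence widths — I'd make this precise by noting that $U^\mu_{i_t}(t)<\bar\mu\le$ (something involving $i_t^\star$), contradicting $\hat\mu_{i_t}(t)\ge\hat\mu_{i_t^\star}(t)$ unless $i_t^\star\in\mathcal S_t$ too, which cascades to $\mathcal F_t\subseteq\mathcal S_t$, whence $\mathcal P_t$ and the stopping condition are re-examined. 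If instead $i_t\in\mathcal{F}_t$, then since $i_t\notin\mathcal{F}_t\cap\mathcal{N}_t$ by assumption, $i_t\in\mathcal{F}_t\cap(\mathcal{S}_t\cup\mathcal{R}_t)$; here $i_t=i_t^\star$ (as $i_t$ maximizes $\hat\mu$ over $\bar{\mathcal{F}}_t\supseteq\mathcal{F}_t$ and $i_t^\star$ maximizes it over $\mathcal{F}_t$, and $i_t\in\mathcal F_t$), so $i_t^\star\in\mathcal{S}_t\cup\mathcal{R}_t$, and I then turn to $c_t$.

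For the $c_t$ half, I would use Lemma~\ref{ctinpt} (cited as available) giving $c_t\in\mathcal{P}_t\cup\{i_t^\star\}$, combined with the assumption $c_t\notin(\partial\mathcal{F}_t\backslash\mathcal{S}_t)\cup(\mathcal{F}_t\cap\mathcal{N}_t)$ and $c_t\in\bar{\mathcal{F}}_t$. As before $c_t\in\mathcal F_t$ forces $c_t\in\mathcal S_t\cup\mathcal R_t$ and $c_t\in\partial\mathcal F_t$ forces $c_t\in\mathcal S_t$. I'd then run the contradiction through the stopping rule: non-termination means $\bar{\mathcal F}_t\cap\mathcal P_t\ne\emptyset$, i.e.\ there is an arm $j\in\bar{\mathcal F}_t$ with $U^\mu_j(t)\ge L^\mu_{i_t^\star}(t)$ and $j\ne i_t^\star$; by maximality $U^\mu_{c_t}(t)\ge U^\mu_j(t)\ge L^\mu_{i_t^\star}(t)$, so $c_t$ is "tied or ahead" of $i_t^\star$ in the confidence-interval sense. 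Using $E(t)$ to sandwich the true means and the definitions of $\mathcal S_t,\mathcal R_t,\bar\mu$ (recall $\bar\mu$ strictly separates $\mu_{i^\star}$ from all suboptimal means), I'd derive that $c_t$ cannot simultaneously be empirically suboptimal/risky-with-$i_t^\star$-suboptimal-or-risky-too and satisfy the non-termination inequality — the only escape is the claimed membership. The step I expect to be the main obstacle is precisely this last chain of inequalities: carefully tracking which of $\mathcal{S}_t,\mathcal{R}_t,\mathcal{N}_t$ each of $i_t,c_t,i_t^\star$ lands in, and showing that every configuration except the two asserted ones contradicts either $E(t)$, the optimality defining $i_t$ and $c_t$, or the non-termination of the stopping rule — in particular handling the interaction between the variance partition ($\mathcal{F}_t$ vs.\ $\partial\mathcal{F}_t$) and the mean partition cleanly, since the lemma's conclusion couples the two.
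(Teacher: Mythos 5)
Your outline has the right skeleton---a case analysis on where $i_t$ and $c_t$ fall in the variance partition $\mathcal{F}_t/\partial\mathcal{F}_t$ and the mean partition $\mathcal{S}_t/\mathcal{N}_t/\mathcal{R}_t$, driven by $E(t)$, Lemma~\ref{ctinpt}, and the stopping rule---but it is missing the one observation that closes most of the cases, and one of your intended sub-arguments would fail. The missing observation, which the paper states up front, is that \emph{$i_t$ and $c_t$ cannot both lie in $\mathcal{S}_t$}: if $i^\star$ exists then on $E(t)$ it belongs to $\bar{\mathcal{F}}_t$, so by the maximality properties defining $i_t$ and $c_t$ one has $\max\{U^\mu_{i_t}(t),U^\mu_{c_t}(t)\}\ge U^\mu_{i^\star}(t)>\mu_{i^\star}>\bar\mu$, contradicting $U^\mu_{i_t}(t),U^\mu_{c_t}(t)<\bar\mu$; and if $i^\star$ does not exist then $\bar\mu=-\infty$ and $\mathcal{S}_t=\emptyset$. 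This single fact is what settles your $\mathcal{F}_t=\emptyset$ case (at least one of $i_t,c_t$ lands in $\partial\mathcal{F}_t\setminus\mathcal{S}_t$), where your text explicitly stalls, and it is also what handles the case $i_t\in\partial\mathcal{F}_t\cap\mathcal{S}_t$.

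In that latter case your plan takes a wrong turn: you try to refute $i_t\in\mathcal{S}_t$ outright (``$i_t^\star\in\mathcal{S}_t$ too, which cascades to $\mathcal{F}_t\subseteq\mathcal{S}_t$''), but no contradiction is available there---$U^\mu_{i_t}(t)<\bar\mu$ together with $\hat{\mu}_{i_t}(t)\ge\hat{\mu}_{i_t^\star}(t)$ only bounds $\hat{\mu}_{i_t^\star}(t)$, not $U^\mu_{i_t^\star}(t)=\hat{\mu}_{i_t^\star}(t)+\alpha(t,T_{i_t^\star}(t))$, so $i_t^\star$ need not be in $\mathcal{S}_t$. The correct conclusion is not that this configuration is impossible but that it forces $c_t$ into the target set: $c_t\notin\mathcal{S}_t$ by the observation above, and if $c_t\in\mathcal{F}_t$ then $L^\mu_{c_t}(t)<\hat{\mu}_{c_t}(t)\le\hat{\mu}_{i_t}(t)<U^\mu_{i_t}(t)<\bar\mu\le U^\mu_{c_t}(t)$ places $c_t\in\mathcal{F}_t\cap\mathcal{N}_t$. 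Two further gaps remain: (i) you never treat the branches in which only $i_t$ is sampled---either $|\bar{\mathcal{F}}_t|=1$, where $c_t$ is undefined and the conclusion must be carried by $i_t$ alone, or $U^\mu_{c_t}(t)<L^\mu_{i_t}(t)$, where one must show $i_t\in\mathcal{F}_t$ would force termination and $i_t\in\partial\mathcal{F}_t\cap\mathcal{S}_t$ would force $c_t\in\mathcal{S}_t$; (ii) the four-way analysis when $i_t,c_t\in\mathcal{F}_t\setminus\mathcal{N}_t$---which needs the uniqueness of $i^\star$, the inequality $U^\mu_{c_t}(t)>L^\mu_{i_t}(t)$, and the definition of $i_t$ to eliminate each of the configurations $(\mathcal{R}_t,\mathcal{R}_t)$, $(\mathcal{S}_t,\mathcal{S}_t)$, $(\mathcal{R}_t,\mathcal{S}_t)$, $(\mathcal{S}_t,\mathcal{R}_t)$---is precisely the part you defer as ``the main obstacle,'' and it does not follow from the stopping rule alone.
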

The   proof is presented in App.~\ref{upperbound}. When VA-LUCB has not terminated, there are three possible scenarios.
Firstly, the feasibility of the instance remains uncertain, i.e., $\mathcal{F}_t=\emptyset,\partial\mathcal{F}_t\neq\emptyset$. 
Secondly, the feasibility of $ i_{t} $ has not been confirmed, i.e., $ i_t\in\partial\mathcal{F}_t $ and $ i_t\neq i_t^\star $ (if $ i_t^\star $ exists).
Thirdly, the optimality of $ i_t $ has not been ascertained, i.e., $ U_{c_{t}}^\mu \geq L_{i_{t}}^\mu $.
Note that when only arm $ i_t $ is sampled, i.e., $ |\bar{\mathcal{F}}_t|=1 $ or $ U_{c_{t}}^\mu < L_{i_{t}}^\mu $, the optimality of $ i_t $ is guaranteed and we prove $  i_t \in  (\partial\mathcal{F}_t\backslash\mathcal{S}_t)\cup(\mathcal{F}_t\cap \mathcal{N}_t) $. Thus $ c_t $ does not need to be pulled at this time step. This strategy is essential in practice when the variances of the arms in $ \mathcal{R} $ are much closer to the threshold $\bar{\sigma}^2$ compared to the arms in $ \mathcal{S} $.

Lemma \ref{useful} indicates a sufficient condition for the termination of the algorithm. Namely, when neither of the arms $ i_t $ and $ c_t $ belongs to  $ (\partial\mathcal{F}_t\backslash\mathcal{S}_t)\cup(\mathcal{F}_t\cap \mathcal{N}_t)  $, the algorithm must have terminated.

Next, we show that after sufficiently many pulls of each arm, the set $ (\partial\mathcal{F}_t\backslash \mathcal{S}_t )\cup (\mathcal{F}_t\cap\mathcal{N}_t)$ remains nonempty with small probability.
For a sufficient large $ t $, let $ u_i(t) $ be the smallest number of pulls of a suboptimal arm $ i $ such that
$ \alpha(t,u_i(t)) $ is no greater than $ \Delta_i $, i.e.,
\begin{equation*}\label{ui}
	u_i(t):=\left\lceil{ \frac{1}{2 \Delta_i^2} \ln \left(\frac{2 N t^{4}}{\delta}\right)}\right\rceil
\end{equation*}
and $ v_i(t) $ be the smallest number of pulls of an arm $ i $ such that
$ \beta(t,u_i(t)) $ is no greater than $ \Delta_i^{\mathrm{v}}$, i.e.,
\begin{equation*}\label{vi}
	v_i(t):=\left\lceil{\frac{1}{2 (\Delta_i^{\mathrm{v}})^2} \ln \left(\frac{2 N t^{4}}{\delta}\right)}\right\rceil
\end{equation*}
Here we follow the convention: $ \frac{1}{0}=+\infty $ and $ \frac{1}{+\infty}=0 $, which may occur when $ \mathcal{F}=\emptyset $ or $ \mathcal{S}=\emptyset $.
\begin{restatable}{lem}{sufficientcondition}\label{sufcon}
	Using VA-LUCB, then 
	1) for $ i^\star $,
	\begin{equation*}\label{mean1}
		\mathbb{P}[T_{i^\star}(t)\!>\! 16u_{i^\star}(t), {i^\star}\!\notin\! \mathcal{R}_t]\!\leq\! \frac{\delta}{2(\frac{\Delta_{i^\star}}{2})^2 Nt^4}\! =:\!  A_1(i^\star)
	\end{equation*}
	2) for any suboptimal arm $ i \in\mathcal{S}$,
	\begin{equation*}\label{mean}
		\mathbb{P}[T_i(t)>16u_i(t), i\notin \mathcal{S}_t]\leq\frac{\delta}{2(\frac{\Delta_{i}}{2})^2 Nt^4}=:A_2(i)
	\end{equation*}
	3) for any feasible arm $ i\in\mathcal{F} $, 
	\begin{equation*}\label{var}
		\mathbb{P}[T_i(t)>4v_i(t),i\notin\mathcal{F}_t]\leq\frac{\delta}{2(\Delta_{i}^{\mathrm{v}})^2 Nt^4}=:A_3(i)			
	\end{equation*}
	4)	for any infeasible arm $ i\in\bar{\mathcal{F}}^c $, 
	\begin{equation*}\label{var1}
		\mathbb{P}[T_i(t)>4v_i(t),i\notin\bar{\mathcal{F}}_t^c]\leq\frac{\delta}{2(\Delta_{i}^{\mathrm{v}})^2 Nt^4}=:A_4(i)
	\end{equation*}
\end{restatable}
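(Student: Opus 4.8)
The plan is to prove each of the four statements by the same two-step template: (i) show that if arm $i$ has been pulled "many" times (more than $16u_i(t)$ or $4v_i(t)$), then on a suitable concentration event its empirical mean/variance must be accurate enough to have placed it in the correct set ($\mathcal{S}_t$, $\mathcal{F}_t$, or $\bar{\mathcal{F}}_t^c$); (ii) bound the probability of the complementary (bad) event via the sub-Gaussian / bounded-reward concentration inequalities underlying $\alpha$ and $\beta$, together with a union bound over the (at most $t$) possible values of $T_i(t)$. Concretely, for statement~2, I would argue: on the event $\{T_i(t)>16u_i(t)\}$ we have $\alpha(t,T_i(t)) \le \alpha(t,16u_i(t)) \le \tfrac14\Delta_i$ (using the definition of $u_i(t)$ and monotonicity of $\alpha$ in its second argument, so that the factor $16$ in the number of pulls yields a factor $\tfrac14$ in the radius). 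If in addition the true mean is within $\alpha(t,T_i(t))$ of $\hat\mu_i(t)$ — i.e.\ the relevant per-time-step concentration holds — then $U_i^\mu(t) = \hat\mu_i(t)+\alpha \le \mu_i + 2\alpha \le \mu_i + \tfrac12\Delta_i < \mu_{i^\star} - \tfrac12\Delta_i < \bar\mu$ (recalling $\bar\mu = (\mu_{i^\star}+\mu_{i^{\star\star}})/2$ and $\mu_i \le \mu_{i^{\star\star}}$), hence $i \in \mathcal{S}_t$. So $\{T_i(t)>16u_i(t), i\notin\mathcal{S}_t\}$ forces a failure of the concentration inequality at the realized value of $T_i(t)$; summing the failure probability $\delta/(2Nt^4)\cdot$(per-pull-count contribution) over the possible pull counts and re-expressing the bound in terms of $u_i(t)$ (whose leading term is $\tfrac{1}{2\Delta_i^2}\ln(2Nt^4/\delta)$) gives the claimed $A_2(i) = \delta/(2(\Delta_i/2)^2 Nt^4)$.

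Statements~3 and~4 follow the same pattern with the variance radius $\beta$ in place of $\alpha$: on $\{T_i(t)>4v_i(t)\}$ we get $\beta(t,T_i(t)) \le \tfrac12\Delta_i^{\mathrm v}$ (the factor $4$ in pulls $\to$ factor $\tfrac12$ in radius), so that if $i\in\mathcal{F}$ then $U_i^{\mathrm v}(t) = \hat\sigma_i^2(t)+\beta \le \sigma_i^2 + 2\beta \le \sigma_i^2 + \Delta_i^{\mathrm v} = \bar\sigma^2$ (using $\Delta_i^{\mathrm v} = \bar\sigma^2-\sigma_i^2$ for feasible $i$), placing $i\in\mathcal{F}_t$; symmetrically, if $i\in\bar{\mathcal{F}}^c$ then $L_i^{\mathrm v}(t) \ge \sigma_i^2 - 2\beta \ge \sigma_i^2 - \Delta_i^{\mathrm v} = \bar\sigma^2$, but we need the strict inequality for $\bar{\mathcal{F}}_t^c$, so I would either use the strictness $\sigma_i^2 > \bar\sigma^2$ for infeasible arms to get a strict gap, or absorb it into the constant — a minor technical point to handle carefully. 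Statement~1 is the $i^\star$-analogue of statement~2 but uses $\Delta_{i^\star} := \Delta_{i^{\star\star}} = \mu_{i^\star}-\mu_{i^{\star\star}}$ and the fact that $i^\star\notin\mathcal{R}_t$ means $L_{i^\star}^\mu(t) \le \bar\mu$, which on the concentration event with $T_{i^\star}(t)>16u_{i^\star}(t)$ is contradicted since $L_{i^\star}^\mu(t) \ge \mu_{i^\star} - 2\alpha \ge \mu_{i^\star} - \tfrac12\Delta_{i^\star} > \bar\mu$.

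The main obstacle I anticipate is not the deterministic implications above (those are elementary gap arithmetic) but getting the probability bookkeeping to land \emph{exactly} on the stated $A_j(i)$ with the displayed constants: one must use an anytime/peeling argument (or a union bound over $T_i(t) \in \{1,\dots,t\}$ against the time-uniform confidence level $\delta/(2Nt^4)$ baked into $\alpha,\beta$) and verify that the ceiling in $u_i(t), v_i(t)$ and the slack factors $16$ and $4$ are chosen precisely so that the geometric-type sum telescopes to $\delta/(2(\Delta/2)^2 Nt^4)$ rather than merely $O(\cdot)$ of it. A secondary subtlety is that $T_i(t)$ is a random stopping-time-like quantity determined by the algorithm's adaptive sampling, so the concentration must be invoked in its time-uniform form (valid simultaneously for all pull counts), which is presumably exactly what Lemma~\ref{Ehappens} and the $t^4$ in~\eqref{radius} are set up to provide; I would lean on that infrastructure rather than re-deriving it.
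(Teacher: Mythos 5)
Your skeleton---decompose over the realized pull count $T=T_i(t)$, apply Hoeffding/McDiarmid for each fixed $T$, and sum---is the same as the paper's, and your deterministic gap arithmetic is fine. The genuine gap is in the probabilistic step, which is exactly the part you defer as ``bookkeeping.'' In your template the bad event at pull count $T$ is the failure of the confidence event $|\hat{\mu}_i(t)-\mu_i|\le\alpha(t,T)$, whose probability is $\exp(-2T\alpha(t,T)^2)=\delta/(2Nt^4)$ for \emph{every} $T$---it does not decay in $T$. Summing over the up-to-$t$ possible pull counts therefore gives at most $\delta/(2Nt^3)$, not the claimed $A_2(i)=\delta/(2(\Delta_i/2)^2Nt^4)$; for $t\gtrsim 4/\Delta_i^2$ (the regime in which Lemma~\ref{stop} invokes this lemma) your bound is strictly weaker, and there is no ``geometric-type sum'' to telescope under your setup. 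The factor $(\Delta_i/2)^{-2}$ in $A_2(i)$ is not a count of pull values at confidence level $\delta/(2Nt^4)$.

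What actually produces that factor is the following: for each $T>16u_i(t)$ one bounds the probability of the \emph{larger} one-sided deviation $\hat{\mu}_i(t)-\mu_i\ge\bar{\mu}-\mu_i-\alpha(t,T)\ge\Delta_i\big(\tfrac12-\sqrt{u_i(t)/T}\big)$ directly, so Hoeffding yields the summand $\exp\big(-2\Delta_i^2(\sqrt{T}/2-\sqrt{u_i(t)})^2\big)$, which \emph{does} decay in $T$. The sum is then compared with the integral
$\int_{16u_i(t)}^\infty\exp\big(-2\Delta_i^2(\sqrt{x}/2-\sqrt{u_i(t)})^2\big)\,\mathrm{d}x$, which, after the substitution $x\mapsto 4x$ and the convexity/integral estimate of Lemma~\ref{integral}, is at most $\tfrac{4}{\Delta_i^2}\exp(-2\Delta_i^2u_i(t))\le\delta/(2(\Delta_i/2)^2Nt^4)$. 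That Gaussian-tail integral comparison is the missing idea; the analogous computation with $\beta$, $v_i(t)$ and McDiarmid gives parts 3 and 4. (A secondary, fixable point you partly flagged: your deterministic chain only yields the non-strict $U_i^\mu(t)\le\bar{\mu}$, resp.\ $L_i^{\mathrm{v}}(t)\ge\bar{\sigma}^2$, whereas membership in $\mathcal{S}_t$, resp.\ $\bar{\mathcal{F}}_t^c$, requires strict inequality; the direct one-sided tail computation above sidesteps this boundary issue entirely.)
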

For a suboptimal arm $ i $, note that $ \Delta_i/2\leq\bar{\mu}-\mu_i \leq \Delta_i$. We compute 
$ \mathbb{P}[T_i(t)>16u_i(t), i\notin \mathcal{S}_t] $ in the same approach as \cite{Kalyanakrishnan2012}. This method is also utilized to analyze  the variances. 

Lemma \ref{sufcon} indicates the following:
\begin{itemize}
	\item For the best feasible arm $ i^\star $ (if it exists), after sampling it $ \max\{16u_{i^\star}(t),4v_{i^\star}(t)\} $ times, by using a union bound, $ i^\star \in\mathcal{F}_t\cap\mathcal{R}_t$ with failure probability at most $B_1(i^\star):=A_1(i^\star)+A_3(i^\star).$
	Therefore $i^\star\notin (\mathcal{F}_t\cap\mathcal{N}_t)\cup(\partial\mathcal{F}_t \backslash\mathcal{S}_t) $.
	\item For any feasible and suboptimal arm $ i\in\mathcal{F}\cap\mathcal{S} $, when $ T_i(t)>16u_i(t) $, $i\in \mathcal{S}_t $ with failure probability at most $B_2(i):=A_2(i). $
	\item For any arms $ i\in\bar{\mathcal{F}}^c\cap\mathcal{R}$, when $ T_i(t)>4v_i(t) $, $i\in \bar{\mathcal{F}}_t^c $ with failure probability at most $B_3(i):=A_4(i).$
	\item For arm $ i\in\bar{\mathcal{F}}^c\cap\mathcal{S}$, if it has been pulled more than $ \min\{16u_i(t),4v_i(t)\} $ times, $ i\in\bar{\mathcal{F}}_t^c\cup\mathcal{S}_t$ with failure probability at most $B_4(i):=A_2(i)\cdot\mathbbm{1}\{16u_i(t)<4v_i(t)\} 
	+A_4(i)\cdot\mathbbm{1}\{16u_i(t)\geq4v_i(t)\}$.
\end{itemize}
In conclusion, if all arms are pulled sufficiently many times, the probability that any of them stays in the set $ (\mathcal{F}_t\cap\mathcal{N}_t)\cup(\partial\mathcal{F}_t \backslash\mathcal{S}_t)$ is upper bounded by
\begin{equation}\label{failureprob}
	B_1(i^\star)\!+\!\sum_{i\in\mathcal{F}\cap\mathcal{S}}\!\! B_2(i)\!+\!\sum_{i\in\bar{\mathcal{F}}^c\cap\mathcal{R}}\!\!B_3(i)
	\!+\!\sum_{i\in\bar{\mathcal{F}}^c\cap\mathcal{S}}\!\! B_4(i).
\end{equation}

Finally, based on the above lemmas, we show that the algorithm dose not terminate with small probability after   time $t^\star=O(H_{\mathrm{VA}}\ln(H_{\mathrm{VA}}/\delta) ) $.
\begin{restatable}{lem}{stoppingtime}\label{stop}
	Let $ t^\star = 152H_{\mathrm{VA}}\ln(H_{\mathrm{VA}}/\delta)   $. 
    At any time step $ t>t^\star$, the probability that Algorithm~\ref{VALUCB} does not terminate is at most ${5\delta}/{t^2}$.
\end{restatable}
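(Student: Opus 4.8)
The plan is to extend the counting argument of \citet{Kalyanakrishnan2012} to our setting, using Lemma~\ref{useful} as the ``progress'' statement and the estimates behind Lemma~\ref{sufcon} to limit how often each arm can be the one making progress. Abbreviate $\mathcal{B}_s:=(\partial\mathcal{F}_s\setminus\mathcal{S}_s)\cup(\mathcal{F}_s\cap\mathcal{N}_s)$ and, for an arm $i$ and step $s$, let $m_i(s)$ be the arm-specific sufficient-sampling count from Lemma~\ref{sufcon}: $m_{i^\star}(s)=\max\{16u_{i^\star}(s),4v_{i^\star}(s)\}$; $m_i(s)=16u_i(s)$ for $i\in\mathcal{F}\cap\mathcal{S}$; $m_i(s)=4v_i(s)$ for $i\in\bar{\mathcal{F}}^c\cap\mathcal{R}$; and $m_i(s)=\min\{16u_i(s),4v_i(s)\}$ for $i\in\bar{\mathcal{F}}^c\cap\mathcal{S}$. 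The first step is to record the deterministic form of Lemma~\ref{sufcon}: \emph{on $E$, if $T_i(s)>m_i(s)$ then $i\notin\mathcal{B}_s$}. This is a four-case check against \eqref{cb}--\eqref{cbv}; e.g.\ $T_{i^\star}(s)>16u_{i^\star}(s)$ forces $\alpha(s,T_{i^\star}(s))<\Delta_{i^\star}/4$, so on $E_{i^\star}^\mu(s)$ and using $\mu_{i^\star}-\bar\mu=\Delta_{i^\star}/2$ one gets $L_{i^\star}^\mu(s)>\bar\mu$, i.e.\ $i^\star\in\mathcal{R}_s$; likewise $T_{i^\star}(s)>4v_{i^\star}(s)$ and $E_{i^\star}^{\mathrm v}(s)$ give $U_{i^\star}^{\mathrm v}(s)<\bar\sigma^2$, i.e.\ $i^\star\in\mathcal{F}_s$, and $\mathcal{F}_s\cap\mathcal{R}_s$ is disjoint from $\mathcal{B}_s$. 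The remaining three types use $\bar\mu-\mu_i\ge\Delta_i/2$ for $i\in\mathcal{S}$, the analogous variance bound, and the facts $\mathcal{S}_s\cap\mathcal{B}_s=\emptyset$ and $\bar{\mathcal{F}}_s^c\cap\mathcal{B}_s=\emptyset$.

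\textbf{The counting.} Work on $E$ and suppose VA-LUCB has not terminated by step $t$. Then at each step $s\in\{N+1,\dots,t\}$ the Line-9 test fails, so Lemma~\ref{useful} (applicable since $E\subseteq E(s)$) gives $i_s\in\mathcal{B}_s$ or $c_s\in\mathcal{B}_s$; combined with the sampling rule (Lines~13--20, and the remark after Lemma~\ref{useful} that $i_s\in\mathcal{B}_s$ whenever only $i_s$ is pulled), at least one arm pulled at step $s$ lies in $\mathcal{B}_s$ --- call it the \emph{witness pull} at step $s$. There are $t-N$ witness pulls. Fix an arm $j$ and let $s_1<\dots<s_k\le t$ be the steps whose witness pull is of $j$; since $j\in\mathcal{B}_{s_i}$, the first step yields $T_j(s_i)\le m_j(s_i)\le m_j(t)$ (monotone in the time argument), while the warm-up gives $T_j(s_1)\ge2$ and $T_j(s_{i+1})\ge T_j(s_i)+1$, so $k<m_j(t)$. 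Summing over $j$,
\begin{equation*}
t-N\;\le\;\sum_{j\in[N]}m_j(t).
\end{equation*}

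\textbf{Bounding the right-hand side and closing the argument.} From $u_i(t)\le\frac1{2\Delta_i^2}\ln\frac{2Nt^4}{\delta}+1$ and $v_i(t)\le\frac1{2(\Delta_i^{\mathrm v})^2}\ln\frac{2Nt^4}{\delta}+1$, each $m_i(t)$ is at most $2\ln\frac{2Nt^4}{\delta}$ times the corresponding summand of $H_\mathrm{VA}$ plus $16$ (matching $\max\{16u_{i^\star},4v_{i^\star}\}$ to the first term, $16u_i$ to the second, $4v_i$ to the third, and $\min\{16u_i,4v_i\}$ to the fourth). Since $\Delta_i\le1$ and $\Delta_i^{\mathrm v}\le1/4$, every arm contributes at least $4$ to $H_\mathrm{VA}$, so $N\le H_\mathrm{VA}/4$; this absorbs the additive $16N$ and gives $\sum_i m_i(t)\le 4\,H_\mathrm{VA}\ln\frac{2Nt^4}{\delta}$. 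Substituting $\ln\frac{2Nt^4}{\delta}=\ln\frac{2N}{\delta}+4\ln t\le\ln\frac{H_\mathrm{VA}}{\delta}+4\ln t$ and $N\le H_\mathrm{VA}/4$ into the displayed inequality yields $t\le 5\,H_\mathrm{VA}\ln\frac{H_\mathrm{VA}}{\delta}+16\,H_\mathrm{VA}\ln t$. For $t\ge t^\star=152\,H_\mathrm{VA}\ln\frac{H_\mathrm{VA}}{\delta}$ the function $t/2-16H_\mathrm{VA}\ln t$ is increasing (its derivative is positive once $t>32H_\mathrm{VA}$) and nonnegative at $t=t^\star$ (using $H_\mathrm{VA}\ge16$ to bound $\ln t^\star$), so $16H_\mathrm{VA}\ln t\le t/2$ and hence $t\le10\,H_\mathrm{VA}\ln\frac{H_\mathrm{VA}}{\delta}<t^\star$, contradicting $t>t^\star$. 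Thus, on $E$, VA-LUCB has terminated by any step $t>t^\star$, so the stated probability equals $0$ and is in particular at most $\frac{2\delta}{t^2}$. (Alternatively, keeping Lemma~\ref{sufcon} in its probabilistic form, the last witness pull may land in a ``failure'' event whose probability is at most the union bound \eqref{failureprob}; since \eqref{failureprob}$\le\frac{\delta H_\mathrm{VA}}{Nt^4}$ and $H_\mathrm{VA}<t\le 2Nt^2$ for $t>t^\star$, this is $\le\frac{2\delta}{t^2}$, recovering the bound exactly as written.)

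\textbf{Main obstacle.} The two delicate points are the four-case verification in the first step --- one must track that the constants $16$ and $4$ in $u_i,v_i$, together with the ceilings, leave enough slack to drive $\alpha,\beta$ below the relevant fractions of the gaps ($\Delta_i/4$ for the mean, $\Delta_i^{\mathrm v}/2$ for the variance), which are exactly what is needed to move $i$ out of the boundary sets --- and the resolution of the transcendental inequality $t\le 5H_\mathrm{VA}\ln\frac{H_\mathrm{VA}}{\delta}+16H_\mathrm{VA}\ln t$ with an explicit constant small enough that $t^\star$ (with its prefactor $152$) comfortably dominates it. Everything else --- monotonicity of $m_i(\cdot)$, the warm-up lower bound $T_j(s_1)\ge2$, and $N\le H_\mathrm{VA}/4$ --- is routine.
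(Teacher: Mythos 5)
Your proposal is correct and rests on the same engine as the paper's proof: Lemma~\ref{useful} supplies a ``progress witness'' at every non-terminating step, and the sufficient-sampling thresholds $16u_i(t)$, $4v_i(t)$ cap how many times each arm can serve as that witness, yielding $t\lesssim \sum_i m_i(t)\lesssim H_\mathrm{VA}\ln\frac{2Nt^4}{\delta}$ and hence termination by $t^\star$. Where you genuinely diverge is in how the probabilistic part is closed. The paper keeps Lemma~\ref{sufcon} in its unconditional, probabilistic form, restricts the counting to the window $\{\lceil t/2\rceil,\ldots,t-1\}$ so that the leftover $\lceil t/2\rceil-1$ steps plus $\sum_i m_i(t)\le t/2$ fit inside $t$, and obtains $\frac{2\delta}{t^2}$ as the union-bound probability of the four ``bad'' events $\mathcal{I}_1,\ldots,\mathcal{I}_4$. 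You instead observe that on $E$ the implication ``$T_i(s)>m_i(s)\Rightarrow i\notin(\partial\mathcal{F}_s\setminus\mathcal{S}_s)\cup(\mathcal{F}_s\cap\mathcal{N}_s)$'' holds \emph{deterministically} (your four-case check via $\alpha(s,16u_i(s))<\Delta_i/4$, $\beta(s,4v_i(s))<\Delta_i^{\mathrm v}/2$, and $\bar\mu-\mu_i\ge\Delta_i/2$ is sound), count witness pulls from the warm-up onward, and resolve the resulting transcendental inequality $t\le 5H_\mathrm{VA}\ln\frac{H_\mathrm{VA}}{\delta}+16H_\mathrm{VA}\ln t$ to conclude the non-termination probability on $E$ is exactly $0$, which trivially implies the stated $\frac{2\delta}{t^2}$. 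This is a legitimate strengthening --- the paper's events $\mathcal{I}_j$ are in fact disjoint from $E$, so its probabilistic detour is not needed once one conditions on $E$ --- at the cost of having to argue monotonicity of $t/2-16H_\mathrm{VA}\ln t$ rather than plugging in $t=t^\star$ directly. Your explicit ``witness pull'' bookkeeping (charging a step only to an arm that is actually sampled at that step) also tightens a point the paper leaves implicit, namely that $c_s$ might lie in the bad set without being pulled, in which case Lemma~\ref{useful}'s Case One guarantees $i_s$ is a valid witness instead. The parenthetical probabilistic fallback at the end is vaguer than the rest (a proper union bound over the window would be needed, as in the paper), but the main deterministic argument does not rely on it.
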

According to Lemma \ref{useful}, when neither arm $ i_t $ nor $ c_t $ belongs to $(\partial\mathcal{F}_t\backslash\mathcal{S}_t)\cup(\mathcal{F}_t\cap \mathcal{N}_t)$, the algorithm stops. In particular, if none of arms in $[N]$ is in $(\partial\mathcal{F}_t\backslash\mathcal{S}_t)\cup(\mathcal{F}_t\cap \mathcal{N}_t)$, the algorithm must terminate, which can be guaranteed by Lemma \ref{sufcon} with failure probability at most \eqref{failureprob}. The complete proof involves counting the numbers of pulls of the arms and estimating $ t^\star $. This is presented in App.~\ref{upperbound}.

\section{Conclusion and Future Direction}
We   proposed  framework for the risk-constrained Best Arm Identification problem and also developed an algorithm VA-LUCB whose sample complexity is almost optimal in the sense that its upper bound almost matches the information-theoretic lower bound. 
We highlight the VA-LUCB Algorithm constitutes a convenient framework to tackle any risk-aware BAI problem in the sense that it  is compatible with other concentration bounds, including LIL bounds.

However, we believe it is hard to derive an exact sample complexity using confidence bound-based algorithms, in the sense of nailing down the exact number
$$\liminf_{\delta\downarrow 0} \frac{\tau_\delta^\star}{\log \frac{1}{\delta}  \vphantom{\big)} },$$
where $\tau_\delta^\star$ is the minimum expectation of the stopping time for an algorithm to be $\delta$-PAC.

To characterize the exact asymptotic sample  complexity, we have explored adapting tracking-based algorithms such as Track and Stop (T\&S) from~\cite{Kaufmann2016} to the variance-constrained BAI problem. A lower bound similar to \cite{Kaufmann2016} can  be derived. For the corresponding algorithm, since the variances and the bound on the variance  complicate the alternative instances $ \operatorname{Alt}(\nu,\bar{\sigma}^2) $ for a given instance $ (\nu,\bar{\sigma}^2) $, the optimization  to obtain the optimal proportion of the arm pulls is difficult. In particular,  the (allocation vector) $w$ that attains the supremum in
$$\sup_{w : w_i> 0\,\forall\, i \in [K],\sum_{i=1}^Kw_i=1  } \inf_{(\nu^\prime,\bar{\sigma}^2) \in \operatorname{Alt}(\nu,\bar{\sigma}^2)}\sum_{i=1}^{N} w_{i} \mathrm{KL} (\nu_i, \nu_i^\prime )$$
 is difficult to characterize even for Gaussians because the variances (in addition to the means) are now  {\em variables} in the  inner optimization. This complicates the design and analysis of a {\em  constrained} T\&S-like algorithm, especially the sampling strategy. This is an promising direction for future research.
 

	\subsubsection*{Acknowledgements}
The authors would like to sincerely thank the two anonymous reviewers for their detailed and constructive reviews that have helped to improve the quality of the present paper.




\appendices
\allowdisplaybreaks[4]
\setcounter{table}{0}
\renewcommand{\thetable}{S.\arabic{table}}

\setcounter{figure}{0}
\renewcommand{\thefigure}{S.\arabic{figure}}
\setcounter{equation}{0}
\renewcommand{\theequation}{S.\arabic{equation}} 

\appendices
\begin{center}
	{\Large {\bf Appendices}}
\end{center}

In Appendix~\ref{sec:equality_case}, we discuss the necessity of the assumption  $\sigma_{i^{\star}}^2<\bar{\sigma}^2$.  
In Appendix~\ref{sec:RiskAverse}, a variant of RiskAverse-UCB-m-best~\cite{David2018} is presented. 
In Appendix~\ref{sec:discussion}, we  systematically compare the bounds on the sample complexity presented in this paper to those in \cite{David2018}. We also compare the assumptions needed to output the best feasible arm.   
In Appendix~\ref{upperbound}, we provide the detailed proofs of the lemmas used to prove Theorem~\ref{thm:up_bd}.
In Appendix~\ref{sec:subgaussian}, VA-LUCB is extended to VA-LUCB-sub-Gaussian, which deals with arms following $\sigma$-sub-Gaussian distributions.
In Appendix~\ref{LIL}, we discuss how to modify the analysis of VA-LUCB when the confidence radii are designed based on the non-asymptotic LIL (cf.\ Remark~\ref{rmk:LIL}).
In Appendix~\ref{lowerbound}, the complete proofs of Theorem~\ref{thm:low_bd} and Corollary~\ref{cor:almost_opt} are presented.
In Appendix~\ref{expdetail}, specific parameter settings and additional    numerical results are presented.

\section{Discussion of the case $\sigma_{i^\star}^2=\bar{\sigma}^2$} \label{sec:equality_case}

We assume $\sigma_{i^{\star}}^2<\bar{\sigma}^2$ in Section~\ref{sec:prob_setup} such that the problem is solvable by applying confidence-bound techniques without knowledge of any unknown parameter. We provide an explanation in this section.
Given a permissible bound on the variance $\bar{\sigma}^2$, it is natural to define the feasible set as
\begin{align}
	&\mathcal{F}:=\{ i\in [N]: \sigma_i^2 < \bar{\sigma}^2 \}
	\quad\text{or}\\
	&\mathcal{F}:=\{ i\in [N]: \sigma_i^2 \le \bar{\sigma}^2 \}. \label{eqn:defF}
\end{align}

First, with either choice of definition of $\mathcal{F}$,
inspired by Lemma~\ref{equ:concentration_inequ}, to ascertain there is no feasible arm and to terminate, an algorithm needs to check either
\begin{align}
	&\{ i\in [N]: U_i^{\mathrm{v}}(t) \le  \bar{\sigma}^2 \} = \emptyset
	\quad\mbox{or}\\
	&\{ i\in [N]: L_i^{\mathrm{v}}(t) \le  \bar{\sigma}^2 \} = \emptyset.
\end{align}
Since the feasible arms do not satisfy $U_i^{\mathrm{v}}(t) \le  \bar{\sigma}^2$ w.h.p. in the beginning, 
it is only reasonable to ascertain there is no feasible arm and terminate the algorithm when
$ \{ i\in [N]: L_i^{\mathrm{v}}(t) \le  \bar{\sigma}^2 \} = \emptyset$ as in our algorithm.

Note that with either choice of $\mathcal{F}$ in \eqref{eqn:defF}, we are confident  ascertaining that an arm is feasible if $U^\mathrm{v}_i(t) \le \bar{\sigma}^2$ and is infeasible if $L^\mathrm{v}_i(t) > \bar{\sigma}^2$.
We can only say an arm is possibly feasible with only $L^\mathrm{v}_i(t) \le \bar{\sigma}^2$.
Our termination rule is $\bar{\mathcal{F}}_t\cap\mathcal{P}_t = \emptyset$, where $\mathcal{F}_t$, $\bar{\mathcal{F}}_t$, and $\mathcal{P}_t$ are defined as in \eqref{f}--\eqref{p} and repeated here for easy reference: 
\begin{align*}
	\mathcal{F}_t &= 	\{ i: U^\mathrm{v}_i(t) \le \bar{\sigma}^2 \} ,\\
	\bar{\mathcal{F}}_t & = 	\{ i: L^\mathrm{v}_i(t) \le \bar{\sigma}^2 \} ,\\
	\mathcal{P}_t &=\left\{\begin{aligned}
		&	\{  i :   U^\mu_i(t) \ge L^\mu_{i^\star_t}(t), i \neq i^\star_t  \} ,&\mathcal{F}_t\neq\emptyset\\
		&	[N],&\mathcal{F}_t=\emptyset
	\end{aligned}\right. ,
\end{align*} 
where $i^\star_t = \arg\max_{ i\in \mathcal{F}_t } \hat{\mu}_i(t)$.

Next, we discuss each possible choice of $\mathcal{F} $ in \eqref{eqn:defF} individually.

\noindent
\underline{Choice 1}: $\mathcal{F}=\{ i\in [N]: \sigma_i^2 < \bar{\sigma}^2 \}$.  
Consider a case where there is an infeasible arm $j$ with $\sigma_{j}^2 =  \bar{\sigma}^2$ and $ \mu_j>\mu_{i^\star} $. After pulling  arms for a large number of times, w.h.p., we have
\begin{align*}
	&
	L^\mathrm{v}_{ j }(t) < \bar{\sigma}^2 < U^{\mathrm{v}}_{ j } (t) ~\Longrightarrow~ j\in\bar{\mathcal{F}}_t\setminus\mathcal{F}_t \qquad\mbox{and}
	\\
	&L^\mu_{j}(t) > U^\mu_{i^\star}(t)>L^\mu_{i^\star}(t), \quad 
	U^\mathrm{v}_{i}(t) \le \bar{\sigma}^2 \quad\forall\, i \in  \mathcal{F}
	\\
	&\Longrightarrow i^\star \in \mathcal{F}\subset \mathcal{F}_t,\quad  j \in \mathcal{P}_t,
\end{align*}
which implies that $j\in \bar{\mathcal{F}}_t\cap\mathcal{P}_t$ , and hence $\bar{\mathcal{F}}_t\cap\mathcal{P}_t \ne \emptyset$. In other words, the algorithm will never terminate w.h.p.

\noindent
\underline{Choice 2}: $\mathcal{F}=\{ i\in [N]: \sigma_i^2 \le \bar{\sigma}^2 \}$.  
Consider a case where  $\sigma_{i*}^2 =  \bar{\sigma}^2$.
Similar to the discussion above, we can see that $i^{\star}\notin \mathcal{F}_t,i^{\star}\in \bar{\mathcal{F}}_t\cap\mathcal{P}_t$ , and hence $\bar{\mathcal{F}}_t\cap\mathcal{P}_t \ne \emptyset$ w.h.p. Therefore, the algorithm will not terminate w.h.p.

Altogether, 
under either choice of the definition of the feasible set, any algorithm using UCB- and LCB-based termination rule will not terminate w.h.p.\ when there exists an arm $ i $ with high expectation and $\sigma_i^2= \bar{\sigma}^2 $. Thus, we define $\mathcal{F}=\{ i\in [N]: \sigma_i^2 \le \bar{\sigma}^2 \}$ and assume $ \sigma_{i^\star}^2< \bar{{\sigma}}^2 $ so that the algorithm will terminate in a finite number of time steps w.h.p.\ when a confidence bound-based algorithm is employed.

\noindent
\textbf{Additional prior knowledge.}
We note that the variant of RiskAverse-UCB-m-best algorithm proposed by David {\em et al.}~\cite{David2018}, RiskAverse-UCB-BAI, can also be applied to identify the best feasible arm (without any suboptimality or subfeasibility) under the $\delta$-PAC framework only when $\min_{i\in \mathcal{R}\setminus \{ i^\star \}} \Delta_i^\mathrm{v} $ is known (see Appendix~\ref{sec:discussion} for detailed discussion).
Though it can be applied when $ \sigma_{i^\star}^2= \bar{\sigma}^2 $, we remark that our algorithm can also handle this case ($ \sigma_{i^\star}^2= \bar{\sigma}^2$)  with such additional prior knowledge on the parameters.
In detail, we regard $\epsilon_\mathrm{v}$ be an optional parameter of our algorithm (set as it to be  $0$ if $\min_{i\in \mathcal{R}\setminus \{ i^\star \}} \Delta_i^\mathrm{v} $ is  unknown) and define 
\begin{equation*}
	\bar{\mathcal{F}}_t  := 	\{ i: L^\mathrm{v}_i(t) \le \bar{\sigma}^2 \},\quad\mathcal{F}_t := \{ i\in \bar{\mathcal{F}}_t: U^\mathrm{v}_{i}(t) \le  \bar{\sigma}^2 +  \epsilon_\mathrm{v} \}.
\end{equation*}
We set $\epsilon_\mathrm{v}:=\min_{i\in \mathcal{R}\setminus \{ i^{\star} \}} \Delta_i^\mathrm{v} $ when the quantity is known and we are not sure if $ \sigma_{i^\star}^2< \bar{\sigma}^2 $ (i.e., it is possible that   $ \sigma_{i^\star}^2= \bar{\sigma}^2$).
With the prior knowledge of $\epsilon_\mathrm{v}=\min_{i\in \mathcal{R}\setminus \{ i^{\star} \}} \Delta_i^\mathrm{v} $, the upper bound of the sample complexity of VA-LUCB can be improved to 
$ O\big( \widetilde{H}_{\mathrm{VA}} \ln\frac{\widetilde{H}_{\mathrm{VA}}}{\delta}\big)$
w.h.p., where
\begin{align}
	\widetilde{H}_{\mathrm{VA}}&:= \frac{1}{\min\{\frac{\Delta_{i^\star}}{2},\Delta_{i^\star}^{\mathrm{v}}+\epsilon_v\}^2 }
	+\sum_{i\in\mathcal{F}\cap\mathcal{S}}\frac{1}{(\frac{\Delta_i}{2})^2}\\
	&\;+\sum_{i\in\bar{\mathcal{F}}^c\cap\mathcal{R}}\frac{1}{(\Delta_{i}^{\mathrm{v}})^2 }
	+\sum_{i\in\bar{\mathcal{F}}^c\cap\mathcal{S}}\frac{1}{\max\{\frac{\Delta_i}{2},\Delta_i^{\mathrm{v}}\}^2} < H_{\mathrm{VA}}.
\end{align}
This enables us not only to deal with the case $\sigma_{i^\star}=\bar{\sigma}^2,$ but also facilitates in ascertaining the feasibility of the best feasible arm in the usual instance in which  $\sigma_{i^\star}^2<\bar{\sigma}^2$. Therefore, the sample complexity is better than the current VA-LUCB algorithm, as well as RiskAverse-UCB-BAI algorithm to be discussed extensively in Appendix \ref{sec:discussion}.
The proof just follows the same procedure as in Section \ref{proofsketch}.

\section{RiskAverse-UCB-BAI}\label{sec:RiskAverse}
We present a variant of RiskAverse-UCB-m-best algorithm from \cite{David2018}, named RiskAverse-UCB-BAI, which is adapted to our variance-constrained BAI setup. To avoid any confusion, we redefine the sample mean, sample variance and confidence bounds, which are consistent with the notations in \cite{David2018}. For arm $i\in[N]$, define 
\begin{itemize}
	\item the counter:  
	\begin{equation}\label{T_prime}
		T_i^\prime(t):=\sum_{s=1}^{t}\mathbbm{1}\{i=i_s^\dag\};
	\end{equation}
	\item the sample mean and the sample variance respectively as: 
	\begin{align}
		&\hat{\mu}_i^\prime(t):=\frac{1}{T_i^\prime(t)}\sum_{s=1}^{t} X_{s,	i_s^\dag}\mathbbm{1}\{i =i_s^\dag \},\quad\mbox{and}
		\\
		&(\hat{\sigma}_i^\prime)^2(t) :=
		\frac{\sum_{s=1}^{t}\big(X_{s,i_s^\dag}-\hat{\mu}_i(t) \big)^2\mathbbm{1}\{i=i_s^\dag\}}{T_i^\prime(t)-1};
	\end{align}
	\item the confidence radii for the mean and variance respectively as
	\begin{align}\label{equ:David_radius}
		&f_\mu(T):=\sqrt{\frac{1}{2T}\ln\left(\frac{6HN}{\delta}\right)},\quad\mbox{and}\\
		&f_\mathrm{v}(T):=\sqrt{\frac{2}{T}\ln\left(\frac{6HN}{\delta}\right)};
	\end{align}
	\item the confidence bounds for the mean:
	\begin{align} 
		&L_i^{\mu^\prime}(t):= \hat{\mu}_i^\prime(t)-f_\mu(T_i^\prime(t)),\quad\mbox{and}
		\\
		&U_i^{\mu^\prime}(t):= \hat{\mu}_i^\prime(t)+f_\mu(T_i^\prime(t));
	\end{align}
	\item the confidence bounds for the variance:
	\begin{align} 
		&L_i^{\mathrm{v}^\prime}(t):=( \hat{\sigma}_i^\prime)^2(t)-f_\mathrm{v}(T_i^\prime(t)), \quad\mbox{and}
		\\
		&U_i^{\mathrm{v}^\prime}(t):=( \hat{\sigma}_i^\prime)^2(t)+f_\mathrm{v}(T_i^\prime(t)).
	\end{align}
\end{itemize}
The algorithm, RiskAverse-UCB-BAI, an adaptation of RiskAverse-UCB-m-best~\cite{David2018} to our variance-constrained setting, is presented in Algorithm~\ref{RiskAverse}.
\begin{algorithm}
	\caption{RiskAverse-UCB-BAI (A variant of RiskAverse-UCB-m-best~\cite{David2018})}
	\begin{algorithmic}[1]
		\STATE \textbf{Input}: threshold $\bar{\sigma}^2 \!>\!0$, confidence parameter $ \delta\! \in\! (0,1)$ and accuracy parameters $\epsilon_\mu,\epsilon_\mathrm{v}\in(0,+\infty)$.
		\FOR{each $i\in [N]$ }
		\STATE Sample arm $i$ twice, update the counter $T_i^\prime(N)$, the sample mean $\hat{\mu}_i^\prime(N)$ and sample variance $(\hat{\sigma}_i^\prime)^2(N)$.
		\ENDFOR
		\STATE Set 
		 $H:=3N\left(\frac{1}{2\epsilon_\mu^2}+\frac{4}{\epsilon_\mathrm{v}^2}\right)\ln\left(\frac{6N}{\delta}N\left(\frac{1}{2\epsilon_\mu^2}+\frac{4}{\epsilon_\mathrm{v}^2}\right)\right)$ and $t:=2N$.
		\REPEAT
		\STATE Set $\bar{\mathcal{F}}_t^\prime:=\{i:L_i^{\mathrm{v}^\prime}(t)\leq \bar{\sigma}^2\}$.
		\STATE Select an optimistic arm
		$i_{t+1}^\dag:=\mathop{\rm argmax}_{i\in\bar{\mathcal{F}}_t^\prime}U_i^{\mu^\prime}(t)$.
		\STATE Draw a sample from the selected arm $i_t^\dag$.
		\STATE Set $t=t+1$.
		\STATE Update the counter $T_i^\prime(t)$ and estimates $\hat{\mu}_i^\prime(t)$ and $(\hat{\sigma}_i^\prime)^2(t)$ for all arm $i\in[N]$ accordingly.
		\UNTIL{$\left(\left[\right.\right.f_\mu(T_{i_{t}^\dag}^\prime(t))\leq\epsilon_\mu/2$ and $(\hat{\sigma}_{i_{t}^\dag}^\prime)^2(T_i^\prime(t))-\epsilon_\mathrm{v}\leq \bar{\sigma}^2  \left.\right]$ or $t\geq H\left.\right)$.}
		\STATE Return $i_t^\dag$
	\end{algorithmic}\label{RiskAverse}
\end{algorithm}
Since Algorithm~\ref{RiskAverse} only guarantees to output an {\em $\epsilon_\mathrm{v}$-approximately feasible and $\epsilon_\mu$-approximately optimal} arm, 
in order to output the best feasible arm, the accuracy parameters have to be sufficiently small such that the only {\em $\epsilon_\mathrm{v}$-approximately feasible and $\epsilon_\mu$-approximately optimal} arm is the best feasible arm. See Appendix~\ref{sec:discussion} for details.
\begin{remark}
    We remark that Algorithm~\ref{VALUCB} can also be adapted to the BAI problem with an $\alpha$-quantile constraint by replacing the sample variance and its associated confidence bound by the sample $\alpha$-quantile and the corresponding concentration bound (see \cite[Lemma~6]{David2018}). The modified Algorithm~\ref{VALUCB} is completely parameter-free, whereas RiskAverse-UCB-m-best\cite{David2018} is not. The sample complexity of the modified Algorithm~\ref{VALUCB} can be derived in a similar procedure as in this paper.
\end{remark}

\section{Discussion of the bounds in  David {\em et al.}~\cite{David2018}} \label{sec:discussion}
For RiskAverse-UCB-BAI to identify the best feasible arm (without any suboptimality or subfeasibility)  under the $\delta$-PAC framework, it needs to ensure that parameters $\epsilon_{\mathrm{v}}$ and $\epsilon_\mu$ are set sufficiently small so that  the $\epsilon_\mathrm{v}$-approximately feasible and $\epsilon_\mu$-approximately optimal arm is exactly the best feasible arm. A sufficient condition is $\epsilon_\mu<\Delta_{i^\star}$ and $\epsilon_\mathrm{v}<\min_{i\in\mathcal{R}\setminus\{i^\star\}}\Delta_i^\mathrm{v}$. Without the former/latter condition, a suboptimal/risky arm maybe produced by the RiskAverse-UCB-BAI. 
However, even we relax the accuracy parameters by allowing them to assume  equality, i.e., $\epsilon_\mu=\Delta_{i^\star}$ and $\epsilon_\mathrm{v}=\min_{i\in\mathcal{R}}\Delta_i^\mathrm{v}$, as well as that $H$ is given, we can still assert that   VA-LUCB is superior in terms of the sample complexity; this is what we do in Section~\ref{sec:discussion_upper}.  In addition, the confidence radii of the mean and variance \eqref{equ:David_radius} contain $H$, the hardness parameter that depends on the instance which is not known in practice, further underscoring that RiskAverse-UCB-BAI is not parameter free. 
In the following discussion, we recall that  random variables bounded in $[0,1]$ are $1/2$-subgaussian.
\subsection{Discussion of the Upper Bounds}\label{sec:discussion_upper}
The upper bound of the sample complexity of a variant of RiskAverse-UCB-m-best presented in \cite[Theorem~3]{David2018}, which we call RiskAverse-UCB-BAI, and analyze using techniques along the same lines   is
\begin{align}\label{equ:David_upperbd}
	\sum_{i\in[N]}C_i\ln\left(\frac{6NH}{\delta}\right)
\end{align}
where 
\begin{align}
	\!\! H:=3N\left(\frac{1}{2\epsilon_\mu^2}+\frac{4}{\epsilon_\mathrm{v}^2}\right)\ln\left(\frac{6N}{\delta}N\left(\frac{1}{2\epsilon_\mu^2}+\frac{4}{\epsilon_\mathrm{v}^2}\right)\right)\label{equ:David_hardness1}
\end{align} 
and
\begin{align}
	\!\! C_i :=&\min\left\{ \frac{1}{\max\left\{0,\mu_{i^\star}-\mu_i\right\}^2},\frac{4}{\max\left\{0,\sigma_i^2-\bar{\sigma}^2\right\}^2},\right.
	\\*
	& \;\left.\max\bigg\{ \frac{1}{\epsilon_\mu^2}, \frac{4}{\max\left\{0,\epsilon_\mathrm{v}-(\sigma_i^2-\bar{\sigma}^2)\right\}^2}\bigg\}
	\right\}\label{equ:David_hardness2}
\end{align} 
for all $i\in[N]$.
For the sake of brevity, define $H^\prime:=3N\left(\frac{1}{2\epsilon_\mu^2}+\frac{4}{\epsilon_\mathrm{v}^2}\right)$. Then the upper bound in~\eqref{equ:David_upperbd} can be rewritten as $\sum_{i\in[N]}C_i\ln\Big(\frac{6NH^\prime\ln(\frac{2NH^\prime}{\delta})}{\delta}\Big)$. 
Given the similar roles of $H_\mathrm{VA}$ and $H^\prime$ in the upper bounds, we  can also regard $H^\prime$ as another hardness parameter in \cite{David2018} (in addition to $\{C_i\}_{i\in [N]}$). 
Since both $\sum_{i\in[N]}C_i$ and $H$ appear in the upper bound \eqref{equ:David_upperbd}, we carefully compare both of them to  $H_\mathrm{VA}$.
We firstly compare the terms in $H_\mathrm{VA}$ with $C_i$:
\begin{itemize}
	\item For arm $i^\star$, 
	$$C_{i^\star}= \max\left\{\frac{1}{\epsilon_\mu^2}, \frac{4}{(\epsilon_\mathrm{v}-(\sigma_{i^\star}^2-\bar{\sigma}^2))^2}\right\}
	\geq \frac{1}{\min\left\{\Delta_{i^\star},\Delta_{i^\star}^\mathrm{v}\right\}^2 }
	$$
	where equality holds if $\Delta_{i^\star}^\mathrm{v}=\epsilon_\mathrm{v}$.
	\item For any feasible and suboptimal arm $i$, $C_i=\frac{1}{\Delta_i^2}.$
	\item For any risky arm $i\neq i^\star$, $C_i=\frac{4}{(\Delta_i^\mathrm{v})^2}.$
	\item For any infeasible and suboptimal arm $i$, 
	$$C_i=\min\left\{ \frac{1}{\Delta_i^2},\frac{4}{(\Delta_i^\mathrm{v})^2}\right\}=
	\frac{1}{\max\left\{\Delta_i,\Delta_i^\mathrm{v}/2\right\}^2}.
	$$
\end{itemize}
This trivially leads to $ \sum_{i\in[N]}C_i\ge H_\mathrm{VA}/4$. 
In terms of $H$,
note that $\epsilon_\mu=\Delta_{i^\star},\epsilon_\mathrm{v}=\min_{i\in\mathcal{R}}\Delta_i^\mathrm{v}$, so $H^\prime> 3H_\mathrm{VA}/8$ trivially holds. However, in a practical instance where the means and variances of the arms are diverse, $H^\prime> H_\mathrm{VA}$, e.g., when $\Delta_{i^\star}\ge\epsilon_\mathrm{v}/2$ (this can be interpreted as  the scenario in which  identifying the risky arms is more difficult than ascertaining the optimality of the best feasible arm) or there are at most $\lceil N/6\rceil $ suboptimal arms with $\Delta_i\leq 2\Delta_{i^\star}$,
$H^\prime> H_\mathrm{VA}$ holds.
Therefore, the upper bound in \cite{David2018} is 
\begin{align}\label{equ:David_upperbd1}
	&\underbrace{\sum_{i\in[N]}C_i\ln\bigg(\frac{6NH^\prime\ln(\frac{2NH^\prime}{\delta})}{\delta}\bigg)}_{ \text{Upper bound (UB) in \cite{David2018}} } \\
	&=\underbrace{\Omega\bigg(H_\mathrm{VA}\ln\bigg(\frac{NH_\mathrm{VA} \ln ( \frac{NH_{\mathrm{VA}}}{\delta})}{\delta}\bigg)\Bigg)}_{\text{Order-wise result of UB in \cite{David2018}}} \\
	&= \omega\Bigg( \underbrace{H_\mathrm{VA}\ln\left(\frac{H_\mathrm{VA}}{\delta}\right)}_{\text{Our upper bound up to constants}}\Bigg).
\end{align}
Even disregarding constants and the fact that RiskAverse-UCB-BAI is not parameter free if we demand that the (strictly) best feasible arm is output by the algorithm, we note  the  presence of the additional $\ln$ term in $N$ and the $\ln\ln$ term in $NH_{\mathrm{VA}}/\delta$ in the order-wise result of the upper bound in \cite{David2018}. 
We conclude that the upper bound of the sample complexity of  RiskAverse-UCB-BAI in  \eqref{equ:David_upperbd} \cite{David2018} is strictly larger in order than ours.

\subsection{Discussion of the Lower Bounds}\label{sec:discussion_lower}
While the lower bound of \cite[Theorem~2]{David2018} holds under a set of assumptions, we assume that these assumptions are generally not needed and the only assumption made here is that $\delta\in(0,0.01)$. 
The lower bound in \cite{David2018} is
\begin{align}\label{equ:David_lowerbd}
	\min _{i^{\prime} \in [N]} \sum_{i \in[N] \setminus \left\{i^{\prime}\right\}} 
	&\frac{\ln \left(\frac{1}{9 \delta}\right)}{900}\max \left\{\left(16 \Delta_i^\mathrm{v}\right)^{2},\right.\\
	&\left.\left(5 \Delta_{i^\star}+4 \max\{0,\mu_{i^\star}-\mu_i\}\right)^{2}\right\}^{-1}-N.
\end{align}
We also compare the denominator term-by-term:
\begin{itemize}
	\item For arm $i^\star$, 
	\begin{align}
		&\max \left\{\left(5 \Delta_{i^\star}+4 \max\{0,\mu_{i^\star}-\mu_{i^\star}\}\right)^{2},\left(16 \Delta_{i^\star}^\mathrm{v}\right)^{2}\right\}
		\\
		&=
		\max \left\{5 \Delta_{i^\star},16 \Delta_{i^\star}^\mathrm{v}\right\}^2\ge
		\min \left\{5 \Delta_{i^\star},16 \Delta_{i^\star}^\mathrm{v}\right\}^2
	\end{align}
	where   equality holds if and only if $5 \Delta_{i^\star}=16 \Delta_{i^\star}^\mathrm{v}$.
	\item For any feasible and suboptimal arm $i$, 
	\begin{align}
		&\max \left\{\left(5 \Delta_{i^\star}+4 \max\{0,\mu_{i^\star}-\mu_{i}\}\right)^{2},\left(16 \Delta_{i}^\mathrm{v}\right)^{2}\right\}
		\\
		&= 
		\max \left\{5 \Delta_{i^\star}+4\Delta_i,16 \Delta_{i}^\mathrm{v}\right\}^2>
		16\Delta_i^2.
	\end{align}
	\item For  any risky arm $i\neq i^\star$, 
	\begin{align}
		&\max \left\{\left(5 \Delta_{i^\star}+4 \max\{0,\mu_{i^\star}-\mu_{i}\}\right)^{2},\left(16 \Delta_{i}^\mathrm{v}\right)^{2}\right\}
		\\
		&= 
		\max \left\{5 \Delta_{i^\star},16 \Delta_{i}^\mathrm{v}\right\}^2\ge
		\left(16 \Delta_{i}^\mathrm{v}\right)^2.
	\end{align}
	\item For any infeasible and suboptimal arm $i$, 
	\begin{align}
		&\max \left\{\left(5 \Delta_{i^\star}+4 \max\{0,\mu_{i^\star}-\mu_{i}\}\right)^{2},\left(16 \Delta_{i}^\mathrm{v}\right)^{2}\right\}
		\\
		&= 
		\max \left\{5 \Delta_{i^\star},16 \Delta_{i}^\mathrm{v}\right\}^2.
		\end{align}
\end{itemize}
Therefore, the lower bound \eqref{equ:David_lowerbd} is strictly smaller than
\begin{align} 
	&\left(\frac{1}{\min\left\{5 \Delta_{i^\star},16 \Delta_{i^\star}^\mathrm{v}\right\}^2}
	+   \sum_{i\in\mathcal{F}\cap\mathcal{S}}\frac{1}{16\Delta_i^2}
	+   \sum_{i\in\bar{\mathcal{F}}^c\cap\mathcal{R}}\frac{1}{(16{\Delta_i^\mathrm{v}})^2}\right.
	\\
	&\quad\left.
	+   \sum_{i\in\bar{\mathcal{F}}^c\cap\mathcal{S}}\frac{1}{\max \left\{5 \Delta_{i^\star},16 \Delta_{i}^\mathrm{v}\right\}^2}
	\right)
\frac{\ln \left( \frac{1}{9\delta} \right)}{900}.
\end{align}
This is strictly smaller than our lower bound in \eqref{eqn:lb} (see Theorem \ref{thm:low_bd}) when $\bar{\sigma}^2\ge 9\cdot 10^{-5}$ and $\mu_{i^\star}\le1-9\cdot 10^{-5}$, which is a large subset  of practical instances (recalling that the rewards are bounded in $[0,1]$). In fact,  the space of $(\mu,\bar{\sigma}^2)$ for which our lower bound is strictly better than that in \cite{David2018} is $> 1-7.2\times 10^{-4}>99.9\%$ times of the total area of the  permissible parameter space of $(\mu,\bar{\sigma}^2) \in [0,1]\times[0,1/4)$.

Considering both the upper and lower bounds, as well as Corollary \ref{cor:almost_opt}, even though we have relaxed several assumptions in~\cite{David2018}, the bounds in \cite{David2018} (performance upper bound on the sample complexity of RiskAverse-UCB-BAI and lower bound) are  looser than ours.
Furthermore, the term of arm $i$ in the lower bound~\eqref{equ:David_lowerbd} does not match the corresponding term in the upper bound~\eqref{equ:David_upperbd} or $H^\prime$, showing that the terms $\sum_{i\in[N]} C_i$ and $H^\prime$ do not   characterize the inherent difficulty of identifying the best feasible arm. In contrast, we have shown that the optimal sample complexity of identifying the best feasible arm is  characterized exactly by~$H_{\mathrm{VA}}$.

\subsection{Discussion of the Complexity of Identifying  Risky arms}\label{sec:discussion_riskyarms}
Since we are considering risk-constrained bandits, an important task is to identify the risky arms as quickly as possible.
Based on the discussion of the   accuracy parameters in the previous sections, we investigate the convergence speed of the confidence radius of the variance, which is essential in eliminating the risky arms. The confidence radius of risky arm $i\neq i^\star$ at round $t$ in \cite{David2018} is
\begin{align}\label{equ:David_cr}
	f_\mathrm{v}(T_i^\prime(t))=\sqrt{\frac{2}{T_i^\prime(t)}\ln\frac{6HN}{\delta}}
\end{align}
where $T_i^\prime(t)$ is defined in \eqref{T_prime}.
We claim that \eqref{equ:David_cr} is strictly greater than $\beta$ in~\eqref{radius} in  a generic case  as follows:\footnote{Due to the difference in algorithms (VA-LUCB vs.\  RiskAverse-UCB-BAI)  and the definitions of $T_i(t)$ (in this paper and in \cite{David2018}), one  should use $T_i^\prime(2t-2)$ in $f_\mathrm{v}$  in~\eqref{equ:radius_compare} to be consistent with  Algorithm~\ref{RiskAverse}. However, in order to be fair when comparing the confidence radii, we  assume there are two identical risky arms for the two algorithms.  Thus both algorithm will pull the two arms approximately the same number of times. Hence, the denominator in the definition of  $f_\mathrm{v}$ is roughly $T_i(t)$.}
\begin{align}\label{equ:radius_compare}
	 f_\mathrm{v}(T_i(t))&=\sqrt{\frac{2}{T_i(t)}\ln\frac{6HN}{\delta}}\\
	&\geq \beta(t,T_i(t))=\sqrt{\frac{1}{2 T_i(t)} \ln \left(\frac{2 N t^{4}}{\delta}\right)}
\end{align} 
This is equivalent to 
\begin{align}
	&4\ln\frac{6HN}{\delta}\geq \ln \frac{2 N t^{4}}{\delta}\quad
	\Longleftrightarrow \quad
	\left(\frac{6HN}{\delta}\right)^{4}\geq \frac{2 N t^{4}}{\delta}\\
	&\Longleftrightarrow\quad
	\frac{6}{ 2^{1/4}} \left(\frac{N}{\delta}\right)^{3/4}H^\prime \ln\frac{2NH^\prime}{\delta}\geq 152H_\mathrm{VA}\ln\frac{H_\mathrm{VA}}{\delta},
\end{align}
if $N\ge10$ and $\delta\le0.05$, $\frac{6}{ 2^{1/4}} \left(\frac{N}{\delta}\right)^{3/4}>152$.
According to the comparison between $H^\prime$ and $H_\mathrm{VA}$ in Section~\ref{sec:discussion_upper}, $H^\prime>H_\mathrm{VA}$ in a general instance, thus the last inequality holds even we ignore the logarithmic term in $N$. In particular, the greater $N$ is and the more risky arms, the larger $f_\mathrm{v}(T_i(t))$ is. 
Hence,  the convergence speed of the confidence radius in $\beta$ will be faster than that of \eqref{equ:David_cr}, resulting fewer arm pulls of the risky arms of VA-LUCB. This indicates VA-LUCB is more efficient  in the risk-constrained setup. This is also corroborated by our experiments in Section~\ref{sec:uniform}.

From the experimental/practical point of view, the constant $152$ on the right-hand side of \eqref{equ:radius_compare} can essentially be replaced by~$3$ as our experiments indicate; see Section~\ref{sec:perf_VA}. Furthermore, both algorithms will identify the risky arms first, thus $t$ in $\beta$ is small at the beginning. We refer to Section~\ref{sec:uniform} for further experimental validations.

\section{Proof of Upper Bound}\label{upperbound}

\begin{lem}[Implication of Hoeffding's and McDiarmid's Inequalities]\label{equ:concentration_inequ}
	Given an instance $ (\nu,\bar{\sigma}^2) $, for any arm $ i $ with $T_i(t)\geq2 $ and $\varepsilon>0 $ we have
	\begin{align}\label{hoe}
		\mathbb{P}[\hat{\mu}_i(t)-\mu_i\geq\varepsilon]&\leq \exp(-2T_i(t)\varepsilon^2),\\
		\mathbb{P}[\hat{\mu}_i(t)-\mu_i\leq-\varepsilon]&\leq \exp(-2T_i(t)\varepsilon^2),
	\end{align}
	and 
	\begin{align}\label{mc}
		\mathbb{P}[\hat{\sigma}_i^2(t)-\sigma_i^2\geq\varepsilon]&\leq \exp(-2T_i(t)\varepsilon^2),\\
		\mathbb{P}[\hat{\sigma}_i^2(t)-\sigma_i^2\leq-\varepsilon]&\leq \exp(-2T_i(t)\varepsilon^2).
	\end{align}
\end{lem}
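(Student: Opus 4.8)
The plan is to handle the two displays separately and, in each case, reduce to a concentration statement for a \emph{fixed} number $m:=T_i(t)\ge 2$ of i.i.d.\ draws from $\nu_i$. Because the rewards of arm $i$ form an i.i.d.\ stream $X_{1,i},X_{2,i},\ldots$ supported on $[0,1]$, I would condition on the (history-measurable) event $\{T_i(t)=m\}$ and replace $\hat\mu_i(t)$ and $\hat\sigma_i^2(t)$ by the empirical mean and unbiased empirical variance of the first $m$ samples of that stream. For the mean bounds \eqref{hoe} this immediately reduces to Hoeffding's inequality: the sample mean is the average of $m$ independent $[0,1]$-valued random variables, so $\mathbb{P}[\hat\mu_i-\mu_i\ge\varepsilon]\le\exp(-2m\varepsilon^2)$, and the lower-tail bound follows by symmetry. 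This is the routine half.

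For the variance bounds \eqref{mc}, the idea is to apply McDiarmid's bounded-differences inequality to the map $g(x_1,\ldots,x_m):=\frac{1}{m-1}\sum_{j=1}^m\big(x_j-\bar x\big)^2$ on the cube $[0,1]^m$, where $\bar x=\frac1m\sum_j x_j$. Since the $X_{j,i}$ are i.i.d., unbiasedness gives $\mathbb{E}\,g(X_{1,i},\ldots,X_{m,i})=\sigma_i^2$ exactly, so no bias correction is needed and it only remains to control the bounded-differences constants of $g$. Once these are shown to be at most $1/m$, one has $\sum_{j=1}^m c_j^2\le 1/m$, and McDiarmid yields $\mathbb{P}[\hat\sigma_i^2-\sigma_i^2\ge\varepsilon]\le\exp\!\big(-2\varepsilon^2/(1/m)\big)=\exp(-2m\varepsilon^2)$, with the lower tail obtained by applying the same inequality to $-g$.

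The heart of the argument, and the step I expect to be the main obstacle, is the bounded-differences estimate for the sample variance. A crude Lipschitz/triangle-inequality bound on the effect of perturbing one coordinate gives a constant of order $1/(m-1)$ with an unfavourable numerical factor, which would degrade the exponent and break the matching with the mean tail. To get the sharp constant $1/m$, I would fix all coordinates except $x_k$ and observe that $x_k\mapsto g$ is an upward parabola whose leading coefficient is exactly $\frac{1}{m-1}\big(1-\frac1m\big)=\frac1m$; an elementary analysis of the range of this parabola over $[0,1]$ — using that the sum of the remaining coordinates lies in $[0,m-1]$, so that the vertex falls inside $[0,1]$ and the endpoint–vertex gaps are each at most $\frac1m$ — shows the oscillation of $g$ in the $k$-th coordinate is at most $1/m$. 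A secondary, more cosmetic point is the bookkeeping around the random sample size $T_i(t)$: the statement is genuinely a per-$m$ bound, and the union over $m$ and over arms $i\in[N]$ that upgrades it to the uniform event $E$ is deferred to the proof of Lemma~\ref{Ehappens}.
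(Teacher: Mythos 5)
Your proposal is correct and follows essentially the same route as the paper: Hoeffding for the mean, and McDiarmid applied to the unbiased sample variance with bounded-differences constant $1/m$, yielding $\exp(-2\varepsilon^2/\sum_j c_j^2)=\exp(-2m\varepsilon^2)$; the reduction to a fixed sample size and the deferral of the union over $m$ and $i$ to Lemma~\ref{Ehappens} also match the paper's treatment. The only difference is how the constant $1/m$ is verified: the paper rewrites the sample variance as the U-statistic $\frac{1}{m(m-1)}\sum_{j<k}(X_j-X_k)^2$, from which perturbing one coordinate visibly changes at most $m-1$ terms of the sum by at most $\frac{1}{m(m-1)}$ each, whereas you analyze the one-coordinate section directly as an upward parabola with leading coefficient $\frac{1}{m}$ and vertex in $[0,1]$. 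Both verifications are valid and give the same sharp constant; the U-statistic identity is arguably the quicker of the two, but your parabola computation (leading coefficient $\frac{1}{m-1}(1-\frac{1}{m})=\frac{1}{m}$, vertex at $\frac{S}{m-1}\in[0,1]$, oscillation $\frac{1}{m}\max\{(u^*)^2,(1-u^*)^2\}\le\frac{1}{m}$) is airtight.
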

\begin{proof}
	Note that since the reward distribution is bounded in $ [0,1] $, \eqref{hoe} can be derived by a straightforward application of Hoeffding's inequality.
	As for the sample variance, note  that for i.i.d.\ random variables $ X_1,\ldots,X_n $, supported on $ [0,1] $ with sample mean $ \hat{\mu} $, the unbiased sample variance can be written as
	\begin{align}\label{sv1}
		\hat{\sigma}^2(n)&=\frac{1}{n-1}\sum_{i=1}^n(X_i-\hat{\mu})^2\\
		&=\frac{1}{n(n-1)}\sum_{i<j}(X_i-X_j)^2
	=:f(X_1,X_2,\ldots,X_n)
	\end{align}
	Note  by the unbiasedness of the sample variance that $ \mathbb{E}[\hat{\sigma}^2(n)]=\sigma^2 =\textrm{Var}( X_i )$ and 
	\begin{align}
		&\big|f(x_1,\ldots,x_{i-1},x_i,x_{i+1},\ldots,x_n)\\
		&\;-f(x_1,\ldots,x_{i-1},x_i^\prime,x_{i+1},\ldots,x_n)\big|\leq\frac{1}{n}
	\end{align}
	for any $x_1,\ldots, x_{i-1}, x_i,x_i^\prime, x_{i+1},\ldots, x_n\in[0,1] $. Applying McDiarmid's inequality \cite{mcdiarmid1989} to $f$, we get 
	\begin{align}\label{mc1}
		&\mathbb{P}[\hat{\sigma}^2(n)-\sigma^2\geq\varepsilon]\leq \exp(-2n\varepsilon^2),\\
		&\mathbb{P}[\hat{\sigma}^2(n)-\sigma^2\leq-\varepsilon]\leq \exp(-2n\varepsilon^2).
	\end{align}
	Apply \eqref{mc1} to the arms, we obtain \eqref{mc}.
\end{proof}

\begin{lem}\label{ctinpt}
	Conditioned on $ E $, if $ |\bar{\mathcal{F}}_t|>1 $ and Algorithm \ref{VALUCB} has not terminated, then $ c_{t}\in\mathcal{P}_t\cup\{i_t^\star\} $.
\end{lem}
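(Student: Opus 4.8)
The plan is to prove this by a short case analysis driven purely by the definitions in \eqref{ni}, \eqref{itstar}, \eqref{p}, \eqref{it} and \eqref{ct}; I expect the event $E$ to play no essential role, so the argument is essentially combinatorial. First I would dispose of the case $\mathcal{F}_t=\emptyset$: then $\mathcal{P}_t=[N]$ by \eqref{p}, so $c_t\in\mathcal{P}_t$ and there is nothing to prove. Hence assume $\mathcal{F}_t\neq\emptyset$, so that $i_t^\star$ is well-defined and, since $\mathcal{F}_t\subseteq\bar{\mathcal{F}}_t$, we have $i_t^\star\in\bar{\mathcal{F}}_t$. If $c_t=i_t^\star$ we are immediately done, so assume $c_t\neq i_t^\star$; in view of \eqref{p} it then suffices to establish the single inequality $U_{c_t}^\mu(t)\ge L_{i_t^\star}^\mu(t)$.

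To get this inequality I would split on whether $i_t=i_t^\star$. If $i_t\neq i_t^\star$, then $i_t^\star$ is one of the arms over which the maximum in \eqref{ct} is taken (it lies in $\bar{\mathcal{F}}_t$ and is distinct from $i_t$), so $U_{c_t}^\mu(t)\ge U_{i_t^\star}^\mu(t)\ge L_{i_t^\star}^\mu(t)$, the last step using only that the confidence radius $\alpha(t,T_{i_t^\star}(t))$ in \eqref{radius} is nonnegative. If $i_t=i_t^\star$, I would invoke the non-termination hypothesis: since the algorithm has not stopped, $\bar{\mathcal{F}}_t\cap\mathcal{P}_t\neq\emptyset$, so I may fix some $j$ in this set. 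Then $j\neq i_t^\star=i_t$, so $j$ is again an admissible index in the maximum defining $c_t$, whence $U_{c_t}^\mu(t)\ge U_j^\mu(t)\ge L_{i_t^\star}^\mu(t)$, where the final inequality is exactly the membership $j\in\mathcal{P}_t$. This covers all cases.

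The hypotheses enter only in two harmless places: $|\bar{\mathcal{F}}_t|>1$ guarantees that the maximization \eqref{ct} runs over a nonempty index set, so $c_t$ is well-defined; and non-termination is what supplies the witness $j\in\bar{\mathcal{F}}_t\cap\mathcal{P}_t$ needed when $i_t=i_t^\star$. This last hypothesis genuinely cannot be dropped: if instead $\bar{\mathcal{F}}_t\cap\mathcal{P}_t=\emptyset$, then every arm of $\bar{\mathcal{F}}_t$ other than $i_t^\star$ satisfies $U_i^\mu(t)<L_{i_t^\star}^\mu(t)$, and since $|\bar{\mathcal{F}}_t|>1$ forces $c_t\neq i_t^\star$ in that situation, we would get $c_t\notin\mathcal{P}_t\cup\{i_t^\star\}$. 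I do not foresee any substantive obstacle; the only thing to be careful about is keeping the three case splits — $\mathcal{F}_t$ empty or not, $c_t=i_t^\star$ or not, $i_t=i_t^\star$ or not — cleanly separated.
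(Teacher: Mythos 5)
Your proof is correct and follows essentially the same route as the paper's: the same case split ($\mathcal{F}_t=\emptyset$; then $c_t=i_t^\star$; then reduce to showing $U_{c_t}^\mu(t)\ge L_{i_t^\star}^\mu(t)$ via the subcases $i_t\ne i_t^\star$ and $i_t=i_t^\star$), with the paper arguing the final inequality by contradiction where you argue it directly by exhibiting a witness arm in the maximization defining $c_t$ ($i_t^\star$ itself, or a $j\in\bar{\mathcal{F}}_t\cap\mathcal{P}_t$ supplied by non-termination). Your side observation is also accurate: although the paper conditions on $E$, its chain of inequalities only uses positivity of the confidence radius and the definitions of $i_t$, $i_t^\star$, $c_t$, and $\mathcal{P}_t$, so $E$ plays no essential role in this particular lemma.
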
 
\begin{proof}
	There are two trivial scenarios. Firstly, when $\mathcal{F}_t=\emptyset$, $\mathcal{P}_t=[N]$. Secondly, $\mathcal{F}_t\neq\emptyset$ and $c_t= i_t^\star$. The result is straightforward.
	
	Consider the case where $\mathcal{F}_t\neq\emptyset$ and $c_t\neq i_t^\star$,
	when $ |\bar{\mathcal{F}}_t|>1$ and the algorithm has not terminated, we must have $ c_t\in\mathcal{P}_t $, i.e.,
	\begin{equation}\label{case2}
		U_{c_t}^\mu(t)\geq L_{i_t^\star}^\mu(t)
	\end{equation}
	Otherwise, conditioned on the event $ E $,
	\begin{align}\label{in2}
		\hat{\mu}_i(t)&< U_{i}^\mu(t)\leq U_{c_t}^\mu(t)< L_{i_t^\star}^\mu(t)\\
		&<\hat{\mu}_{i_t^\star} (t)\leq\hat{\mu}_{i_t} (t)<U_{i_t}^\mu(t),\quad\forall i\in\bar{\mathcal{F}}_t\setminus\{i_t\}.
	\end{align}
	If $ i_t\neq i_t^\star $, we have $ U_{c_t}^\mu(t)< L_{i_t^\star}^\mu(t)<\hat{\mu}_{i_t^\star} (t)\leq U_{i_t^\star}^\mu(t) $ which contradicts the definition of $ c_t $. 
	Thus $ i_t=i_t^\star $ must hold. In this case, $ U_{i}^\mu(t)\leq U_{c_t}^\mu(t)< L_{i_t^\star}^\mu(t)$ for all  $i\in\bar{\mathcal{F}}_t\backslash\{i_t^\star\}$, i.e., $ \bar{\mathcal{F}}_t\cap\mathcal{P}_t=\emptyset $. This contradicts the assumption that the algorithm does not terminate. Therefore, $ c_t\in\mathcal{P}_t $.
\end{proof}

\begin{lem}\label{integral}
	The function $ h:\mathbb{R}\rightarrow\mathbb{R} $ defined by $$ h(x)=\exp\left(-2(\Delta_i^{\mathrm{v}})^2 \big(\sqrt{x}-\sqrt{v_i(t)} \big)^2\right) $$ is convex and decreasing on $ (4v_i(t),\infty) $ for all $ i\in[N] $. Furthermore, $$ \int_{4v_i(t)}^\infty h(x) \,\mathrm{d}x\leq \frac{\delta}{2(\Delta_{i}^{\mathrm{v}})^2 Nt^4}.$$
\end{lem}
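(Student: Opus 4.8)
The plan is to establish the three assertions (monotonicity, convexity, the integral bound) separately, all by elementary calculus. Throughout I will write $a := (\Delta_i^{\mathrm{v}})^2$ and $v := v_i(t)$ and use the key consequence of the definition of $v_i(t)$, namely $v \ge \frac{1}{2a}\ln\!\big(\frac{2Nt^4}{\delta}\big)$, equivalently $2av \ge \ln\!\big(\frac{2Nt^4}{\delta}\big)$. Since $N\ge 1$, $t\ge 1$ and $\delta\in(0,1)$ force $\frac{2Nt^4}{\delta} > 2$, this already gives $av > \tfrac12\ln 2 > \tfrac18$, a crude numeric bound I will lean on for convexity. (If $\Delta_i^{\mathrm{v}}=0$ then $v=\infty$, the interval $(4v,\infty)$ is empty and the integral vanishes, so the statement is vacuous; assume $\Delta_i^{\mathrm{v}}>0$ henceforth.)

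First I would dispose of monotonicity. Writing $h=e^{g}$ with $g(x)=-2a(\sqrt{x}-\sqrt{v})^2$, we have $h'(x)=g'(x)h(x)$ and $h(x)>0$, while $g'(x)=-2a\big(1-\sqrt{v/x}\big)<0$ for every $x>v$; hence $h$ is strictly decreasing on $(4v,\infty)\subset(v,\infty)$. Next, for convexity, since $h>0$ it suffices to show $h''\ge 0$, and from $h''=\big(g''+(g')^2\big)h$ this reduces to checking $g''+(g')^2\ge 0$ on $(4v,\infty)$. A short computation gives $g'(x)=-2a+2a\sqrt{v}\,x^{-1/2}$ and $g''(x)=-a\sqrt{v}\,x^{-3/2}$. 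The clean way to conclude is to substitute $u:=\sqrt{v/x}$, which ranges over $(0,\tfrac12)$ exactly when $x>4v$: then $(g')^2 = 4a^2(1-u)^2$ and $g''=-au^3/v$, so the desired inequality becomes $4av(1-u)^2\ge u^3$. On $u\in(0,\tfrac12)$ we have $(1-u)^2\ge\tfrac14$ and $u^3<\tfrac18$, so it is enough that $av\ge\tfrac18$, which was noted above.

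Finally, for the integral bound I would substitute $y=\sqrt{x}$ and then $z=y-\sqrt{v}$, which turns the integral into
\begin{equation*}
\int_{4v}^{\infty} h(x)\,\mathrm{d}x = \int_{\sqrt{v}}^{\infty} 2(z+\sqrt{v})\,e^{-2az^2}\,\mathrm{d}z = 2\int_{\sqrt{v}}^{\infty} z e^{-2az^2}\,\mathrm{d}z + 2\sqrt{v}\int_{\sqrt{v}}^{\infty} e^{-2az^2}\,\mathrm{d}z .
\end{equation*}
The first term equals $\frac{1}{2a}e^{-2av}$ exactly (antiderivative $-\frac{1}{4a}e^{-2az^2}$); for the second, bound $1\le z/\sqrt{v}$ on the range of integration to get $\int_{\sqrt{v}}^{\infty} e^{-2az^2}\,\mathrm{d}z \le \frac{1}{\sqrt{v}}\int_{\sqrt{v}}^{\infty} z e^{-2az^2}\,\mathrm{d}z = \frac{1}{4a\sqrt{v}}e^{-2av}$, so the whole expression is at most $\frac{1}{a}e^{-2av}$. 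Using $2av\ge\ln\!\big(\frac{2Nt^4}{\delta}\big)$ gives $e^{-2av}\le\frac{\delta}{2Nt^4}$, hence $\int_{4v}^{\infty} h(x)\,\mathrm{d}x \le \frac{\delta}{2aNt^4} = \frac{\delta}{2(\Delta_i^{\mathrm{v}})^2 Nt^4}$, as claimed.

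I do not anticipate a genuine obstacle: everything is routine once the substitutions $u=\sqrt{v/x}$ (for convexity) and $y=\sqrt{x}$, $z=y-\sqrt{v}$ (for the integral) are in place. The only points needing a little care are (i) justifying the crude inequality $av>\tfrac18$ that drives the convexity step — which holds with huge room because $v_i(t)$ is logarithmically large — and keeping track of the ceiling in the definition of $v_i(t)$ (always used in the direction $v_i(t)\ge\frac{1}{2a}\ln(\cdot)$), and (ii) noting the degenerate case $\Delta_i^{\mathrm{v}}=0$, where the statement is vacuously true.
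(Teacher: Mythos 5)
Your proof is correct and follows essentially the same route as the paper's: the same substitutions $y=\sqrt{x}$, $z=y-\sqrt{v_i(t)}$ and the same bound $z\ge\sqrt{v_i(t)}$ for the integral, and a convexity check that reduces to the same threshold $(\Delta_i^{\mathrm{v}})^2 v_i(t)\ge 1/8$ (the paper's $8ab^2\ge 1$), verified from the definition of $v_i(t)$. The only cosmetic difference is that you work with $h=e^g$ and the condition $g''+(g')^2\ge 0$ rather than computing $h''$ directly, and you explicitly note the degenerate case $\Delta_i^{\mathrm{v}}=0$.
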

\begin{proof}
	For simplicity, fix any $ i\in[N] $ and let $ a=(\Delta_i^{\mathrm{v}})^2 ,b=\sqrt{v_i(t)}$, then $ h(x)=\exp(-2a(\sqrt{x}-b)^2)  $.
	By simple algebra, when $x>4b^2$
	\begin{align}
		h^\prime(x)&=\frac{-2a(\sqrt{x}-b)}{\sqrt{x}}\exp(-2a(\sqrt{x}-b)^2)<0,\\
		h^{\prime\prime}(x)&=\frac{4a^2\sqrt{x}(\sqrt{x}-b)^2-ab}{x^\frac{3}{2}}\exp(-2a(\sqrt{x}-b)^2).
	\end{align}
	and
	\begin{align} 
		&4a^2\sqrt{x}(\sqrt{x}-b)^2-ab 
		 \geq8a^2b^3-ab
		\\
		&\qquad=ab(8ab^2-1)=ab(8(\Delta_i^{\mathrm{v}})^2 v_i(t)-1)
		\\
		&\qquad\geq ab \left(4\ln  \left(\frac{2N t^{4}}{\delta}\right)-1 \right)>0
	\end{align}
	Thus $h^{\prime\prime}(x) >0 $ on $ (4v_i(t),\infty)  $.
	The integral can be estimated as
	\begin{align*}
		&\int_{4v_i(t)}^\infty h(x)\,\mathrm{d}x\\
		&= \int_{4b^2}^\infty \exp(-2a(\sqrt{x}-b)^2)\,\mathrm{d}x\\
		&=\int_{b}^\infty 2y\exp(-2ay^2)\,\mathrm{d}y+\int_{b}^\infty 2b\exp(-2ay^2)\,\mathrm{d}y \\ &=\frac{1}{2a}\exp(-2ab^2)+2b\int_{b^2}^\infty\frac{1}{2\sqrt{z}}\exp(-2az)\,\mathrm{d}z\\
		&\leq \frac{1}{2a}\exp(-2ab^2)+\int_{b^2}^\infty\exp(-2az)\,\mathrm{d}z\\
		&= \frac{1}{a}\exp(-2ab^2) 
		\leq\frac{\delta}{2(\Delta_{i}^{\mathrm{v}})^2 Nt^4},
	\end{align*}
	as desired.
\end{proof}

\begin{proof}[Proof of Lemma~\ref{Ehappens}]
	Note that  our choice of $ \alpha $ and  $ \beta $ in \eqref{radius} satisfies
	\begin{equation}\label{req}
		\sum_{t=2}^{\infty} \sum_{T=1}^{t-1} \exp \left(-2 T \theta(t,T)^{2}\right) \leq C_\theta\frac{\delta}{N}, \quad \theta=\alpha,\beta
	\end{equation}
	where $ C_\alpha=C_\beta={1}/{8} $.
	Lemma \ref{equ:concentration_inequ}, the definition of $ E $ in \eqref{E}, and the properties of $ \alpha, \beta $ in \eqref{req} imply a lower bound on the probability of $ E $  
	$$ \mathbb{P}[E]\geq 1-2(C_\alpha+C_\beta )\delta=1-\frac{\delta}{2}. $$
	This completes the proof.
\end{proof}

\begin{proof}[Proof of Lemma~\ref{ioutisistar}]
	Conditioned on $ E(\tau) $,
	where $ \tau $ denotes the stopping time step, i.e., $ \bar{\mathcal{F}}_\tau\cap\mathcal{P}_\tau  =\emptyset$. 
	
	When the input instance is infeasible but $ i_\tau\in\mathcal{F}_\tau $ is returned, then $ U_{i_\tau}^{\mathrm{v}}(\tau) \leq \bar{\sigma}^2<\sigma_{i_\tau}^2$ must hold at time step $ \tau $, which contradicts the event $ E_{i_\tau}(\tau) $. Therefore, $ \hat{\mathsf{f}}=\mathsf{f}=0 $. 
	
	When the input instance is feasible, 
	\begin{itemize}
		\item[(i)] If the instance is deemed to be infeasible, there exists arm $ i\in\mathcal{F}$ such that $i\in\bar{\mathcal{F}}_\tau^c$, which violates $ E_i(\tau) $.
	\end{itemize}
	So $\bar{\mathcal{F}}_\tau\neq\emptyset$.
	To ease the proof of the rest cases, we prove $i_\tau=i_\tau^\star$. If $\mathcal{F}_\tau=\emptyset $ and $i_\tau\in\partial\mathcal{F}_\tau$, according to the definition of $\mathcal{P}_t$ in \eqref{p}, $\mathcal{P}_\tau=[N]$. Thus $\bar{\mathcal{F}}_\tau\cap\mathcal{P}_\tau=\bar{\mathcal{F}}_\tau\neq\emptyset$, which contradicts the stopping criterion. Therefore, $\mathcal{F}_\tau\neq\emptyset $ and $i_\tau^\star$ exists. By the definition of $ i_\tau $, 
	we have $U_{i_\tau}^\mu(\tau)>\hat{\mu}_{i_\tau}(\tau)\geq\hat{\mu}_{i_\tau^\star}(\tau)>L_{i_\tau^\star}^\mu(\tau)  $. This indicates $ i_\tau=i_\tau^\star $ or $ i_\tau\in\mathcal{P}_\tau $. If $ i_\tau \in\mathcal{P}_\tau $, we have $\bar{\mathcal{F}}_\tau\cap\mathcal{P}_\tau\supset\{i_\tau\}\neq\emptyset$, which contradicts the stopping criterion.
	Therefore, $ i_\tau = i_\tau^\star\in\mathcal{F}_\tau$.
	\begin{itemize}
		\item[(ii)] If the instance is evaluated as feasible but the returned arm $ i_\tau\in\mathcal{F}_\tau $ is a truly infeasible arm, then it must violate $ E_ {i_\tau}(\tau)$.
		\item[(iii)] If the instance is evaluated as feasible but the returned arm $ i_\tau$ is a truly feasible arm but not $ i^\star $. Conditioned on $ E_{i^\star}(\tau) $, the arm $ i^\star$ belongs to  $\bar{\mathcal{F}}_\tau $. Thus the stopping criterion yields $ L_{i_\tau}^\mu(\tau)>U_{i^\star}^\mu(\tau) $.
		Together with $ \mu_{i_\tau}<\mu_{i^\star} $, we have  $ L_{i_\tau}^\mu(\tau)>\mu_{i_\tau}$ or $ U_{i^\star}^\mu(\tau)<\mu_{i^\star} $. This violates either $ E_{i_\tau}(\tau) $ or $E_{i^\star}(\tau) $.
	\end{itemize}
	Hence, $ i_\mathrm{out}=i_\tau=i_\tau^\star=i^\star,\hat{\mathsf{f}}=\mathsf{f}=1 $. 
\end{proof}

\begin{proof}[Proof of Lemma~\ref{useful}]
	According to the termination condition,
	if the algorithm does not terminate, then $ \bar{\mathcal{F}}_t\cap\mathcal{P}_t\neq\emptyset$.  
	
	To commence our  discussion, we state an obvious case  before proceeding. Note that if  $i_t, c_t$ and $i^\star$ exist with $ i_t,c_t\in\mathcal{S}_t $, conditioned on $ E(t) $, we have $i^\star\in\bar{\mathcal{F}}_t$ and
	\begin{equation}\label{case}
		\bar{\mu}>\max\{U_{i_t}^\mu(t),U_{c_t}^\mu(t)\}\geq U_{i^\star}^\mu(t)>\mu_{i^\star}>\bar{\mu}
	\end{equation}
	which constitutes a contradiction. So we cannot have $ i_t $ and $ c_t $ belong to $\mathcal{S}_t$ at the same time.
	The following discussions will heavily depend on $ E (t)$, which guarantees $ \mathcal{F}_t\subset \mathcal{F} $ and $ \bar{\mathcal{F}}_t^c\subset \bar{\mathcal{F}}^c $.
	
	\noindent
	\underline{\textbf{Case One}}: Only one arm is sampled.
	\begin{enumerate}
		\item $ |\bar{\mathcal{F} }_t|=1$: in this case only arm $i_t=\mathop{\rm argmax}\{\hat{\mu}_i(t)):i\in \bar{\mathcal{F}}_t\} $ is sampled while the rest of the arms are in $ \bar{\mathcal{F}}_t^c \subset 
		\bar{\mathcal{F}}^c$.
		\begin{enumerate}
			\item If $ i_t\in\mathcal{F}_t$, by the definition of $i_t^\star$, $ i_t=i_t^\star $. Therefore we have $ i_t\notin\mathcal{P}_t $ and $\bar{\mathcal{F}}_t =\{i_t\} $, leading to $ \bar{\mathcal{F}}_t\cap\mathcal{P}_t =\emptyset$. This contradicts the assumption that the algorithm does not terminate.
			\item If $ i_t\in\partial\mathcal{F}_t$, since $ |\bar{\mathcal{F} }_t|=1$ and $ i_t^\star $ does not exist, there are two cases:
			\begin{enumerate}
				\item $ i_t\in\mathcal{F}$, i.e., $ i_t $ is a truly feasible arm and $ i_t=i^\star $. According to \eqref{Ei},
				$$\bar{\mu}<\mu_{i_t}< U_{i_t}^\mu(t)$$
				which indicates $ i_t\notin\mathcal{S}_t $. Thus $ i_t\in\partial\mathcal{F}_t\backslash\mathcal{S}_t $.
				\item $ i_t\in\bar{\mathcal{F}}^c$, i.e., $ i_t $ is a truly infeasible arm and the instance infeasible. By the definition of $ \bar{\mu} $, $ \bar{\mu}=-\infty $ which makes $ \mathcal{S}_t =\emptyset$. So  $ i_t\in\partial\mathcal{F}_t=\partial\mathcal{F}_t\backslash\mathcal{S}_t $.
			\end{enumerate}
		\end{enumerate}
		\item $ |\bar{\mathcal{F} }_t|>1$: in this case $ U_{c_t}^\mu < L_{i_t}^\mu  $ and only $ i_t $ is sampled.
		\begin{enumerate}
			\item When $ i_t\in\mathcal{F}_t$, we have $ \mathcal{F}_t\neq\emptyset $ and $ i_t^\star$ exists. We assert that $ i_t=i_t^\star $. Assume that $ i_t $ and $ i_t^\star $ are two different arms, note $ i_t=\mathop{\rm argmax}\{\hat{\mu}_i(t):i\in \bar{\mathcal{F}}_t\} $, so $ \hat{\mu}_{i_t}(t)\geq\hat{\mu}_{i_t^\star}(t) $. And $  i_t\in\mathcal{F}_t  $, $ i_t^\star=\mathop{\rm argmax}\{\hat{\mu}_i(t):i\in\mathcal{F}_t\} $, so $ \hat{\mu}_{i_t}(t)\leq\hat{\mu}_{i_t^\star}(t) $. We have 
			\begin{equation}\label{itisitstar}
				\hat{\mu}_{i_t}(t)=\hat{\mu}_{i_t^\star}(t)
			\end{equation}
			By the definition of $ c_t $, 
			\begin{align} 		
				&U_{c_t}^\mu(t)\geq U_{i_t^\star}^\mu(t) >\hat{\mu}_{i_t^\star}(t)= \hat{\mu}_{i_t}(t)>L_{i_t}^\mu(t)\\
				&\Rightarrow U_{c_t}^\mu(t)>L_{i_t}^\mu(t)
			\end{align}
			which contradicts $ U_{c_t}^\mu < L_{i_t}^\mu  $. Hence, we must have $ i_t=i_t^\star $. In this case, $ U_{c_t}^\mu < L_{i_t}^\mu  $  would indicate 
			$$  U_{i}^\mu\leq U_{c_t}^\mu<L_{i_t}^\mu(t)=L_{i_t^\star}^\mu(t),\quad \forall\, i\in\bar{\mathcal{F}}_t\backslash\{i_t\}$$
			Thus $ \mathcal{P}_t \cap \bar{\mathcal{F}}_t=\emptyset$. This contradicts the assumption that the algorithm does not terminate.
			\item When $ i_t\in\partial\mathcal{F}_t$, we assert that $ i_t\notin\mathcal{S}_t $. If $ i_t\in\mathcal{S}_t $, then 
			$$U_{c_t}^\mu<L_{i_t}^\mu <U_{i_t}^\mu < \bar{\mu} $$
			which indicates $ c_t\in\mathcal{S}_t $. This contradicts \eqref{case}. Thus $ i_t\in\partial\mathcal{F}_t\backslash\mathcal{S}_t $.
		\end{enumerate}
		We conclude that when only arm $ i_t $ is pulled, we have $ i_t\in\partial\mathcal{F}_t\backslash\mathcal{S}_t $.
	\end{enumerate}
	\underline{\textbf{Case Two}}: Both $ i_t $ and $ c_t $ are sampled, i.e., $ |\bar{\mathcal{F} }_t|>1$ and $ U_{c_t}^\mu \geq L_{i_t}^\mu  $.
	\begin{enumerate}
		\item If $ \mathcal{F}_t=\emptyset $, $ \partial\mathcal{F}_t$ cannot be empty, otherwise the algorithm terminates. According to \eqref{case}, at least one of $ i_t$ or $c_t$ locates in $\partial\mathcal{F}_t\backslash\mathcal{S}_t $. 
		\item If $ \mathcal{F}_t\neq\emptyset $, by Lemma \ref{ctinpt}, we have $ c_t\in\mathcal{P}_t \cup\{i_t^\star\}$.
		\begin{enumerate}
			\item $ i_t\in\partial\mathcal{F}_t $:  when $ i_t\notin\mathcal{S}_t $, thus $ i_t\in \partial\mathcal{F}_t \backslash\mathcal{S}_t$. When $ i_t\in\mathcal{S}_t$, according to \eqref{case}, $ c_t\notin\mathcal{S}_t $:
			\begin{enumerate}
				\item if $ c_t\in\partial\mathcal{F}_t $, we have $ c_t \in \partial\mathcal{F}_t\backslash\mathcal{S}_t$.
				\item if $ c_t  \in \mathcal{F}_t$, note that
				\begin{align}
					&U_{c_t}^\mu(t)\geq\bar{\mu}>U_{i_t}^\mu(t)>\hat{\mu}_{i_t}(t)\geq\hat{\mu}_{c_t}(t)>L_{c_t}^\mu(t)\\
					&\Rightarrow U_{c_t}^\mu(t)\geq\bar{\mu}>L_{c_t}^\mu(t)
				\end{align}
				So $ c_t\in\mathcal{N}_t $. This gives $ c_t \in  \mathcal{F}_t\cap \mathcal{N}_t$.
			\end{enumerate}
			\item $ i_t\in\mathcal{F}_t $: by the same reasoning as \eqref{itisitstar}, we obtain
			$ \hat{\mu}_{i_t}(t)=\hat{\mu}_{i_t^\star}(t) $.
			Firstly, if $ i_t $ and $ i_t^\star $ are two different arms, we have
			$ 		U_{c_t}^\mu(t)\geq U_{i_t^\star}^\mu(t) >\hat{\mu}_{i_t^\star}(t)= \hat{\mu}_{i_t}(t)>L_{i_t}^\mu(t)$.
			Secondly, if $ i_t $ and $ i_t^\star $ are the same arm,
			since $c_t\in\mathcal{P}_t\cup\{i_t^\star\}$, we have
			$U_{c_t}^\mu(t)>L_{i_t}^\mu(t)$.
			In either case,
			\begin{equation}\label{case3}
				U_{c_t}^\mu(t)>L_{i_t}^\mu(t)
			\end{equation}
			holds.
			Note $ i_t\in\mathcal{F} $, conditioned on $ E(t) $.
			\begin{enumerate}
				\item When $ c_t \in \mathcal{F}_t\subset\mathcal{F} $, if $ i_t,c_t\notin \mathcal{N}_t$, we have the following:
				\begin{itemize}
					\item $ i_t\in  \mathcal{R}_t,c_t\in \mathcal{R}_t$: since we assumed the optimal arm is unique, then at least one of the two arms has expectation smaller than $ \bar{\mu} $. Denote this arm by $ j $. We have 
					$ \bar{\mu}< L_j^\mu(t)< \mu_j< \bar{\mu}$,
					which is a contradiction.
					\item $  i_t\in  \mathcal{S}_t,c_t\in \mathcal{S}_t$: this contradicts \eqref{case}.
					\item $  i_t\in  \mathcal{R}_t,c_t\in \mathcal{S}_t$: we have
					$U_{c_t}^\mu(t)< \bar{\mu}<L_{i_t}^\mu(t)$.
					This contradicts \eqref{case3}.
					\item $  i_t\in  \mathcal{S}_t,c_t\in \mathcal{R}_t$: we have
					$ \hat{\mu}_{i_t}(t)< U_{i_t}^\mu(t)<\bar{\mu}< L_{c_t}^\mu(t)<\hat{\mu}_{c_t}(t)$.
					This contradicts the definition of~$ i_t $.
				\end{itemize}
				So at least one of $ i_t $ and $ c_t $ lies in $\mathcal{N}_t  $. This gives $ i_t \in  \mathcal{F}_t\cap \mathcal{N}_t$ or $ c_t \in  \mathcal{F}_t\cap \mathcal{N}_t $.
				\item   When $  c_t \in \partial\mathcal{F}_t$, if $ c_t\notin\mathcal{S}_t $, this gives $c_t \in \partial\mathcal{F}_t\backslash\mathcal{S}_t $. If $ c_t\in\mathcal{S}_t $, according to \eqref{case}, $ i_t\notin\mathcal{S}_t $.
				\begin{itemize}
					\item if $ i_t\in \mathcal{R}_t $, we have 
					$U_{c_t}^\mu(t)< \bar{\mu}<L_{i_t}^\mu(t)$.
					This contradicts \eqref{case3}.
					\item if $ i_t\in \mathcal{N}_t $, then $ i_t \in  \mathcal{F}_t\cap \mathcal{N}_t$ .
				\end{itemize}
			\end{enumerate}
		\end{enumerate}
	\end{enumerate}
	In conclusion, we have $$ i_t \in  (\partial\mathcal{F}_t\backslash\mathcal{S}_t)\cup(\mathcal{F}_t\cap \mathcal{N}_t)\, \quad\mbox{or}\,\quad c_t \in  (\partial\mathcal{F}_t\backslash\mathcal{S}_t)\cup(\mathcal{F}_t\cap \mathcal{N}_t)$$
	as desired.
\end{proof}

\begin{proof}[Proof of Lemma~\ref{sufcon}]
	The techniques used in the analysis of LUCB \cite{Kalyanakrishnan2012} are adapted in this proof.
	For a suboptimal arm $ i $, note that $ \frac{\Delta_i}{2}\leq\bar{\mu}-\mu_i \leq \Delta_i$.
	\begin{align*}\label{equ1}
		&\mathbb{P}[T_i(t)>16u_i(t),i\in \mathcal{N}_t]\\
		&\leq\mathbb{P}[T_i(t)>16u_i(t), i\notin \mathcal{S}_t]\\
		&=\mathbb{P}[T_i(t)>16u_i(t), U_i^\mu(t)\geq\bar{\mu}]\\
		&\leq\sum_{T=16u_i(t)+1}^\infty \mathbb{P}[T_i(t)=T,\hat{\mu}_i(t)-\mu_i\geq\bar{\mu}-\alpha(t,T)-\mu_i]\\
		&\leq\sum_{T=16u_i(t)+1}^\infty \exp\left(-2T\left(\bar{\mu}-\alpha(t,T)-\mu_i\right)^2\right)\\
		&\leq \sum_{T=16u_i(t)+1}^\infty \exp\left(-2T\left(\frac{\Delta_i}{2}-\sqrt{\frac{1}{2 T} \ln \left(\frac{2N t^{4}}{\delta}\right)}\right)^2\right)\\
		&\leq\sum_{T=16u_i(t)+1}^\infty \exp\left(-2\Delta_{i}^2\left(\frac{\sqrt{T}}{2}-\sqrt{u_i(t)}\right)^2\right)\\
		&\leq\int_{16u_i(t)}^\infty \exp\left(-2\Delta_{i}^2\left(\frac{\sqrt{x}}{2}-\sqrt{u_i(t)}\right)^2\right)\,\mathrm{d}x\\
		&=\int_{4u_i(t)}^\infty 4\exp\left(-2\Delta_{i}^2\left(\sqrt{x}-\sqrt{u_i(t)}\right)^2\right)\,\mathrm{d}x\\
		&\leq\frac{\delta}{2(\frac{\Delta_{i}}{2})^2 Nt^4}
	\end{align*}
	The third inequality results from Hoeffding's inequality. The summation can be upper bounded by the integral is due to the fact that the integrand is convex and decreasing within the range of integration, which can be derived by using similar techniques as in Lemma \ref{integral}.
	Similarly, for $ i=i^\star $, we have
	\begin{align}
		\mathbb{P}[T_{i^\star}(t)>16u_{i^\star}(t),{i^\star}\in \mathcal{N}_t]&\leq\mathbb{P}[T_{i^\star}(t)>16u_{i^\star}(t), {i^\star}\notin \mathcal{R}_t]\nonumber\\*
		&\leq\frac{\delta}{2(\frac{\Delta_{{i^\star}}}{2})^2Nt^4}
	\end{align}
	For any arm $ i  \in\mathcal{F}$
	\begin{align*}
		&\mathbb{P}[T_i(t)>4v_i(t),i\in \partial\mathcal{F}_t]\\
		&\leq\mathbb{P}[T_i(t)>4v_i(t),i\notin\mathcal{F}_t]\\
		&=\mathbb{P}[T_i(t)>4v_i(t),U_i^{\mathrm{v}}(t)>\bar{\sigma}^2]\\
		&\leq\sum_{T=4v_i(t)+1}^\infty \mathbb{P}[T_i(t)=T,\hat{\sigma}_i^2(t)-\sigma_i^2>\bar{\sigma}^2-\sigma_i^2-\beta(t,T)]\\
		&\leq\sum_{T=4v_i(t)+1}^\infty \exp\left(-2T\left(\bar{\sigma}^2-\sigma_i^2-\beta(t,T)\right)^2\right)\\
		&\leq\sum_{T=4v_i(t)+1}^\infty \exp\left(-2T\left(\Delta_{i}^{\mathrm{v}}-\sqrt{\frac{1}{2 T} \ln \left(\frac{2 N 	t^{4}}{\delta}\right)}\right)^2\right)\\
		&\leq\sum_{T=4v_i(t)+1}^\infty \exp\left(-2(\Delta_{i}^{\mathrm{v}})^2\left(\sqrt{T}-\sqrt{v_i(t)}\right)^2\right)\\
		&\leq\int_{T=4v_i(t)}^\infty \exp\left(-2(\Delta_{i}^{\mathrm{v}})^2\left(\sqrt{x}-\sqrt{v_i(t)}\right)^2\right)\,\mathrm{d}x\\
		&\leq\frac{\delta}{2(\Delta_{i}^{\mathrm{v}})^2 Nt^4}.
	\end{align*}
	The third inequality utilizes McDiarmid's inequality. The last two steps are due to Lemma \ref{integral}.
	The same holds for $ i\in\bar{\mathcal{F}}^c $:
	\begin{equation*}
		\begin{aligned}
			\mathbb{P}[T_i(t)>4v_i(t),i\in \partial\mathcal{F}_t]&\leq\mathbb{P}[T_i(t)>4v_i(t),i\notin\bar{\mathcal{F}}_t^c]\\
			&\leq\frac{\delta}{2(\Delta_{i}^{\mathrm{v}})^2 Nt^4}.
		\end{aligned}
	\end{equation*}
	This completes the proof.
\end{proof}

\begin{proof}[Proof of Lemma~\ref{stop}]
	The proof improves the techniques used in the analysis of the original LUCB algorithm \cite{Kalyanakrishnan2012} in order to analyze the effect of the empirical variances on the sample complexity.
	
	Let $t$ be a sufficiently large integer and $ \mathcal{T}:=\{\lceil t/2\rceil,\ldots, t-1\} $. Define events
	\begin{itemize}
		\item $\mathcal{I}_1$: $  \exists\, s\in\mathcal{T}$ such that $i^\star\notin\mathcal{F}_s\cap\mathcal{R}_s$ and $ T_{i^\star}(s)>\max\{16u_{i^\star}(s),4v_{i^\star}(s)\}$.
		\item $\mathcal{I}_2$: $ \exists\, i\in\mathcal{F}\cap\mathcal{S}, s\in\mathcal{T}$ such that $ i\notin\mathcal{S}_s$ and $ T_i(s)>16u_i(s)$.
		\item $\mathcal{I}_3$: $ \exists\, i\in\bar{\mathcal{F}}^c\cap\mathcal{R}, s\in\mathcal{T}$ such that $i\notin \bar{\mathcal{F}}_s^c$ and $T_i(s)>4v_i(s)$
		\item $\mathcal{I}_4$: $ \exists\, i\in\bar{\mathcal{F}}^c\cap\mathcal{S}, s\in\mathcal{T}$ such that $i\notin \bar{\mathcal{F}}_s^c\cup\mathcal{S}_s$ and $ T_i(s)>\min\{16u_i(s),4v_i(s)\}$.
	    \item $\mathcal{I}_5$: $  \exists\, s\in\mathcal{T}$ such that $E(s)$ does not occur.
	\end{itemize} 
	If VA-LUCB terminates before $  \lceil{t}/{2}\rceil $, the statement is definitely right. If not, we assume the above 
    five
	events do not occur. Based on Lemma \ref{useful}, the additional time steps after $ \lceil{t}/{2}\rceil -1$ for VA-LUCB can be upper bounded as \eqref{estimation1} in \underline{Derivation $1$} on the next page.
	\begin{figure*}[htbp]
		\underline{Derivation $1$}:
		\begin{align}
		&\sum_{s\in\mathcal{T}}\mathbbm{1}\{ i_s\in(\mathcal{F}_s\cap\mathcal{N}_s)\cup (\partial\mathcal{F}_s\backslash\mathcal{S}_s)\,\mbox{or}\, c_s\in(\mathcal{F}_s\cap\mathcal{N}_s)\cup (\partial\mathcal{F}_s\backslash\mathcal{S}_s)\}
		\\
		&\leq\sum_{s\in\mathcal{T}}\sum_{i\in[N]}\mathbbm{1}\{i=i_s\,\mbox{or}\, c_s,\, i\in(\mathcal{F}_s\cap\mathcal{N}_s)\cup (\partial\mathcal{F}_s\backslash\mathcal{S}_s)\}\\
		&=\sum_{s\in\mathcal{T}}\left(\mathbbm{1}\{i^\star=i_s\,\mbox{or}\, c_s,\, i^\star\in(\mathcal{F}_s\cap\mathcal{N}_s)\cup (\partial\mathcal{F}_s\backslash\mathcal{S}_s)\}
		+\sum_{i\in\mathcal{F}\cap\mathcal{S}}\mathbbm{1}\{i=i_s\,\mbox{or}\, c_s,\, i\in(\mathcal{F}_s\cap\mathcal{N}_s)\cup (\partial\mathcal{F}_s\backslash\mathcal{S}_s)\}\right.\\*
		&\quad \left.+\sum_{i\in\bar{\mathcal{F}}^c\cap\mathcal{R}}\mathbbm{1}\{i=i_s\,\mbox{or}\, c_s,\, i\in(\mathcal{F}_s\cap\mathcal{N}_s)\cup (\partial\mathcal{F}_s\backslash\mathcal{S}_s)\}
		+\sum_{i\in\bar{\mathcal{F}}^c\cap\mathcal{S}}\mathbbm{1}\{i=i_s\,\mbox{or}\, c_s,\, i\in(\mathcal{F}_s\cap\mathcal{N}_s)\cup (\partial\mathcal{F}_s\backslash\mathcal{S}_s)\}\right)\\
		&\leq\sum_{s\in\mathcal{T}}\left(\mathbbm{1}\{i^\star=i_s\,\mbox{or}\, c_s,\, i^\star\notin\mathcal{F}_s\cap\mathcal{R}_s\}
		+\sum_{i\in\mathcal{F}\cap\mathcal{S}}\mathbbm{1}\{i=i_s\,\mbox{or}\, c_s,\, i\notin\mathcal{S}_s\}\right.\\*
		&\quad \left.+\sum_{i\in\bar{\mathcal{F}}^c\cap\mathcal{R}}\mathbbm{1}\{i=i_s\,\mbox{or}\, c_s,\, i\notin\bar{\mathcal{F}}_s^c\}
		+\sum_{i\in\bar{\mathcal{F}}^c\cap\mathcal{S}}\mathbbm{1}\{i=i_s\,\mbox{or}\, c_s,\, i\notin\bar{\mathcal{F}}_s^c\cup\mathcal{S}_s\}\right)\\
		&\leq\sum_{s\in\mathcal{T}}\left(\mathbbm{1}\{i^\star=i_s\,\mbox{or}\, c_s,\, T_{i^\star}(s)\leq\max\{16u_{i^\star}(s),4v_{i^\star}(s)\}\}
		+\sum_{i\in\mathcal{F}\cap\mathcal{S}}\mathbbm{1}\{i=i_s\,\mbox{or}\, c_s,\, T_i(s)\leq16u_i(s)\}\right.\\
		&\quad \left.+\sum_{i\in\bar{\mathcal{F}}^c\cap\mathcal{R}}\mathbbm{1}\{i=i_s\,\mbox{or}\, c_s,\, T_i(s)\leq4v_i(s)\}
		+\sum_{i\in\bar{\mathcal{F}}^c\cap\mathcal{S}}\mathbbm{1}\{i=i_s\,\mbox{or}\, c_s,\, T_i(s)\leq\min\{16u_i(s),4v_i(s)\}\}\right)\\
		&\leq\sum_{s\in\mathcal{T}}\mathbbm{1}\{i^\star=i_s\,\mbox{or}\, c_s,\, T_{i^\star}(s)\leq\max\{16u_{i^\star}(s),4v_{i^\star}(s)\}\}
		+\sum_{i\in\mathcal{F}\cap\mathcal{S}}\sum_{s\in\mathcal{T}}\mathbbm{1}\{i=i_s\,\mbox{or}\, c_s,\, T_i(s)\leq16u_i(s)\}\\
		&\quad +\sum_{i\in\bar{\mathcal{F}}^c\cap\mathcal{R}}\sum_{s\in\mathcal{T}}\mathbbm{1}\{i=i_s\,\mbox{or}\, c_s,\, T_i(s)\leq4v_i(s)\}
		+\sum_{i\in\bar{\mathcal{F}}^c\cap\mathcal{S}}\sum_{s\in\mathcal{T}}\mathbbm{1}\{i=i_s\,\mbox{or}\, c_s,\, T_i(s)\leq\min\{16u_i(s),4v_i(s)\}\}\\
		&\leq\max\{16u_{i^\star}(t),4v_{i^\star}(t)\}
		+\sum_{i\in\mathcal{F}\cap\mathcal{S}}16u_i(t)+\sum_{i\in\bar{\mathcal{F}}^c\cap\mathcal{R}} 4v_i(t)
		+\sum_{i\in\bar{\mathcal{F}}^c\cap\mathcal{S}}\min\{16u_i(t),4v_i(t)\}.\label{estimation1}
		\end{align}
		\hrulefill
	\end{figure*}
	On the other hand, we observe that if the time step $ t=CH_\mathrm{VA}\ln({H_\mathrm{VA}}/{\delta})$ with $C\ge 152$, we have \eqref{estimation2} as in \underline{Derivation $2$} on the next page.
	\begin{figure*}[htbp]		
		\underline{Derivation $2$}:	
		\begin{align}
		&\max\{16u_{i^\star}(t),4v_{i^\star}(t)\}
		+\sum_{i\in\mathcal{F}\cap\mathcal{S}}16u_i(t)+\sum_{i\in\bar{\mathcal{F}}^c\cap\mathcal{R}} 4v_i(t)
		+\sum_{i\in\bar{\mathcal{F}}^c\cap\mathcal{S}}\min\{16u_i(t),4v_i(t)\}\\
		&=\max\left\{16\left\lceil{\frac{1}{2 \Delta_{i^\star}^2} \ln \left(\frac{2N t^{4}}{\delta}\right)}\right\rceil,4\left\lceil{\frac{1}{2 (\Delta_{i^\star}^{\mathrm{v}})^2} \ln \left(\frac{2 N t^{4}}{\delta}\right)}\right\rceil\right\}+
		16\sum_{i\in\mathcal{F}\cap\mathcal{S}}\left\lceil{\frac{1}{2 \Delta_i^2} \ln \left(\frac{2 N t^{4}}{\delta}\right)}\right\rceil\\
		&\quad +4\sum_{i\in\bar{\mathcal{F}}^c\cap\mathcal{R}}\left\lceil{\frac{1}{2 (\Delta_i^{\mathrm{v}})^2} \ln \left(\frac{2 N t^{4}}{\delta}\right)}\right\rceil
		+\sum_{i\in\bar{\mathcal{F}}^c\cap\mathcal{S}}\min\left\{16\left\lceil{\frac{1}{2 \Delta_{i}^2} \ln \left(\frac{2 N t^{4}}{\delta}\right)}\right\rceil,4\left\lceil{\frac{1}{2 (\Delta_{i}^{\mathrm{v}})^2} \ln \left(\frac{2 N t^{4}}{\delta}\right)}\right\rceil\right\}\\
		&\leq 16N+\max\left\{16{\frac{1}{2 \Delta_{i^\star}^2} \ln \left(\frac{2 N t^{4}}{\delta}\right)},4{\frac{1}{2 (\Delta_{i^\star}^{\mathrm{v}})^2} \ln \left(\frac{2 N t^{4}}{\delta}\right)}\right\}+
		16\sum_{i\in\mathcal{F}\cap\mathcal{S}}{\frac{1}{2 \Delta_i^2} \ln \left(\frac{2 N t^{4}}{\delta}\right)}\\*
		&\quad +4\sum_{i\in\bar{\mathcal{F}}^c\cap\mathcal{R}}{\frac{1}{2 (\Delta_i^{\mathrm{v}})^2} \ln \left(\frac{2 N t^{4}}{\delta}\right)}
		+\sum_{i\in\bar{\mathcal{F}}^c\cap\mathcal{S}}\min\left\{16{\frac{1}{2 \Delta_{i}^2} \ln \left(\frac{2 N t^{4}}{\delta}\right)},4{\frac{1}{2 (\Delta_{i}^{\mathrm{v}})^2} \ln \left(\frac{2 N t^{4}}{\delta}\right)}\right\}\\
		&= 16N+2H_\mathrm{VA}\ln \left(\frac{2 N t^{4}}{\delta}\right)\\
		&= (16N+2H_\mathrm{VA}\ln 2)+2H_\mathrm{VA}\ln \frac{N}{\delta}+8H_\mathrm{VA}\ln \left(CH_\mathrm{VA}\ln\frac{H_\mathrm{VA}}{\delta}\right)\\
		&\leq (16N+2H_\mathrm{VA}\ln 2+8H_\mathrm{VA}\ln C)+2H_\mathrm{VA}\ln \frac{N}{\delta}+16H_\mathrm{VA}\ln\frac{H_\mathrm{VA}}{\delta}\\
		&\leq  \frac{1}{2}CH_\mathrm{VA}\ln\frac{H_\mathrm{VA}}{\delta}.\label{estimation2}
	\end{align}
	\hrulefill
\end{figure*}
	So the total number of time steps is bounded by 
	\begin{align}
		&\left\lceil\frac{t}{2}\right\rceil-1+\max\{16u_{i^\star}(t),4v_{i^\star}(t)\}
		+\sum_{i\in\mathcal{F}\cap\mathcal{S}}16u_i(t)\\
		&\;+\sum_{i\in\bar{\mathcal{F}}^c\cap\mathcal{R}} 4v_i(t)
		+\sum_{i\in\bar{\mathcal{F}}^c\cap\mathcal{S}}\min\{16u_i(t),4v_i(t)\}\leq t.
	\end{align}
	We then compute the probability of the event $\underset{{i\in[5]}}{\cup}\mathcal{I}_i$
	.	To simplify  notations used in the following, we define the hardness quantity
	\begin{align}
		\widetilde{H}:=&\frac{1}{(\Delta_{i^\star}^{\mathrm{v}})^2 } +\sum_{i\in\mathcal{F}}\frac{1}{(\frac{\Delta_i}{2})^2 }\\
		&\;
		+\sum_{i\in\bar{\mathcal{F}}^c\cap\mathcal{R}}\frac{1}{(\Delta_{i}^{\mathrm{v}})^2} +\sum_{i\in\bar{\mathcal{F}}^c\cap\mathcal{S}}\frac{1}{f(u_i(t),v_i(t))},
	\end{align}
	where $f(u_i(t),v_i(t))=(\frac{\Delta_i}{2})^2\cdot\mathbbm{1}\{16u_i(t)<4v_i(t)\}+(\Delta_i^{\mathrm{v}})^2\cdot\mathbbm{1}\{16u_i(t)\geq4v_i(t)\}
	$ which is exactly $\max\{\frac{\Delta_i}{2},\Delta_i^{\mathrm{v}}\}^2$ if we ignore the ceiling operators in $u_i(t)$ and $v_i(t)$. 
	By Lemma \ref{equ:concentration_inequ} and Lemma \ref{sufcon},
	\begin{equation*}
		\begin{aligned}
			&\mathbb{P}[\mathcal{I}_1]\leq \sum_{s\in\mathcal{T}} \bigg(\frac{\delta}{2(\frac{\Delta_{i^\star}}{2})^2Ns^4}+\frac{\delta}{2(\Delta_{i^\star}^{\mathrm{v}})^2 Ns^4}\bigg),\\
			&\mathbb{P}[\mathcal{I}_2]\leq \sum_{i\in\mathcal{F}\cap\mathcal{S}}\sum_{s\in\mathcal{T}} \frac{\delta}{2(\frac{\Delta_{i}}{2})^2 Ns^4},\\
			&\mathbb{P}[\mathcal{I}_3]\leq \sum_{i\in\bar{\mathcal{F}}^c\cap\mathcal{R}}\sum_{s\in\mathcal{T}} \frac{\delta}{2(\Delta_{i}^{\mathrm{v}})^2 Ns^4},\\
			&\mathbb{P}[\mathcal{I}_4]\leq \sum_{i\in\bar{\mathcal{F}}^c\cap\mathcal{S}}\sum_{s\in\mathcal{T}} \frac{\delta}{2f(u_i(t),v_i(t))Ns^4},\\
			&\mathbb{P}[\mathcal{I}_5]\leq \sum_{s\in\mathcal{T}} \frac{2\delta}{s^3}\leq \frac{3\delta}{t^2},\\
		\end{aligned}
	\end{equation*}
	which implies that 
    \begin{equation*}
		\mathbb{P} \Bigg[\underset{{i\in[5]}}{\bigcup}\mathcal{I}_i \Bigg]\leq \widetilde{H} \sum_{s\in\mathcal{T}}\frac{\delta}{2Ns^4}+\frac{3\delta}{t^2}
		\leq \frac{5\delta}{t^2},
	\end{equation*}
	where the last inequality utilizes the fact that $\widetilde{H}<2H_\mathrm{VA}<t$ and $N\geq 2$.
	This yields the upper bound of the probability that the algorithm does not terminate at time step $ t $. 
\end{proof}

\begin{proof}[Proof of Theorem~\ref{thm:up_bd}]
	By Lemmas~\ref{Ehappens} and \ref{ioutisistar}, if it terminates, Algorithm~\ref{VALUCB} succeeds on event $E$, which occurs with probability at least $1-{\delta}/{2}$. According to Lemma~\ref{stop}, Algorithm~\ref{VALUCB} terminates at time 
    $t>t^\star>152>5$ 
	with probability at least ${5\delta}/{t^2}$. So Algorithm~\ref{VALUCB} succeeds after $O(H_\mathrm{VA}\ln({H_\mathrm{VA}}/{\delta}) )$ time steps with probability at least $1-({\delta}/{2}+{5\delta}/{5^2})\geq1-\delta$.\footnote{The reader may  notice that our  estimate of $t$ is rather coarse  here. When $t$ is large enough, the probability that the algorithm does not stop is negligible.}
	Note that the sample complexity is at most twice of number of time steps. This completes the proof of Theorem \ref{thm:up_bd}.
\end{proof}

\section{The Sub-Gaussian Case}\label{sec:subgaussian}
In this section, we extend the utility and analysis of VA-LUCB to the case in which the rewards are sub-Gaussian. We see that the main difficulty lies in the fact that the empirical variance is sub-Exponential and its concentration bound (see Lemma~\ref{lem:cv_sub}) is not as convenient as that for the bounded rewards case (in Lemma~\ref{equ:concentration_inequ}). Thus the main change of VA-LUCB is the inclusion of a warm-up phase in which we pull each arm a fixed number of times and a forced-sampling procedure in the following time steps. We specify precisely in the following how many we need to pull each arm in the initial forced exploration phase and in the forced-sampling procedure.

Recall (see, for example, Duchi~\cite[Chapter~3]{duchi}) that a random variable $X$ is  {\em $\sigma$-sub-Gaussian}  (or {\em sub-Gaussian} with variance proxy $\sigma^2$) if for all $s\in\mathbb{R}$, $$\ln\mathbb{E}[\exp( s (X-\mathbb{E}X ))]\le \frac{s^2\sigma^2}{2}.$$ Additionally, $Y$ is {\em sub-Exponential} with parameters $(\tau^2,b)$ (also written as $Y\sim\mathrm{SE}(\tau^2,b)$) if for all $s \in\mathbb{R}$ such that $|s|\le 1/b$, $$\ln\mathbb{E}[\exp(s( Y-\mathbb{E}Y ))]\le \frac{s^2\tau^2 }{2}.$$

For brevity, let $c$ denote the absolute constant $64$ from now on. Given an instance $ (\nu,\bar{\sigma}^2) $, where $\nu_i,i\in[N]$ are independent $\sigma$-sub-Gaussian distributions, we define the {\em hardness parameter} for this $\sigma$-sub-Gaussian instance as
\begin{align}
	&H_{\mathrm{VA},N}^{(\sigma)}:=\max\{H_\mathrm{VA}^{(\sigma)},N\},\quad\mbox{where}\\
	&H_\mathrm{VA}^{(\sigma)}:=\max\left\{\frac{2\sigma^2}{(\frac{\Delta_{i^\star}}{2})^2} ,\frac{2c\sigma^4}{(\Delta_{i^\star}^{\mathrm{v}})^2} \right\}+
	\sum_{i\in\mathcal{F}\cap\mathcal{S}}{\frac{2\sigma^2}{(\frac{\Delta_{i}}{2})^2}}
	\\
	&\qquad+\sum_{i\in\bar{\mathcal{F}}^c\cap\mathcal{R}}{\frac{2c\sigma^4}{(\Delta_{i}^{\mathrm{v}})^2}}
	+\sum_{i\in\bar{\mathcal{F}}^c\cap\mathcal{S}}\min\left\{{\frac{2\sigma^2}{(\frac{\Delta_{i}}{2})^2} },{\frac{2c\sigma^4}{(\Delta_{i}^{\mathrm{v}})^2} }\right\}.
\end{align}
Algorithm~\ref{VALUCB_subgaussian}, designed for sub-Gaussian random rewards, is a slight extension of VA-LUCB (Algorithm~\ref{VALUCB}) and has the following guarantee.
\begin{restatable}[Upper Bound for $\sigma$-sub-Gaussian Case]{thm}{thmuppboundsub}\label{thm:up_bd_sub}
	Given an instance $ (\nu,\bar{\sigma}^2) $ with $\bar{\sigma}^2<\sigma^2$ and confidence parameter $ \delta \leq 0.05$, 
	with probability at least $1-\delta$, 
	Algorithm~\ref{VALUCB_subgaussian} succeeds and terminates in
	\begin{equation}
		O\left( H_{\mathrm{VA},N}^{(\sigma)}\ln\frac{H_{\mathrm{VA},N}^{(\sigma)}}{\delta}\right) \label{eqn:valucb_subG}
	\end{equation} 
	time steps. Furthermore, the expected sample complexity is as in \eqref{eqn:valucb_subG}. 
\end{restatable}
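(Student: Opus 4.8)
The plan is to reprise the proof of Theorem~\ref{thm:up_bd} essentially line by line, with three changes forced by the sub-Gaussian setting: new confidence radii for the mean and variance, a variance concentration bound with two regimes (Lemma~\ref{lem:cv_sub}), and the warm-up plus forced-sampling schedule of Algorithm~\ref{VALUCB_subgaussian} whose sole purpose is to keep us in the benign regime of that bound. First I would set up the concentration toolkit replacing Lemma~\ref{equ:concentration_inequ}. Since each $\nu_i$ is $\sigma$-sub-Gaussian, the mean tail is $\mathbb{P}[|\hat\mu_i(t)-\mu_i|\ge\varepsilon]\le 2\exp(-T_i(t)\varepsilon^2/(2\sigma^2))$, so I would take $\alpha^{(\sigma)}(t,T):=\sigma\sqrt{\tfrac{2}{T}\ln(2Nt^4/\delta)}$. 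By Lemma~\ref{lem:cv_sub} the centred empirical variance of $n$ i.i.d.\ $\sigma$-sub-Gaussian samples is sub-Exponential with parameters of order $(\sigma^4/n,\sigma^2/n)$, giving a Bernstein tail whose fast (Gaussian-type) term $\exp(-T\varepsilon^2/(2c\sigma^4))$ dominates once $T$ exceeds a $\Theta(\ln(2Nt^4/\delta))$ threshold; accordingly I would set $\beta^{(\sigma)}(t,T):=\sigma^2\sqrt{\tfrac{2c}{T}\ln(2Nt^4/\delta)}$, with $c=64$ as in the excerpt. The warm-up (pulling each arm $\Theta(c\sigma^2\ln(N/\delta))$ times) and the forced-sampling rule are engineered precisely so that, for every arm still requiring exploration and every round up to termination, $T_i(t)$ is past this threshold, so $\beta^{(\sigma)}$ is a legitimate confidence radius and the union bound over $(t,T)$ goes through exactly as in Lemma~\ref{Ehappens}. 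This yields the analogue $E^{(\sigma)}$ of the good event with $\mathbb{P}[E^{(\sigma)}]\ge 1-\delta/2$.

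Next I would note that Lemmas~\ref{ctinpt}, \ref{ioutisistar} and the core Lemma~\ref{useful} are purely structural: their proofs only use that, on the good event, each true mean and variance lies in its confidence interval, together with the definitions of $\mathcal F_t,\partial\mathcal F_t,\mathcal P_t$ and the sampling rule. Since the warm-up and forced sampling only add pulls and never alter which arm is declared feasible/optimal, these three lemmas transfer verbatim, with $E,\alpha,\beta$ replaced by $E^{(\sigma)},\alpha^{(\sigma)},\beta^{(\sigma)}$. I would then re-derive the analogue of Lemma~\ref{sufcon}: defining $u_i^{(\sigma)}(t):=\lceil \tfrac{2\sigma^2}{(\Delta_i/2)^2}\ln(2Nt^4/\delta)\rceil$ (so $\alpha^{(\sigma)}(t,u_i^{(\sigma)}(t))\le\Delta_i/2$) and $v_i^{(\sigma)}(t):=\lceil \tfrac{2c\sigma^4}{(\Delta_i^{\mathrm v})^2}\ln(2Nt^4/\delta)\rceil$ (so $\beta^{(\sigma)}(t,v_i^{(\sigma)}(t))\le\Delta_i^{\mathrm v}$), the same sum-bounded-by-integral computation as in Lemmas~\ref{integral} and \ref{sufcon} shows that once an arm has been pulled a constant multiple of these quantities its misclassification probability is $O(\delta/(\mathrm{gap}^2 N t^4))$.

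Then, mirroring Lemma~\ref{stop}, I would sum over arms and over $s\in\{\lceil t/2\rceil,\dots,t-1\}$ with the threshold $t^\star=\Theta\!\big(H_{\mathrm{VA},N}^{(\sigma)}\ln(H_{\mathrm{VA},N}^{(\sigma)}/\delta)\big)$, obtaining that on $E^{(\sigma)}$ the algorithm has not terminated by round $t>t^\star$ with probability at most $2\delta/t^2$; the extra $N$ inside $H_{\mathrm{VA},N}^{(\sigma)}=\max\{H_\mathrm{VA}^{(\sigma)},N\}$ is exactly the budget that absorbs the ceiling terms, the warm-up, and the forced-sampling overhead (this, with $\delta\le 0.05$, keeps the constants clean). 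Combining with $\mathbb{P}[E^{(\sigma)}]\ge 1-\delta/2$ and Lemma~\ref{ioutisistar}'s analogue gives that Algorithm~\ref{VALUCB_subgaussian} succeeds and terminates within $O\!\big(H_{\mathrm{VA},N}^{(\sigma)}\ln(H_{\mathrm{VA},N}^{(\sigma)}/\delta)\big)$ time steps with probability at least $1-\delta$, and, since the time complexity is at most twice the number of time steps, the stated bound follows. For the expected sample complexity one adds an almost-sure termination argument — under the LUCB-style rule the arms preventing termination are sampled at a constant rate, so their $\alpha^{(\sigma)},\beta^{(\sigma)}$ vanish and the stopping criterion $\bar{\mathcal F}_t\cap\mathcal P_t=\emptyset$ is eventually met regardless of $E^{(\sigma)}$ — whence $\mathbb{E}[\tau]$ is dominated by $t^\star$, giving the same order.

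The main obstacle is the two-regime nature of the empirical-variance concentration. Unlike the bounded case, where McDiarmid's inequality yields a clean Gaussian tail at \emph{every} sample size, here the confidence radius $\beta^{(\sigma)}(t,T)$ is only valid once enough samples have been gathered for the fast Bernstein term to dominate. The delicate part is showing that the warm-up length and forced-sampling frequency can be chosen \emph{simultaneously} (i) large enough that $E^{(\sigma)}$ holds with probability $\ge 1-\delta/2$ and that every explored arm stays in the fast regime up to termination, and (ii) small enough that the total number of forced pulls is $O(N\ln(N/\delta))$, which is why the hardness quantity is inflated only to $\max\{H_\mathrm{VA}^{(\sigma)},N\}$ rather than anything larger. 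Everything else is a faithful transcription of the bounded-reward analysis with $\sigma^2$ and $c\sigma^4$ inserted in the appropriate places.
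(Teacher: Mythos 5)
Your proposal tracks the paper's own proof essentially step for step: the same sub-Gaussian/sub-Exponential concentration split with constant $c=64$, the same role for the warm-up and forced-sampling set $\mathcal{M}_t$ (keeping $\beta(t,T_i(t))\le c\sigma^2/8$ so the Gaussian-type Bernstein term dominates, which the paper verifies by a short induction), verbatim transfer of the structural Lemmas~\ref{ctinpt}, \ref{ioutisistar} and \ref{useful}, the same redefinitions of $u_i(t),v_i(t)$, and the same $t^\star=\Theta(H_{\mathrm{VA},N}^{(\sigma)}\ln(H_{\mathrm{VA},N}^{(\sigma)}/\delta))$ with the $\max\{\cdot,N\}$ absorbing the warm-up and ceiling overhead. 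The only cosmetic deviations are that the paper bounds rounds by $2.25$ (not $2$) times the number of time steps to account for the $\mathcal{M}_t$ pulls, and it obtains the expected complexity directly by summing the $O(\delta/t^2)$ non-termination tail rather than via an almost-sure-termination argument; neither affects the stated $O(\cdot)$ result.
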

\begin{remark}
	When the threshold $\bar{\sigma}^2\geq\sigma^2$, all the arms are feasible and the problem reduce to vanilla BAI problem. The more interesting case is the case where $\bar{\sigma}^2<\sigma^2$ and the expectations of the arms are close, e.g., $\Delta_i\leq 2\sigma$ for all $i\in [N]$. In this case, $\Delta_i^\mathrm{v}<\sigma^2,\Delta_i\leq 2\sqrt{2}\sigma $ for all $i\in[N]$, leading to $H_{\mathrm{VA},N}^{(\sigma)}=H_\mathrm{VA}^{(\sigma)}$. Furthermore,
	\begin{equation}
		\min\big\{2\sigma^2,2c\sigma^4\big\}H_\mathrm{VA}\leq H_\mathrm{VA}^{(\sigma)}\leq\max\big\{2\sigma^2,2c\sigma^4\big\}H_\mathrm{VA},
	\end{equation}
	where $H_\mathrm{VA}$ is defined in \eqref{eqn:hardness}. These bounds  imply that $H_\mathrm{VA}^{(\sigma)}$ essentially captures the intrinsic hardness of the instance and is related linearly to the hardness parameter for the bounded rewards case $H_{\mathrm{VA}}$.
\end{remark}

\subsection{VA-LUCB for  the $\sigma$-sub-Gaussian Case}
We extend VA-LUCB to $\sigma$-sub-Gaussian distributions. 
The modified algorithm based on Algorithm~\ref{VALUCB} is stated in Algorithm~\ref{VALUCB_subgaussian}.
\begin{algorithm}[t]
	\caption{Variance-Aware LUCB for $\sigma$-sub-Gaussian Distributions (VA-LUCB-$\sigma$-sub-Gaussian)}
	\begin{algorithmic}[1]
		\STATE \textbf{Input}: threshold $\bar{\sigma}^2  > 0$, sub-Gaussian parameter $\sigma$, and confidence parameter $ \delta\in  (0,1)$. 
		\STATE Sample each of the $ N $ arms $T_0$ (see \eqref{equ:T0}) times and set $\bar{\mathcal{F}}_{T_0}=[N]$.
		\FOR{ time step $ t=T_0+1, T_0+2\ldots $}
		\STATE Compute the sample mean and sample variance using \eqref{sm} and \eqref{sv} for $i\in\bar{\mathcal{F}}_{t-1}$.
		\STATE Update the confidence bounds for the mean and variance by \eqref{cb} and \eqref{cbv} for $i\in\bar{\mathcal{F}}_{t-1}$.
		\STATE Update $\mathcal{F}_t:=\{i: U_i^{\mathrm{v}}(t)\leq \bar{\sigma}^2\}.$ and $ \bar{\mathcal{F}}_t:=\{i: L_i^{\mathrm{v}}(t)\leq \bar{\sigma}^2\}.$ 
		\STATE Find $ 	i_t^\star:=\mathop{\rm argmax}\{\hat{\mu}_i(t):i\in\mathcal{F}_t\}$ if $ \mathcal{F}_t\neq\emptyset $.
		\STATE Update $ \mathcal{P}_t:=\{i:L_{i_t^\star}^\mu(t)\leq U_i^\mu(t),i\neq i_t^\star\}$ if   $\{\mathcal{F}_t\neq \emptyset\}$, otherwise $\mathcal{P}_t:=[N]$.
		\IF{$\bar{\mathcal{F}}_t\cap\mathcal{P}_t=\emptyset$}
		\IF {$ \mathcal{F}_t\neq\emptyset $} 
		\STATE Set $ i_\mathrm{out}=i_{t}=\mathop{\rm argmax}\big\{\hat{\mu}_i(t):i\in \bar{\mathcal{F}}_t\big\} $ and $ \hat{\mathsf{f}}=1 $. 
		\ELSE 
		\STATE Set $ \hat{\mathsf{f}}=0 $. 
		\ENDIF
		\BREAK
		\ENDIF
		\IF{$ |\bar{\mathcal{F} }_t|=1$}
		\STATE Sample arm  $i_{t}=\mathop{\rm argmax}\big\{\hat{\mu}_i(t):i\in \bar{\mathcal{F}}_t\big\}.$ (in one round).
		\ELSE
		\STATE Find $i_{t}=\mathop{\rm argmax}\big\{\hat{\mu}_i(t):i\in \bar{\mathcal{F}}_t\big\}$ and competitor arm $c_{t}=\mathop{\rm argmax}\{U_i^\mu(t):i\in \bar{\mathcal{F}}_t,i\neq i_{t}\}.$
		\IF{$ U_{c_{t}}^\mu(t) \geq L_{i_{t}}^\mu(t) $}
		\STATE Sample arms $ i_{t} $ and $ c_{t} $ (in two rounds)
		\ELSE 
		\STATE Sample arm $ i_{t} $ (in one round). 
		\ENDIF
		\ENDIF 
		\STATE Find $\mathcal{M}_t=\{i\in\bar{\mathcal{F}_{t}}:\beta(t+1,T_i(t))>\sigma^2,i$ has not been sampled at this time step$\}$. 
		\STATE Sample each arm in $ \mathcal{M}_t$ once (in $ | \mathcal{M}_t|$ rounds).
		\ENDFOR
	\end{algorithmic}\label{VALUCB_subgaussian}
\end{algorithm}
The notations from VA-LUCB (Algorithm~\ref{VALUCB}) can be directly adapted to  the $\sigma$-sub-Gaussian case, except that two notations need to be modified slightly.
\begin{itemize}
	\item Let $\mathcal{J}_t$ denote the set of arms pulled in time step $t$. Note that there can be {\em more than $2$} arms being sampled in one time step.
	\item The confidence radii for the mean and variance are re-defined to be
	\begin{align}\label{equ:confidence_radius_subgaussian}
		&\alpha(t,T)=\sqrt{\frac{2\sigma^2}{T}\ln\frac{kNt^4}{\delta}},\quad		\mbox{and}\\
		&\beta(t,T)=\sqrt{\frac{2c\sigma^4}{T}\ln\frac{kNt^4}{\delta}};
	\end{align}
	respectively, where $k > 0$ is an absolute constant to be determined.
\end{itemize}
Before the analysis of Algorithm~\ref{VALUCB_subgaussian}, we present a convenient concentration bound for the sample variance.
\begin{lem}\label{lem:cv_sub}
	For an i.i.d.\ $\sigma$-sub-Gaussian random variables $ X_1,\ldots,X_n $ with expectation  $ \mu $ and variance $\var(X)$, let $ \bar{X}=\frac{1}{n}\sum_{i=1}^n X_i$ denote the sample mean and $ S_n^2=\frac{1}{n-1}\sum_{i=1}^n(X_i-\bar{X})^2 $ denote the (unbiased) sample variance. For any integer $ n\geq 12 $ and $\epsilon>0$, we have 
	\begin{equation}
		\begin{aligned}
		 \mathbb{P}[S_n^2-\var(X)\geq\epsilon]&\leq \exp\left(-\frac{1}{16}\min\left\{\frac{n\epsilon^2}{8\sigma^4},\frac{n\epsilon}{\sigma^2}\right\}\right), \\  
			\mathbb{P}[S_n^2-\var(X)\leq-\epsilon]&\leq \exp\left(-\frac{1}{16}\min\left\{\frac{n\epsilon^2}{8\sigma^4},\frac{n\epsilon}{\sigma^2}\right\}\right). \label{equ:subgaussian_variance}
		\end{aligned}
	\end{equation}
\end{lem}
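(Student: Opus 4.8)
The plan is to reduce the claim to a Bernstein bound for a sum of i.i.d.\ sub-Exponential variables together with a sub-Gaussian tail bound for the sample mean. First, since replacing $X_i$ by $X_i-\mu$ changes neither $S_n^2$ nor $\var(X)$, I may assume $\mu=0$, so that $\var(X)=\mathbb{E}[X^2]$ and, by the elementary identity $\sum_{i=1}^n(X_i-\bar X)^2=\sum_i X_i^2-n\bar X^2$, we have $(n-1)S_n^2=\sum_{i=1}^n X_i^2-n\bar X^2$ with $\bar X=\tfrac1n\sum_i X_i$.

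The workhorse is the fact that squares of sub-Gaussian variables are sub-Exponential: for a mean-zero $\sigma$-sub-Gaussian $X$, a short moment-generating-function estimate --- bounding $\mathbb{E}[\exp(\lambda(X^2-\mathbb{E}X^2))]$ for $|\lambda|$ below an explicit constant multiple of $\sigma^{-2}$ via the sub-Gaussian MGF of $X$ --- shows $X^2-\mathbb{E}[X^2]\sim\mathrm{SE}(\tau^2,b)$ for explicit $\tau^2=O(\sigma^4)$ and $b=O(\sigma^2)$; see e.g.\ \citet[Chapter~3]{duchi}. Independence then gives $\sum_{i=1}^n(X_i^2-\mathbb{E}[X^2])\sim\mathrm{SE}(n\tau^2,b)$, and the standard Bernstein tail bound for sub-Exponential sums yields, for all $t\ge 0$,
\[
\mathbb{P}\Big[\Big|\sum_{i=1}^n(X_i^2-\mathbb{E}[X^2])\Big|\ge t\Big]\le 2\exp\!\Big(-\tfrac12\min\big\{t^2/(n\tau^2),\,t/b\big\}\Big).
\]

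For the upper tail I would use $n\bar X^2\ge 0$ to get $(n-1)(S_n^2-\mathbb{E}[X^2])\le\sum_i(X_i^2-\mathbb{E}[X^2])+\mathbb{E}[X^2]$, so the event $\{S_n^2-\var(X)\ge\epsilon\}$ forces $\sum_i(X_i^2-\mathbb{E}[X^2])\ge(n-1)\epsilon-\mathbb{E}[X^2]$, to which the displayed bound applies (the regime of very small $\epsilon$, where $(n-1)\epsilon-\mathbb{E}[X^2]\le0$, is handled by a separate crude argument since there the asserted right-hand side is bounded below by a universal constant close to $1$). For the lower tail I would split
$\{(n-1)S_n^2\le(n-1)(\var(X)-\epsilon)\}\subseteq\{\sum_i(X_i^2-\mathbb{E}[X^2])\le-\tfrac12(n-1)\epsilon\}\cup\{n\bar X^2\ge\tfrac12(n-1)\epsilon\}$, bounding the first event via the displayed inequality and the second via $\mathbb{P}[n\bar X^2\ge u]=\mathbb{P}[|\sqrt n\,\bar X|\ge\sqrt u]\le2\exp(-u/(2\sigma^2))$, which is valid because $\sqrt n\,\bar X=\tfrac1{\sqrt n}\sum_i X_i$ is $\sigma$-sub-Gaussian.

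The only real content beyond assembling these ingredients is the constant bookkeeping, and this is where I expect the main difficulty: one must pin down the explicit sub-Exponential parameters $(\tau^2,b)$, absorb the additive term $\mathbb{E}[X^2]\le\sigma^2$, the mismatch between $n-1$ and $n$, and the factors of $2$ from the union bound and the sub-Gaussian tail, and then check --- separately in the ``Gaussian regime'' $\epsilon\lesssim\sigma^2$ and the ``exponential regime'' $\epsilon\gtrsim\sigma^2$ of the minimum --- that everything collapses into the single clean exponent $\tfrac1{16}\min\{n\epsilon^2/(8\sigma^4),\,n\epsilon/\sigma^2\}$ with the stated universal constant $c=64$. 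The hypothesis $n\ge12$ enters precisely here, to guarantee enough slack for these reductions.
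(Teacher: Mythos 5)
Your core ingredient is the same as the paper's: the fact that $X^2-\mathbb{E}[X^2]$ is sub-Exponential with parameters $(O(\sigma^4),O(\sigma^2))$ (the paper cites Honorio's bound $\mathbb{E}[\exp(t(X^2-\mathbb{E}[X^2]))]\le\exp(16t^2\sigma^4)$ for $|t|\le 1/(4\sigma^2)$, giving $\mathrm{SE}(32\sigma^4,4\sigma^2)$), followed by a Bernstein-type tail. But the way you assemble the pieces has a genuine gap that the paper's assembly is specifically designed to avoid. You drop $n\bar X^2\ge 0$ for the upper tail (creating the additive offset $\mathbb{E}[X^2]$) and union-bound two events for the lower tail. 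Both moves destroy the bound in the small-deviation regime. The target inequality has leading constant exactly $1$ and a nontrivial exponent for \emph{every} $\epsilon>0$; when $\min\{n\epsilon^2/(8\sigma^4),\,n\epsilon/\sigma^2\}=O(1)$ — i.e.\ for all $\epsilon\lesssim\sigma^2/\sqrt n$, not just the tiny range $\epsilon\le\sigma^2/(n-1)$ you flag — a union bound of the form $e^{-E_1}+2e^{-E_2}$ exceeds $1$ and therefore proves nothing, while the lemma still asserts $\mathbb{P}\le e^{-E'}<1$. Your dismissal of this regime ("the right-hand side is bounded below by a universal constant close to $1$") is not a proof: close to $1$ is not $\ge 1$, and one must actually exhibit the required slack. (For the upper tail a Markov argument on $S_n^2\ge 0$ with $\mathbb{E}[S_n^2]=\var(X)\le\sigma^2$ can be made to cover $\epsilon\le 64\sigma^2/n$, but you do not supply it, and no analogous one-liner covers the lower tail or the intermediate range $\sigma^2/n\lesssim\epsilon\lesssim\sigma^2/\sqrt n$.)

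The paper avoids all of this by never splitting the probability. It writes $S_n^2=\frac{1}{n-1}\sum_i(X_i-\mu)^2-\frac{n}{n-1}(\bar X-\mu)^2$, shows each piece is sub-Exponential, and combines the \emph{parameters} (not the tail bounds): sub-additivity of sub-Exponential parameters holds for dependent summands, yielding $S_n^2\sim\mathrm{SE}\bigl(32\sigma^4(\frac{\sqrt n+1}{n-1})^2,\frac{4\sigma^2}{n-1}\bigr)$. A single application of the one-sided sub-Exponential tail bound (leading constant $1$) then gives the result for all $\epsilon>0$ simultaneously; the hypothesis $n\ge 12$ is used only to clean up via $(\frac{\sqrt n+1}{n-1})^2\le\frac 2n$ and $\frac{4}{n-1}\le\frac 8n$, not to create slack for absorbing union-bound constants. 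If you want to keep your decomposition, you would need to either (i) prove a weaker statement with a leading constant larger than $1$, or (ii) redo the MGF computation at the level of $S_n^2$ itself — which is exactly the paper's route.
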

\begin{proof}
	We prove the former inequality here; the latter can be derived analogously.
	According to Honorio and Jaakkola~\cite[Appendix~B]{honorio2014}, for any $\sigma$-sub-Gaussian random variable $X$,
	\begin{equation}\label{Honorio}
		\mathbb{E}\left[\exp\left(t(X^2-\mathbb{E}[X^2])\right)\right]\leq\exp\left(16t^2\sigma^4\right),\quad\forall\, |t|\leq \frac{1}{4\sigma^2},
	\end{equation}
	which indicates that $X^2-\mathbb{E}[X^2] $ is sub-Exponential. More precisely,  $X^2-\mathbb{E}[X^2]\sim \mathrm{SE}(32\sigma^4,4\sigma^2) $.
	
	The sample variance can be reorganized as 
	\begin{align} 
		S_n^2&=\frac{1}{n-1}\sum_{i=1}^n(X_i-\bar{X})^2 =\frac{1}{n-1}\sum_{i=1}^n X_i^2-\frac{n}{n-1}\bar{X}^2
		\\
		&=\frac{1}{n-1}\sum_{i=1}^n (X_i-\mu)^2-\frac{n}{n-1}(\bar{X}-\mu)^2.
	\end{align} 
	By the properties of sub-Gaussian and sub-Exponential random variables (see Duchi~\cite[Chapter~3]{duchi}), 
	\begin{align}
		&X_i-\mu\sim\sigma\mbox{-sub-Gaussian}\\
		\Longrightarrow\qquad &\frac{X_i-\mu}{\sqrt{n-1}}\sim\frac{\sigma}{\sqrt{n-1}}\mbox{-sub-Gaussian}\\
		\Longrightarrow \qquad&\frac{(X_i-\mu)^2}{n-1}\sim \mathrm{SE} \left(32\frac{\sigma^4}{(n-1)^2},4\frac{\sigma^2}{n-1}\right)\\
		\Longrightarrow \qquad &S_n^2\sim \mathrm{SE}\left(32\frac{n\sigma^4}{(n-1)^2},4\frac{\sigma^2}{n-1}\right)
	\end{align} 
	where the last implication utilizes the independence of $ (X_i-\mu)$ across $i\in[n] $. Likewise, 
	\begin{align}
		&\sum_{i=1}^n X_i-n\mu\sim\sqrt{n}\sigma\mbox{-sub-Gaussian}\\
		\Longrightarrow \qquad &\bar{X}-\mu\sim\frac{\sigma}{\sqrt{n}}\mbox{-sub-Gaussian}\\
		\Longrightarrow \qquad &\sqrt{\frac{n}{n-1}}(\bar{X}-\mu)\sim\frac{\sigma}{\sqrt{n-1}}\mbox{-sub-Gaussian}\\
		\Longrightarrow \qquad &\frac{n}{n-1}(\bar{X}-\mu)^2\sim \mathrm{SE}\left(32\frac{\sigma^4}{(n-1)^2},4\frac{\sigma^2}{n-1}\right)
	\end{align}
	Therefore, $S_n^2\sim \mathrm{SE}\left(32\sigma^4(\frac{\sqrt{n}+1}{n-1})^2,\frac{4\sigma^2}{n-1}\right)$. According to the concentration property of the sub-Exponential random variables presented in~\cite[Corollary 3.17]{duchi}, we have 
	\begin{align} 
		&\mathbb{P}[S_n^2-\var(X)\geq\epsilon]\\
		&\leq \exp\left(-\frac{1}{2}\min\left\{\frac{\epsilon^2}{32\sigma^4(\frac{\sqrt{n}+1}{n-1})^2},\frac{\epsilon}{\frac{4\sigma^2}{n-1}}\right\}\right).
	\end{align} 
	When $ n\geq 12 $, we have $(\frac{\sqrt{n}+1}{n-1})^2\leq \frac{2}{n}  $ and $ \frac{4\sigma^2}{n-1}\leq \frac{8\sigma^2}{n} $. Hence.
	$$\mathbb{P}[S_n^2-\var(X)\geq\epsilon]\leq \exp\left(-\frac{1}{2}\min\left\{\frac{n\epsilon^2}{64\sigma^4},\frac{n\epsilon}{8\sigma^2}\right\}\right)$$ as desired. 
\end{proof}

For $\sigma$-sub-Gaussian distributions, we have the following concentration inequalities for the mean in corresponds to Lemma~\ref{equ:concentration_inequ}:
\begin{align}\label{equ:subgaussian}
	&\mathbb{P}[\hat{\mu}_i(t)-\mu_i\geq\epsilon]\leq \exp\left(-\frac{T_i(t)\epsilon^2}{2\sigma^2}\right),\\
	&\mathbb{P}[\hat{\mu}_i(t)-\mu_i\leq-\epsilon]\leq \exp\left(-\frac{T_i(t)\epsilon^2}{2\sigma^2} \right).
\end{align}
In order to get a tightness result, we force the confidence radius for the variance $\beta$ to be no greater than $c\sigma^2/8$ through out the algorithm (Line $2$ and Lines $27$ and 28 of Algorithm~\ref{VALUCB_subgaussian}), i.e., $ \beta(t,T_i(t))\leq {c\sigma^2}/{8} $ which is equivalent to $ T_i(t) \geq\frac{128}{c}\ln\frac{kNt^4}{\delta}$. Thus, \eqref{equ:subgaussian_variance} simplifies to
\begin{align}\label{equ:subgaussian_variance1}
	&\mathbb{P}[\hat{\sigma}_i^2(t)-\sigma_i^2\geq\epsilon]\leq \exp\left(-\frac{T_i(t)\epsilon^2}{2c\sigma^4}\right),\\
	&\mathbb{P}[\hat{\sigma}_i^2(t)-\sigma_i^2\leq-\epsilon]\leq \exp\left(-\frac{T_i(t)\epsilon^2}{2c\sigma^4}\right).
\end{align}

We are now ready to present the intuitions for Algorithm~\ref{VALUCB_subgaussian}. In the warm-up procedure (Line $2$),
\begin{equation}\label{equ:T0}
	T_0:=\min\left\{t\in\mathbb{N}:t\geq\frac{128}{c}\ln\frac{kNt^4}{\delta}\right\}
\end{equation} and all arms are sampled at each of the $T_0$ time steps. After the warm-up, $ t=T_0+1 $ and $ \beta(t,T_i(t))\leq \frac{c\sigma^2}{8} $ so that \eqref{equ:subgaussian_variance1} holds for all arms. The intuitions for Line $ 3 $ to Line $ 26 $ are the same as Algorithm~\ref{VALUCB}. The only difference here is Lines $ 27 $ and $28$. The definition of $ \mathcal{M}_t $ guarantees each of the arms will be pulled at most once at each time step.
\begin{lem}
	When Algorithm~\ref{VALUCB_subgaussian} has not terminated, for any time step $ t>T_0 $ and arms $ i\in\bar{\mathcal{F}}_{t-1} $, $$ \beta(t,T_i(t)) \leq\frac{c\sigma^2}{8}.$$ 
\end{lem}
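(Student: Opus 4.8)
The plan is to translate the target inequality into a lower bound on the number of pulls, and then prove that bound by induction on the time step. Squaring and rearranging the definition of $\beta$ in \eqref{equ:confidence_radius_subgaussian}, the inequality $\beta(t,T)\le c\sigma^2/8$ is equivalent to $T\ge\tfrac{128}{c}\ln\tfrac{kNt^{4}}{\delta}$; writing $g(t):=\tfrac{128}{c}\ln\tfrac{kNt^{4}}{\delta}$, it therefore suffices to prove that for every $t>T_0$ and every $i\in\bar{\mathcal{F}}_{t-1}$ one has $T_i(t)\ge g(t)$.

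First I would dispose of the base case $t=T_0+1$. After the warm-up phase (Line~2 of Algorithm~\ref{VALUCB_subgaussian}) every arm has been pulled exactly $T_0$ times, so $T_i(T_0+1)=T_0$ for all $i\in[N]=\bar{\mathcal{F}}_{T_0}$, while $T_0\ge g(T_0)$ holds by the defining property of $T_0$ in \eqref{equ:T0}. Since $g(T_0+1)-g(T_0)=8\ln(1+1/T_0)<1$ once $T_0\ge 8$ (and $T_0$ is in any case large, certainly $\ge 12$, as already required by the sub-Gaussian variance concentration bound of Lemma~\ref{lem:cv_sub}), this also yields $T_0\ge g(T_0+1)$.

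For the inductive step, assume $T_i(t)\ge g(t)$ for all $i\in\bar{\mathcal{F}}_{t-1}$ and fix $i\in\bar{\mathcal{F}}_t$. Because arms are never revived once dropped (arms outside $\bar{\mathcal{F}}_{t-1}$ are not re-examined), $\bar{\mathcal{F}}_t\subseteq\bar{\mathcal{F}}_{t-1}$, so $i\in\bar{\mathcal{F}}_{t-1}$ and the hypothesis gives $T_i(t)\ge g(t)$. Now examine step $t$. If $i\notin\{i_t,c_t\}$ and $i\notin\mathcal{M}_t$, then the defining condition of $\mathcal{M}_t$ forces $\beta(t+1,T_i(t))\le\sigma^2\le c\sigma^2/8$; since $i$ is not pulled at all at step $t$ in this case, $T_i(t+1)=T_i(t)$ and hence $\beta(t+1,T_i(t+1))\le c\sigma^2/8$, which is even stronger than needed. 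Otherwise $i$ is pulled exactly once at step $t$ (as $i_t$ or $c_t$ in the main phase, or via $\mathcal{M}_t$), so $T_i(t+1)=T_i(t)+1\ge g(t)+1\ge g(t+1)$, the last step again because $g(t+1)-g(t)=8\ln(1+1/t)<1$ for $t>T_0$. This closes the induction and proves the lemma.

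I expect the only delicate point to be the coupling between the $t^{4}$ inside the logarithm of $\beta$ — which steadily inflates the pull count needed to guarantee a given radius — and the fact that the forced-sampling set $\mathcal{M}_t$ only secures one extra pull per time step for the undersampled surviving arms. This is exactly why the induction must carry the running bound $T_i(t)\ge g(t)$ rather than a static count, and why the constants ($c=64$, and the factor $\tfrac{128}{c}$ in \eqref{equ:T0}) together with the lower bound on $T_0$ are chosen with enough slack to absorb the gap between $g(t)$ and $g(t+1)$ at every step, including the first; everything else is routine.
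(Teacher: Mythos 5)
Your proof is correct and follows essentially the same route as the paper's: induction on $t$, with the warm-up length $T_0$ supplying the base case, the bound $g(t+1)-g(t)=\tfrac{512}{c}\ln(1+1/t)<1$ (the paper's condition $1\ge 512/(ct)$) handling arms pulled at step $t$, and the defining condition of $\mathcal{M}_t$ handling arms that are not pulled. One small nitpick: your second branch should be keyed on ``$i$ is pulled at step $t$'' rather than ``$i\in\{i_t,c_t\}$ or $i\in\mathcal{M}_t$'', since $c_t$ is not sampled when $U_{c_t}^\mu(t)<L_{i_t}^\mu(t)$; but in that sub-case $i\notin\mathcal{M}_t$ again yields $\beta(t+1,T_i(t))\le\sigma^2\le c\sigma^2/8$, so the conclusion is unaffected.
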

\begin{proof}
	We prove this lemma by induction. When $ t=T_0+1 $, by the choice of $ T_0 $, the lemma holds.
	
	Assume that for some $ t>T_0 $, the lemma holds, i.e. for arm $ i\in\bar{\mathcal{F}}_{t-1} $, $ \beta(t,T_i(t)) \leq\frac{c\sigma^2}{8}.$ Conditioned on event $ E $, $ \bar{\mathcal{F}}_t\subset\bar{\mathcal{F}}_{t-1} $.
	If arm $ i \in \bar{\mathcal{F}}_t$ is pulled at time step $ t $, we have $ T_i(t+1)=T_i(t)+1 $. Thus
	\begin{align} 
		&\beta(t+1,T_i(t+1))\leq \frac{c\sigma^2}{8}\\
		\Longleftrightarrow \;& T_i(t+1)\geq \frac{128}{c}\ln\frac{kN(t+1)^4}{\delta}\\
		\Longleftrightarrow \;&
		T_i(t)+1\geq  \frac{128}{c}\ln\frac{kNt^4}{\delta}+4\frac{128}{c}\ln\frac{t+1}{t}. \label{eqn:Leftrightarrows}
	\end{align}
	We now see that if $1\geq\frac{512}{ct}$ holds, then \eqref{eqn:Leftrightarrows} also holds trivially. 
	Consequently, if arm $ i \in \bar{\mathcal{F}}_t$ is not pulled at time step $ t $,  we must have $ T_i(t+1)=T_i(t) $ and $ \beta(t+1,T_i(t+1))\leq\frac{c\sigma^2}{8} $.
	Therefore, the lemma holds for $ t+1 $.
	
	By induction, the lemma holds.
\end{proof}
The above lemma guarantees we can always adopt \eqref{equ:subgaussian_variance1} for all arms in $ \bar{\mathcal{F}}_{t} $ after the warm-up procedure.
\subsection{Analysis}
Define the events 
\begin{align}
	E_i^\mu(t) &:=\{|\hat{\mu}_i(t)-\mu_i|\le\alpha(t,T_i(t))\}, \\
	E_i^{\mathrm{v}}(t)&:=\{ |\hat{\sigma}_i^2(t)-\sigma_i^2|\le\beta(t,T_i(t))\} ,\\
	E_i(t) &:=E_i^\mu(t)\bigcap E_i^{\mathrm{v}}(t),\quad \forall\, i\in[N] . 
\end{align}
For $t> T_0$, define
\begin{equation}\label{E_sub}
	E(t):=\bigcap_{i\in[N]} E_i(t)  \quad \mbox{and}\quad  E:=\bigcap_{t> T_0} E(t) .
\end{equation}

\begin{lem}[Analogue of  Lemma~\ref{Ehappens}]\label{Ehappens_sub}
	Define $E$ as in \eqref{E_sub} and $\alpha(t,T)$ and $\beta(t,T)$ as in \eqref{equ:confidence_radius_subgaussian}, then $E$ occurs with probability at least $1- {\delta}/{k}$.
\end{lem}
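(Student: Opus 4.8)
The plan is to follow the proof of Lemma~\ref{Ehappens} essentially line for line, substituting the sub-Gaussian tail bound \eqref{equ:subgaussian} for the mean and the conditional sub-Exponential tail bound \eqref{equ:subgaussian_variance1} for the variance in place of the bounded-reward inequalities of Lemma~\ref{equ:concentration_inequ}, and then taking a union bound over arms, time steps, and possible pull counts. Write $E^c=\bigcup_{t>T_0}\bigcup_{i\in[N]}\bigl((E_i^\mu(t))^c\cup(E_i^{\mathrm{v}}(t))^c\bigr)$, so that $\mathbb{P}[E^c]\le\sum_{i\in[N]}\sum_{t>T_0}\bigl(\mathbb{P}[(E_i^\mu(t))^c]+\mathbb{P}[(E_i^{\mathrm{v}}(t))^c]\bigr)$. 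Fix an arm $i$ and a time step $t$. Since each arm is pulled exactly $T_0$ times in the warm-up and at most once per subsequent time step (the set $\mathcal{M}_t$ excludes arms already sampled at that step), $T_i(t)\in\{T_0,\dots,t-1\}$, i.e.\ at most $t-T_0\le t$ possible values. Using the same union-bound argument as in Lemma~\ref{Ehappens} with \eqref{equ:subgaussian} and the choice of $\alpha$ in \eqref{equ:confidence_radius_subgaussian}, and noting $T\alpha(t,T)^2/(2\sigma^2)=\ln(kNt^4/\delta)$, one gets $\mathbb{P}[(E_i^\mu(t))^c]\le\sum_{T=T_0}^{t-1}2e^{-T\alpha(t,T)^2/(2\sigma^2)}=(t-T_0)\tfrac{2\delta}{kNt^4}\le\tfrac{2\delta}{kNt^3}$.

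For the variance term the crucial point is that the warm-up length $T_0$ of \eqref{equ:T0} together with the forced-sampling step (Line~2 and Lines~27--28 of Algorithm~\ref{VALUCB_subgaussian}) keeps every still-possibly-feasible arm in the regime $T_i(t)\ge\tfrac{128}{c}\ln(kNt^4/\delta)$, equivalently $\beta(t,T_i(t))\le c\sigma^2/8$, as established in the lemma stated just above. With $c=64$ this threshold is exactly $T_i(t)\ge 2\ln(kNt^4/\delta)$, which is precisely the condition under which the minimum in Lemma~\ref{lem:cv_sub} is attained by its quadratic branch, so \eqref{equ:subgaussian_variance1} is valid there; substituting $\epsilon=\beta(t,T)$ and repeating the union over $T$ yields $\mathbb{P}[(E_i^{\mathrm{v}}(t))^c]\le\tfrac{2\delta}{kNt^3}$ in the same way. (Arms that have already left $\bar{\mathcal{F}}_t$ retain the confidence interval they held at removal, whose radius is still $\le c\sigma^2/8$, so the same bound applies; one checks $T_0\ge 12$ so that Lemma~\ref{lem:cv_sub} can be invoked, which holds since $N\ge 2$ and $\delta<1$.)

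Summing, $\mathbb{P}[E^c]\le\sum_{i\in[N]}\sum_{t>T_0}\tfrac{4\delta}{kNt^3}=\tfrac{4\delta}{k}\sum_{t>T_0}t^{-3}\le\tfrac{4\delta}{k}\cdot\tfrac{1}{2T_0^2}\le\tfrac{\delta}{k}$, the last step using that $T_0$ is a sizeable absolute constant so $2/T_0^2\le 1$ (in fact the bound is far below $\delta/k$, and enlarging the free constant $k$ only improves matters). The main obstacle is the $\min\{\text{quadratic},\text{linear}\}$ shape of the sub-Exponential tail for the sample variance in Lemma~\ref{lem:cv_sub}: unlike Hoeffding it is Gaussian-like only after enough samples have been collected, so the whole argument rests on the warm-up plus forced-sampling mechanism --- and on the deliberate alignment $128/c=2$ between the sampling threshold and the quadratic-regime threshold of Lemma~\ref{lem:cv_sub} --- to keep every per-event failure probability at the clean value $\delta/(kNt^4)$. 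A secondary, purely bookkeeping point is pinning down which confidence interval is used for arms already discarded from $\bar{\mathcal{F}}_t$.
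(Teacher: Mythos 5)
Your proposal is correct and follows essentially the same route as the paper's proof: a union bound over arms, time steps $t>T_0$, and possible pull counts, applying \eqref{equ:subgaussian} for the mean and the simplified sub-Exponential bound \eqref{equ:subgaussian_variance1} for the variance, the latter being licensed by the warm-up/forced-sampling guarantee $\beta(t,T_i(t))\le c\sigma^2/8$ that pins Lemma~\ref{lem:cv_sub} to its quadratic branch. Your bookkeeping (summing $T$ from $T_0$ and bounding $\sum_{t>T_0}t^{-3}$ by $1/(2T_0^2)$) differs only cosmetically from the paper's per-term bound of $\delta/(4kN)$, and your explicit attention to arms already removed from $\bar{\mathcal{F}}_t$ is if anything more careful than the paper's own one-line argument.
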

\begin{proof}
	Note that our choice of of confidence radii satisfies
	\begin{align}
		&\sum_{t=T_0+1}^{\infty} \sum_{T=1}^{t-1} \exp \left(- \frac{T\alpha(t,T)^{2}}{2\sigma^2}\right)\\
		&\quad\leq\sum_{t=2}^{\infty} \sum_{T=1}^{t-1} \exp \left(- \frac{T\alpha(t,T)^{2}}{2\sigma^2}\right) \leq \frac{\delta}{4kN}, \quad\mbox{and}\\ 
		&\sum_{t=T_0+1}^{\infty} \sum_{T=1}^{t-1} \exp \left(- \frac{T \beta(t,T)^{2}}{2c\sigma^4}\right)\\
		&\quad\leq \sum_{t=2}^{\infty} \sum_{T=1}^{t-1} \exp \left(- \frac{T \beta(t,T)^{2}}{2c\sigma^4}\right) \leq \frac{\delta}{4kN}.
	\end{align} 
	By \eqref{equ:subgaussian}, \eqref{equ:subgaussian_variance1}, and Lemma~\ref{Ehappens_sub}, we conclude that event  $ E $ occurs with probability at least $ 1-\frac{\delta}{k}. $
\end{proof}

Lemma~\ref{ioutisistar} and \ref{useful} still hold for the sub-Gaussian case, since both of them are only established on the confidence bounds. Lemma~\ref{sufcon} and \ref{stop} need to be modified.  

Given a number $ t $ large enough and arm $ i $, define $ u_i(t) $ and $ v_i(t) $ as the smallest numbers of arm pulls such that $\alpha(t,u_i(t))\leq \Delta_i/2 $ and $ \beta(t,v_i(t))\leq \Delta_i^\mathrm{v}$, i.e. 
$$ u_i(t)=\left\lceil \frac{2\sigma^2}{(\frac{\Delta_i}{2})^2}\ln\frac{kNt^4}{\delta}\right\rceil\;\;\mbox{and}\;\;
v_i(t)=\left\lceil \frac{2c\sigma^4}{(\Delta_i^v)^2}\ln\frac{kNt^4}{\delta}\right\rceil.$$
\begin{lem}[Analogue of Lemma~\ref{sufcon}]\label{sufcon_sub}
	Using Algorithm~\ref{VALUCB_subgaussian}, then 
	1) for $ i^\star $,
	\begin{equation*}
		\mathbb{P}[T_{i^\star}(t) > 4u_{i^\star}(t), {i^\star}\!\notin\! \mathcal{R}_t]\le \frac{16\sigma^2\delta}{kN\Delta_{i^\star}^2 t^4}  =:  A_1(i^\star)
	\end{equation*}
	2) for any suboptimal arm $ i \in\mathcal{S}$,
	\begin{equation*}
		\mathbb{P}[T_i(t)>4u_i(t), i\notin \mathcal{S}_t]\leq\frac{16\sigma^2\delta}{kN\Delta_{i}^2 t^4}=:A_2(i)
	\end{equation*}
	3) for any feasible arm $ i\in\mathcal{F} $, 
	\begin{equation*}
		\mathbb{P}[T_i(t)>4v_i(t),i\notin\mathcal{F}_t]\leq\frac{4c\sigma^4\delta}{kN(\Delta_{i}^{\mathrm{v}})^2 t^4}=:A_3(i)			
	\end{equation*}
	4)	for any infeasible arm $ i\in\bar{\mathcal{F}}^c $, 
	\begin{equation*}
		\mathbb{P}[T_i(t)>4v_i(t),i\notin\bar{\mathcal{F}}_t^c]\leq\frac{4c\sigma^4\delta}{kN(\Delta_{i}^{\mathrm{v}})^2 t^4}=:A_4(i)
	\end{equation*}
\end{lem}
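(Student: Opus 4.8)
The plan is to mirror the proof of Lemma~\ref{sufcon} essentially line by line, substituting the sub-Gaussian tail bound \eqref{equ:subgaussian} for the mean and \eqref{equ:subgaussian_variance1} for the variance in place of Hoeffding's and McDiarmid's inequalities, and using the re-defined confidence radii of \eqref{equ:confidence_radius_subgaussian} together with the re-defined $u_i(t)$ and $v_i(t)$. Two structural facts make the translation go through: first, $u_i(t)\ge\frac{8\sigma^2}{\Delta_i^2}\ln\frac{kNt^4}{\delta}$ and $v_i(t)\ge\frac{2c\sigma^4}{(\Delta_i^{\mathrm{v}})^2}\ln\frac{kNt^4}{\delta}$, so that $\alpha(t,u_i(t))\le\Delta_i/2$ and $\beta(t,v_i(t))\le\Delta_i^{\mathrm{v}}$; second, the warm-up and forced-sampling steps of Algorithm~\ref{VALUCB_subgaussian}, via the preceding lemma, keep $\beta(t,T_i(t))\le c\sigma^2/8$ for every arm in $\bar{\mathcal{F}}_{t-1}$, so that the clean exponential bound \eqref{equ:subgaussian_variance1} (rather than the $\min$-form \eqref{equ:subgaussian_variance}) is in force throughout the relevant range of $T_i(t)$.

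For the mean-gap statements (items 1 and 2), fix a suboptimal arm $i$; the case $i=i^\star$ is identical with $\mathcal{S}_t$ replaced by $\mathcal{R}_t$ and the upper tail of \eqref{equ:subgaussian} replaced by the lower tail. On $\{i\notin\mathcal{S}_t\}$ one has $U_i^\mu(t)\ge\bar\mu$, hence $\hat\mu_i(t)-\mu_i\ge(\bar\mu-\mu_i)-\alpha(t,T_i(t))\ge\frac{\Delta_i}{2}-\alpha(t,T_i(t))$; conditioning on $T_i(t)=T$ and invoking \eqref{equ:subgaussian}, this has probability at most $\exp(-\frac{T}{2\sigma^2}(\frac{\Delta_i}{2}-\alpha(t,T))^2)$. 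Expanding $\alpha(t,T)=\sqrt{2\sigma^2T^{-1}\ln\frac{kNt^4}{\delta}}$ and using $\ln\frac{kNt^4}{\delta}\le\frac{\Delta_i^2}{8\sigma^2}u_i(t)$ shows the exponent is at least $\frac{\Delta_i^2}{8\sigma^2}(\sqrt{T}-\sqrt{u_i(t)})^2$ for all $T>u_i(t)$. Summing over $T>4u_i(t)$, dominating the sum by the corresponding integral (the integrand is convex and decreasing on $(4u_i(t),\infty)$ for exactly the reason verified in Lemma~\ref{integral}), and applying a trivial rescaling of Lemma~\ref{integral} with $a=\frac{\Delta_i^2}{16\sigma^2}$ and $v=u_i(t)$ gives a bound of $\frac{16\sigma^2}{\Delta_i^2}\exp(-\frac{\Delta_i^2}{8\sigma^2}u_i(t))\le\frac{16\sigma^2}{\Delta_i^2}\cdot\frac{\delta}{kNt^4}=A_2(i)$; running the same chain with $\Delta_{i^\star}$ yields $A_1(i^\star)$.

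For the variance-gap statements (items 3 and 4), fix a feasible arm $i$; the infeasible case is symmetric, using $\{i\notin\bar{\mathcal{F}}_t^c\}\Rightarrow L_i^{\mathrm{v}}(t)\le\bar\sigma^2$ and the lower tail of \eqref{equ:subgaussian_variance1}. On $\{i\notin\mathcal{F}_t\}$ one has $U_i^{\mathrm{v}}(t)>\bar\sigma^2$, so $\hat\sigma_i^2(t)-\sigma_i^2>(\bar\sigma^2-\sigma_i^2)-\beta(t,T_i(t))=\Delta_i^{\mathrm{v}}-\beta(t,T_i(t))$; since $\bar\sigma^2<\sigma^2$ and a $\sigma$-sub-Gaussian law has variance at most $\sigma^2$, every $\Delta_i^{\mathrm{v}}$ lies below $\sigma^2$, comfortably inside the range where \eqref{equ:subgaussian_variance1} applies. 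Conditioning on $T_i(t)=T>4v_i(t)$, \eqref{equ:subgaussian_variance1} bounds this probability by $\exp(-\frac{T}{2c\sigma^4}(\Delta_i^{\mathrm{v}}-\beta(t,T))^2)$, whose exponent --- using $\ln\frac{kNt^4}{\delta}\le\frac{(\Delta_i^{\mathrm{v}})^2}{2c\sigma^4}v_i(t)$ --- is at least $\frac{(\Delta_i^{\mathrm{v}})^2}{2c\sigma^4}(\sqrt{T}-\sqrt{v_i(t)})^2$ for $T>v_i(t)$. Summing, passing to the integral, and invoking Lemma~\ref{integral} with $a=\frac{(\Delta_i^{\mathrm{v}})^2}{4c\sigma^4}$ and $v=v_i(t)$ gives $\frac{4c\sigma^4}{(\Delta_i^{\mathrm{v}})^2}\exp(-\frac{(\Delta_i^{\mathrm{v}})^2}{2c\sigma^4}v_i(t))\le\frac{4c\sigma^4}{(\Delta_i^{\mathrm{v}})^2}\cdot\frac{\delta}{kNt^4}=A_3(i)=A_4(i)$.

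The one genuinely new point relative to Lemma~\ref{sufcon} --- and the only place the argument can go wrong --- is ensuring that the clean sub-Exponential tail \eqref{equ:subgaussian_variance1} is legitimately in force at every $T_i(t)$ appearing in the union bound: this needs $T_i(t)\ge\frac{128}{c}\ln\frac{kNt^4}{\delta}$ (the warm-up/forced-sampling invariant of Algorithm~\ref{VALUCB_subgaussian}) and $n\ge12$ (the hypothesis of Lemma~\ref{lem:cv_sub}, again from the warm-up). Everything else --- the convexity/monotonicity checks that let each sum be dominated by its integral, the elementary rescaling of Lemma~\ref{integral} for the coefficients above, and the harmless handling of the ceiling operators and of the degenerate cases $\mathcal{S}=\emptyset$ or $\mathcal{F}=\emptyset$ through the usual conventions --- is routine and carries over unchanged.
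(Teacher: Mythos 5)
Your proposal is correct and follows essentially the same route as the paper's proof: replicate the chain of inequalities from Lemma~\ref{sufcon} with the sub-Gaussian mean bound \eqref{equ:subgaussian} and the simplified sub-Exponential variance bound \eqref{equ:subgaussian_variance1}, pass from the sum over $T$ to the integral via the convexity/monotonicity argument of Lemma~\ref{integral} (rescaled), and use $u_i(t),v_i(t)$ to absorb the final exponential into $\delta/(kNt^4)$; your constants ($a=\Delta_i^2/(16\sigma^2)$ and $a=(\Delta_i^{\mathrm{v}})^2/(4c\sigma^4)$) reproduce $A_1$--$A_4$ exactly. The one point you flag as delicate --- that \eqref{equ:subgaussian_variance1} is legitimately in force because $\Delta_i^{\mathrm{v}}\le\sigma^2$ and $T>4v_i(t)$ exceeds the threshold guaranteed by the warm-up/forced-sampling invariant --- is the same justification the paper gives in its one-line remark before the variance computation.
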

\begin{proof}
	For a suboptimal arm $ i $, note that $ \Delta_i/2\leq\bar{\mu}-\mu_i \leq \Delta_i$.
	\begin{align*}\label{equ1_sub}
		&\mathbb{P}[T_i(t)>4u_i(t),i\in \mathcal{N}_t]\\
		&\leq\mathbb{P}[T_i(t)>4u_i(t), i\notin \mathcal{S}_t]\\
		&=\mathbb{P}[T_i(t)>4u_i(t), U_i^\mu(t)\geq\bar{\mu}]\\
		&\leq\sum_{T=4u_i(t)+1}^\infty \mathbb{P}[T_i(t)=T,\hat{\mu}_i(t)-\mu_i\geq\bar{\mu}-\alpha(t,T)-\mu_i]\\
		&\leq\sum_{T=4u_i(t)+1}^\infty \exp\left(-\frac{T}{2\sigma^2}\left(\bar{\mu}-\alpha(t,T)-\mu_i\right)^2\right)\\
		&\leq \sum_{T=4u_i(t)+1}^\infty \exp\left(-\frac{T}{2\sigma^2}\left(\frac{\Delta_i}{2}-\sqrt{\frac{2\sigma^2}{ T} \ln \left(\frac{kN t^{4}}{\delta}\right)}\right)^2\right)\\
		&\leq\sum_{T=4u_i(t)+1}^\infty \exp\left(-\frac{1}{2\sigma^2}\Big(\frac{\Delta_i}{2}\Big)^2\left(\sqrt{T}-\sqrt{u_i(t)}\right)^2\right)\\
		&\leq\int_{4u_i(t)}^\infty\exp\left(-\frac{1}{2\sigma^2}\Big(\frac{\Delta_i}{2}\Big)^2\left(\sqrt{x}-\sqrt{u_i(t)}\right)^2\right)\,\mathrm{d}x\\
		&\leq\frac{4\sigma^2\delta}{(\frac{\Delta_{i}}{2})^2 kNt^4}
	\end{align*}
	The third inequality results from \eqref{equ:subgaussian} and the last two inequalities can be derived using similar techniques in Lemma~\ref{integral}. Similarly, for arm $ i^\star $, 
	\begin{align*}
		\mathbb{P}[T_{i^\star}(t)>4u_{i^\star}(t),i^\star\in \mathcal{N}_t]&\leq\mathbb{P}[T_{i^\star}(t)>4u_{i^\star}(t), {i^\star}\notin \mathcal{R}_t]\\
		&\leq\frac{4\sigma^2\delta}{(\frac{\Delta_{i^\star}}{2})^2 kNt^4}
	\end{align*}
	Note that when $T>4v_i(t)>v_i(t)> 2\ln(kNt^4/\delta)$, \eqref{equ:subgaussian_variance1} can be utilized. For arms $ i\in\mathcal{F} $,
	\begin{align*}
		&\mathbb{P}[T_i(t)>4v_i(t),i\in \partial\mathcal{F}_t]\\
		&\leq\mathbb{P}[T_i(t)>4v_i(t), i\notin \mathcal{F}_t]\\
		&=\mathbb{P}[T_i(t)>4v_i(t), U_i^\mathrm{v}(t)\geq\bar{\sigma}^2]\\
		&\leq\sum_{T=4v_i(t)+1}^\infty \mathbb{P}[T_i(t)=T,\hat{\sigma}_i^2(t)-\sigma_i^2>\bar{\sigma}^2-\sigma_i^2-\beta(t,T)]\\
		&\leq\sum_{T=4v_i(t)+1}^\infty \exp\left(-\frac{T}{2c\sigma^4}\left(\Delta_i^\mathrm{v}-\beta(t,T_i(t))\right)^2\right)\\
		&\leq \sum_{T=4v_i(t)+1}^\infty \exp\left(-\frac{T}{2c\sigma^4}\left(\Delta_i^\mathrm{v}-\sqrt{\frac{\sigma^4}{ T} \ln \left(\frac{kN t^{4}}{\delta}\right)}\right)^2\right)\\
		&\leq \sum_{T=4v_i(t)+1}^\infty \exp\left(-\frac{(\Delta_i^\mathrm{v})^2}{2c\sigma^4}\left(\sqrt{T}-\sqrt{v_i(t )}\right)^2\right)\\
		&\leq\int_{4v_i(t)}^\infty\exp\left(-2\frac{(\Delta_i^\mathrm{v})^2}{4c\sigma^4}\left(\sqrt{x}-\sqrt{v_i(t)}\right)^2\right)\,\mathrm{d}x\\
		&\leq\frac{4c\sigma^4\delta}{(\Delta_{i}^\mathrm{v})^2 kNt^4}
	\end{align*}
	The third inequality results from \eqref{equ:subgaussian_variance1} and the last two inequalities can again be derived using similar techniques in Lemma~\ref{integral}. Similarly, for arm $ i\in\bar{\mathcal{F}}^c $, 
	\begin{align*}
		\mathbb{P}[T_i(t)>4v_i(t),i\in\partial\mathcal{F}_t]&\leq\mathbb{P}[T_i(t)>4v_i(t), i\notin\bar{\mathcal{F}}_t^c]\\
		&\leq\frac{4c\sigma^4\delta}{(\Delta_{i}^\mathrm{v})^2 kNt^4}
	\end{align*}
	This completes the proof.
\end{proof}

\begin{lem}[Analogue of Lemma~\ref{stop}]\label{stop_sub}
	Given an instance $(\nu,\bar{\sigma}^2)$, there exists a constant $ C_{k,N} $ and $$ t^\star = \bigg\lceil C_{k,N}H_{\mathrm{VA},N}^{(\sigma)}\ln\frac{H_{\mathrm{VA},N}^{(\sigma)} }{\delta} \bigg\rceil,$$ such that
	at any time step $ t>t^\star$, the probability that Algorithm~\ref{VALUCB_subgaussian} does not terminate is at most $\frac{22\delta}{kt^2}$.
\end{lem}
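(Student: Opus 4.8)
The plan is to mirror the proof of Lemma~\ref{stop} essentially line by line, with the bounded-reward confidence radii, the quantities $u_i(t),v_i(t)$, and the hardness parameter $H_\mathrm{VA}$ replaced by their $\sigma$-sub-Gaussian counterparts (the $u_i,v_i$ introduced before Lemma~\ref{sufcon_sub}, and $H_{\mathrm{VA},N}^{(\sigma)}$), and with the two new features of Algorithm~\ref{VALUCB_subgaussian} — the $T_0$-step warm-up and the forced-sampling step (Lines~27--28) — shown to be harmless. Concretely, I would fix a large integer $t$, set $\mathcal{T}:=\{\lceil t/2\rceil,\dots,t-1\}$, and define events $\mathcal{I}_1,\dots,\mathcal{I}_4$ exactly as in the proof of Lemma~\ref{stop} but with $u_i,v_i$ the sub-Gaussian quantities and every multiplicative constant in front of a $u_i(s)$- or $v_i(s)$-threshold equal to $4$ (so, e.g., $\mathcal{I}_1$ asks for some $s\in\mathcal{T}$ with $i^\star\notin\mathcal{F}_s\cap\mathcal{R}_s$ and $T_{i^\star}(s)>\max\{4u_{i^\star}(s),4v_{i^\star}(s)\}$). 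If the algorithm terminates before $\lceil t/2\rceil$ there is nothing to prove; otherwise I argue under the assumption that none of $\mathcal{I}_1,\dots,\mathcal{I}_4$ occurs.

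First I would bound the number of time steps. On $E$, Lemmas~\ref{useful} and~\ref{ioutisistar} remain valid (they depend only on the confidence bounds, as already noted), so each non-terminating time step $s$ has $i_s$ or $c_s$ in $(\partial\mathcal{F}_s\setminus\mathcal{S}_s)\cup(\mathcal{F}_s\cap\mathcal{N}_s)$, and the indicator/telescoping computation from the proof of Lemma~\ref{stop} gives, on the complement of $\bigcup_j\mathcal{I}_j$, that the number of non-terminating time steps in $\mathcal{T}$ is at most
\[
\max\{4u_{i^\star}(t),4v_{i^\star}(t)\}+\!\!\sum_{i\in\mathcal{F}\cap\mathcal{S}}\!\!4u_i(t)+\!\!\sum_{i\in\bar{\mathcal{F}}^c\cap\mathcal{R}}\!\!4v_i(t)+\!\!\sum_{i\in\bar{\mathcal{F}}^c\cap\mathcal{S}}\!\!\min\{4u_i(t),4v_i(t)\}.
\]
The forced-sampling step pulls at most $|\mathcal{M}_s|$ additional arms \emph{within} a time step, hence never creates new time steps and only makes the counters $T_i(s)$ grow faster, so it does not affect this count. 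Substituting the sub-Gaussian formulas for $u_i,v_i$, dropping ceilings at a cost of $+N$, and writing $\ln\frac{kNt^4}{\delta}=\ln\frac{kN}{\delta}+4\ln t$, this quantity is at most $4N+4H_\mathrm{VA}^{(\sigma)}\ln\frac{kNt^4}{\delta}$. Adding the $T_0$ warm-up steps (with $T_0=O(\ln\frac{kN}{\delta})$ by~\eqref{equ:T0}) and $\lceil t/2\rceil-1$, I would set $t^\star=\lceil C_{k,N}H_{\mathrm{VA},N}^{(\sigma)}\ln\frac{H_{\mathrm{VA},N}^{(\sigma)}}{\delta}\rceil$ for a large enough constant $C_{k,N}$ (growing only logarithmically in the absolute constant $k$) and, using $H_\mathrm{VA}^{(\sigma)}\le H_{\mathrm{VA},N}^{(\sigma)}$, $N\le H_{\mathrm{VA},N}^{(\sigma)}$, $\ln\frac1\delta\le\ln\frac{H_{\mathrm{VA},N}^{(\sigma)}}{\delta}$, and $\ln t=O(\ln C_{k,N}+\ln H_{\mathrm{VA},N}^{(\sigma)}+\ln\ln\frac{H_{\mathrm{VA},N}^{(\sigma)}}{\delta})$, verify that the total number of time steps used before any $t\ge t^\star$ is at most $t$. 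Thus, on the complement of $\bigcup_j\mathcal{I}_j$ the algorithm has terminated by $t$; equivalently, if it has not terminated by $t>t^\star$ then $\bigcup_j\mathcal{I}_j$ occurs.

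Next I would bound $\mathbb{P}[\bigcup_j\mathcal{I}_j]$ by a union bound over $s\in\mathcal{T}$ and Lemma~\ref{sufcon_sub}. Writing
\[
\widetilde H^{(\sigma)}:=\frac{2c\sigma^4}{(\Delta_{i^\star}^{\mathrm{v}})^2}+\sum_{i\in\mathcal{F}}\frac{2\sigma^2}{(\Delta_i/2)^2}+\!\!\sum_{i\in\bar{\mathcal{F}}^c\cap\mathcal{R}}\!\!\frac{2c\sigma^4}{(\Delta_i^{\mathrm{v}})^2}+\!\!\sum_{i\in\bar{\mathcal{F}}^c\cap\mathcal{S}}\!\!\min\Big\{\frac{2\sigma^2}{(\Delta_i/2)^2},\,\frac{2c\sigma^4}{(\Delta_i^{\mathrm{v}})^2}\Big\},
\]
each failure probability appearing in Lemma~\ref{sufcon_sub} equals $\frac{2\delta}{kNs^4}$ times the matching summand of $\widetilde H^{(\sigma)}$, so $\mathbb{P}[\bigcup_j\mathcal{I}_j]\le\frac{2\delta\,\widetilde H^{(\sigma)}}{kN}\sum_{s\in\mathcal{T}}s^{-4}$. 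Since $\mathcal{F}\cap\mathcal{R}=\{i^\star\}$ gives $\widetilde H^{(\sigma)}\le 2H_\mathrm{VA}^{(\sigma)}\le 2H_{\mathrm{VA},N}^{(\sigma)}<2t$, and $\sum_{s\in\mathcal{T}}s^{-4}=O(t^{-3})$ while $N\ge 2$, this is at most $\frac{16\delta}{kt^2}$, the desired bound.

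The part I expect to demand the most care is not any single estimate but the bookkeeping around the new ingredients: arguing cleanly that the forced-sampling step cannot spoil the time-step count (it only accelerates classification of the arms), confirming that the warm-up length $T_0$ and the extra $\ln k$ inside the radii are genuinely absorbed into the constant $C_{k,N}$ and into the $\max\{\cdot,N\}$ in $H_{\mathrm{VA},N}^{(\sigma)}$, and tracking the numerical constants so that the final probability comes out exactly $\frac{16\delta}{kt^2}$.
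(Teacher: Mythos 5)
Your proposal follows the paper's own proof essentially step for step: the same window $\mathcal{T}=\{\lceil t/2\rceil,\dots,t-1\}$, the same four bad events with the sub-Gaussian $u_i,v_i$ and uniform constant $4$, the same counting argument via Lemma~\ref{useful} leading to the bound $4N+4H_\mathrm{VA}^{(\sigma)}\ln\frac{kNt^4}{\delta}$, and the same union bound via Lemma~\ref{sufcon_sub} with an auxiliary quantity $\widetilde H^{(\sigma)}\le 2H_{\mathrm{VA},N}^{(\sigma)}$ yielding $\frac{16\delta}{kt^2}$. The only differences are cosmetic (your $\widetilde H^{(\sigma)}$ is an algebraically equivalent rewriting of the paper's, and you make explicit the observations — that forced sampling adds rounds but not time steps, and that $2T_0\le t^\star$ — which the paper defers to the remarks surrounding the lemma), so the proof is correct and takes the same route.
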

\begin{proof}
	Let $t$ be a sufficiently large integer; in particular $t>2T_0$ (which  will be justified after this lemma), and $ \mathcal{T}:=\{\lceil{t}/{2}\rceil,\ldots, t-1\} $. Define events
	\begin{itemize}
		\item $\mathcal{I}_1$: $  \exists\, s\in\mathcal{T}$ such that $i^\star\notin\mathcal{F}_s\cap\mathcal{R}_s$ and $ T_{i^\star}(s)>\max\{4u_{i^\star}(s),4v_{i^\star}(s)\}$.
		\item $\mathcal{I}_2$: $ \exists\, i\in\mathcal{F}\cap\mathcal{S}, s\in\mathcal{T}$ such that $ i\notin\mathcal{S}_s$ and $ T_i(s)>4u_i(s)$.
		\item $\mathcal{I}_3$: $ \exists\, i\in\bar{\mathcal{F}}^c\cap\mathcal{R}, s\in\mathcal{T}$ such that $i\notin \bar{\mathcal{F}}_s^c$ and $T_i(s)>4v_i(s)$
		\item $\mathcal{I}_4$: $ \exists\, i\in\bar{\mathcal{F}}^c\cap\mathcal{S}, s\in\mathcal{T}$ such that $i\notin \bar{\mathcal{F}}_s^c\cup\mathcal{S}_s$ and $ T_i(s)>\min\{4u_i(s),4v_i(s)\}$.
		\item $\mathcal{I}_5$: $  \exists\, s\in\mathcal{T}$ such that $E(s)$ does not occur.
	\end{itemize} 
	If VA-LUCB terminates before $  \lceil{t}/{2}\rceil $, the statement is definitely right. If not, we assume the above five events do not occur. Based on Lemma \ref{useful}, the additional time steps after $ \lceil{t}/{2}\rceil -1$ for VA-LUCB can be upper bounded as \eqref{estimation3} in \underline{Derivation $3$} on the next page.
	\begin{figure*}
		\underline{Derivation $3$}:
		\begin{align}
			&\sum_{s\in\mathcal{T}}\mathbbm{1}\{ i_s\in(\mathcal{F}_s\cap\mathcal{N}_s)\cup (\partial\mathcal{F}_s\backslash\mathcal{S}_s)\,\mbox{or}\, c_s\in(\mathcal{F}_s\cap\mathcal{N}_s)\cup (\partial\mathcal{F}_s\backslash\mathcal{S}_s)\}\\
			&\leq\sum_{s\in\mathcal{T}}\sum_{i\in[N]}\mathbbm{1}\{i=i_s\,\mbox{or}\, c_s,\, i\in(\mathcal{F}_s\cap\mathcal{N}_s)\cup (\partial\mathcal{F}_s\backslash\mathcal{S}_s)\}\\
			&=\sum_{s\in\mathcal{T}}\left(\mathbbm{1}\{i^\star=i_s\,\mbox{or}\, c_s,\, i^\star\in(\mathcal{F}_s\cap\mathcal{N}_s)\cup (\partial\mathcal{F}_s\backslash\mathcal{S}_s)\}
			+\sum_{i\in\mathcal{F}\cap\mathcal{S}}\mathbbm{1}\{i=i_s\,\mbox{or}\, c_s,\, i\in(\mathcal{F}_s\cap\mathcal{N}_s)\cup (\partial\mathcal{F}_s\backslash\mathcal{S}_s)\}\right.\\*
			&\quad \left.+\sum_{i\in\bar{\mathcal{F}}^c\cap\mathcal{R}}\mathbbm{1}\{i=i_s\,\mbox{or}\, c_s,\, i\in(\mathcal{F}_s\cap\mathcal{N}_s)\cup (\partial\mathcal{F}_s\backslash\mathcal{S}_s)\}
			+\sum_{i\in\bar{\mathcal{F}}^c\cap\mathcal{S}}\mathbbm{1}\{i=i_s\,\mbox{or}\, c_s,\, i\in(\mathcal{F}_s\cap\mathcal{N}_s)\cup (\partial\mathcal{F}_s\backslash\mathcal{S}_s)\}\right)\\
			&\leq\sum_{s\in\mathcal{T}}\left(\mathbbm{1}\{i^\star=i_s\,\mbox{or}\, c_s,\, i^\star\notin\mathcal{F}_s\cap\mathcal{R}_s\}
			+\sum_{i\in\mathcal{F}\cap\mathcal{S}}\mathbbm{1}\{i=i_s\,\mbox{or}\, c_s,\, i\notin\mathcal{S}_s\}\right.\\*
			&\quad \left.+\sum_{i\in\bar{\mathcal{F}}^c\cap\mathcal{R}}\mathbbm{1}\{i=i_s\,\mbox{or}\, c_s,\, i\notin\bar{\mathcal{F}}_s^c\}
			+\sum_{i\in\bar{\mathcal{F}}^c\cap\mathcal{S}}\mathbbm{1}\{i=i_s\,\mbox{or}\, c_s,\, i\notin\bar{\mathcal{F}}_s^c\cup\mathcal{S}_s\}\right)\\
			&\leq\sum_{s\in\mathcal{T}}\left(\mathbbm{1}\{i^\star=i_s\,\mbox{or}\, c_s,\, T_{i^\star}(s)\leq\max\{4u_{i^\star}(s),4v_{i^\star}(s)\}\}
			+\sum_{i\in\mathcal{F}\cap\mathcal{S}}\mathbbm{1}\{i=i_s\,\mbox{or}\, c_s,\, T_i(s)\leq4u_i(s)\}\right.\\
			&\quad \left.+\sum_{i\in\bar{\mathcal{F}}^c\cap\mathcal{R}}\mathbbm{1}\{i=i_s\,\mbox{or}\, c_s,\, T_i(s)\leq4v_i(s)\}
			+\sum_{i\in\bar{\mathcal{F}}^c\cap\mathcal{S}}\mathbbm{1}\{i=i_s\,\mbox{or}\, c_s,\, T_i(s)\leq\min\{4u_i(s),4v_i(s)\}\}\right)\\
			&\leq\sum_{s\in\mathcal{T}}\mathbbm{1}\{i^\star=i_s\,\mbox{or}\, c_s,\, T_{i^\star}(s)\leq\max\{4u_{i^\star}(s),4v_{i^\star}(s)\}\}
			+\sum_{i\in\mathcal{F}\cap\mathcal{S}}\sum_{s\in\mathcal{T}}\mathbbm{1}\{i=i_s\,\mbox{or}\, c_s,\, T_i(s)\leq4u_i(s)\}\\
			&\quad +\sum_{i\in\bar{\mathcal{F}}^c\cap\mathcal{R}}\sum_{s\in\mathcal{T}}\mathbbm{1}\{i=i_s\,\mbox{or}\, c_s,\, T_i(s)\leq4v_i(s)\}
			+\sum_{i\in\bar{\mathcal{F}}^c\cap\mathcal{S}}\sum_{s\in\mathcal{T}}\mathbbm{1}\{i=i_s\,\mbox{or}\, c_s,\, T_i(s)\leq\min\{4u_i(s),4v_i(s)\}\}\\
			&\leq\max\{4u_{i^\star}(t),4v_{i^\star}(t)\}
			+\sum_{i\in\mathcal{F}\cap\mathcal{S}}4u_i(t)+\sum_{i\in\bar{\mathcal{F}}^c\cap\mathcal{R}} 4v_i(t)
			+\sum_{i\in\bar{\mathcal{F}}^c\cap\mathcal{S}}\min\{4u_i(t),4v_i(t)\}.\label{estimation3}
		\end{align}
			\hrulefill
	\end{figure*}
	On the other hand, we observe that if the time step $ t>t^\star:=\lceil C_{k,N}H_{\mathrm{VA},N}^{(\sigma)}\ln\frac{H_{\mathrm{VA},N}^{(\sigma)}}{\delta}\rceil$ where $C_{k,N}$ is a constant that depends on the constant $ k $ and the number of arms $ N $, we have \eqref{estimation4} in \underline{Derivation $4$} on Page~\pageref{page:derivation4}.
	\begin{figure*} 
		\underline{Derivation $4$}: \label{page:derivation4}
		\begin{align}
			&\max\{4u_{i^\star}(t),4v_{i^\star}(t)\}
			+\sum_{i\in\mathcal{F}\cap\mathcal{S}}4u_i(t)+\sum_{i\in\bar{\mathcal{F}}^c\cap\mathcal{R}} 4v_i(t)
			+\sum_{i\in\bar{\mathcal{F}}^c\cap\mathcal{S}}\min\{4u_i(t),4v_i(t)\}\}\\
			&=\max\left\{4\left\lceil{\frac{2\sigma^2}{(\frac{\Delta_{i^\star}}{2})^2} \ln \left(\frac{kN t^{4}}{\delta}\right)}\right\rceil,4\left\lceil{\frac{2c\sigma^4}{(\Delta_{i^\star}^{\mathrm{v}})^2} \ln \left(\frac{k N t^{4}}{\delta}\right)}\right\rceil\right\}+
			4\sum_{i\in\mathcal{F}\cap\mathcal{S}}\left\lceil{\frac{2\sigma^2}{(\frac{\Delta_{i}}{2})^2}\ln \left(\frac{2 N t^{4}}{\delta}\right)}\right\rceil\\
			&\quad +\sum_{i\in\bar{\mathcal{F}}^c\cap\mathcal{R}}4\left\lceil{\frac{2c\sigma^4}{(\Delta_{i}^{\mathrm{v}})^2} \ln \left(\frac{k N t^{4}}{\delta}\right)}\right\rceil
			+\sum_{i\in\bar{\mathcal{F}}^c\cap\mathcal{S}}\min\left\{4\left\lceil{\frac{2\sigma^2}{(\frac{\Delta_{i}}{2})^2} \ln \left(\frac{kN t^{4}}{\delta}\right)}\right\rceil,4\left\lceil{\frac{2c\sigma^4}{(\Delta_{i}^{\mathrm{v}})^2} \ln \left(\frac{k N t^{4}}{\delta}\right)}\right\rceil\right\}\\
			&\leq 4N+4\max\left\{{\frac{2\sigma^2}{(\frac{\Delta_{i^\star}}{2})^2} \ln \left(\frac{kN t^{4}}{\delta}\right)},{\frac{2c\sigma^4}{(\Delta_{i^\star}^{\mathrm{v}})^2} \ln \left(\frac{k N t^{4}}{\delta}\right)}\right\}+
			\sum_{i\in\mathcal{F}\cap\mathcal{S}}{\frac{2\sigma^2}{(\frac{\Delta_{i}}{2})^2}\ln \left(\frac{2 N t^{4}}{\delta}\right)}\\
			&\quad +4\sum_{i\in\bar{\mathcal{F}}^c\cap\mathcal{R}}{\frac{2c\sigma^4}{(\Delta_{i}^{\mathrm{v}})^2} \ln \left(\frac{k N t^{4}}{\delta}\right)}
			+4\sum_{i\in\bar{\mathcal{F}}^c\cap\mathcal{S}}\min\left\{{\frac{2\sigma^2}{(\frac{\Delta_{i}}{2})^2} \ln \left(\frac{kN t^{4}}{\delta}\right)},{\frac{2c\sigma^4}{(\Delta_{i}^{\mathrm{v}})^2} \ln \left(\frac{k N t^{4}}{\delta}\right)}\right\}\\
			&= 4N+4H_\mathrm{VA}^{(\sigma)}\ln \left(\frac{k N t^{4}}{\delta}\right)\\
			&\leq 4N+4H_{\mathrm{VA},N}^{(\sigma)}\ln \left(\frac{k N t^{4}}{\delta}\right)\\
			&= \left(4N+4H_{\mathrm{VA},N}^{(\sigma)}\ln k\right)+4H_{\mathrm{VA},N}^{(\sigma)}\ln \frac{N}{\delta}+16H_{\mathrm{VA},N}^{(\sigma)}\ln \left(C_{k,N}H_{\mathrm{VA},N}^{(\sigma)}\ln\frac{H_{\mathrm{VA},N}^{(\sigma)}}{\delta}\right)\\
			&\leq \left(4N+4H_{\mathrm{VA},N}^{(\sigma)}\ln k+16H_{\mathrm{VA},N}^{(\sigma)}\ln C_{k,N} \right)+4H_{\mathrm{VA},N}^{(\sigma)}\ln \frac{N}{\delta}+32H_{\mathrm{VA},N}^{(\sigma)}\ln \frac{H_{\mathrm{VA},N}^{(\sigma)}}{\delta}\\
			&\leq  \frac{1}{2}C_{k,N}H_{\mathrm{VA},N}^{(\sigma)}\ln\frac{H_{\mathrm{VA},N}^{(\sigma)}}{\delta}\\
			&\leq  \frac{t}{2}.\label{estimation4}
		\end{align}
		\hrulefill
	\end{figure*}
	So the total number of time steps is bounded by 
	\begin{align}
		&\left\lceil\frac{t}{2}\right\rceil-1+\max\{4u_{i^\star}(t),4v_{i^\star}(t)\}
		+\sum_{i\in\mathcal{F}\cap\mathcal{S}}4u_i(t)\\
		&\;+\sum_{i\in\bar{\mathcal{F}}^c\cap\mathcal{R}} 4v_i(t)
		+\sum_{i\in\bar{\mathcal{F}}^c\cap\mathcal{S}}\min\{4u_i(t),4v_i(t)\}\leq t
	\end{align}
	We then compute the probability of the event $\cup_{i\in[5]}\mathcal{I}_i$. To simplify  notations used in the following, we define the hardness quantity
	\begin{align}
		\widetilde{H}^{(\sigma)}&:=\frac{8\sigma^2}{(\Delta_{i^\star})^2} +\frac{2c\sigma^4}{(\Delta_{i^\star}^{\mathrm{v}})^2} +
		\sum_{i\in\mathcal{F}\cap\mathcal{S}}{\frac{8\sigma^2}{(\Delta_{i})^2}}\\
		&\quad
		+\sum_{i\in\bar{\mathcal{F}}^c\cap\mathcal{R}}{\frac{2c\sigma^4}{(\Delta_{i}^{\mathrm{v}})^2}}
		+\sum_{i\in\bar{\mathcal{F}}^c\cap\mathcal{S}}g(u_i(t),v_i(t))
	\end{align}
	where $g(u_i(t),v_i(t))=\frac{8\sigma^2}{(\Delta_{i})^2}\cdot\mathbbm{1}\{u_i(t)<v_i(t)\}+\frac{2c\sigma^4}{(\Delta_{i}^{\mathrm{v}})^2} \cdot\mathbbm{1}\{u_i(t)\geq v_i(t)\}
	$ which is exactly $\min\{\frac{2\sigma^2}{(\frac{\Delta_{i}}{2})^2} ,\frac{2c\sigma^4}{(\Delta_{i}^{\mathrm{v}})^2}\}$ if we ignore the ceiling operators in $u_i(t)$ and $v_i(t)$. 
	By Lemma \ref{equ:concentration_inequ} and Lemma \ref{sufcon},
	\begin{equation*}
		\begin{aligned}
			&\mathbb{P}[\mathcal{I}_1]\leq \sum_{s\in\mathcal{T}} \bigg( \frac{16\sigma^2\delta}{kN\Delta_{i^\star}^2 s^4}+\frac{4c\sigma^4\delta}{kN(\Delta_{i^\star}^{\mathrm{v}})^2 s^4}\bigg)\\
			&\mathbb{P}[\mathcal{I}_2]\leq \sum_{i\in\mathcal{F}\cap\mathcal{S}}\sum_{s\in\mathcal{T}} \frac{16\sigma^2\delta}{kN\Delta_{i}^2 s^4}\\
			&\mathbb{P}[\mathcal{I}_3]\leq \sum_{i\in\bar{\mathcal{F}}^c\cap\mathcal{R}}\sum_{s\in\mathcal{T}} \frac{4c\sigma^4\delta}{kN(\Delta_{i}^{\mathrm{v}})^2 s^4}\\
			&\mathbb{P}[\mathcal{I}_4]\leq \sum_{i\in\bar{\mathcal{F}}^c\cap\mathcal{S}}\sum_{s\in\mathcal{T}} \frac{2\delta}{kNs^4}g(u_i(s),v_i(s))\\
			&\mathbb{P}[\mathcal{I}_5]\leq \sum_{s\in\mathcal{T}} \frac{4\delta}{ks^3}\leq \frac{6\delta}{kt^2}\\
		\end{aligned}
	\end{equation*}
	which implies that 
	\begin{equation*}
		\mathbb{P} \Bigg[\underset{{i\in[5]}}{\bigcup}\mathcal{I}_i \Bigg]
		\leq 2\widetilde{H}^{(\sigma)} \sum_{s\in\mathcal{T}}\frac{\delta}{kNs^4}
		+\frac{6\delta}{kt^2}
		\leq \frac{22\delta}{kt^2}
	\end{equation*}
	where the last inequality utilizes the fact that $\widetilde{H}^{(\sigma)}<2H_\mathrm{VA}^{(\sigma)}\leq2H_{\mathrm{VA},N}^{(\sigma)}<t$ and $N\geq 2$.
	This yields the upper bound of the probability that the algorithm does not terminate at time step~$ t $. 
\end{proof}

At this point, we give the constants in the above analysis. We set $ k=2 $. 
\begin{itemize}
	\item According to the definition of $T_0$ in \eqref{equ:T0}, $c=64$, $\delta\leq 0.05$ and the number of arms $N\geq 2$, 
	\begin{align}
		T_0&=\min\left\{t\in\mathbb{N}:t\geq2\ln\frac{2Nt^4}{\delta}\right\}\\
		&\geq\min\left\{t\in\mathbb{N}:t\geq2\ln(80t^4)\right\}>12.
	\end{align} 
	In other words, after the warm-up procedure in Line $2$ of Algorithm~\ref{VALUCB_subgaussian}, Lemma~\ref{lem:cv_sub} and the concentration inequality for the variance \eqref{equ:subgaussian_variance1} can be applied.
	\item According to Lemma~\ref{Ehappens_sub}, event $ E $ occurs with probability at least $ 1-\delta/2 $. 
	\item From the definition of $T_0$ and the computations in Lemma~\ref{stop_sub}, we can easily see that
	\begin{align}\label{equ:compareT_0}
		2T_0&\leq 4\left\lceil \ln\frac{kNT_0^4}{\delta}\right\rceil< 4N+4H_{\mathrm{VA},N}^{(\sigma)}\ln \left(\frac{k N t^{4}}{\delta}\right)\\
		&<\frac{1}{2}C_{k,N}H_{\mathrm{VA},N}^{(\sigma)}\ln\frac{H_{\mathrm{VA},N}^{(\sigma)}}{\delta}. 
	\end{align}
	This justifies the assumption that $t^\star=C_{k,N}H_{\mathrm{VA},N}^{(\sigma)}\ln({H_{\mathrm{VA},N}^{(\sigma)}}/{\delta})$ in the analysis of Lemma~\ref{sufcon_sub}.
\end{itemize}

The total number of arm pulls can be estimated as follows. Since $ T_i(t) $ grows with $ t $, when $ T_i(t)\geq \frac{128}{c}\ln\frac{2Nt^4}{\delta} =2\ln\frac{2Nt^4}{\delta}$, arm $ i $ will not appear in $ \mathcal{M}_t $. 
For any $t>t^\star$,
since $4N+4H_{\mathrm{VA},N}^{(\sigma)}\ln \left(\frac{2 N t^{4}}{\delta}\right)\leq \frac{t}{2}$, 
$$ \sum_{i\in[N]}2\ln\frac{2Nt^4}{\delta}= 2N\ln\frac{2Nt^4}{\delta}\leq \frac{t}{4}.$$
So the total number of pulls is upper bounded by $(2+\frac{1}{4})t$ if the algorithm terminates after $t$ time steps.

Equipped with the above preparatory results, we are now ready to present the proof of Theorem~\ref{thm:up_bd_sub}.
\begin{proof}[Proof of Theorem~\ref{thm:up_bd_sub}]
	According to Lemma~\ref{ioutisistar} and Lemma~\ref{Ehappens_sub}, 
	on the event $E$, which occurs with probability at least $1-{\delta}/{2}$, and the termination of Algorithm~\ref{VALUCB_subgaussian}, it succeeds. Lemma~\ref{stop_sub} indicates that Algorithm~\ref{VALUCB_subgaussian} terminates at time $t>t^\star>265>5$ with probability at least ${11\delta}/{t^2}$. So Algorithm~\ref{VALUCB_subgaussian} succeeds after $O\left( H_{\mathrm{VA},N}^{(\sigma)}\ln\frac{H_{\mathrm{VA},N}^{(\sigma)}}{\delta}\right)$ time steps with probability at least $1-({\delta}/{2}+{11\delta}/{5^2})\geq1-\delta$.
	
	Note that the sample complexity is at most $2.25$ times of number of time steps. To get the expected sample complexity, we first compute the expected number of time steps. According to Lemma~\ref{stop_sub}, Algorithm~\ref{VALUCB_subgaussian} does not terminate at time step $t>t^\star$ with probability at most $\frac{11\delta}{t^2}$. The expected number of time steps is upper bounded by
	\begin{align}
		t^\star+\sum_{t=t^{\star}+1}^\infty \frac{11\delta}{t^2}\leq t^\star+\frac{11\delta}{t^\star}\leq 265\, H_{\mathrm{VA},N}^{(\sigma)}\ln\frac{H_{\mathrm{VA},N}^{(\sigma)}}{\delta}+1. 
	\end{align}
	Thus, the expected sample complexity is upper bounded by $600H_{\mathrm{VA},N}^{(\sigma)}\ln\frac{H_{\mathrm{VA},N}^{(\sigma)}}{\delta}+3=O\left(H_{\mathrm{VA},N}^{(\sigma)}\ln\frac{H_{\mathrm{VA},N}^{(\sigma)}}{\delta}\right) $.
	This completes the proof of Theorem \ref{thm:up_bd_sub}. 
\end{proof}
Hence, we have generalized the analysis from the bounded rewards case to the sub-Gaussian rewards case, and the conclusion is that the hardness parameter $H_{\mathrm{VA}, N}^{(\sigma) }$ is merely a constant factor off  from its bounded rewards counterpart $H_\mathrm{VA}$.

\section{Results Using LIL-Based Confidence Bounds}\label{LIL}
While there exist various approaches to apply the LIL techniques to VA-LUCB algorithm \cite{jamieson14lil,howard2021time,simchowitz17the,tanczos2017a}, we adopt a simple non-asymptotic LIL concentration bound from Jamieson {\em et al.}~\cite{jamieson14lil} to show that 
different confidence bounds utilized in  VA-LUCB  can lead to slightly different upper bounds on the expected stopping time.

\begin{lem}[Lemma 3 in \cite{jamieson14lil}]\label{goodbound}
	Let $\{X_i\}_{i=1}^\infty$ be a sequence of i.i.d.\ centered sub-Gaussian random variables with scale parameter $\sigma$. Fix any $\epsilon\in(0,1)$ and $\delta\in(0,\ln(1+\epsilon)/e)$. Then one has 
	\begin{align}
		&\mathbb{P}\bigg[\forall\, t\!\in\!\mathbb{N}:\!\sum_{s=1}^t X_s\!\leq\! (1\!+\!\sqrt{\epsilon})\sqrt{2\sigma^2\left(1\!+\!\epsilon\right)t\ln\left(\frac{\ln\left((1\!+\!\epsilon)t\right)}{\delta}\right)}
		\bigg]
	    \\
		&\qquad\geq
		1-\xi(\delta),
	\end{align}
	where $\xi(\delta):=\frac{2+\epsilon}{\epsilon}\big(\frac{\delta}{\log (1+\epsilon)}\big)^{1+\epsilon}$.
\end{lem}
Define the ``good events''
    \begin{align}
    	\tilde{E}_i(t):=\big\{|\hat{\mu}_i(t)-\mu_i|\le\tilde{\alpha}(T_i(t)),|\tilde{\sigma}_i^2(t)-\sigma_i^2|\le\tilde{\beta}(T_i(t))
    	\big\}
    \end{align} 
    for all $t\in\mathbb{N}$ and $i\in[N]$, as well as their intersections
    \begin{align}
    \tilde{E}_i:=\bigcap_{t\in\mathbb{N}} 
    \tilde{E}_i(t)  \quad \mbox{and}\quad  \tilde{E}:=\bigcap_{i\in[N]} \tilde{E}_i .
    \end{align}
\begin{lem}\label{lem:lil_var_bd}
	With the choice of the confidence radii in \eqref{newbounds}, the event $\tilde{E}$ occurs  with probability at least  $1-\xi(\delta)$.  In particular, if we set $\epsilon=0.9$ and $\delta<0.1$, then $\xi(\delta)\geq\delta$ which implies that the event $\tilde{E}$ occurs with probability at least $1-\delta$. 
\end{lem}

\begin{proof}
	We first record three facts. First, any distribution supported on $[0,1]$ is $1/2$-sub-Gaussian. Second, the rewards of an arm from different time steps are i.i.d.\ and the realizations from different arms are independent from each other. Third, if arm $i$ is not pulled at time step $t$, all the statistics for arm $i$ at the time step $t$ (including sample mean, sample variance and concentration bound)  remain valid in time step $t+1$.
	By a direct application of Lemma~\ref{goodbound} to the sample mean $\hat{\mu}_i(t)$ and the  sample second  moment $\hat{M}_2(t):=\frac{1}{T_i(t)}\sum_{s=1}^{t-1} X_{s,i}^2\mathbbm{1}\{i\in\mathcal{J}_s\}$ of arm $i\in[N]$, and setting $\{Y_{s,i}\}_{s=1}^\infty \stackrel{\text{i.i.d.}}{\sim} \nu_i$, we have 
    \begin{align}\label{lilbound}
		&\;\mathbb{P}\big[\exists \, t\geq 1:\left|\hat{\mu}_i(t)-\mu_i\right|  \geq \tilde{\alpha}(T_i(t))\big]\\
		&=\mathbb{P}\left[\exists \, t\geq 1:\bigg|\frac{1}{t}\sum_{s=1}^{t} Y_{s,i}-\mu_i\bigg|  \geq \tilde{\alpha}(t)
 		\right] 
		\leq
		2\xi\Big(\frac{\delta}{4N}\Big),\quad\mbox{and}
		\\
		 &\;\mathbb{P}\left[\exists\, t\geq 1:\big| \hat{M}_2(t)-(\mu_i^2+\sigma_i^2)\big|\geq \tilde{\alpha}(T_i(t))\right]\\
		 &=\mathbb{P} \left[\exists \, t\geq 1:\bigg|\frac{1}{t}\sum_{s=1}^{t} Y_{s,i}^2-(\mu_i^2\!+\!\sigma_i^2)\bigg| \! \geq\! \tilde{\alpha}(t)
 		\right] 
		\!\leq\!
		2\xi\Big(\frac{\delta}{4N}\Big).
	\end{align}
	Since the rewards are in $[0,1]$, $|\hat{\mu}_i^2(t)-\mu_i^2|\le |\hat{\mu}_i(t)+\mu_i| \cdot |\hat{\mu}_i(t)-\mu_i|\le 2 |\hat{\mu}_i(t)-\mu_i| $. Using this and the 
	 triangle inequality, we obtain for every $t\ge1$,
	\begin{align}		
		\big|\tilde{\sigma}^2_i(t)-\sigma_i^2\big|
		&\leq \big|\hat{M}_2(t)-(\mu_i^2+\sigma_i^2)\big|+\big|\hat{\mu}_i^2(t)-\mu_i^2\big|\\*
		&
		\leq \tilde{\alpha}(T_i(t))+2\tilde{\alpha}(T_i(t))
		=\tilde{\beta}(T_i(t)).
	\end{align}
	Therefore, by a union bound, with probability at least 
	\begin{align}
		 1-4N\cdot \xi \Big(\frac{\delta}{4N}\Big)
		&=1-4N\cdot\frac{2+\epsilon}{\epsilon}\left(\frac{\delta}{4N\log (1+\epsilon)}\right)^{1+\epsilon}\\
		&=1-(4N)^{-\epsilon}\xi(\delta)
		\geq 1-\xi(\delta)
	\end{align}
	event $\tilde{E}$ occurs.
\end{proof}

\begin{restatable}[LIL-Based Upper Bound]{thm}{thmupperbound-LIL}\label{thm:lil_up_bd}
	Given an instance $ (\nu,\bar{\sigma}^2) $ and confidence parameter $ \delta $, 
	with probability at least $1-\xi(\delta)$,  
	VA-LUCB with the LIL-based confidence bounds succeeds and terminates in 
	\begin{equation*}
		O\left(H_{\mathrm{VA}}^{(1)}\ln\frac{N}{\delta}+H_{\mathrm{VA}}^{(3)}\right) 
	\end{equation*} 
	time steps,where $H_{\mathrm{VA}}^{(1)}$ and $H_{\mathrm{VA}}^{(3)}$ are defined in \eqref{lilhardness1} and \eqref{lilhardness2} respectively.
\end{restatable}
\begin{proof}
	For a suboptimal arm $i\in\mathcal{S}$, when $\tilde{\alpha}(T_i(t))\leq \Delta_i/4$,
	$ \hat{\mu}_i(t)+\tilde{\alpha}(T_i(t))\leq \mu_i+2\tilde{\alpha}(T_i(t))\leq \mu_i+\frac{\Delta_i}{2}\leq \bar{\mu},$
	which indicates arm $i$ is not in $\mathcal{N}_t$. The same holds for $i^\star$.
	For a feasible arm $i\in\mathcal{F}$, when $\tilde{\beta}(T_i(t))\leq \Delta_i^\mathrm{v}/2$,
	$ \tilde{\sigma}_i^2(t)+\tilde{\beta}(T_i(t))\leq \mu_i+2\tilde{\beta}(T_i(t))\leq \bar{\sigma}^2,$
	which indicates arm $i$ is not in $\partial{\mathcal{F}}_t$. The same holds for the infeasible arms $i\in\bar{\mathcal{F}}^c$.
	By a direct computation (see, for example, \cite[Eqn.~(4)]{Jamieson2014}),
	\begin{align}
		&\min\Big\{t\!\in\!\mathbb{N}:\tilde{\alpha}(t)\leq \frac{\Delta_i}{4}\Big\}\leq \frac{2\gamma}{\Delta_i^2}\ln\left(\frac{8N\ln(\gamma\Delta_i^{-2})}{\delta}\right), \;\mbox{and}\\
		&\min\Big\{t\!\in\!\mathbb{N}:\tilde{\beta}(t)\!\leq\! \frac{\Delta_i^\mathrm{v}}{2}\Big\}
		\!\leq\! \frac{2\gamma}{(\frac{2}{3}\Delta_i^\mathrm{v})^2}\ln\!\left(\frac{8N\ln(\gamma(\frac{2}{3}\Delta_i^\mathrm{v})^{-2})}{\delta}\right),
	\end{align}
	where $\gamma=\gamma_\epsilon=8(1+\sqrt{\epsilon})^2(1+\epsilon)^2$. According to Lemma~\ref{useful} (which also holds even if the confidence radii have been changed), at least one of pulled arms belongs to the set $(\partial\mathcal{F}_t\backslash\mathcal{S}_t)\cup(\mathcal{F}_t\cap \mathcal{N}_t)$, so after 
	\begin{align}\label{tildet}
		\tilde{t}:=& 
		\max\left\{\frac{2\gamma}{\Delta_{i^\star}^2}\ln\left(\frac{8N\ln(\gamma\Delta_{i^\star}^{-2})}{\delta}\right),\right.\\
		&\qquad\left.\frac{2\gamma}{(\frac{2}{3}\Delta_{i^\star}^\mathrm{v})^2}\ln\left(\frac{8N\ln(\gamma(\frac{2}{3}\Delta_{i^\star}^\mathrm{v})^{-2})}{\delta}\right)\right\}\\
		&+
		\sum_{i\in\mathcal{F}\cap\mathcal{S}}\frac{2\gamma}{\Delta_i^2}\ln\left(\frac{8N\ln(\gamma\Delta_i^{-2})}{\delta}\right)\\
		&
		+
		\sum_{i\in\bar{\mathcal{F}}^c\cap\mathcal{R}}\frac{2\gamma}{(\frac{2}{3}\Delta_i^\mathrm{v})^2}\ln\left(\frac{8N\ln(\gamma(\frac{2}{3}\Delta_i^\mathrm{v})^{-2})}{\delta}\right)\\
		&+
		\sum_{i\in\bar{\mathcal{F}}^c\cap\mathcal{S}}
		\min\left\{\frac{2\gamma}{\Delta_i^2}\ln\left(\frac{8N\ln(\gamma\Delta_i^{-2})}{\delta}\right),\right.\\
		&\left.\qquad\qquad\frac{2\gamma}{(\frac{2}{3}\Delta_i^\mathrm{v})^2}\ln\left(\frac{8N\ln(\gamma(\frac{2}{3}\Delta_i^\mathrm{v})^{-2})}{\delta}\right)\right\}
	\end{align} 
	time steps, the algorithm must have terminated. 
	Note that for $x\in(0,1)$, $\frac{1}{x^2}\ln\left(\frac{8N\ln(\gamma x^{-2})}{\delta}\right)$ decreases as $x$ increases and 
	\begin{align}
	    &\frac{1}{x^2}\ln\left(\frac{8N\ln(\gamma x^{-2})}{\delta}\right)\\
	    &=\frac{1}{x^2}\left(\ln\frac{8N}{\delta}+\ln\left(\ln(\gamma(1+\epsilon)+\ln x^{-2})\right)\right)\\
	    &\leq \frac{1}{x^2}\ln\frac{8N}{\delta}+\frac{1}{x^2}\left(\ln\ln_+(x^{-2})+\gamma\right)\\
	    &\leq \frac{c_1}{x^2}\ln\frac{N}{\delta}+\frac{\ln\ln_+(x^{-2})}{x^2}
	\end{align}
	where $c_1=\ln 8+\gamma$ is a known constant that only depends on $\epsilon$. Thus, $\tilde{t}$ can be   upper bounded by 
	\begin{equation}\label{lilupperbound}
	    2\gamma c_1 H_{\mathrm{VA}}^{(1)}\ln\frac{N}{\delta}+2\gamma H_{\mathrm{VA}}^{(3)}
	=O\left(H_{\mathrm{VA}}^{(1)}\ln\frac{N}{\delta}+H_{\mathrm{VA}}^{(3)}\right)
	\end{equation}
	where $H_{\mathrm{VA}}^{(1)}$ and $H_{\mathrm{VA}}^{(3)}$ are defined in \eqref{lilhardness1} and \eqref{lilhardness2} respectively.
\end{proof}

\section{Proof of Lower Bound}\label{lowerbound}
Let $\mathrm{KL}(\nu,\nu^\prime)$ denote the KL divergence between distributions $\nu$ and $\nu^\prime$, and 
\begin{align*}
	d(x, y) := x \ln\bigg( \frac{x}{ y} \bigg)+(1-x) \ln\bigg( \frac{1-x}{1-y} \bigg)
\end{align*}
denote the Kullback--Leibler (KL) divergence between the Bernoulli distributions $\mathrm{Bern}(x)$ and  $\mathrm{Bern}(y)$.
\begin{lem}[Pinsker's and reverse Pinsker's inequality~\cite{Friedrich2019}]
	\label{thm:pinsker}

	{Let $P$ and $Q$ be two distributions that are defined in the same finite space 
		$\mathcal{A}$ and have 
		the same support.} We have
	\begin{align*}
		\delta(P,Q)^2 \le \frac{1}{2} \mathrm{KL}(P,Q)
		\le \frac{1}{  \alpha_Q} \delta(P,Q)^2
	\end{align*}
	where
	$
	\delta(P,Q) := \sup\{| P(A)-Q(A)| \,:\, A \subset \mathcal{A}   \} 
	=\frac{1}{2} \sum_{x\in  \mathcal{A}} |P(x) - Q(x)|$
	is the total variational distance,
	and  
	$\alpha_Q := \min_{x\in \mathcal{A}: Q(x)>0} Q(x)
	$.
\end{lem}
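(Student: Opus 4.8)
The plan is to treat the two inequalities separately: the left-hand bound is the classical Pinsker inequality and the right-hand bound is a reverse Pinsker inequality, and each admits a short self-contained argument. For Pinsker, I would first reduce to the two-point case. Put $A := \{ x \in \mathcal{A} : P(x) \ge Q(x) \}$ and let $\bar{P} := \mathrm{Bern}(P(A))$ and $\bar{Q} := \mathrm{Bern}(Q(A))$. By the log-sum inequality applied to the blocks $A$ and $\mathcal{A}\setminus A$ (equivalently, the data-processing inequality for the indicator $x \mapsto \mathbbm{1}\{x \in A\}$), $\mathrm{KL}(P,Q) \ge d(P(A),Q(A)) = \mathrm{KL}(\bar{P},\bar{Q})$, while the Scheff\'e choice of $A$ gives $\delta(P,Q) = P(A) - Q(A) = \delta(\bar{P},\bar{Q})$. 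It therefore suffices to prove $d(x,y) \ge 2(x-y)^2$ for $x,y \in [0,1]$, which is a one-variable exercise: fixing $x$ and setting $g(y) := d(x,y) - 2(x-y)^2$, one checks $g(x) = 0$ and $g'(y) = (y-x)\big( \tfrac{1}{y(1-y)} - 4 \big)$, whose sign equals that of $y-x$ because $y(1-y) \le \tfrac14$; hence $g$ attains its minimum $0$ at $y=x$ and is nonnegative.

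For the reverse inequality, write $\Delta_x := P(x) - Q(x)$, so $\sum_x \Delta_x = 0$ and $\sum_x|\Delta_x| = 2\,\delta(P,Q)$. Since $P$ and $Q$ share support, $\Delta_x/Q(x) \ge -1$ on the support, and applying $\ln(1+t) \le t$ termwise gives $\mathrm{KL}(P,Q) = \sum_x (Q(x) + \Delta_x)\ln\!\big(1 + \tfrac{\Delta_x}{Q(x)}\big) \le \sum_x (Q(x)+\Delta_x)\tfrac{\Delta_x}{Q(x)} = \sum_x \tfrac{\Delta_x^2}{Q(x)} \le \alpha_Q^{-1}\sum_x \Delta_x^2$. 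Since each of the positive and the negative parts of $(\Delta_x)_x$ sums to $\delta(P,Q)$, we have $|\Delta_x| \le \delta(P,Q)$ for every $x$, so $\sum_x \Delta_x^2 \le \big(\max_x|\Delta_x|\big)\big(\sum_x|\Delta_x|\big) \le 2\,\delta(P,Q)^2$; combining, $\mathrm{KL}(P,Q) \le 2\alpha_Q^{-1}\delta(P,Q)^2$, which is the stated bound.

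The individual steps are routine; the points that require care are the constants and the hypotheses. In the reverse direction one must bound $\sum_x\Delta_x^2$ by $2\,\delta(P,Q)^2$ using $\max_x|\Delta_x| \le \delta(P,Q)$ rather than the cruder $\sum_x\Delta_x^2 \le \big(\sum_x|\Delta_x|\big)^2 = 4\,\delta(P,Q)^2$, which would lose the needed factor of two; and one must keep the common-support/positivity hypothesis in view so that $\mathrm{KL}(P,Q)$ is finite and $\alpha_Q > 0$ — this is precisely why the statement assumes $P$ and $Q$ have the same finite support. Alternatively, the lemma may simply be quoted from \citet{Friedrich2019}.
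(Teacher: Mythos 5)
Your proof is correct. Note that the paper itself offers no proof of this lemma at all: it is stated as a quoted result from \citet{Friedrich2019}, so there is nothing internal to compare against, and your last sentence (``the lemma may simply be quoted'') is in fact exactly what the paper does. Your self-contained derivation is sound on both sides. For the left inequality, the reduction to the two-point case via the Scheff\'e set $A=\{x:P(x)\ge Q(x)\}$ and data processing is the standard route, and your one-variable verification of $d(x,y)\ge 2(x-y)^2$ is correct: $\frac{\partial}{\partial y}d(x,y)=\frac{y-x}{y(1-y)}$, so $g'(y)=(y-x)\bigl(\frac{1}{y(1-y)}-4\bigr)$ has the sign of $y-x$, giving the minimum $g(x)=0$. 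For the reverse inequality, the termwise use of $\ln(1+t)\le t$ (valid since $P(x)\ge 0$ and $\Delta_x/Q(x)\ge -1$ on the common support) together with $\sum_x\Delta_x=0$ yields $\mathrm{KL}(P,Q)\le \alpha_Q^{-1}\sum_x\Delta_x^2$, and you correctly identify the one step where the constant could be lost: since the positive and negative parts of $(\Delta_x)_x$ each sum to $\delta(P,Q)$, one has $\max_x|\Delta_x|\le\delta(P,Q)$ and hence $\sum_x\Delta_x^2\le 2\,\delta(P,Q)^2$, which gives exactly the stated $\tfrac12\mathrm{KL}(P,Q)\le\alpha_Q^{-1}\delta(P,Q)^2$ rather than a bound weaker by a factor of two.
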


\begin{lem}[Lemma 1 in~\cite{Kaufmann2016}] \label{lemma:lb_KLdecomp}
	For any $1\le j \le N$,
	{\setlength\abovedisplayskip{.5em}
		\setlength\belowdisplayskip{.4em}
		\begin{align*}
			& 
			\sum_{i=1}^N \mathbb{E}_{ \mathcal{G}_0} [T_{i}(\tau^{(0)})] \cdot d\big( \mu_i^{(0)}, \mu_i^{(j)} \big)
			\ge 
			\sup_{\mathcal{E} \in \mathcal{G}_0  } 
			d\big(    \mathbb{P}_{\mathcal{G}_0} (\mathcal{E})  ,~   \mathbb{P}_{\mathcal{G}_j} (\mathcal{E})   \big).
			\nonumber 
	\end{align*}}     
\end{lem}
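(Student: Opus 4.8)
The plan is to prove this via the classical change-of-measure (log-likelihood-ratio) argument. Fix the algorithm $\pi$ and consider the trajectory $(A_1,X_1,A_2,X_2,\dots)$ it generates, where $A_s$ is the arm pulled at round $s$ and $X_s$ the reward observed. Under any bandit model $\mathcal{G}=(\nu_1,\dots,\nu_N)$, the likelihood of the length-$t$ trajectory factorizes as $\prod_{s=1}^{t}\pi_s(A_s\mid \mathcal{H}_{s-1})\,f^{\mathcal{G}}_{A_s}(X_s)$, where $f^{\mathcal{G}}_i$ is the density of $\nu_i$. Because the sampling rule $\pi_s$ is the same function of the history under both $\mathcal{G}_0$ and $\mathcal{G}_j$, those factors cancel in the ratio, and the log-likelihood ratio up to round $t$ telescopes to $L_t:=\sum_{s=1}^{t}\ln\frac{f^{(0)}_{A_s}(X_s)}{f^{(j)}_{A_s}(X_s)}=\sum_{i=1}^{N}\sum_{s\le t:\,A_s=i}\ln\frac{f^{(0)}_i(X_s)}{f^{(j)}_i(X_s)}$. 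The whole argument is vacuous unless $\mathbb{E}_{\mathcal{G}_0}[\tau^{(0)}]<\infty$ (otherwise the left side of the claimed inequality is $+\infty$ whenever some $d(\mu_i^{(0)},\mu_i^{(j)})>0$, and the two sides are both $0$ when $\mathcal{G}_0=\mathcal{G}_j$), so I would assume this from now on; it holds for the $\delta$-PAC algorithms over which the infimum defining $\tau_\delta^\star$ is taken.

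First I would apply Wald's identity (optional stopping) to evaluate $\mathbb{E}_{\mathcal{G}_0}[L_{\tau^{(0)}}]$. Conditionally on $\{A_s=i\}$, the increment $\ln\frac{f^{(0)}_i(X_s)}{f^{(j)}_i(X_s)}$ has conditional $\mathbb{P}_{\mathcal{G}_0}$-expectation $\mathrm{KL}(\nu_i^{(0)},\nu_i^{(j)})$, which equals $d(\mu_i^{(0)},\mu_i^{(j)})$ for Bernoulli arms, and is independent of $\mathcal{H}_{s-1}$. Since $\tau^{(0)}$ is a stopping time for $(\mathcal{H}_t)$ with finite mean and uniformly bounded increments (the Bernoulli parameters being bounded away from $0$ and $1$ in the lower-bound construction, which also ensures $\nu_i^{(0)}\ll\nu_i^{(j)}$), Wald's identity yields $\mathbb{E}_{\mathcal{G}_0}[L_{\tau^{(0)}}]=\sum_{i=1}^{N}\mathbb{E}_{\mathcal{G}_0}[T_i(\tau^{(0)})]\,d(\mu_i^{(0)},\mu_i^{(j)})$.

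Second I would identify $\mathbb{E}_{\mathcal{G}_0}[L_{\tau^{(0)}}]$ with the relative entropy $\mathrm{KL}\!\big(\mathbb{P}_{\mathcal{G}_0}^{\mathcal{H}_{\tau^{(0)}}},\mathbb{P}_{\mathcal{G}_j}^{\mathcal{H}_{\tau^{(0)}}}\big)$ between the laws of the stopped trajectory: once the policy factors cancel, $\exp(L_{\tau^{(0)}})$ is exactly the Radon--Nikodym derivative $\mathrm{d}\mathbb{P}_{\mathcal{G}_0}^{\mathcal{H}_{\tau^{(0)}}}/\mathrm{d}\mathbb{P}_{\mathcal{G}_j}^{\mathcal{H}_{\tau^{(0)}}}$. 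Then, for any event $\mathcal{E}\in\mathcal{H}_{\tau^{(0)}}$, the data-processing inequality applied to the binary statistic $\mathbbm{1}\{\mathcal{E}\}$ gives $\mathrm{KL}\!\big(\mathbb{P}_{\mathcal{G}_0}^{\mathcal{H}_{\tau^{(0)}}},\mathbb{P}_{\mathcal{G}_j}^{\mathcal{H}_{\tau^{(0)}}}\big)\ge d\big(\mathbb{P}_{\mathcal{G}_0}(\mathcal{E}),\mathbb{P}_{\mathcal{G}_j}(\mathcal{E})\big)$. Combining the two displays and taking the supremum over $\mathcal{E}$ gives the claimed inequality.

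The main obstacle is the careful justification of Wald's identity / optional stopping when $\tau^{(0)}$ is unbounded — verifying the integrability hypothesis and the bounded-increment condition — together with the measure-theoretic care in defining the stopped $\sigma$-algebra $\mathcal{H}_{\tau^{(0)}}$ and checking $\mathbb{P}_{\mathcal{G}_0}^{\mathcal{H}_{\tau^{(0)}}}\ll\mathbb{P}_{\mathcal{G}_j}^{\mathcal{H}_{\tau^{(0)}}}$. Everything else — the likelihood factorization with a possibly randomized $\pi_s$, the identity $\mathrm{KL}(\mathrm{Bern}(p),\mathrm{Bern}(q))=d(p,q)$, and the data-processing step — is standard bookkeeping. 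This lemma, combined with Pinsker's inequality (Lemma~\ref{thm:pinsker}) and a family of alternative instances $\{\mathcal{G}_j\}$ obtained by perturbing a single arm's mean or variance so as to flip its feasibility/optimality status, will then deliver per-arm bounds of the form $\mathbb{E}[T_i(\tau)]\gtrsim(\text{relevant gap})^{-2}\ln(1/\delta)$ whose sum is $\asymp H_{\mathrm{VA}}\ln(1/\delta)$, which is exactly what is needed to prove Theorem~\ref{thm:low_bd}.
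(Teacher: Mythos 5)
Your argument is correct and is essentially the proof of Lemma~1 in \citet{Kaufmann2016}, which is exactly the source the paper defers to for this statement (the paper gives no proof of its own): likelihood factorization with cancellation of the policy terms, a Wald-type identity for the stopped log-likelihood ratio, and the data-processing inequality applied to $\mathbbm{1}\{\mathcal{E}\}$. The caveats you flag --- integrability of $\tau^{(0)}$, absolute continuity $\nu_i^{(0)}\ll\nu_i^{(j)}$, and the identification of $d(\mu_i^{(0)},\mu_i^{(j)})$ with the arm-level KL only in the Bernoulli case --- are precisely the conditions satisfied by the instances constructed in the paper's lower-bound proof, so nothing further is needed.
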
 

\begin{proof}[Proof of Theorem \ref{thm:low_bd}]
	Fix a $\delta$-PAC algorithm  $\pi$. We consider the instances containing arms with Bernoulli reward distributions. By simple algebra, an arm $ i\in[N] $ with reward distribution $ \nu_i=\mathrm{Bern}(\mu_i) $ and hence variance $\mu_i(1-\mu_i)$ is infeasible if and only if $ \mu_i\in(\underline{a},\bar{a}) $ and this arm is feasible otherwise.
	
	\noindent
	{\bf Step 1: Classification of instances.}
	Based on the values of $\{ \mu_i \}_{i=1}^N$, we have one of the following cases:
	\begin{itemize}
		\item[(i)] $\underline{a}<\mu_N\le \ldots \leq\mu_2\leq\mu_1<\bar{a} $,
		\item[(ii)] $0<\mu_N\le \ldots \leq\mu_2<\mu_1<1$, and $\bar{a}<\mu_1 $,
		\item[(iii)] $ \mu_i<\bar{a} $ for all $i\in[N]$, $ \{ i\in[N]: \mu_i =\underline{a} \} =\emptyset$, $ \{ i\in[N]: \mu_i <\underline{a} \} \ne \emptyset$, and arm $1=\mathop{\rm argmax}{ \{ \mu_i : \mu_i < \underline{a}\} } $.
	\end{itemize}
	which are shown in Figure \ref{cases}.
	\begin{figure*}[t]
		\centering
		\subfigure[Case (i)]{
			\includegraphics[width=.475\textwidth]{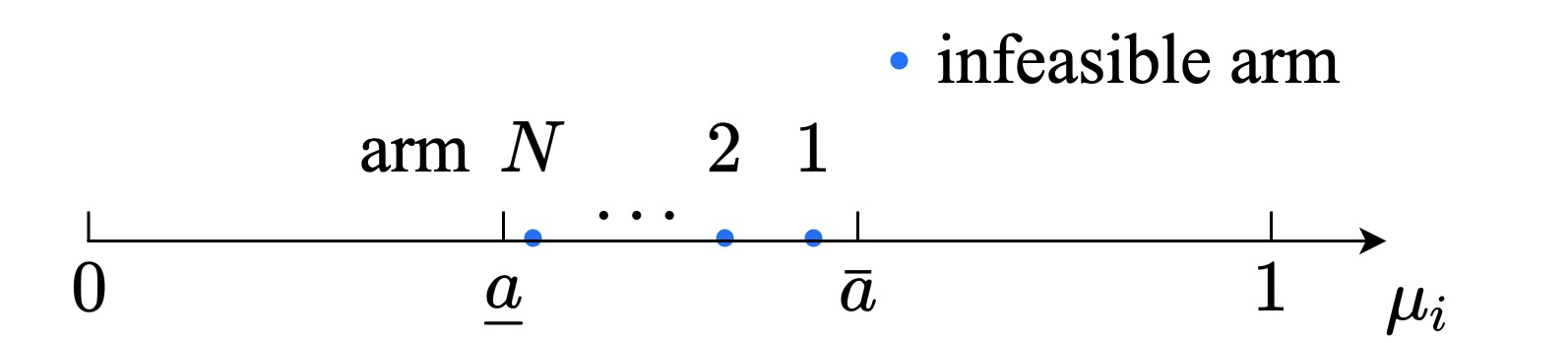}
		}\hspace{-.06in}
		\subfigure[Case (ii)]{
			\includegraphics[width=.475\textwidth]{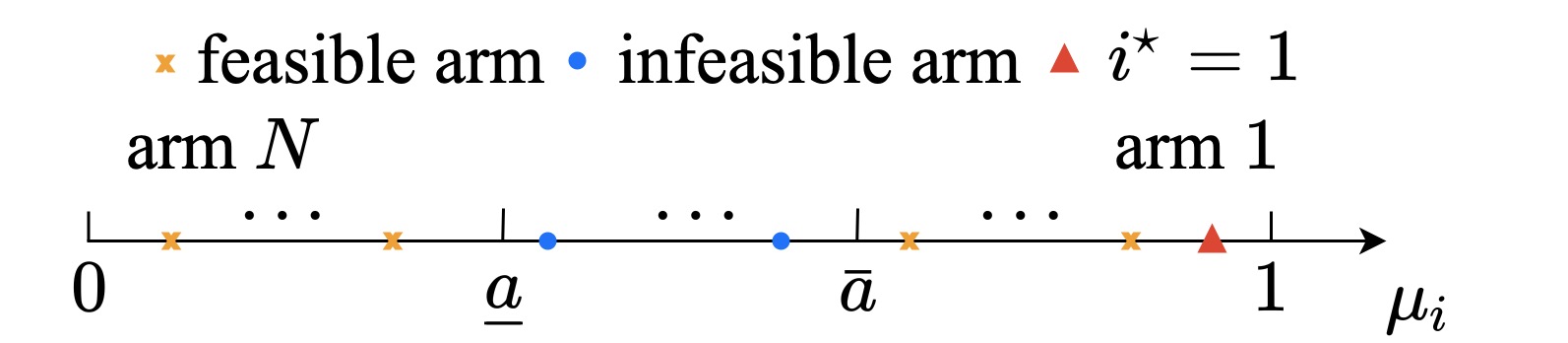}
		}\hspace{-.06in}\\
		\subfigure[Case (iii)]{
			\includegraphics[width=.475\textwidth]{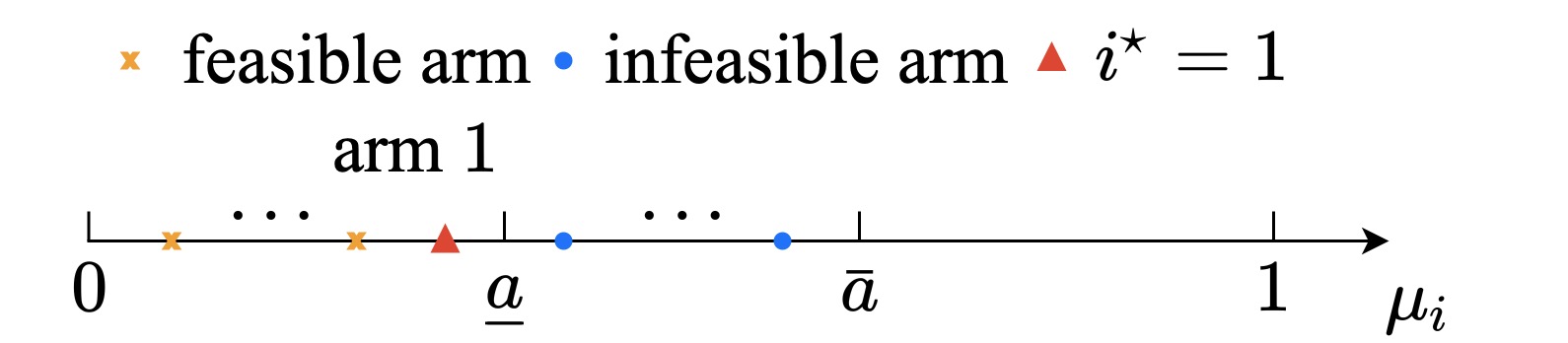}
		} 
		\caption{Illustrations of the three cases for the  proof of the lower bound}
		\label{cases}
	\end{figure*}
	There exists no feasible arm in Case (i), while there exists at least one feasible arm in Cases~(ii) and~(iii). 
	Then we construct instances by making small modifications to the reward distributions of the arms. These constructed instances are hard to distinguish from each of the cases above, in the sense that the constructed instances will lead to different conclusions towards the feasibility of the instance or the best feasible arm.  
	
	{\bf Step 2: Analysis of each case.}
	Subsequently, we analyze each case individually.
	In each case, we construct $N+1$ instances such that 
	under instance $j$~($0\le j \le N$), the stochastic reward of arm $i$~($1\le i \le N$) is drawn from distribution 
	$$ \nu_i^{(j)} :=  \nu_i \cdot \mathbbm{1}\{i\ne j \} + \nu_i^\prime \cdot \mathbbm{1}\{ i = j \}   ,$$
	where $ \nu_i^\prime $ will be specified in each case.
	Let $\mu_i^{(j)}$ denote the expectation of arm $i$ and $(\sigma_i^{(j)})^2$ denote the variance of arm $i$ under instance $j$. 
	Under instance $0\le j\le N$, we define several other notations as follows: 
	\begin{itemize}
		\item Let $X_{i,r}^{(j)}  $ be the random reward of arm $i$ at round $r$. Then $X_{i,r}^{(j)}  \in \{0, 1\} $.
		\item Let $i_r^{(j)}$ be the pulled arm at round $r$, and $\mathcal{G}_r^{(j)} = \{ (i_s^{(j)}, X_{  i_s^{(j)},s } )\}^r_{s=1}$ be the sequence of pulled arms and observed rewards up to and including round $r$. 
		\item Let $\tau^{(j)}$ denote the stopping time.
	\end{itemize} 
	For simplicity, we abbreviate  
	$\mathcal{G}_{\tau^{(j)}}^{(j)}$ as  
	$\mathcal{G}_j$. 
	
	\noindent
	\underline{\bf Case (i)}: $\underline{a}<\mu_N\le \ldots \leq\mu_2\leq\mu_1<\bar{a} $
	
	\noindent{\bf Construction of instances.}
	Fix any $0< b_1 < \underline{a} < \bar{a} < b_2<1$.
	We define $\nu_i^\prime = \mathrm{Bern}( \mu_i^\prime ) $ for arm $i\in[N]$ with
	\begin{align*}
		\mu_i^\prime = 
		\left\{
		\begin{array}{ll}
			b_1  & \mu_i < 1/2,
			\\ 
			b_2  & \mu_i \ge 1/2.
		\end{array}
		\right.
	\end{align*} 
	
	Therefore,
	\begin{itemize}
		\item under instance $0$, since $(\sigma_i^{(0)})^2 =\mu_i(1-\mu_i)>\bar{\sigma}^2$ for all arm $i\in[N]$, there is no feasible arm;  
		\item under instance $j$~($1\le j \le L$), we see that 
		\begin{align*}
			&(\sigma_i^{(j)})^2 =\mu_i(1-\mu_i) >\bar{\sigma}^2, \quad \forall\, i\ne j, 
			\text{ and }\\
			&(\sigma_j^{(j)})^2 =\mu_i^\prime(1-\mu_i^\prime)< \bar{\sigma}^2,
		\end{align*} 
		implying that $j$ is the unique optimal feasible arm.
	\end{itemize} 
	Since algorithm $ \pi $ is $ \delta $-PAC, we have $\mathbb{P}_{ \mathcal{G}_0 } [i_{\mathrm{out}} = j] < \delta$ and $\mathbb{P}_{ \mathcal{G}_j } [i_{\mathrm{out}} \ne j] < \delta$ for all $1\le j \le N$.

	\noindent{\bf Change measure.}
	Next, we lower bound $\mathbb{E}_{\mathcal{G}_0} [T_i( \tau^{(0)}  )]$ with the KL divergence by applying Lemma \ref{lemma:lb_KLdecomp}.

	Note that $d(a,b)$ equals to the KL divergence between $\mathrm{Bern}(a)$ and $\mathrm{Bern}(b)$ and
	\begin{align*}
		d( x, 1-x) \ge \ln \bigg( \frac{ 1 }{2.4x} \bigg) \quad \forall\, x \in (0,1].
	\end{align*}
	Let $\mathcal{E}_j:= \{ i_{\mathrm{out}} =  j  \}$, then
	\begin{align*}
		&\delta \ge \mathbb{P}_{ \mathcal{G}_0 } [ \{ i_{\mathrm{out}} \} \ne \emptyset] \ge \mathbb{P}_{ \mathcal{G}_0 } [   \mathcal{E}_j   ],
		\\
		&1-\delta \le \mathbb{P}_{ \mathcal{G}_j } [ i_{\mathrm{out}} = j]   =  \mathbb{P}_{ \mathcal{G}_j }  [\mathcal{E} _j ] .
	\end{align*}
	Since $\mu_i^{(0)} = \mu_i^{(j)}$ for all $i\ne j$  under instance $1\le j\le N$, we have
	\begin{align*}
		\mathbb{E}_{ \mathcal{G}_0} [T_{j}(\tau^{(0)})]
		&\ge 
		\frac{
			d(    \mathbb{P}_{\mathcal{G}_0} [\mathcal{E}_j] ,~   \mathbb{P}_{\mathcal{G}_j} [\mathcal{E}_j] )}{ d( \mu_j^{(0)}, \mu_j^{(j)} ) }
		\\
		&\ge 
		\frac{
			d(    \delta , 1-\delta ) }{ d( \mu_j, \mu_j^\prime  ) }
		\ge  \frac{ -\ln( 2.4\delta) }{ d( \mu_j, \mu_j^\prime ) }.
	\end{align*}
	Therefore,
	\begin{align*}
		&\mathbb{E}_{ \mathcal{G}_0 }[ \tau^{(0)} ]
		\ge 
		\sum_{j=1}^N \mathbb{E}_{ \mathcal{G}_0} [T_{j}(\tau^{(0)})]
		\ge 
		\ln \bigg( \frac{1}{2.4\delta} \bigg) \cdot  \sum_{i=1}^N \frac{ 1 }{ d ( \mu_i, \mu_i^\prime ) }
		\\
		&
		=
		\ln \bigg( \frac{1}{2.4\delta} \bigg) \cdot  
		\bigg( \sum_{ i: \mu_i<1/2} \frac{ 1 }{ d ( \mu_i, b_1 ) } +
		\sum_{ i: \mu_i\ge 1/2  } \frac{ 1 }{ d ( \mu_i, b_2 ) }
		\bigg)
		.
	\end{align*}  
	Since $(b_1 ,b_2)$ can be chosen arbitrarily from $ B= \{ (b_1, b_2 ): 0< b_1 < \underline{a} < \bar{a} < b_2<1 \}$, we have
	\begin{align*}
		&\mathbb{E}_{ \mathcal{G}_0 }[ \tau^{(0)} ]\\
		&
		\ge 
		\sup_{ (b_1, b_2 ) \in B   }
		\ln \bigg( \frac{1}{2.4\delta} \bigg) \\
		&\qquad\cdot  
		\bigg( \sum_{  i: \mu_i<1/2 } \frac{ 1 }{ d ( \mu_i, b_1 ) } +
		\sum_{ i: \mu_i\ge 1/2  } \frac{ 1 }{ d ( \mu_i, b_2 ) }
		\bigg)
		\\ 
		&=
		\ln \bigg( \frac{1}{2.4\delta} \bigg) \cdot  
		\bigg( \sum_{  i: \mu_i<1/2 } \frac{ 1 }{ d ( \mu_i, \underline{a} ) } +
		\sum_{ i: \mu_i\ge 1/2  } \frac{ 1 }{ d ( \mu_i, \bar{a} ) }
		\bigg) 
		.
	\end{align*}

	\noindent
	\underline{\bf Case (ii)}: $0<\mu_N\le \ldots \leq\mu_2<\mu_1<1$, and $\bar{a}<\mu_1 $.
	
	\noindent{\bf Construction of instances.}
	Fix any $ \underline{a}< b_1 < \bar{a} < \mu_1< b_2< 1 $.  We define $\nu_1^\prime=\mathrm{Bern}(b_1)$ and $\nu_i^\prime=\mathrm{Bern}(b_2)$ for all arms $i\ne 1 \vphantom{\big(}$. 
	Therefore, 
	\begin{itemize}
		\item under instance $0$, we see that
		\begin{align*}
			&\mu_1^{(0)} = \mu_1 > \mu_i = \mu_i^{(0)}  \quad \forall  \, i\ne 1,
			\text{ and }\\
			&
			(\sigma_1^{(0)} )^2= \mu_1 (1-\mu_1 ) <  \bar{a}(1-\bar{a}) = \bar{\sigma}^2;
		\end{align*}
		\item  under instance $1$, we see that
		\begin{align*} 
			(\sigma_1^{(1)})^2 = b_1 (1-b_1 ) >  \bar{a}(1-\bar{a}) = \bar{\sigma}^2;
		\end{align*}
		\item  under instance $j$~($2\le j \le L$), we see that 
		\begin{align*}
			&\mu_j^{(j)} = b_2  > \mu_i = \mu_{i}^{(j)} \quad \forall \, i\ne j,
			\text{ and }\\
			&
			(\sigma_j^{(j)} )^2= b_2(1- b_2 ) < \bar{a}(1-\bar{a}) = \bar{\sigma}^2.
		\end{align*} 
	\end{itemize} 
	Since arm $1$ is the unique best feasible arm under instance $0$,
	arm $1$ is not feasible under instance $1$,
	and arm $j$ is the unique best feasible arm under instance $j$~($2\le j \le N$), we have $\mathbb{P}_{ \mathcal{G}_0 } [i_{\mathrm{out}} \ne 1 ]< \delta$ and $\mathbb{P}_{ \mathcal{G}_1 } [i_{\mathrm{out}} = 1] < \delta$, and $\mathbb{P}_{ \mathcal{G}_j } [i_{\mathrm{out}} \ne j]< \delta$ for all $2 \le j \le N$.

	\noindent {\bf Change of measure.}  
	We again lower bound $\mathbb{E}_{\mathcal{G}_0} [T_i( \tau^{(0)}  )]$ with the KL divergence by applying Lemma~\ref{lemma:lb_KLdecomp}.
	Let $\mathcal{E}:= \{ i_{\mathrm{out}} \ne 1  \}$, then
	\begin{align*}
		&\delta \ge \mathbb{P}_{ \mathcal{G}_0 } [ i_{\mathrm{out}} \ne 1]= \mathbb{P}_{ \mathcal{G}_0 }[   \mathcal{E} ] ,
		1- \delta \le \mathbb{P}_{ \mathcal{G}_1 } [ i_{\mathrm{out}} \ne 1] = \mathbb{P}_{ \mathcal{G}_1 }  [ \mathcal{E} ] ,
		\\
		&1-\delta \le \mathbb{P}_{ \mathcal{G}_j } [ i_{\mathrm{out}} = j] \le \mathbb{P}_{ \mathcal{G}_j } [ i_{\mathrm{out}} \ne 1] =  \mathbb{P}_{ \mathcal{G}_j }  [ \mathcal{E} ]
	\end{align*} 
	for $\forall \, 2\le j \le L.$
	Since $\mu_i^{(0)} = \mu_i^{(j)}$ for all $i\ne j$  under instance $1\le j\le N$, we have
	\begin{align*}
		\mathbb{E}_{ \mathcal{G}_0}& [T_{j}(\tau^{(0)})]
		\ge 
		\frac{
			d\big(    \mathbb{P}_{\mathcal{G}_0} (\mathcal{E} )  ,~   \mathbb{P}_{\mathcal{G}_j} (\mathcal{E} )   \big)}{ d\big( \mu_j^{(0)}, \mu_j^{(j)} \big) }
		\ge 
		\frac{
			d(    \delta , 1-\delta ) }{ d\big(  \mu_j^{(0)}, \mu_j^{(j)}   \big) }
		\\
		&\ge  \frac{ -\ln( 2.4\delta) }{ d\big(  \mu_j^{(0)}, \mu_j^{(j)} \big) } 
		\ge 
		\left\{
		\begin{array}{ll}
			\displaystyle
			\frac{ -\ln( 2.4\delta) }{ d (  \mu_1, b_{1 }  ) },& j=1
			\\ \vspace{-.8em} \\
			\displaystyle
			\frac{ -\ln( 2.4\delta)     }{ d (  \mu_j, b_2  ) } ,&2\le j \le L
		\end{array}
		\right.
		.
	\end{align*}
	Therefore,
	\begin{align*}
		\mathbb{E}_{ \mathcal{G}_0 }[ \tau^{(0)} ]
		\ge 
		\ln \bigg( \frac{1}{2.4\delta} \bigg) \cdot \bigg( \frac{ 1 }{ d ( \mu_1, b_1 ) }  + \sum_{i=2}^N \frac{ 1 }{ d ( \mu_i, b_2 ) }  \bigg)  .
	\end{align*} 
	Since $(b_1,b_2)$ can be chosen arbitrarily from $ B= \{ (b_1, b_2 ): \underline{a}< b_1 < \bar{a} < \mu_1< b_2< 1  \}$, we have
	\begin{align*}
		&\mathbb{E}_{ \mathcal{G}_0 }[ \tau^{(0)} ]\\
		&\ge 
		\sup_{ (b_1, b_2 ) \in B   }
		\ln \bigg( \frac{1}{2.4\delta} \bigg) 
		\cdot  
		\bigg( \frac{ 1 }{ d ( \mu_1, b_1 ) }  + \sum_{i=2}^N \frac{ 1 }{ d ( \mu_i, b_2 ) }  \bigg) 
		\\ & 
		=
		\ln \bigg( \frac{1}{2.4\delta} \bigg) \cdot  
		\bigg( \frac{ 1 }{ d ( \mu_1, \bar{a} ) }  + \sum_{i=2}^N \frac{ 1 }{ d ( \mu_i, \mu_1 ) }  \bigg) 
		.
	\end{align*}   
	\noindent
	\underline{\bf Case (iii)}: 
	$\mu_i< \bar{a} $ for all $i\in[N]$, $ \{ i\in[N]: \mu_i =\underline{a} \} =\emptyset$, $ \{ i\in[N]: \mu_i <\underline{a} \} \ne \emptyset$, and arm $1=\mathop{\rm argmax}\ \{ \mu_i : \mu_i < \underline{a}\}  $.
	
	\noindent{\bf Construction of instances.}
	Fix any $ \mu_1 < b_1 < \underline{a}< b_2 < \bar{a} < b_3< 1 $.  
	We define $\nu_i^\prime = \mathrm{Bern}( \mu_i^\prime ) $ for arm $i\in[N]$ with
	\begin{align*}
		\mu_i^\prime = 
		\left\{
		\begin{array}{ll}
			b_1  ,& \mu_i <1/2 \text{ and } \mu_i \ne \mu_1,
			\\ 
			b_2  ,& \mu_i =\mu_1,
			\\ 
			b_3 , & \mu_i \ge 1/2.
		\end{array}
		\right.
	\end{align*} 
	Therefore, 
	arm $1$ is the unique best feasible arm under instance $0$,
	arm $1$ is not feasible  under instance $1$,
	and arm $j$ is the unique best feasible arm under instance $j$~($2\le j \le N$), we have $\mathbb{P}_{ \mathcal{G}_0 } [i_{\mathrm{out}} \ne 1 ]< \delta$, $\mathbb{P}_{ \mathcal{G}_1 } [i_{\mathrm{out}} = 1]< \delta$, and $\mathbb{P}_{ \mathcal{G}_j } [i_{\mathrm{out}} \ne j]< \delta$ for all $2 \le j \le N$.

	\noindent {\bf Change of measure.}  
	We again lower bound $\mathbb{E}_{\mathcal{G}_0} [T_i( \tau^{(0)}  )]$ with the KL divergence by applying Lemma~\ref{lemma:lb_KLdecomp}.
	Let $\mathcal{E}:= \{ i_{\mathrm{out}} \ne 1  \}$, then
	\begin{align*}
		&\delta \ge \mathbb{P}_{ \mathcal{G}_0 } [i_{\mathrm{out}} \ne 1]= \mathbb{P}_{ \mathcal{G}_0 } [ \mathcal{E}] ,
		1-\delta  \le \mathbb{P}_{ \mathcal{G}_1} [i_{\mathrm{out}} \ne 1]=  \mathbb{P}_{ \mathcal{G}_1 } [ \mathcal{E}] .
		\\
		&1-\delta  \le \mathbb{P}_{ \mathcal{G}_j } [ i_{\mathrm{out}} = j] \le \mathbb{P}_{ \mathcal{G}_j } [i_{\mathrm{out}} \ne 1] =  \mathbb{P}_{ \mathcal{G}_j }  [  \mathcal{E}] 
	\end{align*} 
	for $\forall\, 2\le j \le L. $
	Since $\mu_i^{(0)} = \mu_i^{(j)}$ for all $i\ne j$  under instance $1\le j\le N$, we have
	\begin{align*}
		&\mathbb{E}_{ \mathcal{G}_0} [T_{j}(\tau^{(0)})]
		\ge 
		\frac{
			d\big(    \mathbb{P}_{\mathcal{G}_0} [\mathcal{E}] ,  \mathbb{P}_{\mathcal{G}_j}[ \mathcal{E}]   \big)}{ d\big( \mu_j^{(0)}, \mu_j^{(j)} \big) }
		\ge 
		\frac{
			d(    \delta , 1-\delta ) }{ d\big(  \mu_j , \mu_j^\prime   \big) }
		\\
		&\ge  \frac{ -\ln( 2.4\delta) }{ d\big(  \mu_j , \mu_j^\prime \big) } 
		\ge 
		\left\{
		\begin{array}{ll}
			\displaystyle
			\frac{ -\ln( 2.4\delta) }{ d (  \mu_1, b_{1 }  ) },& \mu_j < 1/2 \text{ and } \mu_j\ne \mu_1
			\\ \vspace{-.8em} \\
			\displaystyle
			\frac{ -\ln( 2.4\delta)     }{ d (  \mu_j, b_2  ) } ,& \mu_j = \mu_1
			\\ \vspace{-.8em} \\
			\displaystyle
			\frac{ -\ln( 2.4\delta)     }{ d (  \mu_j, b_3  ) } ,& \mu_j \ge 1/2
		\end{array}
		\right.
		. 
	\end{align*}
	Therefore,
	\begin{align*}
		\mathbb{E}_{ \mathcal{G}_0 }[ \tau^{(0)} ]
		&\ge 
		\sum_{j=1}^N \mathbb{E}_{ \mathcal{G}_0} [T_{j}(\tau^{(0)})]
		\\
		&
		\ge  
		\ln \bigg( \frac{1}{2.4\delta} \bigg) \cdot  
		\bigg(  \frac{ 1 }{ d ( \mu_1, b_2 ) }  +
		\sum_{ \substack{ i: \mu_i<1/2,\\ \mu_i \ne \mu_1  }} \frac{ 1 }{ d ( \mu_i, b_1 ) } 
		\\
		&\quad+ \sum_{ i: \mu_i\ge 1/2  } \frac{ 1 }{ d ( \mu_i, b_3 ) } 
		\bigg). 
	\end{align*}  
	Since $(b_1 ,b_2,b_3)$ can be chosen arbitrarily from $ B= \{ (b_1, b_2,b_3 ):   \mu_1 < b_1 < \underline{a}< b_2 < \bar{a} < b_3< 1   \}$, we have
	\begin{align*}
		\mathbb{E}_{ \mathcal{G}_0 }[ \tau^{(0)} ] 
		&\ge 
		\sup_{ (b_1, b_2,b_3 ) \in B   }
		\ln \bigg( \frac{1}{2.4\delta} \bigg) \cdot  
		\bigg(  \frac{ 1 }{ d ( \mu_1, b_2 ) }  +
		\\
		&
		\qquad\qquad
		\sum_{ \substack{ i: \mu_i<1/2, \\ \mu_i \ne \mu_1} } \frac{ 1 }{ d ( \mu_i, b_1 ) } 
		+ \sum_{ i: \mu_i\ge 1/2  } \frac{ 1 }{ d ( \mu_i, b_3 ) } 
		\bigg)
		\\ & 
		=
		\ln \bigg( \frac{1}{2.4\delta} \bigg) \cdot  
		\bigg(  \frac{ 1 }{ d ( \mu_1, \underline{a} ) }  +
		\sum_{ \substack{i: \mu_i<1/2,\\ \mu_i \ne \mu_1 }} \frac{ 1 }{ d ( \mu_i, \mu_1 ) } \\
		&
		\quad
		+ \sum_{ i: \mu_i\ge 1/2  } \frac{ 1 }{ d ( \mu_i, \bar{a} ) } 
		\bigg)
		.
	\end{align*}

	\noindent{\bf Step 3: Simplification of the bounds with $H_{\mathrm{VA}}$.}
	We further lower bound the sample complexity in each case. 
	
	\noindent
	\underline{\bf Case (i)}: When $\underline{a}<\mu_N\le \ldots \leq\mu_2\leq\mu_1<\bar{a} $, we have
	\begin{align*}
		\mathbb{E}_{ \mathcal{G}_0 }[ \tau^{(0)} ]
		&
		\ge  
		\ln \bigg( \frac{1}{2.4\delta} \bigg) \cdot  
		\bigg( \sum_{  i: \mu_i<1/2 } \frac{ 1 }{ d ( \mu_i, \underline{a} ) } \\
		&
		\quad+
		\sum_{ i: \mu_i\ge 1/2  } \frac{ 1 }{ d ( \mu_i, \bar{a} ) }
		\bigg) 
		.
	\end{align*}  
	By Theorem~\ref{thm:pinsker}, we have
	\begin{align*}
		&d( \mu_i,  \bar{a} ) \le \frac{ 1 }{ \underline{a} } \cdot ( \mu_i - \bar{a} )^2 \quad
		\text{ and }\quad
		d( \mu_i,  \underline{a} ) \le \frac{ 1 }{ \underline{a} } \cdot ( \mu_i - \underline{a} )^2,\\
		&\text{ where } \quad\underline{a}<1/2 < \bar{a}=1-\underline{a} 
		.
	\end{align*}
	Let $\sigma_a^2$ be the variance of $\mathrm{Bern}(a)$ and $\sigma_b^2$ be the variance of $\mathrm{Bern}(b)$ for any $ a,b\in(0,1) $.
	Then
	\begin{align*}
		\sigma_a^2 - \sigma_b^2  &= a(1-a) - b(1-b) = a-b - a^2 + b^2 \\
		&= (a-b)(1-a-b), \\
		( \sigma_a^2 - \sigma_b^2 )^2 &= (a-b)^2 (1-a-b)^2.
	\end{align*}
	Note that $\bar{a}( 1-\bar{a} ) = \bar{\sigma}^2$. For $\mu_i\ge 1/2$,
	\begin{align*}
		d( \mu_i,  \bar{a} ) &\le \frac{ ( \mu_i - \bar{a} )^2  \cdot (1- \mu_i -\bar{a} )^2 }{  \underline{a} (1- \mu_i -\bar{a} )^2 }  
		= \frac{  ( \sigma_i^2 - \bar{\sigma}^2 )^2 }{ \underline{a} (1- \mu_i -\bar{a} )^2 }
		\\
		&\le  \frac{  ( \sigma_i^2 - \bar{\sigma}^2 )^2 }{ \underline{a} ( \bar{a} -1/2)^2 }
		= \frac{  ( \sigma_i^2 - \bar{\sigma}^2 )^2 }{ \underline{a} ( 1/2 - \underline{a} )^2 };
	\end{align*}
	for $\mu_i < 1/2$,
	\begin{align*}
		d( \mu_i,  \underline{a} ) &\le \frac{ ( \mu_i - \underline{a} )^2  \cdot (1- \mu_i -\underline{a} )^2 }{  \underline{a} (1- \mu_i -\underline{a} )^2 }  
		= \frac{  ( \sigma_i^2 - \bar{\sigma}^2 )^2 }{ \underline{a} (1- \mu_i -\underline{a} )^2 }
		\\
		&\le  \frac{  ( \sigma_i^2 - \bar{\sigma}^2 )^2 }{ \underline{a} ( 1/2 - \underline{a} )^2 }.
	\end{align*} 
	Since there is no feasible arm,  $\bar{\mathcal{F}}^c\cap\mathcal{R} = [N]$ and $\bar{\mathcal{F}}^c\cap\mathcal{S}=\mathcal{F}=\emptyset$.
	Notice an obvious fact 
	\begin{equation}\label{simplefact}
		(1/2- \underline{a} )^2 = \underline{a}^2 - \underline{a} + 1/4 = 1/4 - \bar{\sigma}^2,
	\end{equation}
	thus
	\begin{align*}
		\mathbb{E} [ \tau ]
		&
		\ge  
		\underline{a}(1/4 - \bar{\sigma}^2 ) \ln \bigg( \frac{1}{2.4\delta} \bigg) \cdot  
		\sum_{ i \in [N] } \frac{ 1 }{  ( \sigma_i^2 - \bar{\sigma}^2 )^2  }  
		\\
		&= H_{\mathrm{VA}} \ln \bigg( \frac{1}{2.4\delta} \bigg)  \cdot \underline{a}(1/4 - \bar{\sigma}^2 ) 
		.
	\end{align*}

	\noindent
	\underline{\bf Case (ii)}: When $0<\mu_N\le \ldots \leq\mu_2<\mu_1<1$, and $\bar{a}<\mu_1 $, we have 
	\begin{align*}
		\mathbb{E}_{ \mathcal{G}_0 }[ \tau^{(0)} ]
		&
		\ge  
		\ln \bigg( \frac{1}{2.4\delta} \bigg) \cdot  
		\bigg( \frac{ 1 }{ d ( \mu_1, \bar{a} ) }  + \sum_{j=2}^N \frac{ 1 }{ d ( \mu_j, \mu_1 ) }  \bigg) 
		.
	\end{align*}   
	We first apply Theorem~\ref{thm:pinsker} to see that
	\begin{align}
		d( \mu_i,  \mu_1 ) &\le \frac{ ( \mu_i - \mu_1 )^2  }{  1-\mu_1 } ,\quad \forall\, i\ne 1 ,
		\nonumber \\
		d( \mu_1,  \bar{a} ) &\le \frac{ ( \mu_1- \bar{a} )^2  \cdot (1- \mu_1 -\bar{a} )^2 }{  \underline{a} (1- \mu_1-\bar{a} )^2 }  
		= \frac{  ( \sigma_1^2 - \bar{\sigma}^2 )^2 }{ \underline{a} (1- \mu_1 -\bar{a} )^2 }
		\\
		&
		\le  \frac{  ( \sigma_1^2 - \bar{\sigma}^2 )^2 }{ \underline{a} ( 2\bar{a} -1)^2 }
		= \frac{  ( \sigma_1^2 - \bar{\sigma}^2 )^2 }{ 4\underline{a} ( 1/2 - \underline{a} )^2 }.  \label{eq:bai_fix_conf_low_bd_simp_1}
	\end{align}
	Since $\bar{\mathcal{F}}^c\cap\mathcal{R} $ is empty, $ \mathcal{S}= [ N]\setminus\{ i^\star \} $ and hence 
	\begin{align}
		\Delta_1& = \Delta_{i^\star}  = \min_{i \in \mathcal{S} } \Delta_i
		\cdot   \mathbbm{1}\{ \mathcal{S}\ne \emptyset  \} +   \infty \cdot   \mathbbm{1}\{ \mathcal{S}= \emptyset  \} 
		\\
		&= \min_{i \in[N] \setminus\{ i^\star\}  } \Delta_i .\label{eq:bai_fix_conf_low_bd_simp_2}
	\end{align} 
	Lastly,
	\begin{align*}
		\mathbb{E} [ \tau  ]
		&
		\overset{(a)}{\ge}  
		\ln \bigg( \frac{1}{2.4\delta} \bigg) \cdot  \frac{ 4\underline{a} ( 1/2 - \underline{a} )^2 }{  ( \sigma_1^2 - \bar{\sigma}^2 )^2 }
		\\*
		&\quad+
		(1-\mu_1 )\ln \bigg( \frac{1}{2.4\delta} \bigg) \cdot  \sum_{i=2}^N \frac{ 1 }{ ( \mu_i - \mu_1 )^2  } 
		\\
		&
		\overset{(b)}{\ge}  
		\ln \bigg( \frac{1}{2.4\delta} \bigg) \cdot  \frac{ 4\underline{a} ( 1/4 - \bar{\sigma}^2 )  }{  (  \Delta_1^{\mathrm{v}} )^2 }
		\\*
		&\quad
		+
		\frac{ 1-\mu_1}{2} \ln \bigg( \frac{1}{2.4\delta} \bigg) \cdot  \sum_{i=1}^N \frac{ 1 }{ \Delta_i^2  }  
		\\&
		\ge 
		\ln \bigg( \frac{1}{2.4\delta} \bigg) \cdot  \frac{ \min\{ 4\underline{a}( 1/4 - \bar{\sigma}^2 ),  (1-\mu_1)/8\} }{ \max\{ ( \Delta_1^{\mathrm{v}} )^2, (\Delta_1/2)^2\} }
		\\&\quad 
		+ \frac{ 1-\mu_1}{ 8 } \ln \bigg( \frac{1}{2.4\delta} \bigg) \cdot  \bigg( \sum_{i\in \mathcal{F}\cap\mathcal{S}  }  \frac{ 1 }{ (\Delta_i/2)^2  } 
		\\
		&\qquad+  \sum_{i\in   \bar{\mathcal{F}}^c\cap\mathcal{S}}  \frac{ 1 }{ \max\{ (\Delta_i/2)^2, (\Delta_i^{\mathrm{v}})^2 \} } 
		\bigg)
		\\&
		= H_{\mathrm{VA}} \ln \bigg( \frac{1}{2.4\delta} \bigg) \cdot  \min \bigg\{   4\underline{a} \bigg(\frac{1}{4} - \bar{\sigma}^2\bigg) ,  \frac{ 1-\mu_1}{ 8 }
		\bigg\}  
		.
	\end{align*}
	We derive $(a)$ by applying the lower bounds on the KL divergences in \eqref{eq:bai_fix_conf_low_bd_simp_1}, and $(b)$ follows from \eqref{simplefact} and \eqref{eq:bai_fix_conf_low_bd_simp_2}. 
	
	\noindent
	\underline{\bf Case (iii)}: When
	$ \mu_i<\bar{a} $ for all $i\in[N]$, $ \{ i\in[N]: \mu_i =\underline{a} \} =\emptyset$, $ \{ i\in[N]: \mu_i <\underline{a} \} \ne \emptyset$, and arm $1=\mathop{\rm argmax} \{ \mu_i : \mu_i < \underline{a}\}  $, we have  
	\begin{align*}
		\mathbb{E}_{ \mathcal{G}_0 }[ \tau^{(0)} ]
		&
		\ge  
		\ln \bigg( \frac{1}{2.4\delta} \bigg) \cdot  
		\bigg(  \frac{ 1 }{ d ( \mu_1, \underline{a} ) }  +
		\sum_{ \substack{i: \mu_i<1/2, \\ \mu_i \ne \mu_1 } } \frac{ 1 }{ d ( \mu_i, \mu_1 ) } 
		\\
		&\quad
		+ \sum_{ i: \mu_i\ge 1/2  } \frac{ 1 }{ d ( \mu_i, \bar{a} ) } 
		\bigg)
		.
	\end{align*}     
	Similar to the analysis of Cases (i) and (ii), we have
	\begin{align}
		d( \mu_i,  \bar{a} ) & \le 
		\frac{  ( \sigma_i^2 - \bar{\sigma}^2 )^2 }{ \underline{a} ( 1/4 - \bar{\sigma}^2)  } \quad \forall\,  \mu_i\ge 1/2,\\
		d( \mu_i,  \underline{a} ) &\le \frac{  ( \sigma_i^2 - \bar{\sigma}^2 )^2 }{ \underline{a} ( 1/4 - \bar{\sigma}^2)  }  \quad \forall\,  \mu_i < 1/2,
		\nonumber \\
		d( \mu_1,  \underline{a} )& \le \frac{ ( \mu_1- \underline{a} )^2  \cdot (1- \mu_1 -\underline{a} )^2 }{  \underline{a} (1- \mu_1-\underline{a} )^2 }  
		= \frac{  ( \sigma_1^2 - \bar{\sigma}^2 )^2 }{ \underline{a} (1- \mu_1 -\underline{a} )^2 }
		\\
		&\le  \frac{  ( \sigma_1^2 - \bar{\sigma}^2 )^2 }{ \underline{a} ( 1- 2\underline{a} )^2 }
		= \frac{  ( \sigma_1^2 - \bar{\sigma}^2 )^2 }{ 4\underline{a} ( 1/2 - \underline{a} )^2 }.  \nonumber 
	\end{align}  
	Note that $i^{\star}=1$ and $\mu_{i^{\star}} = \mu_1 < \underline{a}$.
	For $\mu_i < \mu_1$, arm $i$ is feasible and we also have
	\begin{align*}
		d( \mu_i,  \underline{a} ) \le \frac{  (\mu_i - \mu_1)^2 }{ \underline{a}}
		= \frac{  (\mu_i - \mu_{i^{\star}})^2 }{ \underline{a}}.
	\end{align*}
	Therefore,
	\begin{align*}
		\mathbb{E} [ \tau  ]
		&
		\ge  
		\ln \bigg( \frac{1}{2.4\delta} \bigg) \cdot  \frac{ 4\underline{a} ( 1/2 - \underline{a} )^2 }{  ( \sigma_1^2 - \bar{\sigma}^2 )^2 }
		+
		\underline{a} \ln \bigg( \frac{1}{2.4\delta} \bigg) \\
		&\quad
		\cdot  
		\Bigg( \sum_{ i \in \mathcal{F}\cap\mathcal{S}  } 
		\frac{ 1 }{ (\mu_i - \mu_{i^{\star}})^2 } 
		+
		\sum_{ i\notin \mathcal{F} } \frac{  1/4 - \bar{\sigma}^2   }{  ( \sigma_i^2 - \bar{\sigma}^2 )^2 }   \Bigg). 
	\end{align*}  
	By definition, $\bar{\mathcal{F}}^c\cap\mathcal{S}$ is empty and hence $\Delta_{1} = \min_{i \in \mathcal{S}} \Delta_i = \min_{i \in {\mathcal{F}\cap\mathcal{S}}} \Delta_i$.
	Combined with \eqref{simplefact}, the above analysis yields
	\begin{align*}
		\mathbb{E} [ \tau  ]
		&
		\ge   
		\underline{a} \ln \bigg( \frac{1}{2.4\delta} \bigg) \cdot  
		\Bigg(  
		\frac{ 4 ( 1/4 - \bar{\sigma}^2) }{  ( \Delta_1^{\mathrm{v}} )^2 }
		+
		\sum_{   i \in \mathcal{F}\cap\mathcal{S}  } 
		\frac{ 1 }{ \Delta_i^2 } 
		\\*
		&\quad
		+
		\sum_{ i\in\bar{\mathcal{F}}^c\cap \mathcal{R}} \frac{  1/4 - \bar{\sigma}^2   }{  ( \Delta_i^{\mathrm{v}})^2 }   \Bigg)
		\\&
		\ge   
		\underline{a}  \ln \bigg( \frac{1}{2.4\delta} \bigg) \cdot  
		\Bigg(  
		\frac{ 4 ( 1/4 - \bar{\sigma}^2) }{  ( \Delta_1^{\mathrm{v}} )^2 }
		+
		\sum_{    i \in \mathcal{F}  } 
		\frac{ 1/2 }{ \Delta_i^2 } 
		\\*
		&\quad
		+
		\sum_{ i\in \bar{\mathcal{F}}^c\cap \mathcal{R} } \frac{  1/4 - \bar{\sigma}^2   }{  ( \Delta_i^{\mathrm{v}})^2 }   \Bigg)
		\\&
		\ge   
		\underline{a}\ln \bigg( \frac{1}{2.4\delta} \bigg) \cdot  
		\Bigg(  
		\frac{ \min\{ 4 ( 1/4 - \bar{\sigma}^2), 1/8 \} }{  \max\{ ( \Delta_1^{\mathrm{v}} )^2,  (\Delta_1/2)^2 \} }
		\\
		&\quad
		+
		\sum_{i \in \mathcal{F}\cap\mathcal{S}  } 
		\frac{ 1/8  }{ ( \Delta_i/2)^2 } 
		+
		\sum_{ i\in\bar{\mathcal{F}}^c\cap \mathcal{R}} \frac{  1/4 - \bar{\sigma}^2   }{  ( \Delta_i^{\mathrm{v}})^2 }   \Bigg)
		\\& 
		\ge  
		H_{\mathrm{VA}}
		\ln \bigg( \frac{1}{2.4\delta} \bigg) \cdot \underline{a} 
		\cdot
		\min\bigg\{  \frac{1}{8}, \frac{1}{4} - \bar{\sigma}^2  \bigg\}.
	\end{align*}

	\noindent
	{\bf Step 4: Conclusion.} 
	
	\noindent
	\underline{\bf Case (i)}: When $\underline{a}<\mu_N\le \ldots \leq\mu_2\leq\mu_1<\bar{a} $, we have
	\begin{align*}
		\mathbb{E} [ \tau ]
		&
		\ge    
		H_{\mathrm{VA}} \ln \bigg( \frac{1}{2.4\delta} \bigg)  \cdot  \underline{a}(1/4 - \bar{\sigma}^2 ) 
		.
	\end{align*}
	
	\noindent
	\underline{\bf Case (ii)}: When $0<\mu_N\le \ldots \leq\mu_2<\mu_1<1$, and $\bar{a}<\mu_1 $, we have  
	\begin{align*}
		\mathbb{E} [ \tau  ]
		& 
		\ge
		H_{\mathrm{VA}} \ln \bigg( \frac{1}{2.4\delta} \bigg) \cdot  \min \bigg\{   4\underline{a} \bigg(\frac{1}{4} - \bar{\sigma}^2\bigg) ,  \frac{ 1-\mu_1}{ 8 }
		\bigg\}  \vphantom{ \Bigg\} }.
	\end{align*}

	\noindent
	\underline{\bf Case (iii)}: When
	$ \mu_i<\bar{a} $ for all $i\in[N]$, $ \{ i\in[N]: \mu_i =\underline{a} \} =\emptyset$, $ \{ i\in[N]: \mu_i <\underline{a} \} \ne \emptyset$, and arm $1=\mathop{\rm argmax} \{ \mu_i : \mu_i < \underline{a}\}     $, we have   
	\begin{align*}
		\mathbb{E} [ \tau  ]
		& 
		\ge  
		H_{\mathrm{VA}}
		\ln \bigg( \frac{1}{2.4\delta} \bigg) \cdot  \underline{a} 
		\cdot
		\min\bigg\{  \frac{1}{8}, \frac{1}{4} - \bar{\sigma}^2  \bigg\}.
	\end{align*}   
	In either case, we have  
	\begin{align*}
		\mathbb{E} [ \tau  ]
		& 
		\ge  
		H_{\mathrm{VA}}
		\ln \bigg( \frac{1}{2.4\delta} \bigg)  
		\cdot
		\min \bigg\{  \underline{a} \Big(\frac{1}{4} - \bar{\sigma}^2 \Big),\ \frac{\underline{a}}{ 8} ,\ \frac{ 1-\mu_{i^{\star}}}{ 8 }
		\bigg\} ,
	\end{align*}     
	which completes the proof of the lower bound.
\end{proof}

\begin{proof}[Proof of Corollary~\ref{cor:almost_opt}]
	The only statement that requires proof is the fact that the average sample complexity of VA-LUCB $\mathbb{E}[\tau^{\text{VA-LUCB}}]$ is  $O\big(H_\mathrm{VA}\ln (H_\mathrm{VA}/{\delta})\big)$. From Lemma~\ref{stop}, the expected time steps can be upper bounded by\footnote{The power of $t$ in the summation is $2$. The number $2$ can be traced back to choice/design of the power of $t$ (which is $4$) in the confidence radii $\alpha$ and $\beta$ in~\eqref{radius}. In fact, the power $4$ in \eqref{radius} can be replaced by any number slightly greater than $3$ so that the infinite summation in~\eqref{eqn:summation} still converges. By doing so, the sample complexity of the upper bound remains unchanged, but the empirical performance will be improved. One can replace the $4$ with $2$ and the empirical performance will be improved significantly. However, this comes at the expense of the loss of tightness in the expected sample complexity result (cf.\ Corollary~\ref{cor:almost_opt}).  }
	\begin{align}
		\hspace{-.1in}t^\star\!+\!\sum_{t=t^{\star}+1}^\infty \frac{5\delta}{t^2}\leq t^\star\!+\!\frac{5\delta}{t^\star}\leq 152\, H_\mathrm{VA}\ln\frac{H_\mathrm{VA}}{\delta}\!+\! 1. \label{eqn:summation}
	\end{align}
	Note the average sample complexity is at most twice the number of time steps, thus \begin{align}
		\tau_\delta^{\star}\le\mathbb{E}[\tau^{\text{VA-LUCB}}] &\le 304\, H_{\mathrm{VA}}\ln\frac{H_\mathrm{VA}}{\delta}+2\\
		&
		=O\left(H_{\mathrm{VA}}\ln\frac{H_\mathrm{VA}}{\delta}\right),
	\end{align} which completes the proof. 
\end{proof}

\section{Experimental Details}\label{expdetail}
\subsection{Experiment Design for the Fourth Term of $ H_\mathrm{VA} $}\label{design of fourth}
We complete the description of the experiment design in Section~\ref{experiments}, i.e., 
for the fourth term  $ \max\{\Delta_{i}/{2},\Delta_{i}^{\mathrm{v}}\}^{-2}  $,\newline
(a). Under the condition that $ \Delta_{i}/{2}\leq \Delta_{i}^{\mathrm{v}} $, when $ \Delta_{i^\star} (\geq 2\Delta_{i^\star}^{\mathrm{v}})$ and $  \Delta_{i}  $ for all $i\in\bar{\mathcal{F}}^c\cap\mathcal{S} $ increase, $ H_{\mathrm{VA}} $ and the sample complexity will stay the same.\newline
(b). Under the condition that $ \Delta_{i}/{2}\leq \Delta_{i}^{\mathrm{v}} $,when $ \Delta_{i}^{\mathrm{v}} $ increases, $ H_{\mathrm{VA}} $ and the sample complexity will decrease.\newline
(c). Under the condition that $ \Delta_{i}/{2}\geq \Delta_{i}^{\mathrm{v}} $, as $ \Delta_{i^\star} (\leq 2\Delta_{i^\star}^{\mathrm{v}})$ and $  \Delta_{i}  $ for all $i\in\bar{\mathcal{F}}^c\cap\mathcal{S} $ increase, $ H_{\mathrm{VA}} $  and the sample complexity will decrease.\newline
(d). Under the condition that $ \Delta_{i}/{2}\geq \Delta_{i}^{\mathrm{v}} $, as $ \Delta_{i}^{\mathrm{v}} $ increase, $ H_{\mathrm{VA}} $ and the sample complexity will stay unchanged.

\subsection{Specific Parameters for Each Instance}\label{specific para}
There are $4$ cases for the first and fourth term in $H_\mathrm{VA}$ respectively, as well as one case for the second and third term respectively. In each case, there are $ 11 $ instances, indexed by $ j\in\{0,1,2\ldots,10\} $. Each instance consists of  $N=20$ arms, including $ i^\star $ (if it exists), $ i^{\star\star} $ (if it exists), and the other $18$ arms with exactly the same parameters. Beta distribution are adopted as the reward distributions for the arms because they are supported on $[0,1]$ and due to  their flexibility in assigning the expectations and the variances for the arms. To be more specific,
given a Beta distribution $ B(\alpha,\beta) $ with expectation $ a $ and variance $ b $, where $ \alpha,\beta>0,a(1-a)>b $, the four parameters are related according to the following equations:
\begin{equation}\label{beta}
	\begin{aligned}
		&a=\frac{\alpha}{\alpha+\beta},\, &b=&\frac{\alpha\beta}{(\alpha+\beta)^2(\alpha+\beta+1)}\\
		&\alpha=\frac{a^2(1-a)-ab}{b},\, &\beta=&\frac{1-a}{a}\alpha=\frac{a(1-a)^2-(1-a)b}{b}.
	\end{aligned}
\end{equation}
Thus, when the expectation and the variance are given, the two parameters of the Beta distribution $\alpha$ and $\beta$ can be readily computed.
To demonstrate the effects on the sample complexity of the four terms more clearly, each instance is designed to consist of the arms which are associated with the term to be examined.
The parameters for each instance indexed by $ j\in\{0,1\ldots,10\} $ in each case are described below. Recall the definition of $H_{\mathrm{VA}}$ in \eqref{eqn:hardness}.
Since the {first term} $ \min\{{\Delta_{i^\star}}/{2},\Delta_{i^\star}^{\mathrm{v}}\}^{-2}  $ involves  arm $ i^\star $, arm $ i^{\star\star} $, Case 1 is comprised of the best feasible arm $i^\star$ and feasible and subptimal arms (including $i^{\star\star}$, i.e., $H_{\mathrm{VA}}= \min\{\Delta_{i^\star}/{2},\Delta_{i^\star}^{\mathrm{v}}\}^{-2} 
	+\sum_{i\in\mathcal{F}\cap\mathcal{S}}(\Delta_i/2)^{-2}
	$. We control the mean gap of $i^\star$ by changing the mean of $i^{\star\star}$ and the variance gap by changing the variance of $i^\star$.

\noindent
\underline{\bf Case  1(a)}: $ {\Delta_{i^\star}}/{2}\leq \Delta_{i^\star}^{\mathrm{v}} $ and 
\begin{align}
	H_\mathrm{VA}&
	= \frac{1}{\min\{ {\frac{\Delta_{i^\star}}{2}},\Delta_{i^\star}^{\mathrm{v}}\}^2 }
	\!+\!\left(\frac{\Delta_{i^{\star\star}}}{2}\right)^{-2} \!+\!\sum_{i\in\mathcal{F}\cap\mathcal{S}\backslash\{i^{\star\star}\}}\left(\frac{\Delta_i}{2}\right)^{-2}
	\\
	&= \left(\frac{\Delta_{i^\star}}{2} \right)^{-2}+\left(\frac{\Delta_{i^{\star\star}}}{2}\right)^{-2}+\sum_{i\in\mathcal{F}\cap\mathcal{S}\backslash\{i^{\star\star}\}}\left(\frac{\Delta_i}{2}\right)^{-2}
\end{align}
	The parameters that are varied are $ \Delta_{{i^\star}} $ and $ \Delta_{i^{\star\star}} $. See Table~\ref{Case1a} for details.

\noindent
\underline{\bf Case  1(b)}: $ {\Delta_{i^\star}}/{2}\leq \Delta_{i^\star}^{\mathrm{v}} $ and 
\begin{align}
	H_\mathrm{VA}&= \frac{1}{\min\{\frac{\Delta_{i^\star}}{2},\Delta_{i^\star}^{\mathrm{v}}\}^2 }
	\!+\!\left(\frac{\Delta_{i^{\star\star}}}{2}\right)^{-2}
	\!+\!\sum_{i\in\mathcal{F}\cap\mathcal{S}\backslash\{i^{\star\star}\}}\left(\frac{\Delta_i}{2}\right)^{-2}\\
	&
	= \left(\frac{\Delta_{i^\star}}{2} \right)^{-2}+\left(\frac{\Delta_{i^{\star\star}}}{2}\right)^{-2}+\sum_{i\in\mathcal{F}\cap\mathcal{S}\backslash\{i^{\star\star}\}}\left(\frac{\Delta_i}{2}\right)^{-2} 
\end{align}
	The parameter that is varied is $ \Delta_{{i^\star}}^{\mathrm{v}} $. See Table~\ref{Case1b} for details.

\begin{table*}[t]
	\begin{minipage}{0.49\linewidth}
		\centering
		\begin{tabular}{cccccc}
			\hline
			\multicolumn{6}{c}{$ \bar{\sigma}^2=0.25$ }      \\ \hline
			\multicolumn{1}{c|}{$ \mu_{i^\star} $} & \multicolumn{1}{c|}{$ 0.7 $} & \multicolumn{1}{c|}{$ \mu_{i^{\star\star}} $} & \multicolumn{1}{c|}{$ \mu_{i^\star} - \Delta_{i^\star}$} & \multicolumn{1}{c|}{$ \mu_{i} $} & $ 0.2  $\\ \hline
			\multicolumn{1}{c|}{$ \Delta_{i^\star} $} & \multicolumn{1}{c|}{$ 0.01\times1.2^{j} $}  & \multicolumn{1}{c|}{$ \Delta_{i^{\star\star}} $}  & \multicolumn{1}{c|}{ $ \Delta_{i^\star} $}  & \multicolumn{1}{c|}{$ \Delta_{i} $}  &  $ 0.5 $ \\ \hline
			\multicolumn{1}{c|}{$ \sigma_{i^\star}^2 $} & \multicolumn{1}{c|}{$ 0.09 $}  & \multicolumn{1}{c|}{$ \sigma_{i^{\star\star}}^2 $}  & \multicolumn{1}{c|}{$ 0.09 $}  & \multicolumn{1}{c|}{$ \sigma_{i}^2 $}  & $ 0.09 $  \\ \hline
			\multicolumn{1}{c|}{$ \Delta_{i^\star}^{\mathrm{v}} $} & \multicolumn{1}{c|}{$ 0.16 $}  & \multicolumn{1}{c|}{$ \Delta_{i^{\star\star}}^{\mathrm{v}} $}  & \multicolumn{1}{c|}{$ 0.16 $}  & \multicolumn{1}{c|}{$ \Delta_{i}^{\mathrm{v}} $}  &  $ 0.16 $ \\ \hline
		\end{tabular}
		\caption{Case 1(a)}
		\label{Case1a}
	\end{minipage}
	\
	\begin{minipage}{0.49\linewidth}
		\centering
		\begin{tabular}{cccccc}
			\hline
			\multicolumn{6}{c}{$ \bar{\sigma}^2=0.25,N=20 $ }      \\ \hline
			\multicolumn{1}{c|}{$ \mu_{i^\star} $} & \multicolumn{1}{c|}{$ 0.55 $} & \multicolumn{1}{c|}{$ \mu_{i^{\star\star}} $} & \multicolumn{1}{c|}{$0.53$} & \multicolumn{1}{c|}{$ \mu_{i} $} &$  0.15  $\\ \hline
			\multicolumn{1}{c|}{$ \Delta_{i^\star} $} & \multicolumn{1}{c|}{$0.02  $}  & \multicolumn{1}{c|}{$ \Delta_{i^{\star\star}} $}  & \multicolumn{1}{c|}{ $ 0.02 $}  & \multicolumn{1}{c|}{$ \Delta_{i} $}  &  $ 0.4 $ \\ \hline
			\multicolumn{1}{c|}{$ \sigma_{i^\star}^2 $} & \multicolumn{1}{c|}{$ \bar{\sigma}^2-\Delta_{i^\star}^{\mathrm{v}}$}  & \multicolumn{1}{c|}{$ \sigma_{i^{\star\star}}^2 $}  & \multicolumn{1}{c|}{$ 0.09 $}  & \multicolumn{1}{c|}{$ \sigma_{i}^2 $}  & $ 0.09 $  \\ \hline
			\multicolumn{1}{c|}{$ \Delta_{i^\star}^{\mathrm{v}} $} & \multicolumn{1}{c|}{$ 0.01\times 1.2^{j}$}  & \multicolumn{1}{c|}{$ \Delta_{i^{\star\star}}^{\mathrm{v}} $}  & \multicolumn{1}{c|}{$ 0.16 $}  & \multicolumn{1}{c|}{$ \Delta_{i}^{\mathrm{v}} $}  &  $ 0.16 $ \\ \hline
		\end{tabular}
		\caption{Case 1(b)}
		\label{Case1b}
	\end{minipage}
\end{table*}

\noindent
\underline{\bf Case  1(c)}: $ {\Delta_{i^\star}}/{2}\geq \Delta_{i^\star}^{\mathrm{v}} $ and 
\begin{align}
	H_\mathrm{VA}&= 
	\frac{1}{\min\{\frac{\Delta_{i^\star}}{2},\Delta_{i^\star}^{\mathrm{v}}\}^2} 
	\!+\!\left(\frac{\Delta_{i^{\star\star}}}{2}\right)^{-2} 
	\!+\!\sum_{i\in\mathcal{F}\cap\mathcal{S}\backslash\{i^{\star\star}\}}\left(\frac{\Delta_i}{2}\right)^{-2}
	\\
	&= (\Delta_{i^\star}^{\mathrm{v}})^{-2}+\left(\frac{\Delta_{i^{\star\star}}}{2}\right)^{-2}+\sum_{i\in\mathcal{F}\cap\mathcal{S}\backslash\{i^{\star\star}\}}\left(\frac{\Delta_i}{2}\right)^{-2}
\end{align}
	See Table~\ref{Case1c} for details.

\noindent
\underline{\bf Case  1(d)}: $ {\Delta_{i^\star}}/{2}\geq \Delta_{i^\star}^{\mathrm{v}} $ and 
\begin{align}
	H_\mathrm{VA}&= \frac{1}{\min\{\frac{\Delta_{i^\star}}{2},\Delta_{i^\star}^{\mathrm{v}}\}^2 }
	\!+\!\left(\frac{\Delta_{i^{\star\star}}}{2}\right)^{-2}\!+\!\sum_{i\in\mathcal{F}\cap\mathcal{S}\backslash\{i^{\star\star}\}}\left(\frac{\Delta_i}{2}\right)^{-2}\\
	&= \left(\Delta_{i^\star}^{\mathrm{v}} \right)^{-2}+\left(\frac{\Delta_{i^{\star\star}}}{2}\right)^{-2}+\sum_{i\in\mathcal{F}\cap\mathcal{S}\backslash\{i^{\star\star}\}}\left(\frac{\Delta_i}{2}\right)^{-2}
\end{align} 
	The parameters that are varied are $ \Delta_{{i^\star}} $ and $ \Delta_{i^{\star\star}} $.
	See Table~\ref{Case1d} for details.

\begin{table*}[t]
	\begin{minipage}{0.49\linewidth}
		\centering
		\begin{tabular}{cccccc}
			\hline
			\multicolumn{6}{c}{$ \bar{\sigma}^2=0.25 ,N=20$ }      \\ \hline
			\multicolumn{1}{c|}{$ \mu_{i^\star} $} & \multicolumn{1}{c|}{$ 0.55 $} & \multicolumn{1}{c|}{$ \mu_{i^{\star\star}} $} & \multicolumn{1}{c|}{$ 0.15$} & \multicolumn{1}{c|}{$ \mu_{i} $} & $ 0.15  $\\ \hline
			\multicolumn{1}{c|}{$ \Delta_{i^\star} $} & \multicolumn{1}{c|}{$0.4 $}  & \multicolumn{1}{c|}{$ \Delta_{i^{\star\star}} $}  & \multicolumn{1}{c|}{ $0.4$}  & \multicolumn{1}{c|}{$ \Delta_{i} $}  &  $ 0.4 $ \\ \hline
			\multicolumn{1}{c|}{$ \sigma_{i^\star}^2 $} & \multicolumn{1}{c|}{$ \bar{\sigma}^2-\Delta_{i^\star}^{\mathrm{v}}$}  & \multicolumn{1}{c|}{$ \sigma_{i^{\star\star}}^2 $}  & \multicolumn{1}{c|}{$ 0.09 $}  & \multicolumn{1}{c|}{$ \sigma_{i}^2 $}  & $ 0.09 $  \\ \hline
			\multicolumn{1}{c|}{$ \Delta_{i^\star}^{\mathrm{v}} $} & \multicolumn{1}{c|}{$ 0.01\times 1.2^{j} $}  & \multicolumn{1}{c|}{$ \Delta_{i^{\star\star}}^{\mathrm{v}} $}  & \multicolumn{1}{c|}{$ 0.16 $}  & \multicolumn{1}{c|}{$ \Delta_{i}^{\mathrm{v}} $}  &  $ 0.16 $ \\ \hline
		\end{tabular}
		\caption{Case 1(c)}
		\label{Case1c}
	\end{minipage}
	\
	\begin{minipage}{0.49\linewidth}
		\centering
		\begin{tabular}{cccccc}
			\hline
			\multicolumn{6}{c}{$ \bar{\sigma}^2=0.04$ }      \\ \hline
			\multicolumn{1}{c|}{$ \mu_{i^\star} $} & \multicolumn{1}{c|}{$ 0.7 $} & \multicolumn{1}{c|}{$ \mu_{i^{\star\star}} $} & \multicolumn{1}{c|}{$ \mu_{i^\star} - \Delta_{i^\star}$} & \multicolumn{1}{c|}{$ \mu_{i} $} & $ 0.3 $ \\ \hline
			\multicolumn{1}{c|}{$ \Delta_{i^\star} $} & \multicolumn{1}{c|}{$ 0.02\times 1.1^{j} $}  & \multicolumn{1}{c|}{$ \Delta_{i^{\star\star}} $}  & \multicolumn{1}{c|}{ $ \Delta_{i^\star} $}  & \multicolumn{1}{c|}{$ \Delta_{i} $}  &  $ 0.4 $ \\ \hline
			\multicolumn{1}{c|}{$ \sigma_{i^\star}^2 $} & \multicolumn{1}{c|}{$ 0.03 $}  & \multicolumn{1}{c|}{$ \sigma_{i^{\star\star}}^2 $}  & \multicolumn{1}{c|}{$ 0.03 $}  & \multicolumn{1}{c|}{$ \sigma_{i}^2 $}  & $ 0.03 $  \\ \hline
			\multicolumn{1}{c|}{$ \Delta_{i^\star}^{\mathrm{v}} $} & \multicolumn{1}{c|}{$ 0.01 $}  & \multicolumn{1}{c|}{$ \Delta_{i^{\star\star}}^{\mathrm{v}} $}  & \multicolumn{1}{c|}{$ 0.01 $}  & \multicolumn{1}{c|}{$ \Delta_{i}^{\mathrm{v}} $}  &  $ 0.01 $ \\ \hline
		\end{tabular}
		\caption{Case 1(d)}
		\label{Case1d}
	\end{minipage}
\end{table*}

\noindent
\underline{\bf Case  2}: To see the effect of  the \textbf{second term} $\sum_{i\in\mathcal{F}\cap\mathcal{S}}({\Delta_i}/{2})^{-2}$, Case 2 is compriesed of  the best feasible arm and arms in $ \mathcal{F}\cap\mathcal{S} $, including $i^{\star\star}$. We set $ {\Delta_{i^\star}}/{2}\leq\Delta_{i^\star}^{\mathrm{v}}$ and $\bar{\mathcal{F}}^c=\emptyset$. Therefore, 
\begin{align}
	H_\mathrm{VA}&=\frac{1}{\min\{\frac{\Delta_{i^\star}}{2},\Delta_{i^\star}^{\mathrm{v}}\}^2 }
	\!+\!\left(\frac{\Delta_{i^{\star\star}}}{2}\right)^{-2} 
	\!+\!\sum_{i\in\mathcal{F}\cap\mathcal{S}\backslash\{i^{\star\star}\}}\left(\frac{\Delta_i}{2}\right)^{-2}
	\\
	&
	= \left(\frac{\Delta_{i^\star}}{2}\right)^{-2} +\sum_{i\in\mathcal{F}\cap\mathcal{S}}\left(\frac{\Delta_i}{2}\right)^{-2}
\end{align}
 See Table~\ref{Case2} for details.

\noindent
\underline{\bf Case  3}:
As for the \textbf{third term }$\sum_{i\in\bar{\mathcal{F}}^c\cap\mathcal{R}}(\Delta_{i}^{\mathrm{v}})^{-2}$, $ \bar{\mathcal{F}}^c\cap\mathcal{R}$ is nonempty. In Case 3, we set $ \mathcal{F}=\emptyset$, i.e. it is an infeasible instance and there are $ 20 $ infeasible arms with the same parameters. Hence, $ H_\mathrm{VA}=\sum_{i\in\bar{\mathcal{F}}^c\cap\mathcal{R}}(\Delta_{i}^{\mathrm{v}})^{-2} $. See Table~\ref{Case3} for details.

\begin{table*}[t]
	\begin{minipage}{0.49\linewidth}
		\centering
		\begin{tabular}{cccccc}
			\hline
			\multicolumn{6}{c}{$ \bar{\sigma}^2=0.25,N=20$ }      \\ \hline
			\multicolumn{1}{c|}{$ \mu_{i^\star} $} & \multicolumn{1}{c|}{$ 0.7 $} & \multicolumn{1}{c|}{$ \mu_{i^{\star\star}} $} & \multicolumn{1}{c|}{$ \mu_{i^\star} - \Delta_{i^\star}$} & \multicolumn{1}{c|}{$ \mu_{i} $} & $ \mu_{i^\star} -\Delta_i $ \\ \hline
			\multicolumn{1}{c|}{$ \Delta_{i^\star} $} & \multicolumn{1}{c|}{$ \Delta_i$}  & \multicolumn{1}{c|}{$ \Delta_{i^{\star\star}} $}  & \multicolumn{1}{c|}{ $ \Delta_{i^\star} $}  & \multicolumn{1}{c|}{$ \Delta_{i} $}  & $ 0.02\times 1.2^{j} $\\ \hline
			\multicolumn{1}{c|}{$ \sigma_{i^\star}^2 $} & \multicolumn{1}{c|}{$ 0.09 $}  & \multicolumn{1}{c|}{$ \sigma_{i^{\star\star}}^2 $}  & \multicolumn{1}{c|}{$ 0.09 $}  & \multicolumn{1}{c|}{$ \sigma_{i}^2 $}  & $ 0.09 $  \\ \hline
			\multicolumn{1}{c|}{$ \Delta_{i^\star}^{\mathrm{v}} $} & \multicolumn{1}{c|}{$ 0.16 $}  & \multicolumn{1}{c|}{$ \Delta_{i^{\star\star}}^{\mathrm{v}} $}  & \multicolumn{1}{c|}{$ 0.16 $}  & \multicolumn{1}{c|}{$ \Delta_{i}^{\mathrm{v}} $}  &  $ 0.16 $ \\ \hline
		\end{tabular}
		\caption{Case 2}
		\label{Case2}
	\end{minipage}
	\
	\begin{minipage}{0.49\linewidth}
		\centering
		\begin{tabular}{cccccc}
			\hline
			\multicolumn{6}{c}{$ \bar{\sigma}^2=0.04,N=20$ }      \\ \hline
			\multicolumn{1}{c|}{$ \mu_{i^\star} $} & \multicolumn{1}{c|}{NA} & \multicolumn{1}{c|}{$ \mu_{i^{\star\star}} $} & \multicolumn{1}{c|}{NA} & \multicolumn{1}{c|}{$ \mu_{i} $} & $0.55 $ \\ \hline
			\multicolumn{1}{c|}{$ \Delta_{i^\star} $} & \multicolumn{1}{c|}{$+\infty $}  & \multicolumn{1}{c|}{$ \Delta_{i^{\star\star}} $}  & \multicolumn{1}{c|}{ $0$}  & \multicolumn{1}{c|}{$ \Delta_{i} $}  &  $ 0 $ \\ \hline
			\multicolumn{1}{c|}{$ \sigma_{i^\star}^2 $} & \multicolumn{1}{c|}{NA}  & \multicolumn{1}{c|}{$ \sigma_{i^{\star\star}}^2 $}  & \multicolumn{1}{c|}{NA}  & \multicolumn{1}{c|}{$ \sigma_{i}^2 $}  & $ \bar{\sigma}^2+\Delta_i^{\mathrm{v}}$  \\ \hline
			\multicolumn{1}{c|}{$ \Delta_{i^\star}^{\mathrm{v}} $} & \multicolumn{1}{c|}{NA}  & \multicolumn{1}{c|}{$ \Delta_{i^{\star\star}}^{\mathrm{v}} $}  & \multicolumn{1}{c|}{NA}  & \multicolumn{1}{c|}{$ \Delta_{i}^{\mathrm{v}} $}  &$ 0.01\times 1.2^{j} $ \\ \hline
		\end{tabular}
		\caption{Case 3}
		\label{Case3}
	\end{minipage}
\end{table*}

For the \textbf{fourth term} $\sum_{i\in\bar{\mathcal{F}}^c\cap\mathcal{S}}\max\{{\Delta_{i}}/{2},\Delta_{i}^{\mathrm{v}}\}^{-2} $, the  arms that are involved are  $ i^\star $ and the both infeasible and suboptimal arms.  In Case 4, $ i^\star $ is designed to be the unique feasible arm 
and the rest of the arms are set to be infeasible and  suboptimal arms with the same parameters, i.e., $H_{\mathrm{VA}} = \min\{{\Delta_{i^\star}}/{2},\Delta_{i^\star}^{\mathrm{v}}\}^{-2}+
	\sum_{i\in\bar{\mathcal{F}}^c\cap\mathcal{S}}\max\{{\Delta_i}/{2},\Delta_i^{\mathrm{v}}\}^{-2}.$  In Case 4(c) we set $ {\Delta_{i^\star}}/{2}\leq \Delta_{i^\star}^{\mathrm{v}} $ while in other cases $ {\Delta_{i^\star}}/{2}\geq \Delta_{i^\star}^{\mathrm{v}} $.

\noindent
\underline{\bf Case  4(a)}: $  {\Delta_{i}}/{2}\leq \Delta_{i}^{\mathrm{v}}  $ and
\begin{align}
	H_{\mathrm{VA}} &= \frac{1}{\min\{\frac{\Delta_{i^\star}}{2},\Delta_{i^\star}^{\mathrm{v}}\}^2} +
	\sum_{i\in\bar{\mathcal{F}}^c\cap\mathcal{S}}\frac{1}{\max\{\frac{\Delta_i}{2},\Delta_i^{\mathrm{v}}\}^2}\\*
	&
	=(\Delta_{i^\star}^{\mathrm{v}} )^{-2}+\sum_{i\in\bar{\mathcal{F}}^c\cap\mathcal{S}}(\Delta_{i}^{\mathrm{v}})^{-2} .
\end{align}
     The parameters that are varied are $ \Delta_i $ for all $ i\in[N]$. See Table~\ref{Case4a} for details.

\noindent
\underline{\bf Case  4(b)}: $  {\Delta_{i}}/{2}\leq \Delta_{i}^{\mathrm{v}}  $ and
\begin{align}
	H_{\mathrm{VA}} &= \frac{1}{\min\{\frac{\Delta_{i^\star}}{2},\Delta_{i^\star}^{\mathrm{v}}\}^2 }+
	\sum_{i\in\bar{\mathcal{F}}^c\cap\mathcal{S}}\frac{1}{\max\{\frac{\Delta_i}{2},\Delta_i^{\mathrm{v}}\}^2}\\
	&
	=\left(\Delta_{i^\star}^{\mathrm{v}} \right)^{-2}+\sum_{i\in\bar{\mathcal{F}}^c\cap\mathcal{S}}\left(\Delta_{i}^{\mathrm{v}}\right)^{-2} .
\end{align}
	 The parameters that are varied are $ \Delta_i^{\mathrm{v}} $ for all $ i\in\bar{\mathcal{F}}^c\cap\mathcal{S}$. See Table~\ref{Case4b} for details.

\begin{table*}[t]
	\begin{minipage}{0.49\linewidth}
		\centering
		\begin{tabular}{cccccc}
			\hline
			\multicolumn{6}{c}{$ \bar{\sigma}^2=0.04 ,N=20$ }      \\ \hline
			\multicolumn{1}{c|}{$ \mu_{i^\star} $} & \multicolumn{1}{c|}{$ 0.7 $} & \multicolumn{1}{c|}{$ \mu_{i^{\star\star}} $} & \multicolumn{1}{c|}{$ \mu_{i^\star} - \Delta_{i^\star}$} & \multicolumn{1}{c|}{$ \mu_{i} $} & $ \mu_{i^\star} - \Delta_{i}  $\\ \hline
			\multicolumn{1}{c|}{$ \Delta_{i^\star} $} & \multicolumn{1}{c|}{$  \Delta_i $}  & \multicolumn{1}{c|}{$ \Delta_{i^{\star\star}} $}  & \multicolumn{1}{c|}{ $ \Delta_{i^\star} $}  & \multicolumn{1}{c|}{$ \Delta_{i} $}  & $ 0.02\times 1.2^{j} $ \\ \hline
			\multicolumn{1}{c|}{$ \sigma_{i^\star}^2 $} & \multicolumn{1}{c|}{$ 0.03 $}  & \multicolumn{1}{c|}{$ \sigma_{i^{\star\star}}^2 $}  & \multicolumn{1}{c|}{$ 0.2 $}  & \multicolumn{1}{c|}{$ \sigma_{i}^2 $}  & $ 0.2 $  \\ \hline
			\multicolumn{1}{c|}{$ \Delta_{i^\star}^{\mathrm{v}} $} & \multicolumn{1}{c|}{$ 0.01 $}  & \multicolumn{1}{c|}{$ \Delta_{i^{\star\star}}^{\mathrm{v}} $}  & \multicolumn{1}{c|}{$ 0.16 $}  & \multicolumn{1}{c|}{$ \Delta_{i}^{\mathrm{v}} $}  &  $ 0.16 $ \\ \hline
		\end{tabular}
		\caption{Case 4(a)}
		\label{Case4a}
	\end{minipage}
	\
	\begin{minipage}{0.49\linewidth}
		\centering
		\begin{tabular}{cccccc}
			\hline
			\multicolumn{6}{c}{$ \bar{\sigma}^2=0.04,N=20 $ }      \\ \hline
			\multicolumn{1}{c|}{$ \mu_{i^\star} $} & \multicolumn{1}{c|}{$ 0.55 $} & \multicolumn{1}{c|}{$ \mu_{i^{\star\star}} $} & \multicolumn{1}{c|}{$ 0.53$} & \multicolumn{1}{c|}{$ \mu_{i} $} & $ 0.53 $\\ \hline
			\multicolumn{1}{c|}{$ \Delta_{i^\star} $} & \multicolumn{1}{c|}{$  0.02$}  & \multicolumn{1}{c|}{$ \Delta_{i^{\star\star}} $}  & \multicolumn{1}{c|}{ $ 0.02$}  & \multicolumn{1}{c|}{$ \Delta_{i} $}  &  $ 0.02$ \\ \hline
			\multicolumn{1}{c|}{$ \sigma_{i^\star}^2 $} & \multicolumn{1}{c|}{$ 0.03 $}  & \multicolumn{1}{c|}{$ \sigma_{i^{\star\star}}^2 $}  & \multicolumn{1}{c|}{$  \sigma_{i}^2$}  & \multicolumn{1}{c|}{$ \sigma_{i}^2 $}  & $  \bar{\sigma}^2+\Delta_{i}^{\mathrm{v}} $  \\ \hline
			\multicolumn{1}{c|}{$ \Delta_{i^\star}^{\mathrm{v}} $} & \multicolumn{1}{c|}{$ 0.01 $}  & \multicolumn{1}{c|}{$ \Delta_{i^{\star\star}}^{\mathrm{v}} $}  & \multicolumn{1}{c|}{$  \Delta_{i}^{\mathrm{v}}$}  & \multicolumn{1}{c|}{$ \Delta_{i}^{\mathrm{v}} $}  & $ 0.05\times 1.1^{j} $ \\ \hline
		\end{tabular}
		\caption{Case 4(b)}
		\label{Case4b}
	\end{minipage}
\end{table*}

\noindent
\underline{\bf Case  4(c)}: $  {\Delta_{i}}/{2}\geq \Delta_{i}^{\mathrm{v}}  $ and
\begin{align}
	H_{\mathrm{VA}} &= \frac{1}{\min\{\frac{\Delta_{i^\star}}{2},\Delta_{i^\star}^{\mathrm{v}}\}^2} +
	\sum_{i\in\bar{\mathcal{F}}^c\cap\mathcal{S}}\frac{1}{\max\{\frac{\Delta_i}{2},\Delta_i^{\mathrm{v}}\}^2}\\
	&
	=\left(\frac{\Delta_{i^\star}}{2}\right)^{-2}+\sum_{i\in\bar{\mathcal{F}}^c\cap\mathcal{S}}\left(\frac{\Delta_{i}}{2}\right)^{-2}  .
\end{align}
    The parameters that are varied are $ \Delta_i$ for all $i\in[N] $. See Table~\ref{Case4c} for details.

\noindent
\underline{\bf Case  4(d)}: $  {\Delta_{i}}/{2}\geq \Delta_{i}^{\mathrm{v}}  $ and
\begin{align}
	H_{\mathrm{VA}} &= \frac{1}{\min\{\frac{\Delta_{i^\star}}{2},\Delta_{i^\star}^{\mathrm{v}}\}^2 }+
	\sum_{i\in\bar{\mathcal{F}}^c\cap\mathcal{S}}\frac{1}{\max\{\frac{\Delta_i}{2},\Delta_i^{\mathrm{v}}\}^2}\\
	&
	=(\Delta_{i^\star}^{\mathrm{v}} )^{-2}+\sum_{i\in\bar{\mathcal{F}}^c\cap\mathcal{S}}\left(\frac{\Delta_{i}}{2}\right)^{-2}  .
\end{align}
 The parameters that are varied are $ \Delta_i^{\mathrm{v}}$ for all $ i\in\bar{\mathcal{F}}^c\cap\mathcal{S}$. See Table~\ref{Case4d} for details.

\begin{table*}[t]
	\begin{minipage}{0.49\linewidth}
		\centering
		\begin{tabular}{cccccc}
			\hline
			\multicolumn{6}{c}{$ \bar{\sigma}^2=0.2 ,N=20$ }      \\ \hline
			\multicolumn{1}{c|}{$ \mu_{i^\star} $} & \multicolumn{1}{c|}{$ 0.7 $} & \multicolumn{1}{c|}{$ \mu_{i^{\star\star}} $} & \multicolumn{1}{c|}{$ \mu_{i^\star} - \Delta_{i^\star}$} & \multicolumn{1}{c|}{$ \mu_{i} $} & $ \mu_{i^\star} - \Delta_{i}  $\\ \hline
			\multicolumn{1}{c|}{$ \Delta_{i^\star} $} & \multicolumn{1}{c|}{$  \Delta_i $}  & \multicolumn{1}{c|}{$ \Delta_{i^{\star\star}} $}  & \multicolumn{1}{c|}{ $ \Delta_{i^\star} $}  & \multicolumn{1}{c|}{$ \Delta_{i} $}  &  $ 0.09\times 1.1^{j} $ \\ \hline
			\multicolumn{1}{c|}{$ \sigma_{i^\star}^2 $} & \multicolumn{1}{c|}{$ 0.04 $}  & \multicolumn{1}{c|}{$ \sigma_{i^{\star\star}}^2 $}  & \multicolumn{1}{c|}{$ 0.21$}  & \multicolumn{1}{c|}{$ \sigma_{i}^2 $}  & $ 0.21$  \\ \hline
			\multicolumn{1}{c|}{$ \Delta_{i^\star}^{\mathrm{v}} $} & \multicolumn{1}{c|}{$ 0.16 $}  & \multicolumn{1}{c|}{$ \Delta_{i^{\star\star}}^{\mathrm{v}} $}  & \multicolumn{1}{c|}{$ 0.01$}  & \multicolumn{1}{c|}{$ \Delta_{i}^{\mathrm{v}} $}  &  $ 0.01 $ \\ \hline
		\end{tabular}
		\caption{Case 4(c)}
		\label{Case4c}
	\end{minipage}
	\
	\begin{minipage}{0.49\linewidth}
		\centering
		\begin{tabular}{cccccc}
			\hline
			\multicolumn{6}{c}{$ \bar{\sigma}^2=0.04 ,N=20$ }      \\ \hline
			\multicolumn{1}{c|}{$ \mu_{i^\star} $} & \multicolumn{1}{c|}{$ 0.7 $} & \multicolumn{1}{c|}{$ \mu_{i^{\star\star}} $} & \multicolumn{1}{c|}{$ 0.3$} & \multicolumn{1}{c|}{$ \mu_{i} $} & $ 0.3$\\ \hline
			\multicolumn{1}{c|}{$ \Delta_{i^\star} $} & \multicolumn{1}{c|}{$  0.4$}  & \multicolumn{1}{c|}{$ \Delta_{i^{\star\star}} $}  & \multicolumn{1}{c|}{ $ 0.4$}  & \multicolumn{1}{c|}{$ \Delta_i $}  &  $ 0.4$ \\ \hline
			\multicolumn{1}{c|}{$ \sigma_{i^\star}^2 $} & \multicolumn{1}{c|}{$ 0.03 $}  & \multicolumn{1}{c|}{$ \sigma_{i^{\star\star}}^2 $}  & \multicolumn{1}{c|}{$ \sigma_{i}^2$}  & \multicolumn{1}{c|}{$ \sigma_{i}^2 $}  & $ \bar{\sigma}^2+\Delta_{i}^{\mathrm{v}} $  \\ \hline
			\multicolumn{1}{c|}{$ \Delta_{i^\star}^{\mathrm{v}} $} & \multicolumn{1}{c|}{$ 0.01 $}  & \multicolumn{1}{c|}{$ \Delta_{i^{\star\star}}^{\mathrm{v}} $}  & \multicolumn{1}{c|}{$  \Delta_{i}^{\mathrm{v}}$}  & \multicolumn{1}{c|}{$ \Delta_{i}^{\mathrm{v}} $}  & $ 0.01\times 1.2^{j} $\\ \hline
		\end{tabular}
		\caption{Case 4(d)}
		\label{Case4d}
	\end{minipage}
\end{table*}

\subsection{Additional Experimental Results for VA-LUCB}\label{experiment figure}
The plots of the time complexities of Case 4(b) and Case 4(c) with respect to the corresponding $ H_\mathrm{VA}\ln({H_\mathrm{VA}}/{\delta}) $ are shown in Figure \ref{addtionalplots}.
\begin{figure*}[t]
	\centering
	\subfigure[Case 4(b)]{
		\includegraphics[width=.45\textwidth]{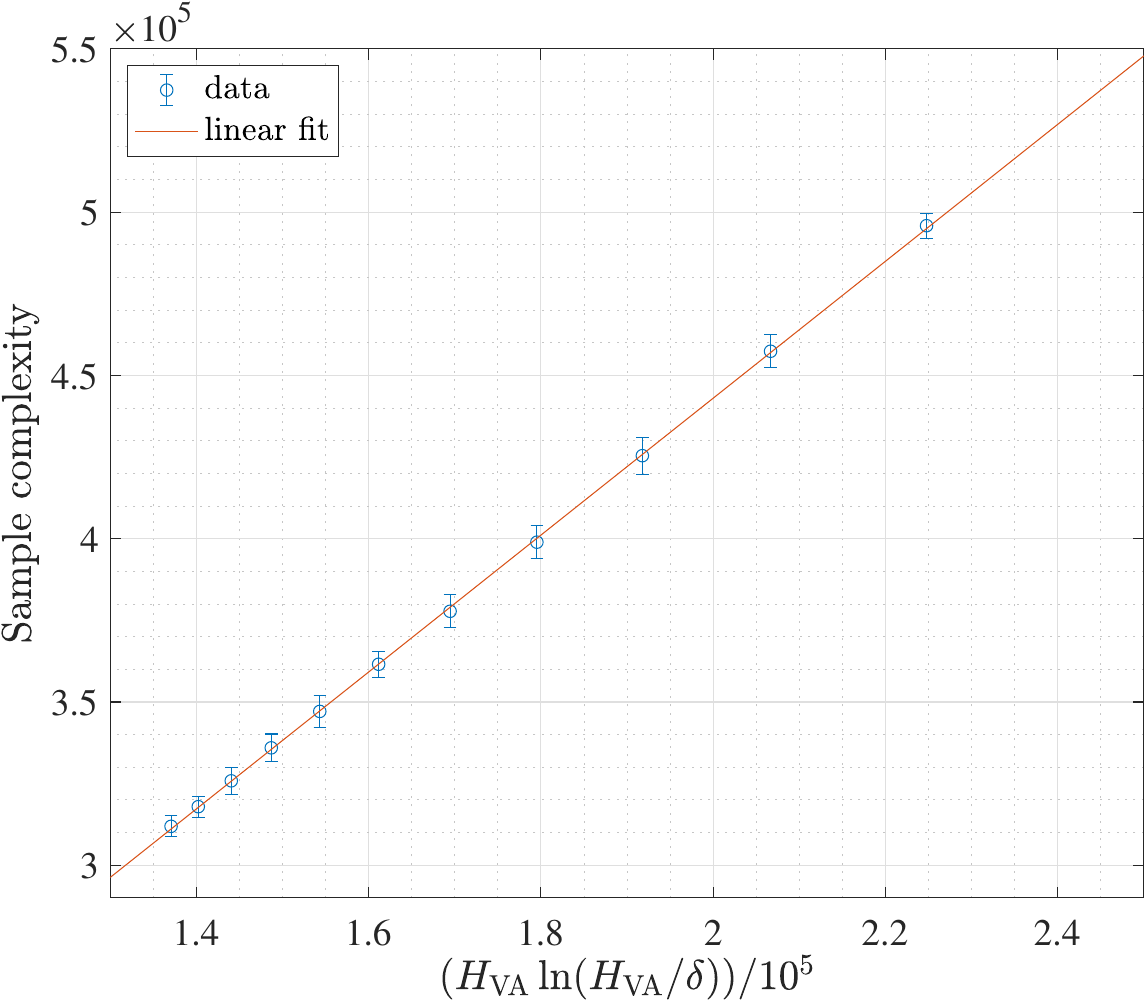}
	} 
	\subfigure[Case 4(c)]{
		\includegraphics[width=.45\textwidth]{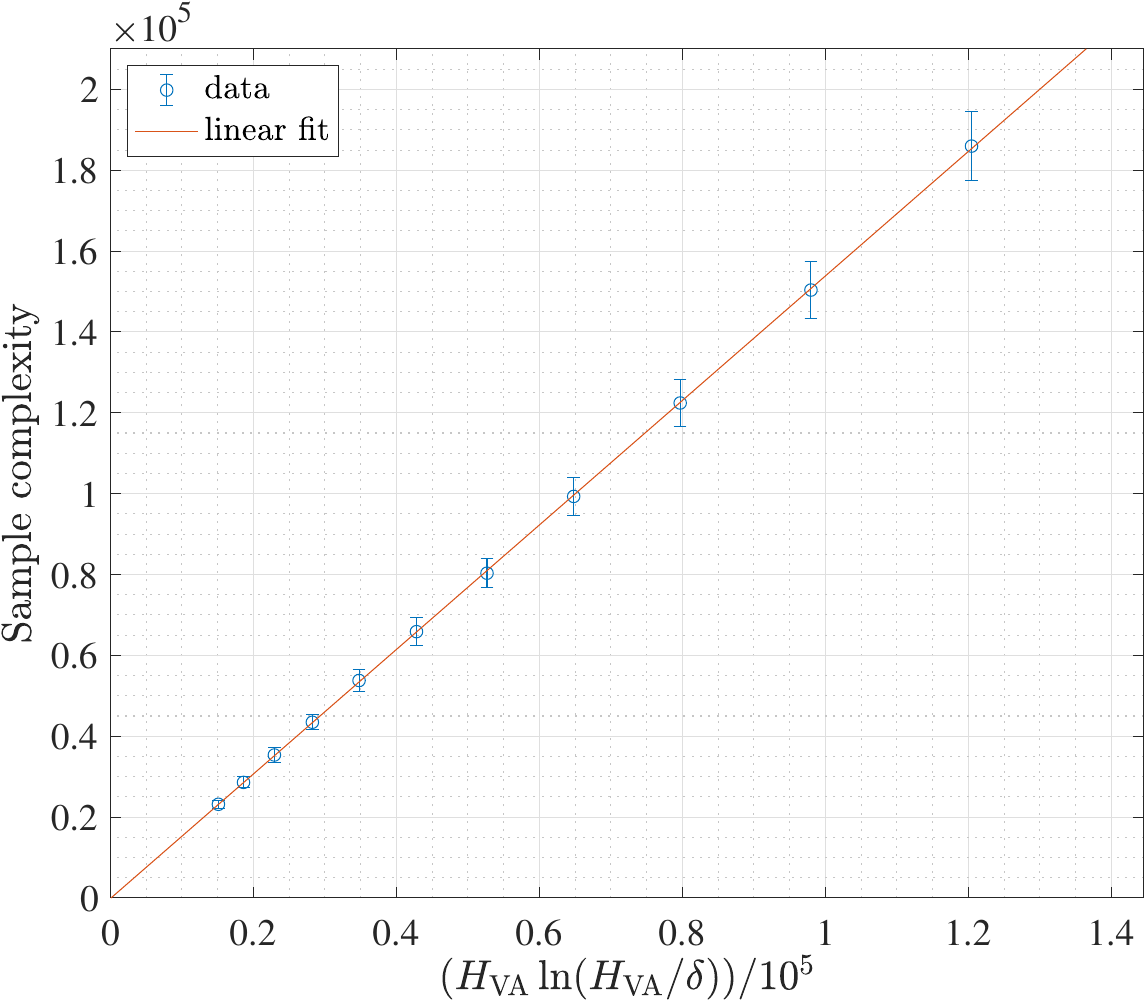}
	} 
	\caption{The time complexities of cases 4(b) and 4(c) with respect to $ H_{\mathrm{VA}}\ln({H_{\mathrm{VA}}}/{\delta}) $ }
	\label{addtionalplots}
\end{figure*}
For Cases 1(b), 4(a) and 4(d), the time complexities in each of these cases are expected to remain the same as the instances (and hence, hardness) vary. This is corroborated by the experimental results which are displayed in Table~\ref{3unchangedcases}.

We remark that some of the terms are correlated, e.g., $ \Delta_{i^\star} $ and $ \Delta_{i^{\star\star}} $. Hence, we sometimes have to make a compromise by changing the parameters of arms in other terms, like in Case 1(d) in which when  $   \Delta_{i^\star}$ increases, $ \Delta_{i^{\star\star}}  $ also increases. Thus, the decrease in sample complexity in this case results from $ \Delta_{i^{\star\star}}^{-2} $ and not from $ \min\{{\Delta_{i^\star}/2,\Delta_{i^\star}^{\mathrm{v}}}\}^{-2} $.
\begin{table*}[t]
	\centering
	\begin{tabular}{|c|cc|cc|cc|}
		\hline
		& \multicolumn{2}{c|}{Case 1(b)}           & \multicolumn{2}{c|}{Case 4(a)}           & \multicolumn{2}{c|}{Case 4(d)}           \\ \hline
		instance & \multicolumn{1}{c|}{TC}       & STD      & \multicolumn{1}{c|}{TC}       & STD      & \multicolumn{1}{c|}{TC}       & STD      \\ \hline
		0        & \multicolumn{1}{c|}{$ 5.648\times 10^5 $} & $ 4.185\times 10^4 $ & \multicolumn{1}{c|}{$3.026\times 10^5$} & $4.770\times 10^3 $& \multicolumn{1}{c|}{$2.895\times 10^5$} & $4.303\times 10^3$ \\ \hline
		1        & \multicolumn{1}{c|}{$5.585\times 10^5$} & $6.713\times 10^4$ & \multicolumn{1}{c|}{$3.028\times 10^5$} & $4.774\times 10^3$ & \multicolumn{1}{c|}{$2.896\times 10^5$} &$ 4.336\times 10^3 $\\ \hline
		2        & \multicolumn{1}{c|}{$5.531\times 10^5$} & $6.741\times 10^4$ & \multicolumn{1}{c|}{$3.031\times 10^5$} & $4.861\times 10^3 $& \multicolumn{1}{c|}{$2.896\times 10^5$} & $4.173\times 10^3$ \\ \hline
		3        & \multicolumn{1}{c|}{$5.666\times 10^5$} & $8.515\times 10^4$ & \multicolumn{1}{c|}{$3.029\times 10^5$} & $4.426\times 10^3$ & \multicolumn{1}{c|}{$2.892\times 10^5$} & $4.209\times 10^3$ \\ \hline
		4        & \multicolumn{1}{c|}{$5.787\times 10^5$} & $7.152\times 10^4$ & \multicolumn{1}{c|}{$3.029\times 10^5$} & $4.932\times 10^3$ & \multicolumn{1}{c|}{$2.896\times 10^5$} & $4.019\times 10^3$ \\ \hline
		5        & \multicolumn{1}{c|}{$5.733\times 10^5$} & $7.558\times 10^4$ & \multicolumn{1}{c|}{$3.030\times 10^5$} & $5.125\times 10^3 $& \multicolumn{1}{c|}{$2.894\times 10^5$} & $4.224\times 10^3$ \\ \hline
		6        & \multicolumn{1}{c|}{$5.578\times 10^5$} & $7.112\times 10^4$ & \multicolumn{1}{c|}{$3.031\times 10^5$} & $4.887\times 10^3$ & \multicolumn{1}{c|}{$2.894\times 10^5$} & $4.471\times 10^3$ \\ \hline
		7        & \multicolumn{1}{c|}{$5.596\times 10^5$} & $4.398\times 10^4$ & \multicolumn{1}{c|}{$3.031\times 10^5$} &$5.059\times 10^3$ & \multicolumn{1}{c|}{$2.897\times 10^5$} & $4.134\times 10^3$ \\ \hline
		8        & \multicolumn{1}{c|}{$5.630\times 10^5$} & $5.354\times 10^4$ & \multicolumn{1}{c|}{$3.031\times 10^5$} & $5.074\times 10^3$ & \multicolumn{1}{c|}{$2.896\times 10^5$} & $4.022\times 10^3$ \\ \hline
		9        & \multicolumn{1}{c|}{$5.726\times 10^5$} & $7.683\times 10^4$ & \multicolumn{1}{c|}{$3.031\times 10^5$} & $4.891\times 10^3$ & \multicolumn{1}{c|}{$2.895\times 10^5$} & $4.157\times 10^3 $\\ \hline
		10       & \multicolumn{1}{c|}{$5.767\times 10^5$} & $4.815\times 10^4 $& \multicolumn{1}{c|}{$3.029\times 10^5$} & $4.839\times 10^3 $& \multicolumn{1}{c|}{$2.895\times 10^5$} & $4.150\times 10^3$ \\ \hline
	\end{tabular}
	\caption{The time complexities of Cases 1(b), 4(a) and 4(d). ``TC'' and ``STD'' are short for ``Time Complexity'' and ``Standard Deviation'' respectively. The time complexities are almost constant across instances. }
	\label{3unchangedcases}
\end{table*}



\bibliographystyle{IEEEtran}
\bibliography{reference_VALUCB}

\begin{thebibliography}{10}
\providecommand{\url}[1]{#1}
\csname url@samestyle\endcsname
\providecommand{\newblock}{\relax}
\providecommand{\bibinfo}[2]{#2}
\providecommand{\BIBentrySTDinterwordspacing}{\spaceskip=0pt\relax}
\providecommand{\BIBentryALTinterwordstretchfactor}{4}
\providecommand{\BIBentryALTinterwordspacing}{\spaceskip=\fontdimen2\font plus
\BIBentryALTinterwordstretchfactor\fontdimen3\font minus
  \fontdimen4\font\relax}
\providecommand{\BIBforeignlanguage}[2]{{%
\expandafter\ifx\csname l@#1\endcsname\relax
\typeout{** WARNING: IEEEtran.bst: No hyphenation pattern has been}%
\typeout{** loaded for the language `#1'. Using the pattern for}%
\typeout{** the default language instead.}%
\else
\language=\csname l@#1\endcsname
\fi
#2}}
\providecommand{\BIBdecl}{\relax}
\BIBdecl

\bibitem{David2018}
Y.~David, B.~Sz{\"o}r{\'e}nyi, M.~Ghavamzadeh, S.~Mannor, and N.~Shimkin,
  ``{PAC} bandits with risk constraints,'' in \emph{Proceedings of the
  International Symposium on Artificial Intelligence and Mathematics (ISAIM)},
  2018.

\bibitem{lattimore2020bandit}
T.~Lattimore and C.~Szepesv{\'a}ri, \emph{Bandit algorithms}.\hskip 1em plus
  0.5em minus 0.4em\relax Cambridge University Press, 2020.

\bibitem{Cassel2018}
A.~Cassel, S.~Mannor, and A.~Zeevi, ``A general approach to multi-armed bandits
  under risk criteria,'' in \emph{Proceedings of the 31st Conference on
  Learning Theory}, ser. Proceedings of Machine Learning Research,
  vol.~75.\hskip 1em plus 0.5em minus 0.4em\relax PMLR, 2018, pp. 1295--1306.

\bibitem{lee2020learning}
J.~Lee, S.~Park, and J.~Shin, ``Learning bounds for risk-sensitive learning,''
  in \emph{Advances in Neural Information Processing Systems}, vol.~33, 2020,
  pp. 13\,867--13\,879.

\bibitem{chang2021unifying}
J.~Q.~L. Chang and V.~Y.~F. Tan, ``A unifying theory of {Thompson} sampling for
  continuous risk-averse bandits,'' in \emph{Proceedings of the 36th AAAI
  Conference on Artificial Intelligence (AAAI)}, 2022.

\bibitem{Even2006action}
E.~Even-Dar, S.~Mannor, and Y.~Mansour, ``Action elimination and stopping
  conditions for the multi-armed bandit and reinforcement learning problems,''
  \emph{Journal of Machine Learning Research}, vol.~7, p. 1079–1105, 2006.

\bibitem{Audibert2010}
J.-Y. Audibert, S.~Bubeck, and R.~Munos, ``Best arm identification in
  multi-armed bandits,'' in \emph{23th Conference on Learning Theory}, 2010,
  pp. 41--53.

\bibitem{jamieson14lil}
K.~Jamieson, M.~Malloy, R.~Nowak, and S.~Bubeck, ``lil'{UCB}: An optimal
  exploration algorithm for multi-armed bandits,'' in \emph{Proceedings of the
  27th Conference on Learning Theory}, ser. Proceedings of Machine Learning
  Research, vol.~35.\hskip 1em plus 0.5em minus 0.4em\relax Barcelona, Spain:
  PMLR, 2014, pp. 423--439.

\bibitem{Kalyanakrishnan2012}
S.~Kalyanakrishnan, A.~Tewari, P.~Auer, and P.~Stone, ``{PAC} subset selection
  in stochastic multi-armed bandits.'' in \emph{Proceedings of the 29th
  International Conference on Machine Learning}.\hskip 1em plus 0.5em minus
  0.4em\relax PMLR, 2012, pp. 227--234.

\bibitem{Kaufmann2016}
E.~Kaufmann, O.~Capp\'{e}, and A.~Garivier, ``On the complexity of best-arm
  identification in multi-armed bandit models,'' \emph{Journal of Machine
  Learning Research}, vol.~17, no.~1, p. 1–42, 2016.

\bibitem{russo2016simple}
D.~Russo, ``Simple {Bayesian} algorithms for best arm identification,'' in
  \emph{29th Annual Conference on Learning Theory}, ser. Proceedings of Machine
  Learning Research, vol.~49.\hskip 1em plus 0.5em minus 0.4em\relax PMLR,
  23--26 Jun 2016, pp. 1417--1418.

\bibitem{Jamieson2014}
K.~Jamieson and R.~Nowak, ``Best-arm identification algorithms for multi-armed
  bandits in the fixed confidence setting,'' in \emph{48th Annual Conference on
  Information Sciences and Systems (CISS)}.\hskip 1em plus 0.5em minus
  0.4em\relax IEEE, 2014, pp. 1--6.

\bibitem{howard2021time}
S.~R. Howard, A.~Ramdas, J.~McAuliffe, and J.~Sekhon, ``Time-uniform,
  nonparametric, nonasymptotic confidence sequences,'' \emph{The Annals of
  Statistics}, vol.~49, no.~2, pp. 1055--1080, 2021.

\bibitem{Sani2013}
A.~Sani, A.~Lazaric, and R.~Munos, ``Risk-aversion in multi-armed bandits,'' in
  \emph{Proceedings of the 25th International Conference on Neural Information
  Processing Systems}.\hskip 1em plus 0.5em minus 0.4em\relax Curran Associates
  Inc., 2012, p. 3275–3283.

\bibitem{Vakili2016}
S.~Vakili and Q.~Zhao, ``Risk-averse multi-armed bandit problems under
  mean-variance measure,'' \emph{IEEE Journal of Selected Topics in Signal
  Processing}, vol.~10, no.~6, pp. 1093--1111, 2016.

\bibitem{Zhu2020}
Q.~Zhu and V.~Y.~F. Tan, ``Thompson sampling algorithms for mean-variance
  bandits,'' in \emph{Proceedings of the 37th International Conference on
  Machine Learning}.\hskip 1em plus 0.5em minus 0.4em\relax PMLR, 2020, pp.
  11\,599--11\,608.

\bibitem{chang2021risk}
J.~Q.~L. Chang, Q.~Zhu, and V.~Y.~F. Tan, ``Risk-constrained {Thompson}
  sampling for {CVaR} bandits,'' \emph{arXiv preprint arXiv:2011.08046}, 2020.

\bibitem{Zimin2014}
A.~Zimin, R.~Ibsen-Jensen, and K.~Chatterjee, ``Generalized risk-aversion in
  stochastic multi-armed bandits,'' \emph{arXiv preprint arXiv:1405.0833},
  2014.

\bibitem{LA2020}
L.~A. Prashanth, K.~Jagannathan, and R.~Kolla, ``Concentration bounds for
  {CV}a{R} estimation: The cases of light-tailed and heavy-tailed
  distributions,'' in \emph{Proceedings of the 37th International Conference on
  Machine Learning}, vol. 119.\hskip 1em plus 0.5em minus 0.4em\relax PMLR,
  2020, pp. 5577--5586.

\bibitem{Kagrecha2019}
A.~Kagrecha, J.~Nair, and K.~Jagannathan, ``Distribution oblivious, risk-aware
  algorithms for multi-armed bandits with unbounded rewards,'' in
  \emph{Proceedings of the 33rd International Conference on Neural Information
  Processing Systems}, vol.~32.\hskip 1em plus 0.5em minus 0.4em\relax Curran
  Associates Inc., 2019, pp. 11\,272--11\,281.

\bibitem{kagrecha2022statistically}
------, ``Statistically robust, risk-averse best arm identification in
  multi-armed bandits,'' \emph{IEEE Transactions on Information Theory},
  vol.~68, no.~8, pp. 5248--5267, 2022.

\bibitem{David2016}
Y.~David and N.~Shimkin, ``Pure exploration for max-quantile bandits,'' in
  \emph{Machine Learning and Knowledge Discovery in Databases}.\hskip 1em plus
  0.5em minus 0.4em\relax Springer, 2016, pp. 556--571.

\bibitem{Kagrecha2020}
A.~Kagrecha, J.~Nair, and K.~Jagannathan, ``Constrained regret minimization for
  multi-criterion multi-armed bandits,'' \emph{arXiv preprint
  arXiv:2006.09649}, 2020.

\bibitem{baudry2020thompson}
D.~Baudry, R.~Gautron, E.~Kaufmann, and O.~Maillard, ``Optimal {Thompson}
  sampling strategies for support-aware {CVaR} bandits,'' in \emph{Proceedings
  of the 38th International Conference on Machine Learning}, vol. 139, 18--24
  Jul 2021, pp. 716--726.

\bibitem{EvenDar2006}
E.~Even-Dar, M.~Kearns, and J.~Wortman, ``Risk-sensitive online learning,'' in
  \emph{International Conference on Algorithmic Learning Theory}.\hskip 1em
  plus 0.5em minus 0.4em\relax Springer, 2006, pp. 199--213.

\bibitem{Maillard2013}
O.-A. Maillard, ``Robust risk-averse stochastic multi-armed bandits,'' in
  \emph{International Conference on Algorithmic Learning Theory}.\hskip 1em
  plus 0.5em minus 0.4em\relax Springer, 2013, pp. 218--233.

\bibitem{Chang2020}
H.~S. Chang, ``An asymptotically optimal strategy for constrained multi-armed
  bandit problems,'' \emph{Mathematical Methods of Operations Research},
  vol.~91, no.~3, pp. 545--557, 2020.

\bibitem{Wu2016}
Y.~Wu, R.~Shariff, T.~Lattimore, and C.~Szepesv{\'a}ri, ``Conservative
  bandits,'' in \emph{Proceedings of the 33rd International Conference on
  Machine Learning}, vol.~48.\hskip 1em plus 0.5em minus 0.4em\relax PMLR,
  2016, pp. 1254--1262.

\bibitem{Amani2019}
S.~Amani, M.~Alizadeh, and C.~Thrampoulidis, ``Linear stochastic bandits under
  safety constraints,'' in \emph{Proceedings of the 33rd International
  Conference on Neural Information Processing Systems}, vol.~32, 2019, pp.
  9256--9266.

\bibitem{tan2022}
V.~Y.~F. Tan, L.~A. Prashanth, and K.~Jagannathan, ``A survey of risk-aware
  multi-armed bandits,'' in \emph{Proceedings of 31st International Joint
  Conference on Artificial Intelligence}, Vienna, Austria, July 2022.

\bibitem{auer16}
P.~Auer, C.-K. Chiang, R.~Ortner, and M.~Drugan, ``Pareto front identification
  from stochastic bandit feedback,'' in \emph{Proceedings of the 19th
  International Conference on Artificial Intelligence and Statistics}, ser.
  Proceedings of Machine Learning Research, vol.~51.\hskip 1em plus 0.5em minus
  0.4em\relax Cadiz, Spain: PMLR, 09--11 May 2016, pp. 939--947.

\bibitem{turgay18a}
E.~Turgay, D.~Oner, and C.~Tekin, ``Multi-objective contextual bandit problem
  with similarity information,'' in \emph{Proceedings of the 21st International
  Conference on Artificial Intelligence and Statistics}, ser. Proceedings of
  Machine Learning Research, vol.~84.\hskip 1em plus 0.5em minus 0.4em\relax
  PMLR, 09--11 Apr 2018, pp. 1673--1681.

\bibitem{Zuluaga}
M.~Zuluaga, A.~Krause, and M.~P{{{\"u}}}schel, ``$\epsilon$-{PAL}: An active
  learning approach to the multi-objective optimization problem,''
  \emph{Journal of Machine Learning Research}, vol.~17, no. 104, pp. 1--32,
  2016.

\bibitem{samuels19a}
J.~Katz-Samuels and C.~Scott, ``Top feasible arm identification,'' in
  \emph{Proceedings of the 22nd International Conference on Artificial
  Intelligence and Statistics}, ser. Proceedings of Machine Learning Research,
  vol.~89.\hskip 1em plus 0.5em minus 0.4em\relax PMLR, 16--18 Apr 2019, pp.
  1593--1601.

\bibitem{Lu}
P.~Lu, C.~Tao, and X.~Zhang, ``Variance-dependent best arm identification,'' in
  \emph{Proceedings of the 37th Conference on Uncertainty in Artificial
  Intelligence}, ser. Proceedings of Machine Learning Research, vol. 161.\hskip
  1em plus 0.5em minus 0.4em\relax PMLR, 27--30 Jul 2021, pp. 1120--1129.

\bibitem{faella2020rapidly}
M.~Faella, A.~Finzi, and L.~Sauro, ``Rapidly finding the best arm using
  variance,'' in \emph{24th European Conference on Artificial
  Intelligence}.\hskip 1em plus 0.5em minus 0.4em\relax IOS Press, 2020, pp.
  2585--2591.

\bibitem{Bhat2019}
S.~P. Bhat and L.~A. Prashanth, ``Concentration of risk measures: a
  {Wasserstein} distance approach,'' in \emph{Proceedings of the 33rd
  International Conference on Neural Information Processing Systems},
  vol.~32.\hskip 1em plus 0.5em minus 0.4em\relax Curran Associates, Inc.,
  2019, pp. 11\,762--11\,771.

\bibitem{simchowitz17the}
M.~Simchowitz, K.~Jamieson, and B.~Recht, ``The simulator: Understanding
  adaptive sampling in the moderate-confidence regime,'' in \emph{Proceedings
  of the 2017 Conference on Learning Theory}, ser. Proceedings of Machine
  Learning Research, vol.~65.\hskip 1em plus 0.5em minus 0.4em\relax PMLR,
  2017, pp. 1794--1834.

\bibitem{tanczos2017a}
E.~T\'anczos, R.~Nowak, and B.~Mankoff, ``A {KL-LUCB} algorithm for large-scale
  crowdsourcing,'' in \emph{Advances in Neural Information Processing Systems},
  vol.~30.\hskip 1em plus 0.5em minus 0.4em\relax Curran Associates, Inc.,
  2017.

\bibitem{Garivier2016}
A.~Garivier and E.~Kaufmann, ``Optimal best arm identification with fixed
  confidence,'' in \emph{29th Conference on Learning Theory}, vol.~49.\hskip
  1em plus 0.5em minus 0.4em\relax PMLR, 2016, pp. 998--1027.

\bibitem{mcdiarmid1989}
C.~McDiarmid, ``On the method of bounded differences,'' \emph{Surveys in
  Combinatorics}, vol. 141, no.~1, pp. 148--188, 1989.

\bibitem{duchi}
J.~Duchi, \emph{Lecture Notes for Statistics 311/Electrical Engineering
  377}.\hskip 1em plus 0.5em minus 0.4em\relax Stanford University, 2016.

\bibitem{honorio2014}
J.~Honorio and T.~Jaakkola, ``Tight bounds for the expected risk of linear
  classifiers and {PAC-Bayes} finite-sample guarantees,'' in \emph{17th
  International Conference on Artificial Intelligence and Statistics
  (AISTATS)}, 2014, pp. 384--392.

\bibitem{Friedrich2019}
F.~G\"otze, H.~Sambale, and A.~Sinulis, ``{Higher order concentration for
  functions of weakly dependent random variables},'' \emph{Electronic Journal
  of Probability}, vol.~24, pp. 1 -- 19, 2019.

\end{thebibliography}

\begin{IEEEbiographynophoto}{Yunlong Hou}  received the B.S.\ degree from Beijing Normal University in 2020. He is currently pursuing the Ph.D.\  degree at  the Department of Mathematics, National University of Singapore (NUS). His research interests focus on machine learning, e.g., online learning.
\end{IEEEbiographynophoto}

\begin{IEEEbiographynophoto}{Vincent Y.\ F.\ Tan} (S'07-M'11-SM'15)  was born in Singapore in 1981. He is currently an Associate Professor in the Department of Mathematics and the  Department of Electrical and Computer Engineering at the National University of Singapore (NUS). He received the B.A.\ and M.Eng.\ degrees in Electrical and Information Sciences from Cambridge University in 2005 and the Ph.D.\ degree in Electrical Engineering and Computer Science (EECS) from the Massachusetts Institute of Technology (MIT)  in 2011.  His research interests include network information theory, machine learning, and statistical signal processing.

Dr.\ Tan received the MIT EECS Jin-Au Kong outstanding doctoral thesis prize in 2011, the NUS Young Investigator Award in 2014, the Singapore National Research Foundation (NRF) Fellowship (Class of 2018) and the NUS Young Researcher Award in 2019. He was also an IEEE Information Theory Society Distinguished Lecturer for 2018/9. He is currently serving as a Senior Area Editor of the {\em IEEE Transactions on Signal Processing} and an Associate Editor of Machine Learning for the {\em IEEE Transactions on Information Theory}. He is a member of the IEEE Information Theory Society Board of Governors.
\end{IEEEbiographynophoto}

\begin{IEEEbiographynophoto}{Zixin Zhong} was born in China in 1995. She is currently a postdoctoral fellow at the Department of Computing Science of University of Alberta (UofA). She is supervised by Prof.\ Csaba Szepesv{\'a}ri. Dr.\ Zhong received her PhD degree from the Department of Mathematics of National University of Singapore (NUS) in October 2021. Dr.\ Zhong was privileged to be supervised by Prof.\ Vincent Y.\ F.\ Tan and Prof.\ Wang Chi Cheung during her Ph.D.\ study, and she worked with them as a research fellow between June 2021 and July 2022.

Dr.\ Zhong's research interests are in reinforcement learning, online machine learning and, in particular,  multi-armed bandits. Her work has been presented at top machine learning (ML) conferences including ICML and AISTATS, and also in top  journals such as the {\em Journal of Machine Learning Research} (JMLR) and the {\em IEEE Transactions on Wireless Communications} (TWC). She also serves as a reviewer for several conferences and journals including   AISTATS, ICLR, ICML, NeurIPS, TIT, TSP, and TMLR. She was selected as a top reviewer for NeurIPS 2022.
\end{IEEEbiographynophoto}

\vfill

\end{document}